\newtheorem{assum}{A\hspace{-2pt}}
\newtheorem{theorem}{Theorem}
\crefname{theorem}{theorem}{Theorems}
\Crefname{theorem}{Theorem}{Theorems}
\newtheorem{lemma}{Lemma}
\crefname{lemma}{lemma}{lemmas}
\Crefname{lemma}{Lemma}{Lemmas}
\newtheorem{remark}{Remark}
\crefname{remark}{remark}{remarks}
\Crefname{remark}{Remark}{Remarks}
\newtheorem{corollary}{Corollary}
\crefname{corollary}{corollary}{corollaries}
\Crefname{corollary}{Corollary}{Corollaries}
\newtheorem{proposition}{Proposition}
\crefname{proposition}{proposition}{propositions}
\Crefname{proposition}{Proposition}{Propositions}
\crefname{definition}{definition}{definitions}
\Crefname{Definition}{Definition}{Definitions}
\crefname{example}{example}{examples}
\Crefname{Example}{Example}{Examples}
\crefname{figure}{figure}{figures}
\Crefname{Figure}{Figure}{Figures}
\crefname{table}{table}{tables}
\Crefname{Table}{Table}{Tables}
\crefname{assum}{A\hspace{-2pt}}{A\hspace{-2pt}}
\crefname{assumb}{B\hspace{-2pt}}{B\hspace{-2pt}}
\crefname{assumUGE}{UGE\hspace{-1pt}}{UGE\hspace{-1pt}}
\crefname{assumID}{IND\hspace{-1pt}}{IND\hspace{-1pt}}
\crefname{assumUE}{UE\hspace{-1pt}}{UE\hspace{-1pt}}
\crefname{assumSUP}{M\hspace{-1pt}}{M\hspace{-1pt}}
\newlist{renumerate}{enumerate}{3}
\setlist[renumerate]{wide, labelwidth=!, labelindent=0pt,label=(\roman*)}
\newlist{aenumerate}{enumerate}{3}
\setlist[aenumerate]{wide, labelwidth=!, labelindent=0pt,label=(\arabic*)}
\newlist{aaenumerate}{enumerate}{3}
\setlist[aaenumerate]{wide, labelwidth=!, labelindent=0pt,label=(\alph*)}
\newlist{aenumerateSpace}{enumerate}{3}
\setlist[aenumerateSpace]{wide, labelwidth=!,label=(\arabic*)}
\newlist{benumerate}{enumerate}{3}
\setlist[benumerate]{wide, labelwidth=!, labelindent=0pt,label=$\bullet$}
\def\supconsteps{\supnorm{\funnoisew}}
\newcommand{\PE}{\mathbb{E}}
\newcommand{\var}{\operatorname{Var}}
\newcommand{\PP}{\mathbb{P}}
\newcommandx{\genericb}[1][1=]{b_{#1}}
\newcommandx{\Constros}[1][1=]{\operatorname{C}_{\operatorname{Ros},#1}}
\newcommandx{\Constburk}[1][1=]{\operatorname{C}_{\operatorname{Burk}}}
\newcommandx{\driftW}[1][1=]{W_{#1}}
\newcommandx{\metricd}[1][1=]{\mathsf{d}_{#1}}
\newcommandx\invmeasure[1][1=]{\Pi_{#1}}
\newcommandx{\PPjoint}[1][1=]{\PP^{\MKjoint[#1]}}
\newcommandx{\PEjoint}[1][1=]{\PE^{\MKjoint[#1]}}
\newcommandx{\PEMID}[1][1=\alpha]{\PE^{\MK[#1]}}
\newcommandx{\PPMID}[1][1=\alpha]{\PP^{\MK[#1]}}
\newcommand{\supnorm}[1]{\norm{ #1 }[\infty]}
\newcommandx{\MKjoint}[1][1=]{\bar{\operatorname{P}}_{#1}}
\newcommandx\costw[1][1=]{\mathsf{c}_{#1}}
\newcommandx\Intergrdist[1][1=]{\mathbb{M}_{1}(#1)}
\newcommandx{\mmarkov}[1][1=0]{m^{(\Markov)}_{#1}}
\def\Conv{\operatorname{Conv}}
\def\seta{\mathcal{A}}
\def\H{\mathcal{H}}
\def\Zset{\mathsf{Z}}
\def\rset{\mathbb{R}}
\def\nset{\ensuremath{\mathbb{N}}}
\def\nsets{\ensuremath{\mathbb{N}^*}}
\newcommand{\msi}{\mathsf{I}}
\newcommand{\Mat}[1]{{\bf{#1}}}
\def\MatB{B}
\def\covfeat{\Sigma_\varphi}
\renewcommand{\S}{\mathcal{S}}
\newcommand{\A}{\mathcal{A}}
\def\PMDP{\MKQ}
\newcommand{\bConst}[1]{\operatorname{C}_{{\bf #1}}}
\newcommandx\sequence[4][2=,3=,4=]
\newcommandx\sequenceD[2][2=]
\newcommandx\sequenceDouble[4][3=,4=]
\newcommandx{\sequencen}[2][2=n\in\nset]{\ensuremath{\{ #1, \eqsp #2 \}}}
\newcommandx\sequencens[2][2=n]
\newcommandx\sequencet[4]
\def\PE{\mathbb{E}}
\def\P{\mathbb{P}}
\def\ProdB{\Gamma}
\newcommandx{\PVar}[1][1=]{\ensuremath{\operatorname{Var}_{#1}}}
\newcommandx\conststab[1][1=p]{\varkappa_{#1}}
\newcommand{\ConstPR}[1]{\mathsf{C}_{#1}}
\def\noisecov{\Sigma_\varepsilon}
\def\metrics{\mathsf{d}_{\S}}
\newcommandx{\MK}[1][1=\alpha]{\mathrm{P}_{#1}}
\newcommandx\MKK[1][1=\alpha]{\mathrm{K}_{#1}}
\def\MKQ{\mathrm{P}}
\newcommandx{\PEtilde}[1][1=]{\PE^{\mathrm{K}_{#1}}}
\newcommandx{\PPtilde}[1][1=]{\PP^{\mathrm{K}_{#1}}}
\newcommandx{\norm}[2][2=]{\Vert#1 \Vert_{{#2}}}
\newcommandx{\normLigne}[2][2=]{\Vert#1 \Vert_{{#2}}}
\newcommandx{\normLine}[2][2=]{\Vert#1 \Vert_{{#2}}}
\newcommandx{\normop}[2][2=]{\Vert{#1}\Vert_{{#2}}}
\newcommandx{\normopLigne}[2][2=]{\Vert{#1}\Vert_{{#2}}}
\newcommandx{\normopLine}[2][2=]{\Vert{#1}\Vert_{{#2}}}
\newcommandx{\osc}[2][1=]{\mathrm{osc}_{#1}(#2)}
\newcommandx{\normlip}[2][2=\operatorname{Lip}]{\Vert#1 \Vert_{{#2}}}
\newcommand{\lip}{\operatorname{L}}
\newcommandx{\lipspace}[1]{\lip_{#1}}
\newcommandx{\CPP}[3][1=]
{\ifthenelse{\equal{#1}{}}{{\mathbb P}\left(\left. #2 \, \right| #3 \right)}{{\mathbb P}_{#1}\left(\left. #2 \, \right | #3 \right)}}
\newcommandx{\CPPtilde}[3][1=]
{\ifthenelse{\equal{#1}{}}{{\tilde{\mathbb P}}\left(\left. #2 \, \right| #3 \right)}{{\tilde{\mathbb P}}_{#1}\left(\left. #2 \, \right | #3 \right)}}
\def\iid{i.i.d.}
\newcommandx{\as}[1][1=\PP]{\ensuremath{#1\, -\mathrm{a.s.}}}
\newcommand{\ie}{i.e.}
\newcommand{\eqsp}{\;}
\newcommand{\Id}{\mathrm{I}}
\def\prtheta{\bar{\theta}}
\def\utheta{\tilde{\theta}^{\sf (tr)}}
\def\vtheta{\tilde{\theta}^{\sf (fl)}}
\newcommandx{\boundmetric}[1][1=]{\kappa_{\MKK[#1]}}
\newcommand{\Jnalpha}[2]{J_{#1}^{(#2)}}
\newcommand{\Hnalpha}[2]{H_{#1}^{(#2)}}
\newcommandx{\Nnorm}[2][1=V]{[ #2]_{#1}}
\newcommandx{\lipnorm}[2][1=g]{[ #1]_{#2}}
\newcommandx{\CPE}[3][1=]{{\mathbb E}^{#3}_{#1}\left[#2\right]}
\newcommandx{\CPEext}[3][1=]{\tilde{\mathbb E}^{#3}_{#1}\left[#2\right]}
\newcommandx{\CPEtilde}[3][1=]{{\tilde{\mathbb E}}^{#3}_{#1}\left[#2\right]}
\newcommandx{\CPEs}[3][1=]{{\mathbb E}^{#3}_{#1}[#2]}
\def\thetalim{\theta^\star}
\def\trace{\operatorname{Tr}}
\newcommand{\rme}{\mathrm{e}}
\newcommand{\rmd}{\mathrm{d}}
\def\funcAw{\mathbf{A}}
\newcommand{\funcA}[1]{\funcAw(#1)}
\def\funcbw{\mathbf{b}}
\newcommand{\funcb}[1]{\funcbw(#1)}
\newcommandx{\zmfuncA}[2][1=]{\tilde{\funcAw}^{#1}(#2)}
\newcommandx{\zmfuncAw}[1][1=]{\tilde{\funcAw}_{#1}}
\newcommandx{\zmfuncb}[2][1=]{\tilde{\funcbw}^{#1}(#2)}
\def\funnoisew{\varepsilon}
\newcommand{\funcnoise}[1]{\funnoisew(#1)}
\newcommandx{\funcct}[2][1=]{\funcctilde^{#1}(#2)}
\def\qcond{\kappa_{Q}}
\def\State{Z}
\newcommand{\1}{\boldsymbol{1}}
\newcommandx{\CovC}[1][1=u]{\operatorname{C}_{#1}}
\def\msz{\mathsf{Z}}
\def\mcz{\mathcal{Z}}
\newcommand\borel[1]{\mathcal{B}(#1)}
\def\plusinfty{+\infty}
\DeclareMathAlphabet{\mathpzc}{OT1}{pzc}{m}{it}
\def\lyapW{\mathpzc{W}}
\newcommandx{\bias}[1][1=\alpha]{\operatorname{B}_{#1}}
\newcommandx\probaMarkovTilde[2][2=]
\def\mcf{\mathcal{F}}
\newcommand{\indi}[1]{\1_{#1}}
\def\bA{\bar{\mathbf{A}}}
\def\X{{\bf X}}
\def\Y{{\bf Y}}
\def\thetas{\thetalim}
\def\Am{{\bf A}}
\def\bm{{\bf b}}
\def\funcctilde{\tilde{c}_u}
\def\Remainder{\Delta}
\def\barb{\bar{\mathbf{b}}}
\newcommandx{\driftb}[1][1=p]{\bar{b}_{#1}}
\def\barA{\bar{A}}
\def\Zbf{\mathbf{Z}}
\def\eps{\varepsilon}
\newcommandx{\boldb}[1][1={q}]{\mathsf{b}_{#1}}
\newcommandx{\ConstGW}[1][1={n,\lyapW}]{\operatorname{G}_{#1}}
\newcommandx{\ConstMW}[1][1={n,\lyapW}]{\operatorname{M}_{#1}}
\newtheorem{assumTD}{\textbf{TD}\hspace{-1pt}}
\Crefname{assumTD}{\textbf{TD}\hspace{-1pt}}{\textbf{TD}\hspace{-1pt}}
\crefname{assumTD}{\textbf{TD}}{\textbf{TD}}
\Crefname{assumptionC}{\textbf{C}\hspace{-1pt}}{\textbf{C}\hspace{-1pt}}
\crefname{assumptionC}{\textbf{C}}{\textbf{C}}
\Crefname{assumptionM}{\textbf{UGE}\hspace{-1pt}}{\textbf{UGE}\hspace{-1pt}}
\crefname{assumptionM}{\textbf{UGE}}{\textbf{UGE}}
\def\distance{\mathsf{d}}
\newcommandx{\vartconstwas}[1][1=V]{c_{#1}}
\newcommandx{\deltawas}[1][1=*]{\delta_{#1}}
\newcommandx{\wasser}[4][1=\distance,4=]{\mathbf{W}_{#1}^{#4}\left(#2,#3\right)}
\newcommandx{\covcoeff}[2]{\rho_{#1}^{(#2)}}
\newcommand{\dobrush}{\mathsf{\Delta}}
\newcommandx{\dobru}[3][1=,3=]{\dobrush_{#1}^{#3}( #2)}  
\def\qexponent{q}
\def\ppexponent{p}
\def\Markov{\mathrm{M}}
\newcommandx{\dlim}[1]{\ensuremath{\stackrel{#1}{\Longrightarrow}}}
\def\boot{\mathsf{b}}
\newcommand{\PPb}{\mathbb{P}^\boot}
\newcommand{\PEb}{\mathbb{E}^\boot}
\def\kolmogorov{\rho_n^{\Conv}}
\title{Gaussian Approximation and Multiplier Bootstrap for Polyak-Ruppert Averaged Linear Stochastic Approximation with Applications to TD Learning}
\author{%
  Sergey Samsonov\\
  HSE University \\
  svsamsonov@hse.ru
  \And
  Eric Moulines\\
  Ecole Polytechnique,\\
  MBUZAI \\
  \And
  Qi-Man Shao\\ 
  Department of Statistics and Data Science, \\
  Shenzhen International Center of Mathematics,\\
  Southern University of Science and Technology \\
  \And 
  Zhuo-Song Zhang\\
  Department of Statistics and Data Science, \\
  Shenzhen International Center of Mathematics,\\
  Southern University of Science and Technology \\
  \And
  Alexey Naumov\\
  HSE University,\\
  Steklov Mathematical Institute \\
  of Russian Academy of Sciences \\
}
\begin{document}

\maketitle

\begin{abstract}
In this paper, we obtain the Berry-Esseen bound for multivariate normal approximation for the Polyak-Ruppert averaged iterates of the linear stochastic approximation (LSA) algorithm with decreasing step size. 
Moreover, we prove the non-asymptotic validity of the confidence intervals for parameter estimation with LSA based on multiplier bootstrap. This procedure updates the LSA estimate together with a set of randomly perturbed LSA estimates upon the arrival of subsequent observations. We illustrate our findings in the setting of temporal difference learning with linear function approximation.
\end{abstract}

\section{Introduction}
\label{sec:intro}
Stochastic approximation (SA) methods are a central component for solving various optimization problems that arise in machine learning \cite{kingma2014adam,GoodBengCour16}, empirical risk minimization \cite{vapnik2013nature} and reinforcement learning \cite{mnih2015,sutton:book:2018}. There is a vast number of contributions in the literature, which cover both asymptotic \cite{nemirovskij1983problem,polyak1992acceleration} and non-asymptotic \cite{moulines2011non,duchi2012ergodic,lan2012optimal} properties of the SA estimates. The primarily important property among the asymptotic ones of the SA estimates is their asymptotic normality \cite{polyak1992acceleration}, which is important due to its role in constructing (asymptotic) confidence intervals and hypothesis testing \cite{van1996weak}. However, a natural question of the rate of convergence in the appropriate central limit theorems (CLT) is not well addressed in literature even in the relatively simple setting of the linear stochastic approximation (LSA) \cite{eweda:macchi:1983}, \cite{kushner2003stochastic}, \cite{borkar:sa:2008}.
\par 
Alternatively, confidence sets for SA algorithms can be constructed in a non-asymptotic manner based on concentration inequalities \cite{auer2002finite}. These bounds are often regarded as loose \cite{russo2014learning}, yielding suboptimal performance of the statistical procedures based on the latter estimates \cite{hao2019_bootstrap_ucb}. In contrast, for statistical inference procedures based on independent and identically distributed (i.i.d.) observations, such as $M$-estimators \cite{van1996weak}, there is a machinery of non-parametric methods for constructing confidence sets with the bootstrap \cite{efron1992bootstrap,rubin1981bayesian}. This approach is accompanied with theoretical guarantees, showing the non-asymptotic validity of the bootstrap-based confidence intervals for parameters in linear regression \cite{spokoiny2015} and statistical tests \cite{Chernozhukov2013}. Extending theoretical guarantees to a non-classical situation with online learning algorithms encounters serious problems, essentially related to the problem of obtaining rate of convergence in the corresponding CLTs. At the same time, many phenomena arising in the analysis of nonlinear SA algorithms already appear in the analysis of LSA problems. 

\par 
The LSA procedure aims to find an approximate solution for the linear system $\bA \thetalim = \barb$ with a unique solution $\thetalim$ based on a sequence of observations $\{( \funcA{Z_k}, \funcb{Z_k})\}_{k \in \nset}$. Here $\Am: \msz \to \rset^{d \times d}$ and $\bm: \msz \to \rset^d$ are measurable functions and $(Z_k)_{k \in \nset}$ is a sequence of noise variables taking values in some measurable space $(\msz,\mcz)$ with a distribution $\pi$ satisfying $\PE [ \funcA{Z_k} ] = \bA$ and $\PE [ \funcb{Z_k} ] = \barb$. We focus on the setting of independent and identically distributed (i.i.d.) observations $\{\State_k\}_{k \in \nset}$. With a sequence of decreasing step sizes $(\alpha_k)_{k \in \nset}$ and the starting point $\theta_0 \in \rset^{d}$, we consider the estimates $\{ \prtheta_{n} \}_{n \in \nset}$ given by
\begin{equation}
\label{eq:lsa}
\theta_{k} = \theta_{k-1} - \alpha_{k} \{ \funcA{Z_k} \theta_{k-1} - \funcb{Z_k} \} \eqsp,~~ k \geq 1, \quad \prtheta_{n} = n^{-1} \sum_{k=n}^{2n-1} \theta_k \eqsp, ~~n \geq 1 \eqsp.
\end{equation}
Here, we have fixed the size of the \emph{burn-in} period (see, e.g., \cite{durmus2022finite, mou2021optimal}) to $n_0 = n$. Provided that $n$ is large enough, the burn-in size affects only a constant factor in the subsequent bounds. The sequence $\{\theta_k\}_{k \in \nset}$ corresponds to the standard LSA iterates, while $\{ \prtheta_{n} \}_{n \in \nset}$ corresponds to the Polyak-Ruppert (PR) averaged iterates \cite{ruppert1988efficient, polyak1992acceleration}. It is known that $\prtheta_{n}$ is asymptotically normal with a minimax-optimal covariance matrix (see \cite{polyak1992acceleration} and \cite{fort:clt:markov:2015} for discussion). Specifically, under appropriate technical conditions on the step sizes $\{\alpha_k\}$ and noisy observations $\{\funcA{Z_k}\}$, it holds that
\begin{equation}
\label{eq:CLT_fort_prelim} 
\sqrt{n}(\bar{\theta}_{n} - \thetas) \overset{d}{\rightarrow} \mathcal{N}(0,\Sigma_{\infty})\eqsp, 
\end{equation}
where $\Sigma_{\infty}$ is the asymptotic covariance matrix defined later in \Cref{sec:clt_lsa_pr}. There is a long list of contributions to the non-asymptotic analysis of $\prtheta_{n}$, particularly \cite{mou2020linear} and \cite{durmus2022finite}, which study moment and Bernstein-type concentration bounds for $\sqrt{n}(\bar{\theta}_{n} - \thetas)$. Unfortunately, such bounds do not imply Berry-Esseen type inequalities for $\sqrt{n}(\bar{\theta}_{n} - \thetas)$, that is, they do not allow us to control the quantity
\begin{equation}
\label{eq:berry-esseen} 
\kolmogorov = \sup_{B \in \Conv(\rset^{d})}\left|\P\bigl(\sqrt{n}(\bar{\theta}_{n} - \thetas) \in B\bigr) - \P(\Sigma_{\infty}^{1/2}\eta \in B)\right|\eqsp,
\end{equation}
where $\Conv(\rset^{d})$ refers to the set of convex sets in $\rset^{d}$. While the Berry-Esseen bounds are a popular subject of study in probability theory, starting from the classical work \cite{esseen1945}, most results are obtained for sums of random variables or martingale difference sequences \cite{petrov1975sums, bolthausen1982}. We can only mention a few results for SA algorithms, see \Cref{sec:related-work} for more details. This paper aims to provide the latter bounds for the specific setting of the LSA procedure. Our primary contribution is twofold:

\begin{itemize}[noitemsep,topsep=0pt]
    \item We establish a Berry–Esseen bound for accuracy of normal approximation of the distribution of Polyak-Ruppert averaged LSA iterates with a polynomially decreasing step size. Our results suggest that the best rate of normal approximation, in the sense of \eqref{eq:berry-esseen}, is of order $n^{-1/4}$ up to logarithmic factors in $n$, where $n$ denotes the number of samples. Interestingly, this rate is achieved with an aggressive step size, $\alpha_{k} = c_0 / \sqrt{k}$. Our proof technique follows the Berry-Esseen bounds for nonlinear statistics provided in \cite{shao2022berry}.

    \item We provide non-asymptotic confidence bounds for the distribution of the PR-averaged statistic $\sqrt{n}(\prtheta_{n} - \thetas)$ using the multiplier bootstrap procedure. In particular, our bounds imply that the quantiles of the exact distribution of $\sqrt{n}(\prtheta_{n} - \thetas)$ can be approximated at a rate of $n^{-1/4}$, where $n$ is the number of samples used in the procedure, provided that $n$ is sufficiently large (see \Cref{assum:step-size-bootstrap} for exact conditions). To the best of our knowledge, this is the first non-asymptotic bound on the accuracy of bootstrap approximation in SA algorithms. We apply the proposed methodology to the temporal difference learning (TD) algorithm for policy evaluation in reinforcement learning.
\end{itemize}
The rest of the paper is organized as follows. In \Cref{sec:related-work}, we provide a literature review on the non-asymptotic analysis of the LSA algorithm and bootstrap methods. Next, in \Cref{sec:independent_case}, we analyze the convergence rate of Polyak-Ruppert averaged LSA iterates to the normal distribution. In \Cref{sec:bootstrap}, we discuss the multiplier bootstrap approach for LSA and establish bounds on the accuracy of approximating the quantiles of the true distribution. Finally, we apply our findings to TD learning and present numerical illustrations in \Cref{sec:experiments}.
\par 
\textbf{Notations.} For matrix $A \in \rset^{d \times d}$ we denote by $\norm{A}$ its operator norm. For symmetric matrix $Q = Q^\top \succ 0\eqsp, \eqsp Q \in \rset^{d \times d}$ and $x \in \rset^{d}$ we define the corresponding norm $\|x\|_Q = \sqrt{x^\top Q x}$, and define the respective matrix $Q$-norm of the matrix $B \in \rset^{d \times d}$ by $\normop{B}[Q] = \sup_{x \neq 0} \norm{Bx}[Q]/\norm{x}[Q]$. For sequences $a_n$ and $b_n$, we write $a_n \lesssim b_n$ if there exist a constant $c > 0$ such that $a_n \leq c b_n$ for $ c > 0$. For simplicity we state the main results of the paper up to constant factors.

\vspace{-3mm}
\section{Related works}
\label{sec:related-work}
Among contributions to the analysis of the LSA algorithm, we should mention the papers \cite{polyak1992acceleration, kushner2003stochastic, borkar:sa:2008, benveniste2012adaptive}. These works investigate the asymptotic properties of the LSA estimates (such as asymptotic normality and almost sure convergence) under i.i.d. and Markov noise. Non-asymptotic results for the LSA and PR-averaged LSA estimates were obtained in \cite{rakhlin2012making, nemirovski2009robust, bhandari2018finite, lakshminarayanan2018linear, mou2021optimal}, where MSE bounds were established, and in \cite{mou2020linear, durmus2021tight, durmus2022finite}, which provided high-probability error bounds. The latter results enable the construction of Bernstein-type confidence intervals for the error $\prtheta_{n}-\thetas$. Unfortunately, the corresponding bounds typically depend on unknown problem properties of \eqref{eq:lsa}, related to the design matrix $\bA$ and the noise variables $\funcA{Z_k}$, $\funcb{Z_k}$. For this reason, applying these error bounds in practice is complicated. Furthermore, concentration bounds for the LSA error \cite{mou2020linear, durmus2021tight, durmus2022finite} do not imply convergence rates of the rescaled error $\sqrt{n}(\prtheta_{n} - \thetas)$ to the normal distribution in Wasserstein or Kolmogorov distance. Non-asymptotic convergence rates were previously studied in \cite{pmlr-v99-anastasiou19a} using the Stein method, but the resulting rate corresponds to a smoothed Wasserstein distance. Recent work \cite{srikant2024rates} investigates convergence rates to the normal distribution in Wasserstein distance for LSA with Markovian observations. Both papers yield bounds that are less tight with respect to their dependence on trajectory length $n$ than those presented in the present work, see a detailed comparison after \Cref{th:shao2022_berry}. 
\par 
A popular method for constructing confidence intervals in the context of parametric estimation is based on the bootstrap approach (\cite{efron1992bootstrap}). Its analysis has attracted many contributions, in particular a series of papers \cite{Chernozhukov2013} and \cite{Chernozhukov2015} that validate a bootstrap procedure for a test based on the maximum of a large number of statistics. Their study shows a close relationship between bootstrap validity results, Gaussian comparison and anticoncentration bounds for rectangular sets. The papers \cite{spokoiny2015} and \cite{Bernolli2019} investigate the applicability of likelihood-based statistics for finite samples and large parameter dimensions under possible model misspecification. The important step in proving bootstrap validity is again based on Gaussian comparison and anticoncentration bounds, but now for spherical sets. The bootstrap procedure for spectral projectors of covariance matrices is discussed in \cite{PTRF2019} and \cite{jirak2022quantitative}. The authors follow the same steps to prove the validity of the bootstrap. 
\par 
Extending the classical bootstrap approach to online learning algorithms is a challenge. For example, the iterates $\{\theta_k\}_{k \in \nset}$ determined by \eqref{eq:lsa} are not necessarily stored in memory, which makes the classical bootstrap inapplicable. This problem can be solved by performing randomly perturbed updates of the online procedure, as proposed in \cite{JMLR:v19:17-370} for the iterates of the Stochastic Gradient Descent (SGD) algorithm. The authors in \cite{JASA2023} used the same procedure for the case of Markov noise and policy evaluation algorithms in reinforcement learning, but in both papers the authors only consider the asymptotic validity. In our paper we use the same multiplier bootstrap approach (see \Cref{sec:bootstrap}), but we provide an explicit error bound for the bootstrap approximation of the distribution of the statistics $\sqrt{n}(\prtheta_{n} - \thetas)$. 
\par 
In addition to the bootstrap approach, one can also use the pivotal statistics \cite{LEE2024105673,li2023online,li2023statistical} or various estimates of the asymptotic covariance matrix \cite{zhu2023online_cov_matr} to construct the confidence intervals for $\thetas$. The latter approach can be based on the plug-in estimators \cite{pmlr-v178-li22b}, batch mean estimators \cite{chen2020aos} or in combination with the multiplier bootstrap approach \cite{zhong2023online}. However, the theoretical guarantees for mentioned methods remain purely asymptotic.

\vspace{-3mm}
\section{Accuracy of normal approximation for LSA}
\label{sec:independent_case}
We first study the rate of normal approximation for the tail-averaged LSA procedure. When there is no risk of ambiguity, we use simply the notations $\funcAw_k = \funcA{\State_k}$ and $\funcbw_k = \funcb{\State_k}$. Starting from the definition \eqref{eq:lsa}, we get with elementary transformations that
\begin{equation}
\label{eq:main_recurrence_1_step}
\theta_{k} - \thetas = (\Id - \alpha_{k} \funcAw_k)(\theta_{k-1} - \thetas) - \alpha_{k} \funnoisew_{k}\eqsp,
\end{equation}
where we have set $\funnoisew_k= \funcnoise{\State_k}$ with 
\begin{equation}
\label{eq:def_center_version_and_noise}
\textstyle
\funcnoise{z} =  \zmfuncA{z} \thetas - \zmfuncb{z}\eqsp, \quad \zmfuncA{z}  = \funcA{z} - \bA \eqsp, \quad   \zmfuncb{z} = \funcb{z} - \barb \eqsp \eqsp.
\end{equation}
Here the random variable $\funcnoise{\State_k}$ can be viewed as a noise, measured at the optimal point $\thetas$. We now assume the following technical conditions:
\begin{assum}
\label{assum:iid}
Sequence $\{\State_k\}_{k \in \nset}$ is a sequence of \iid\ random variables defined on a probability space $(\Omega,\mcf,\PP)$ with distribution $\pi$.
\end{assum}
\begin{assum}
\label{assum:noise-level}
$\int_{\Zset}\funcA{z}\rmd \pi(z) = \bA$ and $\int_{\Zset}\funcb{z}\rmd \pi(z) = \barb$, with the matrix $-\bA$ being Hurwitz. Moreover, $\supconsteps = \sup_{z \in \msz}\normop{\funcnoise{z}} < \plusinfty$, and the mapping $z \to \funcA{z}$ is bounded, that is, 
\begin{equation}
\label{eq:a_matr_bounded}
\bConst{A} = \sup_{z \in \msz} \normop{\funcA{z}} \vee \sup_{z \in \msz} \normop{\zmfuncA{z}} < \infty\eqsp.
\end{equation}
Moreover, for the noise covariance matrix
\begin{equation}
\label{eq:def_noise_cov}
\textstyle \noisecov = \int_{\Zset} \funcnoise{z}\funcnoise{z}^\top \rmd \pi(z)
\end{equation}
it holds that its smallest eigenvalue is bounded away from $0$, that is,
\begin{equation}
\label{eq:eig_sigma_eps}
\textstyle \lambda_{\min}:= \lambda_{\min}(\noisecov) > 0\eqsp.
\end{equation}
\end{assum}
It is possible to change \eqref{eq:a_matr_bounded} to the moment-type bound as it was previously considered in \cite{mou2020linear} and \cite{durmus2022finite}, see the detailed discussion after \Cref{th:shao2022_berry}. The fact that the matrix $-\bA$ is Hurwitz implies that the linear system $\bA \theta = \barb$ has a unique solution $\thetalim$. Moreover, this fact is sufficient to show that the matrix $\Id - \alpha \bA$ is a contraction in an appropriate matrix $Q$-norm for small enough $\alpha > 0$. Precisely, the following result holds:
\begin{proposition}
\label{prop:hurwitz_stability}
Let $-\bA$ be a Hurwitz matrix. Then for any $P = P^{\top} \succ \Id$, there exists a unique matrix $Q = Q^{\top} \succ \Id$, satisfying the Lyapunov equation $\bA^\top Q + Q \bA = P$. Moreover, setting
\begin{equation}
\label{eq:alpha_infty_def}
\textstyle 
a = \frac{\lambda_{\min}(P)}{2\normop{Q}}\eqsp, \quad
\text{and} \quad \alpha_\infty = \frac{\lambda_{\min}(P)}{2\qcond \normop{\bA}[Q]^{2}}\wedge \frac{\normop{Q}}{\lambda_{\min}(P)} \eqsp,
\end{equation}
where $\qcond = \lambda_{\max}(Q)/\lambda_{\min}(Q)$, it holds for any $\alpha \in [0, \alpha_{\infty}]$ that $\alpha a \leq 1/2$, and
\begin{equation}
\label{eq:contractin_q_norm}
\normop{\Id - \alpha \bA}[Q]^2 \leq 1 - \alpha a\eqsp.    
\end{equation}
\end{proposition}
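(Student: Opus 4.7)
The plan is a classical two-step Lyapunov-contraction argument: first construct $Q$ as the unique solution to the Lyapunov equation via an explicit integral representation, then derive the contraction $\normop{\Id - \alpha\bA}[Q]^{2} \leq 1 - \alpha a$ by a direct expansion of the squared $Q$-norm and by balancing the terms of order $\alpha$ and $\alpha^{2}$. The whole argument is computational; no genuine conceptual obstacle arises.

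For existence and uniqueness, since $-\bA$ is Hurwitz the matrix semigroup $t \mapsto e^{-\bA t}$ decays exponentially in operator norm, and so
\[
Q \;=\; \int_{0}^{\infty} e^{-\bA^{\top} t}\, P\, e^{-\bA t}\, \rmd t
\]
converges absolutely, is manifestly symmetric, and is positive definite because $P \succ 0$. Differentiating the integrand and applying the fundamental theorem of calculus on $[0, \infty)$ yields $\bA^{\top} Q + Q \bA = P$, while uniqueness within symmetric matrices follows from injectivity of the Lyapunov operator $X \mapsto \bA^{\top} X + X\bA$ (a standard consequence of Hurwitz). The normalization $Q \succ \Id$ can be arranged by rescaling, since $(P, Q) \mapsto (cP, cQ)$ preserves the equation.

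For the contraction, fix $x \in \rset^{d}$ and expand, using the Lyapunov equation in the cross term:
\begin{align*}
\normop{(\Id - \alpha \bA) x}[Q]^{2}
  &= \norm{x}[Q]^{2} - \alpha\, x^{\top}(\bA^{\top} Q + Q \bA) x + \alpha^{2}\, x^{\top} \bA^{\top} Q \bA x \\
  &= \norm{x}[Q]^{2} - \alpha\, x^{\top} P x + \alpha^{2}\, \normop{\bA x}[Q]^{2}.
\end{align*}
Lower-bound the linear-in-$\alpha$ term via $x^{\top} P x \geq \lambda_{\min}(P)\, \norm{x}^{2} \geq (\lambda_{\min}(P)/\normop{Q})\, \norm{x}[Q]^{2} = 2a\, \norm{x}[Q]^{2}$, and upper-bound the quadratic-in-$\alpha$ term by the definition $\normop{\bA x}[Q] \leq \normop{\bA}[Q]\, \norm{x}[Q]$, to obtain
\[
\normop{(\Id - \alpha \bA) x}[Q]^{2} \;\leq\; \bigl(1 - 2\alpha a + \alpha^{2}\, \normop{\bA}[Q]^{2}\bigr)\, \norm{x}[Q]^{2}.
\]

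The two branches of $\alpha_{\infty}$ arise from two separate constraints on $\alpha$. The first, $\alpha \leq \lambda_{\min}(P)/(2 \qcond\, \normop{\bA}[Q]^{2})$, is calibrated so that the quadratic term $\alpha^{2}\normop{\bA}[Q]^{2}$ is dominated by half of the linear term $2 \alpha a$, leaving the parenthesized factor at most $1 - \alpha a$ as required; the normalization $Q \succ \Id$ (equivalently $\lambda_{\min}(Q) \geq 1$, hence $\qcond \leq \normop{Q}$) is what permits the cleaner $\qcond$-dependence in place of $\normop{Q}$. The second, $\alpha \leq \normop{Q}/\lambda_{\min}(P) = 1/(2a)$, directly enforces $\alpha a \leq 1/2$. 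Taking the minimum of the two yields the stated $\alpha_{\infty}$. \emph{The main care in the entire proof is this bookkeeping of constants}; conceptually there is nothing beyond a direct algebraic expansion followed by absorbing the $O(\alpha^{2})$ term into the $O(\alpha)$ term.
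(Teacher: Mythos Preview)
Your approach is the same as the paper's: expand $\norm{(\Id - \alpha\bA)x}[Q]^2$, identify the linear-in-$\alpha$ term via the Lyapunov equation, and absorb the quadratic term into the linear one. The expansion and the estimates $x^\top P x \geq 2a\,\norm{x}[Q]^2$ and $\norm{\bA x}[Q]^2 \leq \normop{\bA}[Q]^2\,\norm{x}[Q]^2$ are all correct. The gap is in the final bookkeeping step, where you assert that $\alpha \leq \lambda_{\min}(P)/(2\qcond\,\normop{\bA}[Q]^2)$ forces $\alpha^2\normop{\bA}[Q]^2 \leq \alpha a$. Writing the latter out gives $\alpha \leq a/\normop{\bA}[Q]^2 = \lambda_{\min}(P)/(2\normop{Q}\,\normop{\bA}[Q]^2)$. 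Your observation that $\qcond \leq \normop{Q}$ (from $\lambda_{\min}(Q)\geq 1$) is correct, but it makes the $\qcond$-constraint \emph{weaker} (a larger upper bound on $\alpha$), not stronger; it does not imply the required $\normop{Q}$-constraint. Concretely, take $\bA = \diag(1,2)$ and $P = 5\,\Id$, giving $Q = \diag(5/2,5/4)$, $a = 1$, $\normop{\bA}[Q] = 2$, $\qcond = 2$, and the first branch of $\alpha_\infty$ equal to $5/16$. At $\alpha = 5/16$ your displayed bound reads $1 - 2\alpha a + \alpha^2\normop{\bA}[Q]^2 = 49/64$, which exceeds $1 - \alpha a = 44/64$; so the contraction cannot be extracted from your estimate on the full range $[0,\alpha_\infty]$.

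For what it is worth, the paper's own proof has the same slip in a different guise: it upper-bounds the quadratic ratio $x^\top\bA^\top Q\bA x\,/\,x^\top Q x$ by $\normop{\bA}[Q]^2/\lambda_{\min}(Q)$, but the supremum of that ratio over nonzero $x$ is exactly $\normop{\bA}[Q]^2$, so dividing by $\lambda_{\min}(Q) > 1$ is not a valid upper bound. In either argument the fix is to replace $\qcond$ by $\normop{Q}$ in the first branch of $\alpha_\infty$; the contraction inequality itself and everything downstream are unaffected.
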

The proof of \Cref{prop:hurwitz_stability} is provided in \Cref{proof:hurwitz_stability}. Note that it is possible to set $P = \Id$ as in \cite{durmus2021stability}, yet it is possible that other choices of $P$ could be more beneficial for particular applications. Now consider an assumption on the step sizes $\alpha_{k}$ and number of observations $n$:
\begin{assum}
\label{assum:step-size}
The step sizes $\{\alpha_{k}\}_{k \in \nset}$ has a form $\alpha_{k} = c_{0} / k^{\gamma}$, where $\gamma \in [1/2;1)$ and $c_{0} \in (0;\alpha_{\infty} \wedge a \wedge (1-\gamma)]$. Moreover, we assume that $n \geq d$, and 
\begin{equation}
\label{eq:sample_size_bound}
\begin{cases}
\frac{\sqrt{n}}{(1+\log{n})\log{n}} \geq \frac{c_{0} \qcond \bConst{A}^2}{a(1-\sqrt{2}/2)} \vee \frac{4}{a c_{0} (1-\sqrt{2}/2)}\eqsp, \text{ if } \gamma = 1/2\eqsp, \\
\frac{n^{1-\gamma}}{\log{n}} \geq \frac{2 c_0 \qcond \bConst{A}^2 }{a(2\gamma - 1)(1-(1/2)^{1-\gamma})} \vee \frac{8 \gamma (1-\gamma)}{a c_0 (1 - (1/2)^{1-\gamma}}\eqsp, \text{ if } 1/2 < \gamma < 1\eqsp. 
\end{cases}
\end{equation}
\end{assum}

The main aim of lower bounding $n$ is to ensure that the number of observations is large enough in order that the LSA error related to the choice of initial condition $\theta_0 - \thetas$ becomes small.

\subsection{Central limit theorem for Polyak-Ruppert averaged LSA iterates.}
\label{sec:clt_lsa_pr}
It is known that the assumptions \Cref{assum:iid}-\Cref{assum:step-size} guarantee that the CLT applies to the iterates of $\prtheta_{n}$, namely, 
\begin{equation}
\label{eq:CLT_fort} 
\sqrt{n}(\bar{\theta}_{n} - \thetas) \overset{d}{\rightarrow} \mathcal{N}(0,\Sigma_{\infty})\eqsp, 
\end{equation}
where the asymptotic covariance matrix $\Sigma_{\infty}$ has a form 
\begin{equation}
\label{eq:asympt_cov_matr} 
\Sigma_{\infty} = \bA^{-1} \noisecov \bA^{-\top},
\end{equation}
and $\noisecov$ is defined in \eqref{eq:def_noise_cov}. This result can be found for example in \cite{polyak1992acceleration} and \cite{fort:clt:markov:2015}. We are interested in the Berry-Esseen type bound for the rate of convergence in \eqref{eq:CLT_fort}, that is, we aim to bound $\kolmogorov$ defined in \eqref{eq:berry-esseen} w.r.t. the available sample size $n$. We control $\kolmogorov$ using a method from \cite{shao2022berry} based on randomized multivariate concentration inequality. Below we briefly state its setting and required definitions.
Let $X_1, \ldots, X_n$ be independent random variables  taking values in $\mathcal X$ and $T = T(X_1, \ldots, X_n)$ be a general $d$-dimensional statistics such that $T = W + D$, where 
\begin{equation}
\label{eq:W-D-decomposition} 
W = \sum_{\ell = 1}^n \xi_\ell, \quad D: = D(X_1, \ldots, X_n) = T - W,
\end{equation}
$\xi_\ell = h_\ell(X_\ell)$ and $h_\ell: \mathcal X \to \rset^d$ is a Borel measurable function. Here the statistics $D$ can be non-linear and is treated as an error term, which is "small" compared to $W$ in an appropriate sense. Assume that $\PE[\xi_\ell] = 0$ and $\sum_{\ell=1}^n \PE[\xi_\ell \xi_\ell^\top] = \Id_d$. Let $\Upsilon = \Upsilon_n = \sum_{\ell=1}^n \PE[\|\xi_\ell\|^3]$. Then, with $\eta \sim \mathcal{N}(0,\Id_d)$, 
\begin{equation}
\label{eq:shao_zhang_bound}
\sup_{B \in \Conv(\rset^d)} | \PP(T \in A) - \PP(\eta \in A)| \le 259 d^{1/2} \Upsilon + 2 \PE[\|W\| \|D\|] + 2 \sum_{\ell=1}^n \PE[\|\xi_\ell\| \|D - D^{(\ell)}\|],
\end{equation}
where $D^{(\ell)} = D(X_1, \ldots, X_{\ell-1}, X_{\ell}^{\prime}, X_{\ell+1}, \ldots, X_n)$ and $X_\ell^{\prime}$ is an independent copy of $X_\ell$. This result is due to \cite[Theorem~2.1]{shao2022berry}. One can modify the bound \eqref{eq:shao_zhang_bound} for the setting when $\sum_{\ell=1}^n \PE[\xi_\ell \xi_\ell^\top] = \Sigma \succ 0$. This result due to \cite[Corollary~2.3]{shao2022berry}. Following the construction \eqref{eq:W-D-decomposition}, we set $T = \sqrt{n}\bA(\bar{\theta}_{n} - \thetas)$ and consider it as a nonlinear statistic of i.i.d. random variables $\State_1, \ldots, \State_{2n}$, which drive the LSA dynamics \eqref{eq:lsa}. We can exactly represent $T$ as a sum of linear ($W$) and non-linear parts ($D$), where  
\begin{multline}
W  = - \frac{1}{\sqrt{n}} \sum_{k=n}^{2n-1}\funnoisew_{k+1}, \quad 
D  =   \frac{1}{\sqrt{n}}\frac{\theta_{n}-\thetas}{\alpha_{n}} - \frac{1}{\sqrt{n}}\frac{\theta_{2n}-\thetas}{\alpha_{2n}}
-\frac{1}{\sqrt{n}} \sum_{k=n+1}^{2n}(\funcAw_k - \bA)(\theta_{k-1} - \thetas) \\
+\frac{1}{\sqrt{n}}\sum_{k=n+1}^{2n}\bigl(\theta_{k-1} - \thetas\bigr)\left(\frac{1}{\alpha_k} - \frac{1}{\alpha_{k-1}}\right).
\end{multline}    
The proof of this result can be bound in \Cref{prop: expansion}. To obtain a bound for the approximation accuracy in
\eqref{eq:berry-esseen} using the bound \eqref{eq:shao_zhang_bound}, we need to upper bound $\PE^{1/2}[\norm{D(\State_1,\ldots,\State_{2n})}^2]$ and $\PE[\norm{D - D^{(i)}}]$. The first result below provides a second moment bound on $D$:
\begin{theorem}
\label{th:theo_1_iid}
Assume \Cref{assum:iid}, \Cref{assum:noise-level}, and \Cref{assum:step-size}. Then we obtain the following error bound:
\begin{equation}
\label{eq:MSE_2nd_moment_decreasing}
\begin{split}
\PE^{1/2}\left[\norm{D(\State_1,\ldots,\State_{2n})}^{2}\right] 
&\lesssim \frac{\sqrt{\qcond} \supconsteps}{\sqrt{a c_0}}\left(\frac{1}{n^{(1-\gamma)/2}} + \frac{c_0 \bConst{A}}{\sqrt{1-\gamma} n^{\gamma/2}}\right)  \\ 
& \qquad  + \sqrt{\qcond} \Remainder_1 \exp\biggl\{-\frac{c_0 a n^{1-\gamma}}{2(1-\gamma)}\biggr\} \norm{\theta_{0}-\thetas}\eqsp,
\end{split}
\end{equation}
where $\lesssim$ stands for inequality up to an absolute constant, and $\Delta_1 = 
\Remainder_{1}(n,a,\bConst{A},c_0)$ is a polynomial function defined in \Cref{sec:proof_theo_1_iid}, eq. \eqref{eq:remainders_theo_1_iid}.
\end{theorem}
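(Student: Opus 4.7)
The plan is to apply Minkowski's inequality to the four-term decomposition of $D$ given in the statement, bound each piece using a second-moment control on the standard LSA iterates, and then combine. The enabling estimate is a bound of the form
$\PE[\|\theta_k - \thetas\|_Q^2] \lesssim \alpha_k \qcond \supconsteps^2 / a + (\text{exponentially decaying initial bias in } \|\theta_0-\thetas\|^2)$,
which I would derive from the one-step recursion \eqref{eq:main_recurrence_1_step} using the Lyapunov contraction from \Cref{prop:hurwitz_stability}.

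To obtain the MSE bound, I would split $\theta_k - \thetas = (\Id - \alpha_k \bA)(\theta_{k-1}-\thetas) - \alpha_k(\funcAw_k-\bA)(\theta_{k-1}-\thetas) - \alpha_k \funnoisew_k$ and take conditional expectation of $\|\cdot\|_Q^2$ given $\mcf_{k-1}=\sigma(\State_1,\ldots,\State_{k-1})$. The cross terms vanish by $\PE[\funcAw_k]=\bA$ and $\PE[\funnoisew_k]=0$, the deterministic drift contracts thanks to \eqref{eq:contractin_q_norm}, and the fluctuation and noise pieces contribute an $\alpha_k^2$-order term of the form $\alpha_k^2\bigl(\bConst{A}^2 \qcond \|\theta_{k-1}-\thetas\|_Q^2 + \qcond \supconsteps^2\bigr)$. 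Under \Cref{assum:step-size} the $\alpha_k^2 \bConst{A}^2 \qcond$ piece is absorbed into half of the $a\alpha_k$ contraction gain, leading to a recursion $u_k \le (1-a\alpha_k/2)u_{k-1} + \alpha_k^2 \qcond \supconsteps^2$, which unrolls by standard comparison with the associated ODE to the advertised bound.

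With the MSE estimate in hand, I would bound each of the four summands of $D$. The boundary terms $\tfrac{1}{\sqrt{n}\alpha_n}(\theta_n-\thetas)$ and $\tfrac{1}{\sqrt{n}\alpha_{2n}}(\theta_{2n}-\thetas)$ directly give $L^2$ norms of order $(n\alpha_n)^{-1/2}\sqrt{\qcond}\supconsteps/\sqrt{a}$, matching the first summand in \eqref{eq:MSE_2nd_moment_decreasing}. The term $\tfrac{1}{\sqrt n}\sum_{k=n+1}^{2n}(\funcAw_k-\bA)(\theta_{k-1}-\thetas)$ is a sum of $\mcf_k$-martingale differences; orthogonality combined with the MSE bound yields an $L^2$ norm of order $\bConst{A}\bigl(n^{-1}\sum_{k=n+1}^{2n}\alpha_k\bigr)^{1/2}\sqrt{\qcond}\supconsteps/\sqrt{a}$, and evaluating $\sum_k\alpha_k\lesssim c_0 n^{1-\gamma}/(1-\gamma)$ produces the second summand (with the $1/\sqrt{1-\gamma}$ factor appearing here). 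The step-size-difference term uses $\alpha_k^{-1}-\alpha_{k-1}^{-1}\lesssim \gamma k^{\gamma-1}/c_0$; Minkowski together with the MSE estimate shows this contribution is dominated by the same $n^{-(1-\gamma)/2}$ order as the boundary terms and is therefore absorbed in the first summand. All initial-bias tails from the four pieces are then bundled into the common factor $\Remainder_1 \exp\{-c_0 a n^{1-\gamma}/(2(1-\gamma))\}\|\theta_0-\thetas\|$.

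The main technical obstacle I anticipate is careful bookkeeping of constants: the absorption $\alpha_k^2 \bConst{A}^2 \qcond \le a\alpha_k/2$ needs exactly the lower bound on $n$ stated in \eqref{eq:sample_size_bound}, and the $\gamma = 1/2$ boundary case is delicate because the naive $1/(1-\gamma)$ prefactor must be replaced by a logarithmic one, explaining the different form of \eqref{eq:sample_size_bound} at $\gamma = 1/2$. A secondary subtlety is the transition from $Q$-norm back to Euclidean norm when concluding, which costs an additional $\sqrt{\qcond}$ factor matching the one in the stated bound.
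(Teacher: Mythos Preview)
Your overall plan matches the paper's proof: both use the four-term decomposition of $D$, bound the boundary terms $D_1,D_2$ via the last-iterate MSE, treat $D_3$ as a martingale-difference sum, and control $D_4$ by Minkowski together with $\alpha_k^{-1}-\alpha_{k-1}^{-1}\lesssim k^{\gamma-1}/c_0$. The rates you extract for each piece are the correct ones.

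The one genuine methodological difference is in how you obtain the last-iterate estimate. You propose a direct scalar Lyapunov recursion in the $Q$-norm, absorbing the $\alpha_k^2\bConst{A}^2\qcond$ fluctuation into the $a\alpha_k$ drift to get $u_k\le(1-a\alpha_k/2)u_{k-1}+\alpha_k^2\qcond\supconsteps^2$. The paper instead splits $\theta_k-\thetas$ into a transient part $\ProdB_{1:k}(\theta_0-\thetas)$ and a fluctuation part $-\sum_j\alpha_j\ProdB_{j+1:k}\funnoisew_j$, then invokes a matrix-product stability bound (\Cref{cor:exp_bound_decay}) for the transient and Burkholder's inequality for the fluctuation. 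Your route is more elementary and entirely sufficient here since only $p=2$ is needed; the paper's route yields $p$-th moment control for all $p\le 2\log n$, which it reuses later for the high-probability events in the bootstrap analysis. One small caveat: your claim that ``the cross terms vanish'' is not literally true for the cross term between $-\alpha_k(\funcAw_k-\bA)(\theta_{k-1}-\thetas)$ and $-\alpha_k\funnoisew_k$, since both depend on $Z_k$; however that term is $O(\alpha_k^2)$ and is harmlessly absorbed into the additive noise, so the recursion still closes.
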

The proof of \Cref{th:theo_1_iid} is provided in \Cref{sec:proof_theo_1_iid}. Now it remains to upper bound the term $\PE[\norm{D - D^{(i)}}]$, which is done in \Cref{lem:aux-lemmas-normal} using the synchronous coupling methods \cite{brosse2018pitfalls}. Combining these bounds, we obtain the following theorem:
\begin{theorem}
\label{th:shao2022_berry}
Assume \Cref{assum:iid}, \Cref{assum:noise-level}, and \Cref{assum:step-size}. Then the following bound holds:
\begin{multline}
\label{eq:kolmogorov_bound_non_optimized}
\kolmogorov \lesssim \frac{d^{1/2}  \supconsteps^3}{\lambda_{\min}^{3/2} \sqrt{n}} + \frac{1}{\lambda_{\min}}\left(\frac{\ConstPR{1}}{n^{(1-\gamma)/2}} + \frac{\ConstPR{2}}{n^{\gamma/2}}\right) + \frac{\Remainder_{2}}{\lambda_{\min}} \exp\biggl\{-\frac{c_0 a n^{1-\gamma}}{2(1-\gamma)}\biggr\} \norm{\theta_0 - \thetas}\eqsp,
\end{multline}
where $\Remainder_{2} = \Remainder_{2}(n,a,\bConst{A},\trace{\noisecov},c_0)$ is a polynomial function defined in \eqref{eq:remainders_theorem_shao}, and constants $\ConstPR{1},\ConstPR{2}$, depending upon $a,\bConst{A},\qcond,\trace{\noisecov},c_0$, are defined in \Cref{sec:proof_shao2022_berry}, eq. \eqref{eq:const_def_th_2}.
\end{theorem}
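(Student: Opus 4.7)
The strategy is to apply the randomized multivariate concentration inequality of \citep[Theorem~2.1]{shao2022berry} (stated as \eqref{eq:shao_zhang_bound}) to the rescaled nonlinear statistic $T = \sqrt{n}\,\bA(\prtheta_n - \thetas)$ together with the linear/nonlinear decomposition $T = W + D$ furnished by \Cref{prop: expansion}. First I would reduce the problem on $\sqrt{n}(\prtheta_n - \thetas)$ to one on $T$: since $\bA$ is invertible, the map $B \mapsto \bA B$ is a bijection on $\Conv(\rset^d)$, so
\[
\kolmogorov = \sup_{B \in \Conv(\rset^d)}\bigl|\PP(T \in B) - \PP(\noisecov^{1/2}\eta \in B)\bigr|,
\]
using $\bA \Sigma_\infty^{1/2} \eta \stackrel{d}{=} \noisecov^{1/2}\eta$. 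Next I would standardize the covariance by passing to $\tilde T = \noisecov^{-1/2}T$, $\tilde W = \noisecov^{-1/2}W$, $\tilde D = \noisecov^{-1/2}D$; the summands $\tilde \xi_\ell = -\noisecov^{-1/2}\funnoisew_{n+\ell}/\sqrt{n}$ are i.i.d., centered, bounded by $\supconsteps/(\lambda_{\min}^{1/2}\sqrt{n})$, and $\sum_\ell \PE[\tilde\xi_\ell \tilde\xi_\ell^\top] = \Id_d$. Invertibility of $\noisecov^{1/2}$ again preserves the convex-set Kolmogorov distance, so the reduction is lossless.

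With this setup, \eqref{eq:shao_zhang_bound} applied to $\tilde T$ produces three terms that I would estimate separately. The Lyapunov-type term $d^{1/2}\Upsilon_n = d^{1/2}\sum_\ell \PE\|\tilde\xi_\ell\|^3$ is straightforward: boundedness of $\funnoisew_\ell$ yields $\Upsilon_n \lesssim \supconsteps^3/(\lambda_{\min}^{3/2}\sqrt{n})$, matching the first term on the \rhs of \eqref{eq:kolmogorov_bound_non_optimized}. For the cross term I would use Cauchy–Schwarz,
\[
\PE\bigl[\|\tilde W\|\,\|\tilde D\|\bigr] \le \lambda_{\min}^{-1} \PE^{1/2}[\|W\|^2]\,\PE^{1/2}[\|D\|^2],
\]
bounding $\PE[\|W\|^2] = \trace(\noisecov)$ (absorbed into $\ConstPR{1},\ConstPR{2}$) and invoking \Cref{th:theo_1_iid} for the $n^{-(1-\gamma)/2} + n^{-\gamma/2}$ and exponential-in-initial-error contributions; the factor $\lambda_{\min}^{-1}$ comes from the $\noisecov^{-1/2}$ normalization on both $\tilde W$ and $\tilde D$.

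The principal technical obstacle is the third term $\sum_{\ell=1}^{2n}\PE[\|\tilde\xi_\ell\|\,\|\tilde D - \tilde D^{(\ell)}\|]$, which quantifies the stability of the nonlinear remainder under an independent replacement of a single coordinate $\State_\ell \to \State'_\ell$. I would handle this with the synchronous coupling bound announced as \Cref{lem:aux-lemmas-normal}: let $(\theta_k)$ and $(\theta_k^{(\ell)})$ share all noise variables except at index $\ell$; then by \eqref{eq:main_recurrence_1_step} the perturbation $\theta_k - \theta_k^{(\ell)}$ is driven at time $\ell$ by a bounded jump (controlled by $\bConst{A}$ and $\supconsteps$) and subsequently propagated by the random products $\prod_{i=\ell+1}^{k}(\Id - \alpha_i \funcAw_i)$. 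Using the $Q$-norm contraction $\normop{\Id - \alpha\bA}[Q]^2 \le 1 - \alpha a$ of \Cref{prop:hurwitz_stability} and taking expectations exploits independence to give an averaged product bound of the form $\prod_{i=\ell+1}^{k}(1-\alpha_i a/2)$, whence $\PE\|\theta_k - \theta_k^{(\ell)}\| \lesssim \alpha_\ell\exp\{-\sum_{i=\ell+1}^{k}\alpha_i a/2\}$. Propagating this termwise through the definition of $D$ in \Cref{prop: expansion} and summing over $k$ and $\ell$ with $\|\tilde\xi_\ell\| \le \lambda_{\min}^{-1/2}\supconsteps/\sqrt n$ produces a bound of the same order as the cross term, consistent with the $n^{-(1-\gamma)/2}+n^{-\gamma/2}$ rates.

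Finally, collecting the three contributions and repackaging the constants depending on $a,\bConst{A},\qcond,\trace{\noisecov},c_0$ into $\ConstPR{1},\ConstPR{2}$ and the initial-condition polynomial prefactor into $\Remainder_2$ yields \eqref{eq:kolmogorov_bound_non_optimized}. The optimal $n^{-1/4}$ rate claimed in the introduction follows by balancing $n^{-(1-\gamma)/2}$ and $n^{-\gamma/2}$ at $\gamma = 1/2$, subject to the initial-error term being dominated once $n$ satisfies \Cref{assum:step-size}.
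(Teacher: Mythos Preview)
Your proposal is correct and follows essentially the same route as the paper: the decomposition $T=W+D$ from \Cref{prop: expansion}, the Shao--Zhang randomized concentration inequality, Cauchy--Schwarz together with \Cref{th:theo_1_iid} for the cross term, and the synchronous-coupling stability estimate (the paper's \Cref{lem:D_i_bounds}) for $\sum_i\PE[\|\xi_i\|\,\|D-D^{(i)}\|]$. The only cosmetic difference is that you standardize explicitly by $\noisecov^{-1/2}$ before invoking \cite[Theorem~2.1]{shao2022berry}, whereas the paper applies \cite[Corollary~2.3]{shao2022berry} directly to the non-identity-covariance case; these are equivalent.
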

The proof of \Cref{th:shao2022_berry} is provided in \Cref{sec:proof_shao2022_berry}. Note that the assumption \Cref{assum:noise-level} requires that $\funcnoise{\State_1}$ is almost sure bounded. It is a strong assumption, but it can be partially relaxed. Following the stability of matrix products technique, used in \cite[Proposition 3]{durmus2021tight}, it is possible to consider the setting when the random variable $\norm{\zmfuncA{Z_1}}$ has only finite number of moments. In particular, we expect that assuming finite third moment of $\norm{\zmfuncA{Z_1}}$ and  $\norm{\funcnoise{\State_1}}$ is sufficient to obtain a counterpart to \Cref{th:theo_1_iid}. However, this generalization requires non-trivial technical work on generalizing the stability of matrix products result (see \Cref{cor:exp_bound_decay} in \Cref{appendix:tehnical} ).  
\par 
Note that the bound of \Cref{th:shao2022_berry} predicts the optimal error of normal approximation for Polyak-Ruppert averaged estimates of order $n^{-1/4}$, which is achieved with the aggressive step size $\alpha_{k} = c_{0}/\sqrt{k}$, that is, when setting $\gamma = 1/2$ in \eqref{eq:kolmogorov_bound_non_optimized}. In this case we obtain the optimized bound 
\begin{equation}
\label{eq:kolmogorov_bound_optimized}
\kolmogorov \lesssim \frac{\ConstPR{3}}{\lambda_{\min} n^{1/4}} + \frac{d^{1/2}  \supconsteps^3}{\lambda_{\min}^{3/2} \sqrt{n}} + \frac{\Remainder_{1} \exp\bigl\{-c_0 a \sqrt{n}\bigr\}}{\lambda_{\min}}  \norm{\theta_0 - \thetas}\eqsp,
\end{equation}
where $\ConstPR{3} = \ConstPR{3}(a,\bConst{A},\qcond,\trace{\noisecov},\supconsteps)$ is provided in \eqref{eq:const_def_th_2}.

\paragraph{Discussion.} Our proof technique of \Cref{th:shao2022_berry} reveals an interesting feature: fastest rate of convergence in the convex distance $\kolmogorov$ corresponds to the learning rate schedule that admits the fastest decay of the second-order term in the MSE bound for remainder statistics $D$ (see \Cref{th:theo_1_iid}). Results similar to the one of \Cref{th:shao2022_berry} have been recently obtained in the literature in \cite{srikant2024rates} and \cite{pmlr-v99-anastasiou19a}. The author in \cite{srikant2024rates} considers the LSA problem specified to the temporal-difference learning (see \Cref{sec:experiments}) with Markov noise and obtains convergence rate in Wasserstein distance of order $n^{-1/4}$, which corresponds to the "optimal" step size schedule $\alpha_{k} = c_{0}/k^{3/4}$. Using the bound of \cite[eq. (3)]{nourdin2022multivariate} (see also section~2 in \cite{ross2011stein}), this result yield a suboptimal bound of order $n^{-1/8}$ for the convex distance $\kolmogorov$. Such an upper bound may be loose for some classes of distributions, but it is not clear if in particular setting of LSA the bound of \cite{srikant2024rates} could imply scaling of order $n^{-1/4}$ for $\kolmogorov$. At the same time, in case of $X_1,\ldots,X_n$ forming a Markov chain in \eqref{eq:W-D-decomposition} there is no available counterpart of the bound \eqref{eq:shao_zhang_bound}. Generalizing \eqref{eq:shao_zhang_bound} is an interesting research direction that would allow to obtain a counterpart of \Cref{th:shao2022_berry} in case of Markovian dynamics. Similarly, the result of  \cite{pmlr-v99-anastasiou19a} holds for much stronger metrics, which controls the convergence of moments of twice differentiable functions. We provide additional details about connections between this metric and $\kolmogorov$ in \Cref{sec:smooth_wasserstein_kolmogorov}. At the same time, the authors in \cite{pmlr-v99-anastasiou19a} cover the non-linear setting of PR-averaged iterates of stochastic gradient descent algorithm under strong convexity. 

\begin{remark}
\label{rem:projected_iterates}
The leading (with respect to $n$) terms of the bound from \Cref{th:theo_1_iid} have an implicit dependence on the problem dimension $d$ due to the presence of $\lambda_{\min}$. Yet the result of \Cref{th:theo_1_iid} can be improved in a sense of dependence in dimension if one is interested not in the rates of convergence for $\sqrt{n}(\prtheta_{n} - \thetas)$, but in the projected iterated $\sqrt{n}\Pi^\top(\prtheta_{n} - \thetas)$ for some $\Pi \in \rset^{d \times m}$, $m \leq d$. If this is the case, one may apply \eqref{eq:shao_zhang_bound} for the class $\Conv_m = \Conv(\rset^m)$ of convex sets in $\rset^m$ and obtain, setting step size $\alpha_{k} = c_0/\sqrt{k}$, and $\noisecov^{(\Pi)} = \Pi \noisecov \Pi^{\top}$, that
\begin{equation}
\label{eq:kolmogorov_bound_proj_optimized}
\kolmogorov \lesssim \frac{\ConstPR{4}}{\lambda_{\min} n^{1/4}} + \frac{m^{1/2}  \supconsteps^3}{\lambda_{\min}^{3/2} \sqrt{n}} + \frac{\Remainder_{2} \rme^{-c_0 a \sqrt{n}}}{\lambda_{\min}}  \norm{\theta_0 - \thetas}\eqsp,
\end{equation}
and the constant $\ConstPR{4} = \ConstPR{4}(a,\bConst{A},\qcond,\trace{\noisecov^{(\Pi)}},\supconsteps)$ is provided in \eqref{eq:const_def_th_2}.
\end{remark}

\begin{remark}
\label{rem:last_iterate_bound}
Results similar to \Cref{th:theo_1_iid} can be obtained not only for the Polyak-Ruppert averaged estimator $\bar{\theta}_{n}$, but also for the last iterate $\theta_n$. In particular, it is known (see e.g. \cite{fort:clt:markov:2015}), that the last iterate error $\theta_n - \thetas$ is also asymptotically normal:
\[
\textstyle 
\frac{\theta_n - \theta^*}{\sqrt{\alpha_n}} \to \mathcal{N}(0,\Sigma_{\operatorname{last}})\eqsp,
\]
where the covariance matrix $\Sigma_{\operatorname{last}}$ is different from $\Sigma_{\infty}$. In such a case $\Sigma_{\operatorname{last}}$ can be found as a solution to appropriate Lyapunov equation, see \cite{fort:clt:markov:2015}. Then, we expect that it is possible to use the perturbation-expansion approach from \cite{aguech2000perturbation} together with randomized concentration inequalities \cite{shao2022berry} (see \eqref{eq:shao_zhang_bound}), in order to obtain the Berry-Esseen bound 
\[
\textstyle 
\sup_{B \in \Conv(\rset^{d})}\left|\P\bigl(\frac{\theta_n - \theta^*}{\sqrt{\alpha_n}} \in B\bigr) - \P(\Sigma_{\operatorname{last}}^{1/2} \eta \in B)\right| \lesssim \sqrt{\alpha_n}\eqsp.
\]
We leave the detailed derivation for future work.
\end{remark}

\vspace{-3mm}
\section{Multiplier bootstrap for LSA}
\label{sec:bootstrap}
In order to perform statistical inference with the Polyak-Ruppert estimator $\bar{\theta}_{n}$, we propose an online bootstrap resampling procedure, which recursively updates the LSA estimate as well as a large number of randomly perturbed LSA estimates, upon the arrival of each data point. The suggested procedure follows the one outlined in \cite{JMLR:v19:17-370}. It has the following advantages: it does not rely on the asymptotic distribution of the error $\sqrt{n}(\bar{\theta}_{n} - \thetas)$, does not require to know the moments of $\sqrt{n}(\bar{\theta}_{n} - \thetas)$ or its asymptotic covariance matrix $\Sigma_{\infty}$, and does not involve any data splitting.
\par 
We state the suggested procedure as follows. Let $\mathcal W^{2n} = \{W_\ell\}_{1 \leq \ell \leq 2n}$ be a set of i.i.d. random variables, independent of $\mathcal Z^{2n} = \{Z_\ell\}_{1 \leq \ell \leq 2n}$, with $\PE[W_1] = 1$ and $\var[W_1]=1$. We write, respectively, $\PPb = \PP(\cdot | \mathcal Z^{2n})$ and $\PEb = \PE(\cdot | \mathcal Z^{2n})$ for the corresponding conditional probability and expectation. In parallel with procedure \eqref{eq:lsa} that generates $\{\theta_{k}\}_{1 \leq k \leq 2n}$ and $\prtheta_{n}$, we generate $M$ independent samples $(w_{n}^{\ell},\ldots,w_{2n}^{\ell})$, $1 \leq \ell \leq M$ distributed as $\mathcal W^{2n}$, and recursively update $M$ randomly perturbed LSA estimates, that is, 
\begin{equation}
\label{eq:lsa_bootstrap}
\begin{split}
\textstyle \theta_{k}^{\boot,\ell} 
&= \textstyle \theta_{k-1}^{\boot,\ell} - \alpha_{k} w_k^{\ell}\{ \funcA{Z_k} \theta_{k-1}^{\boot,\ell} - \funcb{Z_k} \} \eqsp,~~ k \geq n+1 \eqsp, ~~ \theta_{n}^{\boot,\ell} = \theta_{n} \eqsp, \\
\textstyle \prtheta_{n}^{\boot,\ell} 
&= \textstyle n^{-1} \sum_{k=n}^{2n-1} \theta_k^{\boot,\ell} \eqsp, ~~n \geq 1 \eqsp.
\end{split}
\end{equation}
We use a short notation $\bar{\theta}_{n}^\boot$ for $\bar{\theta}_{n}^{\boot,1}$. The key idea of the procedure \eqref{eq:lsa_bootstrap} is that the "Bootstrap-world" distribution (that is, the one conditional on $\mathcal Z^{2n}$) of the perturbed samples $\sqrt n (\bar{\theta}_{n}^\boot - \bar{\theta}_n)$ is close to the distribution of the quantity of interest, that is, $\sqrt n (\bar{\theta}_n - \thetas)$. Precisely, the main result of this section will show that the quantity
\begin{equation}
\label{eq:boot_validity_supremum}
\sup_{B \in \Conv(\rset^{d})} |\PPb(\sqrt n (\bar{\theta}_{n}^\boot - \bar{\theta}_n) \in B ) - \PP(\sqrt n (\bar{\theta}_n - \thetas) \in B)|
\end{equation}
is small. Although an analytic expression for $\PPb(\sqrt n (\bar{\theta}_{n}^\boot - \bar{\theta}_n) \in B )$ is not available, one can approximate it from numerical simulations according to \eqref{eq:lsa_bootstrap} by generating sufficiently large number $M$ of perturbed trajectories. Standard arguments, see e.g. \cite[Section~5.1]{shao2003mathematical} suggest that the accuracy of Monte-Carlo approximation is of order $M^{-1/2}$. To analyze the suggested procedure, we shall impose an additional assumption on the trajectory length $n$:
\begin{assum}
\label{assum:step-size-bootstrap}
Assumption \Cref{assum:step-size} holds with $\gamma = 1/2$, and $c_0 \leq 1/(\bConst{A}^2 \qcond \rme)$. Moreover, setting 
\begin{equation}
\label{eq:block_size_constraint}
\textstyle 
h(n) = \biggl\lceil \biggl(\frac{4\bConst{A}\qcond^{1/2}}{(\sqrt{2}-1)a}\biggr)^{2}(1+2\log{(2n^4)})^2 \biggr\rceil\eqsp,
\end{equation}
it holds that 
\begin{equation}
\label{eq:sample_size_bound_part_2}
\textstyle
\frac{\sqrt{n}}{h(n)} \geq \frac{2}{a(\sqrt{2}-1)} \vee \frac{c_{0}}{\alpha_{\infty}}\eqsp, \text{ and } \frac{\sqrt{n}}{\log^{2}{n}} \geq \frac{c_{0}(1 \vee \bConst{A}^2)}{a} \vee c_{0} a \bConst{A}^2  \vee \frac{4}{a c_{0}}
\end{equation}
Moreover, we assume that for $\lambda_{\min}$ defined in \eqref{eq:eig_sigma_eps} it holds that 
\begin{equation}
\label{eq:lambda_min_bound_boot}
\lambda_{\min} \geq 8  \supconsteps \sqrt{\frac{\|\noisecov\| \log n}{n}} + \frac{8 (\|\noisecov\| + \supconsteps^2 )\log n}{n}
\end{equation}
\end{assum}
Note that the new bound \eqref{eq:sample_size_bound_part_2} simply states that $\sqrt{n}/\log^{2}(n)$ is sufficiently large, since $h(n)$ scales as $\log^2{n}$. We discuss the assumption \Cref{assum:step-size-bootstrap} in more details in the proof scheme. Now we formulate the main result of this section. We analyze only the setting of polynomially decaying step size with $\gamma = 1/2$, since decay rate of \eqref{eq:boot_validity_supremum} essentially depends on the approximation rate of \Cref{th:shao2022_berry}, with the fastest rate achieved when $\gamma = 1/2$. For other learning rates the decay rate of right-hand side in \Cref{th:bootstrap_validity} will be slower. For simplicity, we do not trace the dependence of the bound below on the parameter $c_0$.

\begin{theorem}
\label{th:bootstrap_validity}
Assume \Cref{assum:iid}, \Cref{assum:noise-level}, \Cref{assum:step-size} with $\gamma = 1/2$, and \Cref{assum:step-size-bootstrap}. Then with $\PP$ -- probability at least $1 - 6/n$ it holds that
\begin{align}
&\sup_{B \in \Conv(\rset^{d})} |\PPb(\sqrt n (\bar{\theta}_{n}^\boot - \bar{\theta}_n) \in B ) - \PP(\sqrt n (\bar{\theta}_n - \thetas) \in B)| \lesssim \frac{\qcond^{2}(\bConst{A}^4 \vee 1) (1 + \supconsteps^2) \log{n}}{a^{5/2} \lambda_{\min} n^{1/4}} \\
&\qquad + \frac{\sqrt{d}}{\sqrt{n}}\left(\frac{\supconsteps^3}{\lambda_{\min}^{3/2}} + \qcond \supconsteps \frac{\sqrt{\log{n}}}{\sqrt{\lambda_{\min}}} + \frac{\qcond (1 + \supconsteps^2 / \lambda_{\min}) \log{n}}{\sqrt{n}} \right) + \frac{\Remainder_{3}\rme^{-(c_0/2) a \sqrt{n}}}{\lambda_{\min}}\norm{\theta_0-\thetas}\eqsp,
\end{align}
where $\Remainder_{3} = \Remainder_{3}(n,a,\bConst{A},\supconsteps)$ is a polynomial function defined in \Cref{appendix:bootstrap}, eq. \eqref{eq:remainders_theorem_bootstrap}.
\end{theorem}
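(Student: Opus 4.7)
We insert the intermediate Gaussian with empirical asymptotic covariance
$$\hat\Sigma = \bA^{-1}\hat\noisecov\bA^{-\top},\quad \hat\noisecov = \tfrac{1}{n}\sum_{k=n+1}^{2n}\hat\eps_k\hat\eps_k^\top,\quad \hat\eps_k = \funcA{Z_k}\bar{\theta}_n - \funcb{Z_k},$$
and use the triangle inequality to write the quantity of interest as the sum of (i) $\mathsf R_1$, the Berry--Esseen distance between $\sqrt n (\bar{\theta}_n - \thetas)$ and $\gauss(0,\Sigma_\infty)$, (ii) $\mathsf R_2$, the Gaussian comparison between $\gauss(0,\Sigma_\infty)$ and $\gauss(0,\hat\Sigma)$, and (iii) $\mathsf R_3$, the conditional Berry--Esseen distance between the bootstrap law of $\sqrt n (\bar{\theta}_n^\boot - \bar{\theta}_n)$ and $\gauss(0,\hat\Sigma)$. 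The term $\mathsf R_1$ is controlled by the optimized bound \eqref{eq:kolmogorov_bound_optimized} of \Cref{th:shao2022_berry}, producing the $n^{-1/4}\log n / \lambda_{\min}$ contribution.

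\textbf{Conditional Berry--Esseen ($\mathsf R_3$) --- main obstacle.} Mimicking \Cref{prop: expansion} inside the bootstrap dynamics \eqref{eq:lsa_bootstrap} gives
$$\sqrt n\,\bA(\bar{\theta}_n^\boot - \bar{\theta}_n) = -\tfrac{1}{\sqrt n}\sum_{k=n+1}^{2n}(w_k-1)\hat\eps_k + D^\boot,$$
where $D^\boot$ collects the telescoping boundary with $\theta_n^\boot = \theta_n$, the cross term $\tfrac{1}{\sqrt n}\sum(w_k-1)\funcA{Z_k}(\theta_{k-1}^\boot - \bar{\theta}_n)$, and the data fluctuation $\tfrac{1}{\sqrt n}\sum(\funcA{Z_k}-\bA)(\theta_{k-1}^\boot - \bar{\theta}_n)$. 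Conditional on $\mathcal Z^{2n}$ the leading sum is a sum of independent mean-zero vectors of total covariance $\hat\noisecov$, hence the non-identity variant of \eqref{eq:shao_zhang_bound} applies. The key missing input is a conditional analogue of \Cref{th:theo_1_iid}: on a high-probability event of the data, $\PEb^{1/2}[\|D^\boot\|^2]\lesssim n^{-1/4}\log n$, together with a coupling bound on $\sum_k\PEb[\|(w_k-1)\hat\eps_k\|\,\|D^\boot - (D^\boot)^{(k)}\|]$ obtained by synchronous coupling as in \Cref{lem:aux-lemmas-normal}. This in turn requires extending the $Q$-norm stability of \Cref{prop:hurwitz_stability} to the perturbed dynamics $\Id - \alpha_k w_k \funcA{Z_k}$; the constraint $c_0 \le 1/(\bConst{A}^2 \qcond \rme)$ in \Cref{assum:step-size-bootstrap} is calibrated exactly so that averaging over $w_k$ preserves contraction,
$$\PEb\big[(\Id - \alpha_k w_k \funcA{Z_k})^\top Q (\Id - \alpha_k w_k \funcA{Z_k})\big]\preceq (1 - \alpha_k a/2)Q,$$
and so the Lyapunov recursion for $\PEb[\|\theta_k^\boot - \bar{\theta}_n\|_Q^2]$ is contractive and the blockwise argument with block size $h(n) = O(\log^2 n)$ from \eqref{eq:block_size_constraint} carries over to the bootstrap world.

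\textbf{Gaussian comparison and covariance concentration ($\mathsf R_2$).} Standard convex-body Gaussian comparison (Devroye--Mehrabian--Reddi, or Pinsker together with Gaussian anticoncentration) yields $\mathsf R_2 \lesssim \sqrt d\,\|\Sigma_\infty^{-1/2}(\hat\Sigma - \Sigma_\infty)\Sigma_\infty^{-1/2}\| \lesssim (\sqrt d / \lambda_{\min})\|\hat\noisecov - \noisecov\|$. Splitting $\hat\noisecov - \noisecov = (\hat\noisecov - \tilde\noisecov) + (\tilde\noisecov - \noisecov)$ with $\tilde\noisecov = \tfrac{1}{n}\sum\funnoisew_k\funnoisew_k^\top$, the second piece is matrix Bernstein and bounded by $\supconsteps\sqrt{\|\noisecov\|\log n / n} + \supconsteps^2\log n / n$, while the first is dominated by $\bConst{A}\supconsteps\|\bar{\theta}_n - \thetas\| + \bConst{A}^2\|\bar{\theta}_n - \thetas\|^2$ and absorbed via the Polyak--Ruppert MSE bound inherited from \cite{durmus2022finite}. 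The lower bound \eqref{eq:lambda_min_bound_boot} on $\lambda_{\min}$ is precisely the hypothesis that guarantees $\hat\noisecov \succ \tfrac12\noisecov$ on this event, so that the smallest eigenvalue factors in both the application of \eqref{eq:shao_zhang_bound} for $\mathsf R_3$ and the rescaling in $\mathsf R_2$ may be uniformly replaced by $\lambda_{\min}$. Taking a union bound over the MSE event for $\bar{\theta}_n$, the matrix Bernstein events for $\hat\bA$ and $\hat\noisecov$, and the conditional bootstrap events, each of failure probability at most $1/n$, yields the claim with probability at least $1 - 6/n$ and rate $n^{-1/4}$ up to the logarithmic and polynomial prefactors matching the three regimes in the stated bound.
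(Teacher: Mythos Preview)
Your three–way split (real-world Gaussian approximation, Gaussian comparison, bootstrap-world Gaussian approximation) coincides with the paper's scheme, and your treatment of $\mathsf R_1$ and $\mathsf R_2$ is essentially correct—though the paper takes the simpler intermediate covariance $\noisecov^\boot=n^{-1}\sum \funnoisew_k\funnoisew_k^\top$ built from the \emph{true} noises $\funnoisew_k=\funcnoise{Z_k}$ rather than the residuals $\hat\eps_k$ at $\bar\theta_n$, which spares the extra $\|\hat\noisecov-\tilde\noisecov\|$ correction. The problems lie in $\mathsf R_3$.

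\textbf{The one-step contraction claim is false.} You assert that averaging over $w_k$ yields
\[
\PEb\bigl[(\Id-\alpha_k w_k\funcA{Z_k})^\top Q(\Id-\alpha_k w_k\funcA{Z_k})\bigr]\preceq (1-\alpha_k a/2)Q.
\]
Expanding the left side gives $Q-\alpha_k(\funcA{Z_k}^\top Q+Q\funcA{Z_k})+2\alpha_k^2\funcA{Z_k}^\top Q\funcA{Z_k}$; for the claimed inequality you would need $\funcA{z}^\top Q+Q\funcA{z}\succeq c\,Q$ for \emph{every} realization $z$, which is nowhere assumed (only $\bA^\top Q+Q\bA=P\succ 0$ holds after averaging over $z$). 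In the TD example $\funcA{z}$ is rank one and this fails outright for $d>2$. The constraint $c_0\le 1/(\bConst{A}^2\qcond\rme)$ is not calibrated to make this work; it enters only inside the blockwise construction. In the paper the blocking (\Cref{prop:product_random_matrix_bootstrap}, \Cref{lem:product_ramdom_matrix_aux}) is the \emph{primary} device: one writes $\ProdB^\boot_{m+1:m+h}=\Id-\sum\alpha_\ell\bA-\sum\alpha_\ell(\funcAw_\ell-\bA)-\Mat S+\Mat R$, controls the data fluctuation $\sum\alpha_\ell(\funcAw_\ell-\bA)$ on the high-probability event $\Omega_5$, and then shows the whole block contracts in $Q$-norm. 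This is not a consequence of the one-step bound you state.

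\textbf{The data-fluctuation term is not a conditional martingale.} Your term $\tfrac{1}{\sqrt n}\sum(\funcA{Z_k}-\bA)(\theta_{k-1}^\boot-\bar\theta_n)$ (the paper's $D_5^\boot$ with $\theta_{k-1}$ in place of $\bar\theta_n$) cannot be handled by ``a conditional analogue of \Cref{th:theo_1_iid}'': conditionally on $\mathcal Z^{2n}$, the coefficients $\funcA{Z_k}-\bA$ are fixed and the summands are \emph{not} martingale differences in the bootstrap filtration, so the second-moment identity used for $D_3$ in the real-world proof does not apply. The paper resolves this with the perturbation expansion of \cite{aguech2000perturbation} (\Cref{lem:expansion}): one writes $\theta_{k-1}^\boot-\theta_{k-1}=J_{k-1}^{\boot,0}+J_{k-1}^{\boot,1}+J_{k-1}^{\boot,2}+H_{k-1}^{\boot,2}$, inserts this into $D_5^\boot$, changes summation order, and bounds $\bigl\|\sum_{k>\ell}(\funcAw_k-\bA)\ProdB_{\ell+1:k-1}\bigr\|$ on the matrix-Freedman event $\Omega_4$. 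This expansion is the technical heart of \Cref{CLT in the bootstrap world} and is absent from your plan; without it the claimed bound $\PEb^{1/2}[\|D^\boot\|^2]\lesssim n^{-1/4}\log n$ is unsupported.
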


The proof of \Cref{th:bootstrap_validity} is based on the Gaussian approximation performed both in the "real" world and bootstrap world together with an appropriate Gaussian comparison inequality. The main steps of the proof are illustrated by the following scheme:

\begin{tikzcd}[column sep = 120pt]
\text{Real world: \quad\quad} \sqrt n \bA(\bar{\theta}_{n} - \thetas) \arrow[<->]{r}{\text{Gaussian approximation, Th.}~\ref{th:shao2022_berry}}  &
  \xi \sim \mathcal N(0,  \noisecov )  \arrow[<->]{d}{\text{Gaussian comparison, Theorem }\ref{prop: gaussian comparison}} 
\\
\text{Bootstrap world: } \sqrt n \bA (\bar{\theta}_{n}^\boot - \bar{\theta}_n) \arrow[<->]{r}{\text{Gaussian approx. in Bootstrap world, Th.}~\ref{CLT in the bootstrap world}} &
\xi^\boot \sim \mathcal N(0,  \noisecov^\boot) 
\end{tikzcd}

In the above scheme we have denoted by $ \noisecov^\boot = n^{-1} \sum_{\ell = n}^{2n-1} \funnoisew_\ell \funnoisew_\ell^{\top}$ the sample covariance matrix approximating $\noisecov$. 
Gaussian approximation for the true distribution of $\sqrt n \bA(\bar{\theta}_{n} - \thetas)$ follows from \Cref{th:shao2022_berry}. Proof of Gaussian approximation in the Bootstrap world \Cref{CLT in the bootstrap world} is also based on the inequality \eqref{eq:shao_zhang_bound}, but is more complicated and involves the expansion analysis of the LSA error from \cite{aguech2000perturbation}. This technique allows to separate the LSA error into different scales with respect to the step sizes $\{\alpha_{k}\}$, see \Cref{sec:error_decomspotion_perturbed} for details. However, this technique requires to impose additional assumption \Cref{assum:step-size-bootstrap} - eq. \eqref{eq:sample_size_bound_part_2}. Proof of the Gaussian comparison part of \Cref{prop: gaussian comparison} is based on Pinsker's inequality and matrix Bernstein inequality. The latter result requires that $n$ is large enough to ensure that minimal eigenvalue of $\noisecov^\boot$ is close to $\lambda_{\min}$, justifying the assumption \Cref{assum:step-size-bootstrap} - eq. \eqref{eq:lambda_min_bound_boot}. Detailed proof if provided in \Cref{appendix:bootstrap}.

\textbf{Discussion.} We emphasize that the Gaussian approximation result of \Cref{th:shao2022_berry} (with Bootstrap world generalization in \Cref{CLT in the bootstrap world}) is a key result to prove the above bootstrap validity. This argument was missing in the earlier works studying confidence intervals for stochastic optimization algorithms \cite{chen2020aos,zhu2023online_cov_matr,zhong2023online}, where the authors considered procedures to estimate $\Sigma_{\infty}$ in \eqref{eq:asympt_cov_matr}. They combine \emph{non-asymptotic} bounds on the accuracy of recovering $\Sigma_{\infty}$ with only \emph{asymptotic} validity of the resulting confidence intervals. We expect that our proof technique for \Cref{th:shao2022_berry} can be used to provide similar non-asymptotic validity results for outlined approaches for constructing confidence intervals based on the estimation of the asymptotic covariance matrix.

\begin{corollary}(Set of Euclidean balls or ellipsoids) Suppose that we are interested in estimating quantile of a given order $\alpha \in (0,1)$ and some matrix $B \in \rset^{d \times d}$, that is, the quantity
\begin{equation}
\label{eq: real quantile}
t_\alpha = \inf\{ t > 0: \PP(\sqrt n \|B(\bar{\theta}_{n} - \thetas)\| \geq t) \leq \alpha \}.
\end{equation}
We define its counterpart in the Bootstrap world, $t_\alpha^\boot$, as
\begin{equation}
\label{eq: boot quantile}
t_\alpha^\boot = \inf\{ t > 0: \PPb(\sqrt n \|B(\bar{\theta}_{n}^\boot - \bar{\theta}_n)\| \geq t) \leq \alpha \}. 
\end{equation}
Note that $t_\alpha^\boot$ is defines with respect to the bootstrap measure, therefore, it depends on the data $\mathcal Z^{2n}$. This bootstrap critical value $t_\alpha^\boot$ is applied in the Bootstrap world to build the confidence set 
\begin{equation}
\label{confidence set}
\mathcal E(\alpha) = \{\theta \in \rset^d: \sqrt n \| B(\theta - \bar{\theta}_n)\| \leq t_\alpha^\boot\}\eqsp.
\end{equation}
\Cref{th:bootstrap_validity} justifies this construction and evaluate the coverage probability of the true value $\thetas$ by this set. It states that 
$$
\PP(\thetas \notin \mathcal E(\alpha)) = \PP(\sqrt n \|B(\bar{\theta}_n - \thetas)\| > t_\alpha^\boot) \approx \alpha\eqsp,
$$
with the error of order $n^{-1/4}$ in the right-hand side. Although an analytic expression for $t_\alpha^\boot$ is not available, one can approximate it by generating a large number $M$ of independent samples of $\mathcal W_n$ and computing from them the empirical distribution function of $\sqrt n \|B(\bar{\theta}_{n}^\boot - \bar{\theta}_n)\|$, following \eqref{eq:lsa_bootstrap}.
\end{corollary}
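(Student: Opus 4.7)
The plan is to apply \Cref{th:bootstrap_validity} to the family of convex sets $A_t = \{v \in \rset^d : \|Bv\| \leq t\}$, $t \geq 0$, which are convex regardless of the rank of the matrix $B$. Setting $F(t) = \PP(\sqrt{n}\|B(\bar\theta_n - \thetas)\| \leq t)$ and $F^\boot(t) = \PPb(\sqrt{n}\|B(\bar\theta_n^\boot - \bar\theta_n)\| \leq t)$, instantiating \Cref{th:bootstrap_validity} at each $A_t$ and taking the supremum over $t$, I obtain, on an event $\Omega_0$ with $\PP(\Omega_0) \geq 1 - 6/n$,
\begin{equation*}
\sup_{t \geq 0} |F(t) - F^\boot(t)| \leq C(n)\, n^{-1/4},
\end{equation*}
where $C(n)$ collects the remaining (lower-order in $n$) polynomial and logarithmic factors appearing on the right-hand side of \Cref{th:bootstrap_validity}.

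Next I would transfer this uniform CDF closeness into quantile concentration. By the definition in \eqref{eq: boot quantile}, $F^\boot(t_\alpha^\boot) = 1 - \alpha$ up to the size of a single jump of $F^\boot$, which is negligible. Evaluating the uniform bound above at the data-measurable random point $t = t_\alpha^\boot$ yields $|F(t_\alpha^\boot) - (1-\alpha)| \leq C(n)\, n^{-1/4}$ on $\Omega_0$. By monotonicity of $F$, choosing deterministic scalars $t_{-} \leq t_{+}$ with $F(t_{\pm}) = 1 - \alpha \pm C(n)\, n^{-1/4}$ localises $t_\alpha^\boot \in [t_{-}, t_{+}]$ on $\Omega_0$. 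Setting $X = \sqrt{n}\|B(\bar\theta_n - \thetas)\|$, the monotonicity of $t \mapsto \mathbf{1}\{X > t\}$ then gives the pointwise sandwich $\mathbf{1}\{X > t_{+}\} \leq \mathbf{1}\{X > t_\alpha^\boot\} \leq \mathbf{1}\{X > t_{-}\}$ on $\Omega_0$. Taking unconditional expectations, absorbing the contribution of $\Omega_0^c$ (bounded by $6/n$) on each side, and using $1 - F(t_{\pm}) = \alpha \mp C(n)\, n^{-1/4}$ yields
\begin{equation*}
|\PP(X > t_\alpha^\boot) - \alpha| \lesssim C(n)\, n^{-1/4} + 6/n,
\end{equation*}
which is precisely the claimed approximation.

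The main obstacle I anticipate is justifying the regularity of $F$ used implicitly in the sandwich step, namely the existence of scalars $t_{\pm}$ with $F(t_{\pm}) = 1 - \alpha \pm \delta$ for small $\delta$; this requires $F$ to be continuous and strictly increasing on a neighbourhood of $t_\alpha$. The natural resolution is to invoke \Cref{th:shao2022_berry} one more time: applied to the same family $A_t$, it ensures that $F$ is uniformly within $O(n^{-1/4})$ of the CDF of $\|B\bA^{-1}\eta\|$ with $\eta \sim \mathcal{N}(0, \noisecov)$. Under \Cref{assum:noise-level} the covariance $\noisecov$ satisfies $\lambda_{\min}(\noisecov) > 0$, so this Gaussian surrogate is absolutely continuous with a locally bounded density, and in particular strictly increasing on any compact subinterval of $(0, \infty)$. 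Defining $t_{\pm}$ via this surrogate and pulling the resulting estimate back to $F$ introduces only an additional additive error of order $n^{-1/4}$, which is already accommodated by the stated rate.
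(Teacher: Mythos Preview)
Your argument is correct and is in fact more detailed than anything the paper provides: the corollary is stated there as an immediate consequence of \Cref{th:bootstrap_validity}, with no separate proof beyond the sentence ``\Cref{th:bootstrap_validity} justifies this construction.'' Your sandwich argument via deterministic levels $t_\pm$ is exactly the standard way to pass from uniform closeness of $F$ and $F^\boot$ to control of $\PP(X>t_\alpha^\boot)$ when $X$ and $t_\alpha^\boot$ are dependent, and your anticipation of the regularity issue for $F$ (resolved by appealing to the Gaussian surrogate from \Cref{th:shao2022_berry}) is the right fix; the paper does not address this point at all.
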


\begin{remark}
\label{rem:non-linear}
A natural question that arises after \Cref{th:bootstrap_validity} is whether it is possible to prove similar bounds for the iterates of first-order stochastic optimization algorithms. There are several MSE bounds for corresponding algorithms with explicit dependence on the step size $\alpha_{k}$; see, for example, \cite{moulines2011non, bach:moulines:2013}. Therefore, we expect that it is possible to obtain a counterpart to \Cref{th:shao2022_berry}. At the same time, for general first-order stochastic optimization algorithms, unlike LSA, there are no counterparts to the precise error expansions of \cite{aguech2000perturbation}. Thus, proving the counterpart of \Cref{th:bootstrap_validity} in this setting is more challenging. Similarly, we emphasize that generalizations of the procedure \eqref{eq:lsa_bootstrap} to cases where \( \{\State_k\}_{k \in \nset} \) are dependent, for example, form a Markov chain, are complicated. The approach of \cite{shao2022berry} is not directly applicable in this setting, and appropriate generalization of \eqref{eq:shao_zhang_bound} is a separate and challenging research direction.
\end{remark}

\vspace{-3mm}
\section{Applications to the TD learning and numerical results}
\label{sec:experiments}
We illustrate our findings for the setting of temporal difference (TD) learning algorithm \cite{sutton1988learning,sutton:book:2018} for policy evaluation in RL. Non-asymptotic error bounds for this algorithm attracted lot of contributions \cite{mou2020linear, durmus2022finite, huo2023bias, patil2023finite,li2023sharp}. At the same time, confidence intervals for TD were studied in \cite{JMLR:v19:17-370,JASA2023} only in terms of their asymptotic validity. In the TD algorithm we consider a discounted MDP (Markov Decision Process) given by a tuple $(\S,\A,\PMDP,r,\gamma)$. Here $\S$ and $\A$ stand for state and action spaces, and $\gamma \in (0,1)$ is a discount factor. Assume that $\S$ is a complete metric space with metric $\metrics$ and Borel $\sigma$-algebra $\borel{\S}$. $\PMDP$ stands for the transition kernel $\PMDP(B | s,a)$, which determines the probability of moving from state $s$ to a set $B \in \borel{\S}$ when action $a$ is performed. Reward function $r\colon \S \times \A \to [0,1]$ is assumed to be deterministic. \emph{Policy} $\pi(\cdot|s)$ is the distribution over action space $\A$ corresponding to agent's action preferences in state $s \in \S$. We aim to estimate \emph{value function}
\[
V^{\pi}(s) = \PE\bigl[\sum_{k=0}^{\infty}\gamma^{k}r(s_k,a_k)|s_0 = s\bigr]\eqsp,
\]
where $a_{k} \sim \pi(\cdot | s_k)$, and $s_{k+1} \sim \PMDP(\cdot | s_{k}, a_{k})$ for any $k \in \nset$. Define the transition kernel under $\pi$,
\begin{equation}
\label{eq:transition_matrix_P_pi}
\textstyle \PMDP_{\pi}(B | s) = \int_{\A} \PMDP(B | s, a)\pi(\rmd a|s)\eqsp,
\end{equation}
which corresponds to the $1$-step transition probability from state $s$ to a set $B \in \borel{\S}$. The state space $\S$ here can be arbitrary. It is a common option to consider the \emph{linear function approximation} for $V^{\pi}(s)$, defined for $s \in \S$, $\theta \in \rset^{d}$, and a feature mapping $\varphi\colon \S \to \rset^{d}$ as $V_{\theta}^{\pi}(s) =  \varphi^\top(s) \theta$. Here $d$ is the dimension of feature space. Our goal is to find a parameter $\thetas$ which is defined as a unique solution to the projected Bellman equation, see \cite{tsitsiklis:td:1997}. We denote by $\mu$ the invariant distribution over the state space $\S$ induced by $\PMDP^{\pi}(\cdot | s)$ in \eqref{eq:transition_matrix_P_pi}. We define the \emph{design matrix} $\covfeat$ as
\begin{equation}
\label{eq:covfeat_matr_def}
\textstyle
\covfeat = \PE_{\mu}[\varphi(s)\varphi(s)^{\top}] \in \rset^{d \times d}\eqsp.
\end{equation}
Consider the following assumptions on the generative mechanism and on the feature mapping $\varphi(\cdot)$:
\begin{assumTD}
\label{assum:generative_model}
Tuples $(s,a,s')$ are generated \iid with $s \sim \mu$, $a \sim \pi(\cdot|s)$, $s' \sim \PMDP(\cdot|s,a)$\eqsp.
\end{assumTD}
\begin{assumTD}
\label{assum:feature_design}
Matrix $\covfeat$ is non-degenerate with the minimal eigenvalue $\lambda_{\min}(\covfeat) > 0$. Moreover, the feature mapping $\varphi(\cdot)$ satisfies 
$\sup_{s \in \S} \|\varphi(s) \| \leq 1$.
\end{assumTD}
In the setting of linear function approximation the estimation of $V^{\pi}(s)$ reduces to estimating $\thetas \in \rset^{d}$, which can be done via the LSA procedure. Here, the $k$-th step randomness is given by the tuple $\State_k= (s_k, a_k, s'_{k})$. Then, the corresponding LSA update can be written as
\begin{equation}
\label{eq:LSA_procedure_TD}
\textstyle 
\theta_{k} = \theta_{k-1} - \alpha_{k} (\funcAw_{k} \theta_{k-1} - \funcbw_{k})\eqsp,
\end{equation}
where $\funcAw_{k}$ and $\funcbw_{k}$ are given, respectively, by
\begin{equation}
\label{eq:matr_A_def}
\textstyle \funcAw_{k} =  \varphi(s_k)\{\varphi(s_k) - \gamma \varphi(s'_k)\}^{\top}\eqsp, \quad \funcbw_{k} = \textstyle \varphi(s_k) r(s_k,a_k)\eqsp.
\end{equation}
We provide the expressions for the corresponding system matrix \( \bA = \mathbb{E}[\funcAw_{k}] \) and the right-hand side \( \barb \) in \Cref{appendix:td_learning}. We verify that assumption \Cref{assum:noise-level} holds and, furthermore, we provide a tighter counterpart to the result of \Cref{prop:hurwitz_stability}. This result closely follows \cite{patil2023finite} and \cite{samsonov2023finite}.
\begin{proposition}
\label{prop:assumption_check_TD}
Let $\{\theta\}_{k \in \nset}$ be a sequence of TD updates generated by \eqref{eq:LSA_procedure_TD} under \Cref{assum:generative_model} and \Cref{assum:feature_design}. Then this update scheme satisfies assumption \Cref{assum:noise-level} with
\begin{align}
\label{eq:condition-check-1}\bConst{A} = 2 (1+\gamma)\eqsp, \quad \supconsteps = 2 (1+\gamma)(\norm{\thetas} + 1)\eqsp,
\end{align}
moreover, one can check that $\normop{\Id - \alpha \barA}^2 \leq 1 - \alpha a$ with 
\begin{equation}
\label{eq:a_alpha_infty_td}
a = (1-\gamma)\lambda_{\min}(\covfeat) \eqsp, \quad \alpha_{\infty} = (1-\gamma)/(1+\gamma)^2\eqsp,
\end{equation}
that is, \Cref{prop:hurwitz_stability} holds with $Q = \Id$.
\end{proposition}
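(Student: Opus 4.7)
The plan is to verify the boundedness constants by elementary norm inequalities, and then to establish the Lyapunov-type contraction by expanding the squared operator norm and using stationarity of $\mu$; the contraction step is where all the work is. For $\bConst{A}$ I would use submultiplicativity together with $\|\varphi(s)\| \leq 1$ from \Cref{assum:feature_design} to bound, for each tuple $z=(s,a,s')$, $\|\funcA{z}\| \leq \|\varphi(s)\|\,\|\varphi(s) - \gamma\varphi(s')\| \leq 1 + \gamma$, and then apply one more triangle inequality to the centered version, giving $\sup_{z}\|\funcA{z}\|\vee\sup_{z}\|\zmfuncA{z}\| \leq 2(1+\gamma)$. For $\supconsteps$, boundedness of the reward gives $\|\funcb{z}\| \leq 1$ and $\|\zmfuncb{z}\| \leq 2$, so expanding $\funcnoise{z} = \zmfuncA{z}\thetas - \zmfuncb{z}$ with the triangle inequality yields $\|\funcnoise{z}\| \leq 2(1+\gamma)\|\thetas\| + 2 \leq 2(1+\gamma)(\|\thetas\|+1)$.

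The crucial ingredient for the contraction is the observation that, under \Cref{assum:generative_model}, the invariance of $\mu$ under $\PMDP_\pi$ implies $s'$ also has marginal law $\mu$, so both $\PE[\varphi(s)\varphi(s)^\top]$ and $\PE[\varphi(s')\varphi(s')^\top]$ equal $\covfeat$. Writing $\bA = \covfeat - \gamma\,\PE[\varphi(s)\varphi(s')^\top]$, for any unit $x \in \rset^d$ I would expand
\begin{equation*}
\|(\Id - \alpha\bA)x\|^2 = 1 - 2\alpha\, x^\top \bA x + \alpha^2 \|\bA x\|^2,
\end{equation*}
and estimate the two $\alpha$-terms in terms of $x^\top\covfeat x$. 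For the linear term, a Cauchy--Schwarz bound gives $|x^\top\PE[\varphi(s)\varphi(s')^\top]x| \leq \sqrt{\PE[(\varphi(s)^\top x)^2]\,\PE[(\varphi(s')^\top x)^2]} = x^\top\covfeat x$, hence $x^\top \bA x \geq (1-\gamma)\, x^\top\covfeat x$. For the quadratic term, Jensen's inequality $\|\PE[Z]\|^2 \leq \PE[\|Z\|^2]$ applied to $Z = \varphi(s)(\varphi(s) - \gamma\varphi(s'))^\top x$ and $\|\varphi(s)\| \leq 1$ reduce the bound to $\PE[((\varphi(s)-\gamma\varphi(s'))^\top x)^2]$, which after expansion and one more application of the same Cauchy--Schwarz estimate is at most $(1+\gamma)^2\, x^\top\covfeat x$.

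Combining the two estimates gives
\begin{equation*}
\|(\Id - \alpha\bA)x\|^2 \leq 1 - \alpha \bigl[2(1-\gamma) - \alpha(1+\gamma)^2\bigr]\, x^\top\covfeat x,
\end{equation*}
and the choice $\alpha \leq (1-\gamma)/(1+\gamma)^2 = \alpha_\infty$ makes the bracket at least $(1-\gamma)$; bounding $x^\top\covfeat x \geq \lambda_{\min}(\covfeat)$ for unit $x$ then yields $\|(\Id-\alpha\bA)x\|^2 \leq 1 - \alpha(1-\gamma)\lambda_{\min}(\covfeat) = 1 - \alpha a$, and taking the supremum over unit $x$ delivers the $Q$-norm contraction in \Cref{prop:hurwitz_stability} with $Q = \Id$. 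The Hurwitz property of $-\bA$ is a free consequence: $x^\top \bA x \geq (1-\gamma)\lambda_{\min}(\covfeat)\|x\|^2 > 0$ for every $x \neq 0$ means the symmetric part of $\bA$ is positive definite, which forces every eigenvalue of $\bA$ to have strictly positive real part. The main obstacle is the careful symmetric treatment of $s$ and $s'$ in the Cauchy--Schwarz step—using stationarity of $\mu$ to put $\varphi(s')$ on the same footing as $\varphi(s)$ inside the quadratic form—after which the remaining computations are routine.
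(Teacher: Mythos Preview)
Your proof is correct and follows essentially the same route as the paper: expand $(\Id-\alpha\bA)^\top(\Id-\alpha\bA)$, lower-bound the symmetric part of $\bA$ by $(1-\gamma)\covfeat$ and upper-bound $\bA^\top\bA$ by $(1+\gamma)^2\covfeat$, both relying on the stationarity of $\mu$ so that $s$ and $s'$ share the same marginal. The only notable difference is cosmetic: you work with scalar quadratic forms and give a self-contained Jensen/Cauchy--Schwarz argument for the quadratic bound, whereas the paper phrases everything in the Loewner order (using $uv^\top+vu^\top\preceq uu^\top+vv^\top$ for the linear part) and cites an external lemma for $\bA^\top\bA \preceq (1+\gamma)^2\covfeat$.
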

Proof of \Cref{prop:assumption_check_TD} is provided in \Cref{appendix:td_learning}. Since all the assumptions in \Cref{assum:noise-level} are fulfilled, we can verify tightness of the bound \Cref{th:shao2022_berry} for different learning rate schedules $\alpha_{k}$ in \eqref{eq:LSA_procedure_TD}.
\vspace{-2mm}

\paragraph{Numerical results.} Efficiency of the multiplier bootstrap approach \eqref{eq:lsa_bootstrap} to the problems of constructing confidence sets in online algorithms has been demonstrated in the works \cite{JMLR:v19:17-370} and \cite{JASA2023}. We aim to illustrate the tightness of our bounds for normal approximation outlined in \Cref{th:shao2022_berry} in the setting of TD learning with linear function approximation. To this end, we consider the classical Garnet problem \cite{archibald1995generation}, in the simplified version proposed by \cite{geist2014off}. This problem is characterized by the number of states $N_s$, number of actions $a$, and branching factor $b$ (\ie\ the number of neighbors of each state in the MDP). We set these values to $N_s=10$, $a=2$ and $b=3$, and aim to evaluate the value function of the randomly generated policy $\pi(\cdot|s)$. Details on the way the policy $\pi$ is set can be found in \Cref{appendix:numeric_details}. We consider the problem of policy evaluation in this MDP using the TD learning algorithm with identity feature mapping, that is, $\phi(s) = e_{s}$ (that is, $s$-th coordinate vector) for $s \in \{1,\ldots,N_s\}$. We run the procedure \eqref{eq:LSA_procedure_TD} with the learning rates $\alpha_{k} = c_{0}/k^{\gamma}$ and different powers $\gamma \in \{0.5,0.65,0.7\}$. For each of the experiments we aim to estimate the supremum
\begin{equation}
\label{eq:approx_supremum_experiment}
\textstyle 
\Delta_{n} := \sup_{x \in \rset} \bigl|\P(\sqrt{n}\norm{\bar{\theta}_{n} - \thetas} \leq x) - \P(\norm{\Sigma_{\infty}^{1/2}\eta} \leq x)\bigr|\eqsp,
\end{equation}
$\eta \sim \mathcal{N}(0,\Id_{N_s})$, and show that this supremum scales as $n^{-1/4}$ when $\gamma = 1/2$ and admits slower decay for other powers of $\gamma$. We approximate true probability $\P(\norm{\Sigma_{\infty}^{1/2}\eta} \leq x)$ by the corresponding empirical probabilities based on sample of size $M \gg n$. Second, for $n \in \{1600, \ldots, 1638400\}$, where next sample size is twice larger than the previous one, we generate $N = 6553600$ trajectories of TD algorithm and approximate the distribution of $\sqrt{n}\norm{\bar{\theta}_{n} - \thetas}$ based on the corresponding empirical distribution. We report our results in \Cref{fig:results-expe}, showing that the smallest values of $\Delta_{n}$ correspond to the step size schedule $\gamma = 1/2$, moreover, the decay rate $n^{-1/4}$ seems to be tight, otherwise one should expect further decay of $\Delta_{n} n^{1/4}$. Additional simulations are provided in \Cref{appendix:numeric_details}. 

\begin{figure}[t!]
\centering
\subfigure{\includegraphics[width=1.0\linewidth]{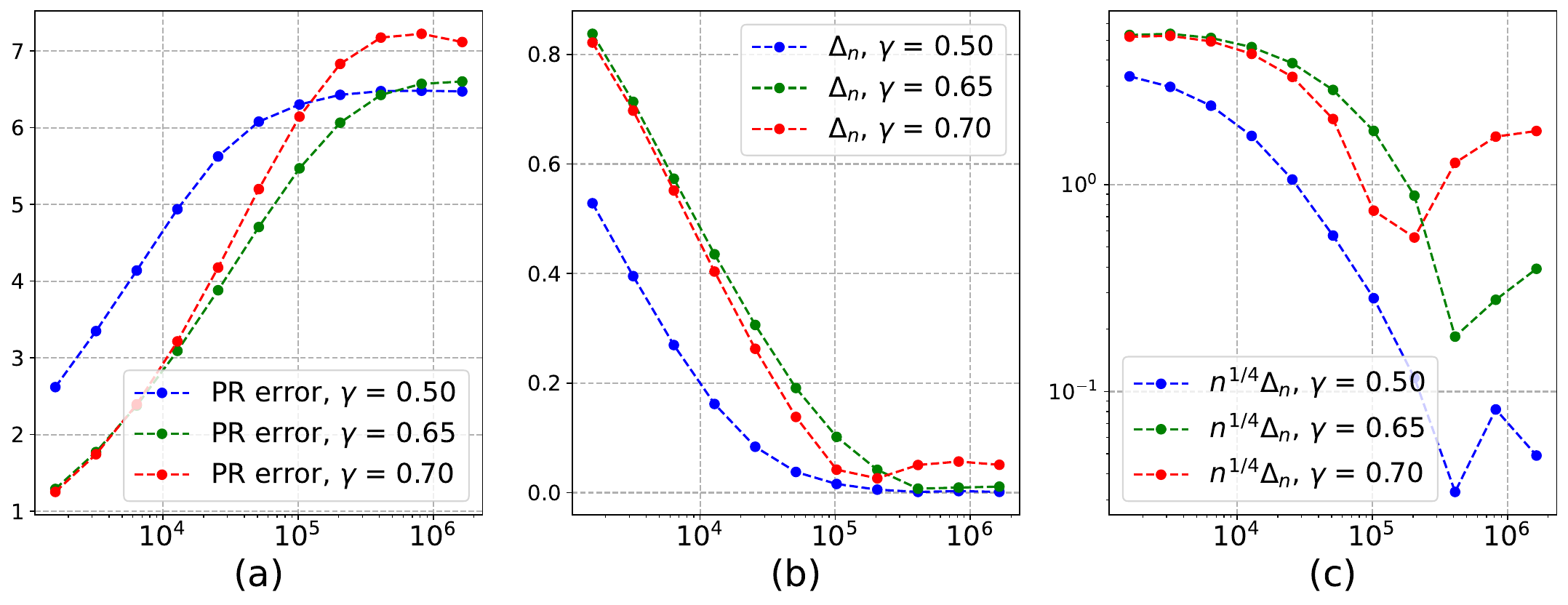}}%
\vspace{-3mm}
\caption{Subfigure (a): Rescaled error $\sqrt{n}\norm{\bar{\theta}_{n} - \thetas}$, averaged over $N$ independent TD trajectories for different trajectory lengths $n$. Subfigure (b): approximate quantity $\Delta_n$ from \eqref{eq:approx_supremum_experiment} for different powers $\gamma$ and $n$. Subfigure (c): $\Delta_n$, rescaled by a factor $n^{1/4}$, predicted by \Cref{th:shao2022_berry}.}
\label{fig:results-expe}
\vspace{-3mm}
\end{figure}

\vspace{-3mm}
\section{Conclusion}
\label{sec:conclusion}
In this paper, we have established, to the best of our knowledge, the first fully non-asymptotic confidence bounds for parameter estimation in the LSA algorithm using the multiplier bootstrap. This result is based on a novel Berry-Esseen bound for the Polyak-Ruppert averaged LSA iterates, which is of independent interest. Our paper suggests several interesting directions for further research. First, our Berry-Esseen bounds are obtained using the randomized concentration inequality \cite{shao2022berry}, and it would be valuable to generalize this approach to the setting of Markov chains. Second, it is natural to extend our results to the first-order gradient methods, both for stochastic optimization and variational inequalities. Third, it becomes possible to prove the fully non-asymptotic validity of confidence intervals obtained with plug-in techniques or other estimators of the asymptotic covariance matrix of $\prtheta_{n}$. These could then be compared with the multiplier bootstrap confidence intervals in terms of their dependence on problem dimension $d$ and other instance-dependent quantities. 

\section*{Acknowledgement}

The work of S. Samsonov and A. Naumov was prepared within the framework of the HSE University Basic Research Program. The work of E. Moulines has been partly funded by the European Union (ERC-2022-SYG-OCEAN-101071601). Views and opinions expressed are however those of the author(s) only and do not necessarily reflect those of the European Union or the European Research Council Executive Agency. Neither the European Union nor the granting authority can be held responsible for them. The work of Q.-M. Shao is partially supported by National Nature Science Foundation of China NSFC 12031005 and Shenzhen Outstanding Talents Training Fund, China. The work of Z.-S. Zhang is partially supported by National Nature Science Foundation of China NSFC 12301183 and National Nature Science Found for Excellent Young Scientists Fund. This research was supported in part through computational resources of HPC facilities at HSE University \cite{kostenetskiy2021hpc}.

\clearpage
\newpage

\bibliography{references}
\bibliographystyle{plain}

\clearpage
\newpage
\appendix

\newpage
\section{Proofs for accuracy of normal approximation}
\label{appendix:proofs}
\subsection{Expansion of the error of LSA equipped with the Polyak-Ruppert averaging}

\begin{proposition}
\label{prop: expansion} The following expansion holds:
\begin{multline}
\label{eq:error_decomposition}
\sqrt{n}\bA(\bar{\theta}_{n} - \thetas) = -\underbrace{\frac{1}{\sqrt{n}}\sum_{k=n+1}^{2n}\funnoisew_{k}}_{W} +  \underbrace{\frac{1}{\sqrt{n}}\frac{\theta_{n}-\thetas}{\alpha_{n}}}_{D_1} - \underbrace{\frac{1}{\sqrt{n}}\frac{\theta_{2n}-\thetas}{\alpha_{2n}}}_{D_2} \\
-\underbrace{\frac{1}{\sqrt{n}}\sum_{k=n+1}^{2n}(\funcAw_k - \bA)(\theta_{k-1} - \thetas)}_{D_3}
+\underbrace{\frac{1}{\sqrt{n}}\sum_{k=n+1}^{2n}\bigl(\theta_{k-1} - \thetas\bigr)\left(\frac{1}{\alpha_k} - \frac{1}{\alpha_{k-1}}\right)}_{D_4}
\end{multline}    
\end{proposition}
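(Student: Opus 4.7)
The plan is to establish the identity by a purely algebraic manipulation of the one-step LSA recursion~\eqref{eq:main_recurrence_1_step}, combined with Abel summation (summation by parts) on the indices $k \in \{n+1,\ldots,2n\}$. No probabilistic input is required.

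First, I would rewrite \eqref{eq:main_recurrence_1_step} as
\begin{equation*}
\alpha_k \funcAw_k (\theta_{k-1} - \thetas) = (\theta_{k-1} - \thetas) - (\theta_k - \thetas) - \alpha_k \funnoisew_k,
\end{equation*}
divide by $\alpha_k$, and split $\funcAw_k = \bA + (\funcAw_k - \bA)$ to isolate $\bA(\theta_{k-1}-\thetas)$:
\begin{equation*}
\bA(\theta_{k-1}-\thetas) = \frac{\theta_{k-1}-\thetas}{\alpha_k} - \frac{\theta_k-\thetas}{\alpha_k} - \funnoisew_k - (\funcAw_k - \bA)(\theta_{k-1}-\thetas).
\end{equation*}
Summing this identity for $k=n+1,\ldots,2n$ and using the reindexing $\sum_{k=n+1}^{2n}(\theta_{k-1}-\thetas) = \sum_{k=n}^{2n-1}(\theta_k-\thetas) = n(\bar{\theta}_n - \thetas)$ immediately yields $\bA(\bar{\theta}_n - \thetas)$ on the left-hand side, together with the noise sum producing $W$ and the martingale-type term producing $D_3$ on the right-hand side.

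The only non-routine step is the treatment of the two sums
\begin{equation*}
S := \sum_{k=n+1}^{2n}\frac{\theta_{k-1}-\thetas}{\alpha_k} - \sum_{k=n+1}^{2n}\frac{\theta_k-\thetas}{\alpha_k}.
\end{equation*}
Here I would apply Abel summation: write $\tfrac{1}{\alpha_k} = \tfrac{1}{\alpha_{k-1}} + \bigl(\tfrac{1}{\alpha_k} - \tfrac{1}{\alpha_{k-1}}\bigr)$ inside the first sum to obtain
\begin{equation*}
S = \sum_{k=n+1}^{2n}\frac{\theta_{k-1}-\thetas}{\alpha_{k-1}} - \sum_{k=n+1}^{2n}\frac{\theta_k-\thetas}{\alpha_k} + \sum_{k=n+1}^{2n}(\theta_{k-1}-\thetas)\left(\frac{1}{\alpha_k} - \frac{1}{\alpha_{k-1}}\right).
\end{equation*}
The first two sums telescope after the index shift $j=k-1$ in the first one, leaving only the boundary contributions $\tfrac{\theta_n-\thetas}{\alpha_n} - \tfrac{\theta_{2n}-\thetas}{\alpha_{2n}}$, which are exactly $\sqrt{n}$ times $D_1+D_2$; the remaining increment sum is $\sqrt{n}\, D_4$.

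Combining the two rearrangements and multiplying through by $1/\sqrt{n}$ gives the claimed decomposition. The main obstacle, if any, is purely bookkeeping: ensuring the index shift uses $1/\alpha_{k-1}$ (not $1/\alpha_{k+1}$) so that the boundary term features $1/\alpha_n$, matching the definition of $D_1$, and so that the increment $1/\alpha_k - 1/\alpha_{k-1}$ appears with the sign stated in $D_4$.
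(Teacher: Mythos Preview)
Your proposal is correct and follows exactly the same route as the paper's proof: rewrite the recursion as $\bA(\theta_{k-1}-\thetas) = (\theta_{k-1}-\thetas)/\alpha_k - (\theta_k-\thetas)/\alpha_k - (\funcAw_k-\bA)(\theta_{k-1}-\thetas) - \funnoisew_k$, sum over $k=n+1,\ldots,2n$, and multiply by $n^{-1/2}$. You have simply spelled out the Abel-summation step that the paper leaves implicit (one tiny slip: the boundary terms give $\sqrt{n}(D_1-D_2)$, not $\sqrt{n}(D_1+D_2)$, matching the $+D_1-D_2$ in the statement).
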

\begin{proof}
We use the recurrence \eqref{eq:main_recurrence_1_step} and rewrite it as
\begin{equation}
\label{eq: last iterate decomp}
\theta_{k} - \thetas = (\Id - \alpha_{k} \bA)(\theta_{k-1} - \thetas) - \alpha_{k}(\funcAw_k - \bA)(\theta_{k-1} - \thetas) - \alpha_{k} \funnoisew_{k}\eqsp.
\end{equation}
The previous equation implies, after algebraic manipulation and division by $\alpha_{k}$, that
\[
\bA(\theta_{k-1} - \thetas) = \frac{\theta_{k-1} - \thetas}{\alpha_k} - \frac{\theta_{k} - \thetas}{\alpha_k} - (\funcAw_k - \bA)(\theta_{k-1} - \thetas) - \funnoisew_{k}\eqsp.
\]
Taking average for $k$ from $n+1$ to $2n$ and multiplying by $\sqrt{n}$, we obtain \eqref{eq:error_decomposition}.
\end{proof}

\subsection{Bounding the error of the LSA algorithm last iterate}
\label{sec:last_moment_bound_lsa}

We begin with of technical lemma on the behavior of the last iterate $\theta_k$ of the LSA procedure given in \eqref{eq:lsa}. We aim to show that $\PE^{1/p}[\norm{\theta_k - \thetas}^p]$ scales as $\sqrt{\alpha_k}$, provided that $k$ is large enough.
This result is classical and appears in a number of papers, e.g.  \cite{bhandari2018finite,dalal2019tale,mou2020linear,durmus2021tight}. We provide the proof here for completeness. Our analysis of the bootstrap procedure and the last iterate error of LSA procedure is based on the error expansion technique from \cite{aguech2000perturbation}, see also \cite{durmus2022finite}. Namely, to perform the expansion, we decompose the LSA iterates $\theta_k$ defined in \eqref{eq:lsa} into a transient and fluctuation terms:
\[
\theta_{k} - \thetas = \utheta_{k} + \vtheta_{k}\eqsp,
\]
where we have defined the quantities
\begin{equation}
\label{eq:LSA_recursion_expanded}
\utheta_{k} = \ProdB_{1:k} \{ \theta_0 - \thetas \} \eqsp, \quad \vtheta_{k} = - \sum_{j=1}^{k} \alpha_{j} \ProdB_{j+1:k} \funnoisew_j\eqsp,
\end{equation}
setting 
\begin{equation}
\label{eq:prod_rand_matr}
\ProdB_{m:k} = \prod_{i=m}^{k} (\Id - \alpha_{i} \funcA{Z_i} ) \eqsp, \quad m,k \in\nset, m \leq k \eqsp, \text{ with the convention } \ProdB_{m:k} = \Id\eqsp, m > k\eqsp.
\end{equation}
The dependence of $\ProdB_{m:k}$ upon the stepsizes $(\alpha_j)$ is implicit in \eqref{eq:prod_rand_matr}. Here the quantity $\utheta_{k}$ is the transient component of the error, which determines the rate at which the initial error $\theta_0 - \thetas$ is forgotten. The term $\vtheta_{k}$ corresponds to the fluctuation component of the error and is determined by the oscillations of the last iterate $\theta_{k}$ around $\thetas$.

\begin{proposition}
\label{lem:last_moment_bound}
Assume \Cref{assum:iid}, \Cref{assum:noise-level}, and \Cref{assum:step-size}. Then for any $k \geq n$, where $n$ satisfies \eqref{eq:sample_size_bound}, it holds for $2 \leq p \leq \log{n^2}$, that 
\begin{equation}
\label{eq:last_iter_bound}
\PE^{1/p}[\norm{\theta_k - \thetas}^p] \leq \sqrt{\qcond} \rme \exp\bigl\{- (a/2) \sum_{\ell=1}^{k}\alpha_\ell\bigr\}\norm{\theta_0 - \thetas} + \frac{4\rme \sqrt{\qcond} \supconsteps p}{\sqrt{a}} \sqrt{\alpha_{k}}\eqsp.
\end{equation}
\end{proposition}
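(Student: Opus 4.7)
The plan is to use the additive decomposition $\theta_k - \thetas = \utheta_k + \vtheta_k$ introduced in \eqref{eq:LSA_recursion_expanded}, apply Minkowski's inequality in $L^p(\PP)$, and bound the transient and fluctuation components separately in the $Q$-norm supplied by \Cref{prop:hurwitz_stability}. The common factor $\sqrt{\qcond}$ in \eqref{eq:last_iter_bound} arises from the deterministic equivalence between $\norm{\cdot}$ and $\norm{\cdot}[Q]$ when passing between the operator $Q$-norm of random matrix products and the Euclidean norm at the two ends of the analysis.

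For the transient part $\utheta_k = \ProdB_{1:k}(\theta_0-\thetas)$, I would appeal to the $L^p$ stability of the random matrix product $\ProdB_{1:k}$ given by \Cref{cor:exp_bound_decay}. The two inputs are the deterministic contraction $\normop{\Id - \alpha_i \bA}[Q]^2 \le 1 - \alpha_i a$ from \Cref{prop:hurwitz_stability} and the almost sure bound $\normop{\zmfuncA{Z_i}} \le \bConst{A}$ from \Cref{assum:noise-level}. Under the sample size condition \eqref{eq:sample_size_bound} of \Cref{assum:step-size}, the stability estimate produces, uniformly over $2 \le p \le \log n^2$, a bound of the form $\PE^{1/p}[\normop{\ProdB_{1:k}}[Q]^p] \le \rme\,\exp\{-(a/2)\sum_{\ell=1}^k \alpha_\ell\}$, which combined with norm equivalence yields the first summand in \eqref{eq:last_iter_bound}.

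For the fluctuation part I would start from the one-step recurrence $\vtheta_k = (\Id - \alpha_k \bA)\vtheta_{k-1} + M_k$, obtained by separating the deterministic drift from the martingale increment $M_k = -\alpha_k \zmfuncA{Z_k}\vtheta_{k-1} - \alpha_k \funnoisew_k$, which is adapted to $\mcf_k = \sigma(Z_1,\dots,Z_k)$. Expanding $\norm{\vtheta_k}[Q]^2$ and taking conditional expectation kills the cross term, so that applying Rosenthal/Burkholder's inequality to the $p$-th moment of $M_k$ together with $\norm{\funnoisew_k} \le \supconsteps$ leads to
\begin{equation*}
\PE^{2/p}\bigl[\norm{\vtheta_k}[Q]^{p} \mid \mcf_{k-1}\bigr] \le (1 - \alpha_k a)\norm{\vtheta_{k-1}}[Q]^{2} + C\, p\, \alpha_k^{2}\bigl(\bConst{A}^{2}\norm{\vtheta_{k-1}}[Q]^{2} + \supconsteps^{2}\bigr).
\end{equation*}
The condition $c_0 \le \alpha_\infty \wedge a$ in \Cref{assum:step-size} together with \eqref{eq:sample_size_bound} guarantees that the destabilizing factor $C p \alpha_k^2 \bConst{A}^2$ is absorbed into $\alpha_k a/2$, so that setting $v_k := \PE^{1/p}[\norm{\vtheta_k}[Q]^{p}]$ yields $v_k^{2} \le (1 - \alpha_k a/2)\, v_{k-1}^{2} + C'\, p^{2} \alpha_k^{2} \supconsteps^{2}$. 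Iterating this recursion and using the standard estimate $\sum_{j=1}^{k} \alpha_j^{2} \prod_{\ell=j+1}^{k}(1 - \alpha_\ell a/2) \lesssim \alpha_k / a$, valid for $\alpha_j = c_0 / j^{\gamma}$ with $\gamma \in [1/2,1)$ under \eqref{eq:sample_size_bound}, gives $v_k \le (4\rme\, p\, \supconsteps /\sqrt{a})\sqrt{\alpha_k}$; converting back from $Q$-norm to Euclidean norm supplies the final $\sqrt{\qcond}$.

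The main obstacle is the bookkeeping in the Rosenthal/Burkholder step: the $p$-th moment constant must contribute at most linearly in $p$ so that the downstream use with $p = \log n^{2}$ remains harmless, and the destabilizing term $\bConst{A}\,\norm{\vtheta_{k-1}}[Q]$ appearing inside $M_k$, which feeds back onto $\vtheta_{k-1}$ itself, must be absorbed into the contractive coefficient without degrading the $\sqrt{\alpha_k}$ scaling. Both are enabled by the lower bound on $n$ in \eqref{eq:sample_size_bound}, which is tuned precisely so that the induction closes with the clean constants appearing in \eqref{eq:last_iter_bound}.
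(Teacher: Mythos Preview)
Your treatment of the transient term $\utheta_k$ matches the paper exactly. The fluctuation term is where you diverge: the paper does \emph{not} run a one–step recursion on $v_k := \PE^{1/p}[\norm{\vtheta_k}[Q]^p]$. Instead it applies Burkholder's inequality directly to the closed–form martingale sum $\vtheta_k = -\sum_{j=1}^k \alpha_j \ProdB_{j+1:k}\funnoisew_j$, uses Minkowski in $L^{p/2}$ on the square function, and then bounds each $\PE^{2/p}[\norm{\ProdB_{j+1:k}\funnoisew_j}^p]$ via the very same matrix-product stability estimate \Cref{cor:exp_bound_decay} that you already invoke for the transient. The resulting sum $\sum_j \alpha_j^2 \prod_{\ell=j+1}^k (1-a\alpha_\ell/4)$ is closed by \Cref{lem:summ_alpha_k_squared}, yielding the constant $4\rme\sqrt{\qcond}/\sqrt{a}$ with no feedback term to absorb.

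Your absorption step is the weak point. You need $C\,p\,\alpha_k\,\bConst{A}^2 \le a/2$ at every $k\ge 1$ for the recursion to contract, but with $p$ as large as $2\log n$ this reads $k^\gamma \gtrsim \bConst{A}^2 c_0 \log n / a$ and plainly fails for small $k$ (e.g.\ $k=1$). The conditions in \Cref{assum:step-size} and \eqref{eq:sample_size_bound} are calibrated so that the \emph{aggregate} inequality $(p-1)b_Q^2\sum_{\ell=m}^k\alpha_\ell^2 \le (a/2)\sum_{\ell=m}^k\alpha_\ell$ holds for all $m\le k$ with $k\ge n$ (this is precisely what the proof of \Cref{cor:exp_bound_decay} exploits), not the termwise version you need. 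If you drop the absorption and simply iterate, you end up with $v_k^2 \lesssim p\,\supconsteps^2 \sum_{j=1}^k \alpha_j^2 \prod_{\ell=j+1}^k(1-\alpha_\ell a + Cp\alpha_\ell^2\bConst{A}^2)$, and controlling that product is exactly the content of \Cref{cor:norm_Gamma_m_n}/\Cref{cor:exp_bound_decay}; so the paper's route is not really being avoided, only disguised. A smaller point: the one-step inequality you want is not ``Rosenthal/Burkholder'' (those are for sums) but the $2$-smoothness/Pinelis bound $\PE^{2/p}[\norm{u+X}^p]\le \norm{u}^2+(p-1)\PE^{2/p}[\norm{X}^p]$ for mean-zero $X$; this gives a factor $p-1$, not $p^2$, so your $C'p^2$ is pessimistic.
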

\begin{proof}
Expanding the decomposition \eqref{eq:LSA_recursion_expanded}, we obtain that 
\begin{equation}
\label{eq:p-norm-minkowski}
\PE^{1/p}[\norm{\theta_k - \thetas}^p] \leq \PE^{1/p}[\norm{\ProdB_{1:k} \{ \theta_0 - \thetas \}}^{p}] + \PE^{1/p}[\norm{\sum_{j=1}^k \alpha_{j} \ProdB_{j+1:k} \funnoisew_j}^{p}]\eqsp,
\end{equation}
and we bound both terms separately. Since the sample size $n$ satisfies \eqref{eq:sample_size_bound}, we get applying \Cref{cor:exp_bound_decay} (see equation \eqref{eq:concentration_iid_cor}), that for $2 \leq p \leq \log{n^2}$ it holds
\begin{equation}
\label{eq:transient_term_bound}
\PE^{1/p}[\norm{\ProdB_{1:k} \{ \theta_0 - \thetas \}}^{p}] \leq \sqrt{\qcond} \rme \exp\bigl\{- (a/2) \sum_{\ell=1}^{k}\alpha_\ell\bigr\}\norm{\theta_0 - \thetas}\eqsp.
\end{equation}
Now we proceed with the second term in \eqref{eq:p-norm-minkowski}. Applying Burholder's inequality \cite[Theorem 8.6]{osekowski:2012} and \Cref{lem:summ_alpha_k_squared} with $b = a/4$, we obtain that 
\begin{align}
\PE^{1/p}[\norm{\sum_{j=1}^k \alpha_{j} \ProdB_{j+1:k} \funnoisew_j}^{p}] &\leq p \left(\PE^{2/p}\left[\left(\sum\nolimits_{j=1}^{k}\alpha_{j}^2 \normop{\ProdB_{j+1:k} \funnoisew_j}^{2}\right)^{p/2}\right]\right)^{1/2} \\
& \leq  p \left(\sum\nolimits_{j=1}^{k}\alpha_{j}^{2}\PE^{2/p}\bigl[\normop{\ProdB_{j+1:k} \funnoisew_j}^{p} \bigr]\right)^{1/2} \\
& \leq p \sqrt{\qcond} \rme \supconsteps \biggl(\sum\nolimits_{j=1}^{k} \alpha_{j}^{2} \prod_{\ell=j+1}^{k}\bigl(1 - \frac{a \alpha_{\ell}}{4}\bigr) \biggr)^{1/2} \\
& \leq \frac{4\rme \sqrt{\qcond}\supconsteps p}{\sqrt{a}} \sqrt{\alpha_{k}}\eqsp.
\end{align}
\end{proof}

\begin{corollary}
\label{high prob last iterate}
Under assumptions of \Cref{lem:last_moment_bound}, it holds that
\begin{equation}
\label{eq:union_bound_last_iterate}
\PP \left( \exists k \in [n, 2n-1]: \norm{\theta_k - \thetas} \geq g(k,\norm{\theta_0-\thetas},n) \right) \leq \frac{1}{n}\eqsp,
\end{equation}
where we have defined 
\[
g(k,\norm{\theta_0-\thetas},n) = \sqrt{\qcond} \rme^{2} \exp\bigl\{- (a/2) \sum_{\ell=1}^{k}\alpha_\ell\bigr\}\norm{\theta_0 - \thetas} + \frac{8 \rme^{2}\sqrt{\qcond} \supconsteps \log n}{\sqrt{a}} \sqrt{\alpha_{k}}\eqsp.
\]
\end{corollary}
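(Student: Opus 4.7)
The plan is to turn the moment bound of \Cref{lem:last_moment_bound} into a tail bound via Markov's inequality, and then close the claim by a union bound over the $n$ indices $k\in[n,2n-1]$. Since the moment bound is valid for every $p$ in the range $2 \leq p \leq \log n^2$, the natural choice is to take $p$ as large as possible, namely $p = 2\log n$, which is precisely the value that makes the resulting Markov bound of order $n^{-2}$ per index.

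More concretely, for a fixed $k\in[n,2n-1]$ and any $t>0$, Markov's inequality applied with exponent $p$ gives
\begin{equation*}
\PP\bigl(\norm{\theta_k - \thetas} \geq t\bigr) \leq t^{-p}\,\PE\bigl[\norm{\theta_k - \thetas}^{p}\bigr].
\end{equation*}
Setting $t = \rme\,\PE^{1/p}[\norm{\theta_k-\thetas}^{p}]$ yields $\PP(\norm{\theta_k-\thetas}\geq t)\leq \rme^{-p}$. With the choice $p = 2\log n$ (which is admissible by \Cref{lem:last_moment_bound} since $2\log n = \log n^{2}$), this probability is at most $\rme^{-2\log n} = n^{-2}$. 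Substituting $p=2\log n$ into the explicit bound \eqref{eq:last_iter_bound} from \Cref{lem:last_moment_bound} produces exactly
\begin{equation*}
\rme \cdot \PE^{1/p}[\norm{\theta_k - \thetas}^{p}] \leq \sqrt{\qcond}\,\rme^{2}\exp\Bigl\{-(a/2)\sum\nolimits_{\ell=1}^{k}\alpha_\ell\Bigr\}\norm{\theta_0-\thetas} + \frac{8\rme^{2}\sqrt{\qcond}\,\supconsteps\,\log n}{\sqrt{a}}\sqrt{\alpha_k},
\end{equation*}
which is the function $g(k,\norm{\theta_0-\thetas},n)$ appearing in the statement.

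Finally, a union bound over the $n$ indices $k\in[n,2n-1]$ gives
\begin{equation*}
\PP\bigl(\exists\,k\in[n,2n-1]\colon \norm{\theta_k-\thetas}\geq g(k,\norm{\theta_0-\thetas},n)\bigr) \leq n\cdot n^{-2} = n^{-1},
\end{equation*}
which is exactly \eqref{eq:union_bound_last_iterate}. There is no real obstacle here: the only thing one must check is that $p=2\log n$ lies in the permitted range $[2,\log n^2]$ (which is automatic once $n$ is large enough, guaranteed by \Cref{assum:step-size}), so that \Cref{lem:last_moment_bound} can be invoked at this exponent. The exponent $p=2\log n$ is essentially forced: smaller $p$ would not yield a summable-over-$k$ tail, while larger $p$ is outside the regime where \Cref{lem:last_moment_bound} applies.
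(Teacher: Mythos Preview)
Your proof is correct and follows essentially the same approach as the paper: the paper packages the Markov-inequality step into a separate auxiliary lemma (\Cref{lem:markov_inequality}), applied with $\delta=1/n^{2}$ so that $\log(1/\delta)=2\log n$, and then takes the union bound, which is exactly what you do inline. The arithmetic and the choice $p=2\log n$ match the paper's argument precisely.
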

\begin{proof}
We first note that \Cref{lem:markov_inequality} implies, setting $\delta = 1/n^2$, that for every fixed $k \in [n;2n-1]$, 
\[
\PP \left(\norm{\theta_k - \thetas} \geq  \sqrt{\qcond} \rme^{2} \exp\bigl\{- (a/2) \sum_{\ell=1}^{k}\alpha_\ell\bigr\}\norm{\theta_0 - \thetas} + \frac{8 \rme^{2}\sqrt{\qcond} \supconsteps \log n}{\sqrt{a}} \sqrt{\alpha_{k}} \right) \leq \frac{1}{n^2}\eqsp.
\]
Application of the union bound concludes the proof.
\end{proof}

We conclude this part with a simple consequence of Markov's inequality.
\begin{lemma}
\label{lem:markov_inequality}
Fix $\delta \in (0,1/\rme^2)$ and let $Y$ be a positive random variable, such that 
\[
\PE^{1/p}[Y^{p}] \leq C_{1} + C_{2} p 
\]
for any $2 \leq p \leq \log{(1/\delta)}$. Then it holds with probability at least $1-\delta$, that 
\begin{equation}
\label{eq:markov_ineqality_deviation}
Y \leq \rme C_{1} + \rme C_{2} \log{(1/\delta)}\eqsp.
\end{equation}
\end{lemma}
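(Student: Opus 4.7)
The plan is to apply Markov's inequality to the $p$-th power of $Y$ and then optimize the resulting bound over the admissible range of the exponent $p$. Since $Y$ is nonnegative, for every $t>0$ and every $p \in [2, \log(1/\delta)]$, Markov's inequality applied to $Y^p$ together with the hypothesis of the lemma gives
\[
\PP(Y \geq t) \;=\; \PP(Y^p \geq t^p) \;\leq\; \frac{\PE[Y^p]}{t^p} \;\leq\; \left(\frac{C_1 + C_2 p}{t}\right)^{p}\eqsp.
\]

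The main step is then a Chernoff-style tuning of $p$ and $t$. Setting $t = \rme\,(C_1 + C_2 p)$ collapses the right-hand side to $\rme^{-p}$, and choosing $p = \log(1/\delta)$ brings this to exactly $\delta$. Substituting back yields the threshold
\[
t \;=\; \rme C_1 + \rme C_2 \log(1/\delta)\eqsp,
\]
which is precisely the deviation bound claimed in \eqref{eq:markov_ineqality_deviation}.

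The only point that requires care is the admissibility of this choice, namely that $p = \log(1/\delta)$ actually lies in the interval $[2, \log(1/\delta)]$ on which the moment bound is assumed. The upper endpoint holds trivially, and the lower bound $\log(1/\delta) \geq 2$ follows immediately from the standing restriction $\delta \in (0, 1/\rme^2)$. Thus there is no genuine obstacle here; the role of the restriction $\delta < 1/\rme^2$ is exactly to guarantee that the optimal Chernoff exponent falls within the range where the hypothesis applies, and the rest of the argument is purely mechanical.
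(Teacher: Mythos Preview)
Your proof is correct and follows essentially the same approach as the paper: apply Markov's inequality to $Y^p$, then set $p = \log(1/\delta)$ and $t = \rme(C_1 + C_2 p)$ so that the bound collapses to $\rme^{-p} = \delta$. You additionally make explicit why $p = \log(1/\delta) \geq 2$ is admissible, which the paper leaves implicit.
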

\begin{proof}
Applying Markov's inequality, for any $t \geq 0$ we get that 
\begin{align}
\PP(Y \geq t) \leq \frac{\PE[Y^{p}]}{t^{p}} \leq \frac{(C_{1} + C_{2} p)^{p}}{t^{p}}\eqsp.
\end{align}
Now we set $p = \log{(1/\delta)}$, $t = \rme C_{1} + \rme C_{2} \log{(1/\delta)}$, and aim to check that 
\[
\frac{(C_{1} + C_{2} \log{(1/\delta)})^{\log{(1/\delta)}}}{(\rme C_{1} + \rme C_{2} \log{(1/\delta)})^{ \log{(1/\delta)}}} \leq \delta\eqsp.
\]
Taking logarithms from both sides, the latter inequality is equivalent to 
\[
\log{(1/\delta)} \log\frac{C_{1} + C_{2} \log{(1/\delta)}}{\rme(C_{1} + C_{2} \log{(1/\delta)})} \leq \log{\delta}\eqsp,
\]
which turns into exact equality.
\end{proof}

\subsection{Proof of \Cref{th:theo_1_iid}}
\label{sec:proof_theo_1_iid}
We first define explicitly the remainder term outlined in the statement of \Cref{th:theo_1_iid}:
\begin{equation}
\label{eq:remainders_theo_1_iid}
\Remainder_{1}(n,a,\bConst{A},c_0) = \frac{n^{\gamma-1/2}}{c_0} + \frac{\bConst{A}}{n^{(1-\gamma)/2}\sqrt{c_0 a}} + \frac{n^{2\gamma-3/2}}{ac_0^2}\eqsp.
\end{equation} 

\begin{proof}
Since both terms in the right-hand side of the error bound of \Cref{lem:last_moment_bound} scales linearly with $\sqrt{\qcond}$, for simplicity we do not trace it in the subsequent bounds (i.e. assume $\qcond = 1$), and then keep the required scaling with $\qcond$ only in the final bounds. The decomposition \eqref{eq:error_decomposition} is a key element of our proof and allows to treat different error sources $D_1 - D_4$ separately. For the last iterate we have, using \Cref{lem:last_moment_bound}, that
\begin{align}
\label{eq:D1-D2bounds}
\PE^{1/2}\bigl[\norm{\theta_{n}-\thetas}^2\bigr] &\lesssim \frac{\supconsteps}{\sqrt{a}} \sqrt{\alpha_{n}} + \exp\biggl\{- (a/2) \sum_{\ell=1}^{n}\alpha_\ell\biggr\}\norm{\theta_0 - \thetas} \\
\PE^{1/2}\bigl[\norm{\theta_{2n}-\thetas}^2\bigr] &\lesssim \frac{ \supconsteps}{\sqrt{a}} \sqrt{\alpha_{2n}} + \exp\biggl\{- (a/2) \sum_{\ell=1}^{2n}\alpha_\ell\biggr\}\norm{\theta_0 - \thetas}\eqsp.
\end{align}
Thus, using that $\sum_{k=1}^{n}\alpha_{k} \geq \frac{c_0 (n^{1-\gamma}-1)}{1-\gamma}$ and $c_0 \leq 1-\gamma$, we obtain that 
\begin{equation}
\label{eq:D1-bound}
\begin{split}
\PE^{1/2}\bigl[\norm{D_1}^2\bigr] 
&\lesssim \frac{ \supconsteps}{\sqrt{a c_0} n^{(1-\gamma)/2}} + \frac{n^{\gamma - 1/2}}{c_0} \exp\biggl\{-\frac{c_0 a n^{1-\gamma}}{2(1-\gamma)}\biggr\} \norm{\theta_{0}-\thetas} \eqsp, \\
\PE^{1/2}\bigl[\norm{D_2}^2\bigr] 
&\lesssim \frac{ \supconsteps}{\sqrt{a c_0} n^{(1-\gamma)/2}} +  \frac{n^{\gamma - 1/2}}{c_0} \exp\biggl\{-\frac{c_0 a (2n)^{1-\gamma}}{1-\gamma}\biggr\} \norm{\theta_{0}-\thetas}\eqsp.
\end{split}
\end{equation}
Now we proceed with $D_3$. Since it is a sum of a martingale-difference sequence w.r.t. $\mcf_{k} = \sigma(\State_{\ell}, \ell \leq k)$, we get using \Cref{lem:last_moment_bound}, that  
\begin{align}
\PE\bigl[\norm{D_3}^2\bigr] 
& \lesssim \frac{\bConst{A}^2}{n}\sum_{k=n}^{2n-1}\PE[\norm{\theta_{k}-\thetas}^2] \\
& \lesssim \frac{ \bConst{A}^2}{n} \sum_{k=n+1}^{2n}\frac{\supconsteps^2\alpha_{k}}{a} + \frac{ \bConst{A}^2}{n} \sum_{k=n+1}^{2n} \exp\biggl\{-a \sum_{\ell=1}^{k}\alpha_\ell\biggr\}\norm{\theta_0 - \thetas}^2 \\
& \lesssim \frac{ \bConst{A}^2}{n} \sum_{k=n+1}^{2n}\frac{\supconsteps^2\alpha_{k}}{a} + \frac{ \bConst{A}^2}{n \alpha_{2n}} \exp\biggl\{-a \sum_{\ell=1}^{n}\alpha_\ell\biggr\} \underbrace{\sum_{k=n+1}^{2n}\alpha_{k} \exp\biggl\{-a \sum_{\ell=n+1}^{k}\alpha_\ell\biggr\}}_{S_1}\norm{\theta_0 - \thetas}^2 \\
&\lesssim \frac{c_0 \bConst{A}^2 \supconsteps^2}{a (1-\gamma) n^{\gamma}} + \frac{ \bConst{A}^2}{n^{1-\gamma} c_0 a} \exp\biggl\{-\frac{c_0 a n^{1-\gamma}}{1-\gamma}\biggr\} \norm{\theta_{0}-\thetas}^{2}\eqsp,
\end{align}
where we additionally used that $S_1 \lesssim 1/a$ due to \Cref{lem:summ_alpha_k_squared}. Now it remains to bound the term $D_4$ from the representation \eqref{eq:error_decomposition}. Using Minkowski's inequality and \Cref{lem:last_moment_bound}, we get that 
\begin{align}
\PE^{1/2}\bigl[\norm{D_4}^2\bigr]  
&\lesssim \frac{1}{\sqrt{n}}\sum_{k=n}^{2n-1}\PE^{1/2}\bigl[\norm{\theta_{k} - \thetas}^2\bigr]\left(\frac{1}{\alpha_k} - \frac{1}{\alpha_{k-1}}\right) \\
&\lesssim \frac{1}{\sqrt{n}}\sum_{k=n}^{2n-1}\frac{\supconsteps (k^{\gamma} - (k-1)^{\gamma})}{c_0 \sqrt{a}}\sqrt{\alpha_{k}} \\
&\qquad \qquad \qquad + \frac{1}{c_0 \sqrt{n}}\sum_{k=n}^{2n-1} (k^{\gamma} - (k-1)^{\gamma}) \exp\biggl\{-(a/2)\sum_{\ell=1}^{k}\alpha_{\ell}\biggr\} \norm{\theta_{0}-\thetas}\\
&\overset{(a)}{\lesssim} \frac{\supconsteps}{\sqrt{a c_0}\sqrt{n}}\sum_{k=n}^{2n-1}\frac{1}{k^{1-\gamma/2}} + \frac{n^{2\gamma-3/2}}{a c_0^2}\exp\biggl\{-\frac{c_0 a n^{1-\gamma}}{2(1-\gamma)}\biggr\} \norm{\theta_{0}-\thetas} \\
&\lesssim \frac{\supconsteps}{\sqrt{a c_0}n^{(1-\gamma)/2}} + \frac{n^{2\gamma-3/2}}{a c_0^2}\exp\biggl\{-\frac{c_0 a n^{1-\gamma}}{2(1-\gamma)}\biggr\} \norm{\theta_{0}-\thetas}\eqsp.
\end{align}
Here in (a) we additionally used that $k^{\gamma} - (k-1)^{\gamma} \lesssim k^{1-\gamma}$ together with \Cref{lem:summ_alpha_k_squared}.
Combining the estimates above yields the result of \Cref{th:theo_1_iid}.
\end{proof}

We conclude this section with some technical lemmas. 
\begin{lemma}[Lemma~24 in \cite{durmus2021stability}]
\label{lem:summ_alpha_k}
Let $b > 0$ and $(\alpha_k)_{k \geq 0}$ be a non-increasing sequence such that $\alpha_0 \leq 1/b$. Then
\[
\sum_{j=1}^{n+1} \alpha_j \prod_{l=j+1}^{n+1} (1 - \alpha_l b) = \frac{1}{b} \left\{1  - \prod_{l=1}^{n+1} (1 - \alpha_l b) \right\}
\]
\end{lemma}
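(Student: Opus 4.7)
The identity to be proved is purely algebraic, so the plan is to establish it by a one-line telescoping argument; the assumption $\alpha_0 \leq 1/b$ together with monotonicity of $(\alpha_k)$ is used only to guarantee that the factors $1-\alpha_l b$ lie in $[0,1]$ and hence that the partial products remain well-behaved (it plays no role in the algebraic identity itself).

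Concretely, I would introduce the partial products
\begin{equation*}
P_j \;=\; \prod_{l=j}^{n+1}(1 - \alpha_l b), \qquad 1 \leq j \leq n+2,
\end{equation*}
with the standard empty-product convention $P_{n+2} = 1$. The key observation is the one-step recursion $P_j = (1 - \alpha_j b)\, P_{j+1}$, which upon rearrangement yields
\begin{equation*}
P_{j+1} - P_j \;=\; \alpha_j b \cdot P_{j+1} \;=\; \alpha_j b \prod_{l=j+1}^{n+1}(1 - \alpha_l b).
\end{equation*}

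Summing this identity from $j=1$ to $j=n+1$, the left-hand side telescopes to $P_{n+2} - P_1 = 1 - \prod_{l=1}^{n+1}(1-\alpha_l b)$, while the right-hand side equals $b \sum_{j=1}^{n+1}\alpha_j\prod_{l=j+1}^{n+1}(1-\alpha_l b)$. Dividing both sides by $b$ delivers exactly the claimed equality. There is no real obstacle here: the only thing worth checking is that the index convention for the empty product $\prod_{l=n+2}^{n+1}(1-\alpha_l b) = 1$ is consistent with the boundary term $P_{n+2}=1$, which it is. The monotonicity plus $\alpha_0 \le 1/b$ hypothesis is what makes the right-hand side non-negative (and hence the whole statement meaningful as a bound in the subsequent applications in \Cref{sec:proof_theo_1_iid}), but it is not invoked in the derivation of the identity.
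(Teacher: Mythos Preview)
Your proof is correct and essentially identical to the paper's: the paper defines $u_{j:n+1}=\prod_{l=j}^{n+1}(1-\alpha_l b)$ (your $P_j$), observes $u_{j+1:n+1}-u_{j:n+1}=b\alpha_j u_{j+1:n+1}$, and telescopes. Your added remark that the monotonicity and $\alpha_0\le 1/b$ hypotheses play no role in the algebraic identity itself is accurate and worth noting.
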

\begin{proof}
The proof of this statement is given in \cite{durmus2021stability}, we provide it here for completeness. Let us denote $u_{j:n+1} = \prod_{l = j}^{n+1} (1 - \alpha_l b)$. Then, for $j \in\{1,\dots,n+1\}$,
$u_{j+1:n+1} - u_{j:n+1} = b \alpha_j u_{j+1:n+1}$. Hence,
\[
\sum_{j=1}^{n+1} \alpha_j \prod_{l=j+1}^{n+1} (1 - \alpha_l b) = \frac{1}{b} \sum_{j=1}^{n+1} (u_{j+1:n+1} - u_{j:n+1}) = b^{-1} ( 1 - u_{1:n+1} )\eqsp,
\]
and the statement follows.
\end{proof}

\begin{lemma}[Modified Lemma~25 in \cite{durmus2021stability}]
\label{lem:summ_alpha_k_squared}
Let $b > 0$ and let $\alpha_{\ell} = c_{0}/\ell^{\gamma}$, $\gamma \in [1/2;1)$, such that $c_{0} \leq 1/b$. Then for any $n$ satisfying 
\begin{equation}
\label{eq:condition-on-n}
n \geq 2 + 2 \biggl(\frac{2 \gamma}{c_0 b}\biggr)^{1/(1-\gamma)}\eqsp, \quad \text{and} \quad \frac{n^{1-\gamma}}{1 + \log(n)} \geq \frac{2 \gamma (1-\gamma)}{c_0 b (1 - (1/2)^{1-\gamma}}\eqsp,
\end{equation}
and any $k \geq n$, it holds that 
\begin{equation}
\label{eq:sum_squares_bound}
\sum_{j=1}^{k+1}\alpha_{j}^{2}\prod_{\ell=j+1}^{k+1}(1-\alpha_{\ell} b) \leq (4/b) \alpha_{k+1}\eqsp.
\end{equation}
\end{lemma}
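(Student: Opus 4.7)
The plan is to split the summation at a midpoint cutoff $m = \lceil (k+1)/2 \rceil$ and treat the two resulting pieces by independent mechanisms: a monotonicity argument combined with \Cref{lem:summ_alpha_k} for the tail $j \geq m+1$, and pure exponential decay of the product for the head $j \leq m$. The choice of $m$ is dictated by the fact that for $j \geq m+1$ we have $\alpha_j/\alpha_{k+1} = ((k+1)/j)^{\gamma} \leq 2^{\gamma} \leq 2$, so stepsizes in the tail are comparable to $\alpha_{k+1}$, whereas for $j \leq m$ the product $\prod_{\ell=j+1}^{k+1}(1-\alpha_\ell b)$ contains at least the block of factors indexed by $\ell \in [m+1, k+1]$, which is long enough to furnish exponential decay of order $\exp(-Cb(k+1)^{1-\gamma})$.

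For the tail, I would pull out the bound $\alpha_j \leq 2\alpha_{k+1}$ and write
\begin{equation*}
\sum_{j=m+1}^{k+1}\alpha_j^{2}\prod_{\ell=j+1}^{k+1}(1-\alpha_\ell b) \leq 2\alpha_{k+1}\sum_{j=1}^{k+1}\alpha_j\prod_{\ell=j+1}^{k+1}(1-\alpha_\ell b) \leq \frac{2\alpha_{k+1}}{b},
\end{equation*}
the last step being a direct application of \Cref{lem:summ_alpha_k} (applicable since $c_0 \leq 1/b$ guarantees $\alpha_1 \leq 1/b$). For the head, I would use $\prod_{\ell=j+1}^{k+1}(1-\alpha_\ell b) \leq \exp(-b\sum_{\ell=m+1}^{k+1}\alpha_\ell)$ for any $j \leq m$, and then apply the Riemann-sum lower bound
\begin{equation*}
\sum_{\ell=m+1}^{k+1}\alpha_\ell \geq c_0 \int_{m+1}^{k+2} t^{-\gamma}\,\rmd t \geq \frac{c_0\,(1-(1/2)^{1-\gamma})}{1-\gamma}\,(k+1)^{1-\gamma},
\end{equation*}
where the last inequality holds once $n$ (and therefore $k$) is at least the threshold specified in \eqref{eq:condition-on-n}. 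Combining with $\sum_{j=1}^{m}\alpha_j^{2} \lesssim c_0^{2}(1+\log m)$ (this is the tight estimate at $\gamma = 1/2$; for $\gamma > 1/2$ the sum is even bounded), the head contribution is bounded by $c_0^{2}(1+\log m)\exp(-Cb(k+1)^{1-\gamma})$ with $C = (1-(1/2)^{1-\gamma})/(1-\gamma)$.

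It remains to show this head bound is itself at most $2\alpha_{k+1}/b = 2c_0/(b(k+1)^{\gamma})$. Taking logarithms, this reduces to verifying
\begin{equation*}
Cb(k+1)^{1-\gamma} \geq \log\!\left(\tfrac{c_0 b(1+\log m)(k+1)^{\gamma}}{2}\right),
\end{equation*}
which, after using $k \geq n$ and $\gamma \in [1/2,1)$, is precisely what the second part of the assumption \eqref{eq:condition-on-n} is engineered to ensure (the factor $2\gamma(1-\gamma)$ absorbs the $(k+1)^{\gamma}$ growth after taking the logarithm and collecting all constants). Adding the tail and head contributions yields $(2+2)\alpha_{k+1}/b = 4\alpha_{k+1}/b$.

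The main obstacle is bookkeeping in the last step: the condition \eqref{eq:condition-on-n} must be strong enough to dominate both the $\log(k+1)^{\gamma}$ term and the $\log(1+\log m)$ term arising from $\sum_{j\leq m}\alpha_j^{2}$, uniformly for $k \geq n$. Verifying this requires a careful numerical check that $n^{1-\gamma}/(1+\log n) \geq 2\gamma(1-\gamma)/(c_0 b(1-(1/2)^{1-\gamma}))$ implies the above logarithmic inequality for all $k \geq n$; this is where the somewhat ornate lower bound on $n$ in the lemma hypothesis is used.
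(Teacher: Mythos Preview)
Your split at $m=\lceil (k+1)/2\rceil$ and the tail estimate via $\alpha_j\le 2\alpha_{k+1}$ together with \Cref{lem:summ_alpha_k} are correct and clean. The approach, however, is genuinely different from the paper's, and the head part leaves a real gap.

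The paper does \emph{not} split the sum. Instead it factors out $\alpha_{k+1}$ exactly, writing
\[
\sum_{j=1}^{k+1}\alpha_j^{2}\prod_{\ell=j+1}^{k+1}(1-\alpha_\ell b)
=\alpha_{k+1}\sum_{j=1}^{k+1}\alpha_j\prod_{\ell=j+1}^{k+1}\frac{\alpha_{\ell-1}}{\alpha_\ell}(1-\alpha_\ell b)\eqsp,
\]
and then uses the elementary bound $\alpha_{\ell-1}/\alpha_\ell\le 1+\gamma/(\ell-1)$ to absorb the telescoped ratio into the product. The two parts of \eqref{eq:condition-on-n} are calibrated precisely so that $\sum_{\ell=j+1}^{n}\gamma/(\ell-1)\le (b/2)\sum_{\ell=j+1}^{n}\alpha_\ell$ holds for every $j$: the first inequality handles $j\ge n/2$ term-by-term, the second handles $j<\lceil n/2\rceil$ via an integral comparison. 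This collapses the whole sum to $\alpha_{k+1}\sum_j\alpha_j\prod_\ell(1-\alpha_\ell b/4)$, and a single call to \Cref{lem:summ_alpha_k} with $b/4$ finishes.

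Your head step, by contrast, needs the inequality
\[
Cb(k+1)^{1-\gamma}\ \ge\ \gamma\log(k+1)+\log\!\bigl(c_0 b(1+\log m)/2\bigr)
\]
for \emph{every} $k\ge n$, and you assert that \eqref{eq:condition-on-n} ``is engineered to ensure'' this. It is not: \eqref{eq:condition-on-n} was engineered for the paper's telescoping argument, and it only gives $Cb\,n^{1-\gamma}\ge 2\gamma(1+\log n)$ at the single point $k=n$. To push this to all $k\ge n$ you would need to show that $Cb(k+1)^{1-\gamma}-\gamma\log(k+1)-\log(1+\log(k+1))$ is nondecreasing on $[n,\infty)$, which requires an additional check (its derivative is positive only once $(k+1)^{1-\gamma}$ exceeds a threshold depending on $\gamma$), and then that the value at $k=n$ is nonnegative, which also does not follow line-for-line from \eqref{eq:condition-on-n}. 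Your closing paragraph acknowledges this as ``a careful numerical check'' but does not carry it out; without it the argument is incomplete, and the specific constants $2\gamma(1-\gamma)$ in \eqref{eq:condition-on-n} do not obviously match what your route demands.
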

\begin{proof}
From elementary algebra, we obtain that 
\begin{align}
\label{eq:step_size_diff_bound}
\alpha_{\ell} - \alpha_{\ell+1} = \frac{c_0}{\ell^{\gamma}} - \frac{c_0}{(\ell+1)^{\gamma}} = \frac{c_{0}((1+1/\ell)^{\gamma}-1)}{(\ell+1)^{\gamma}} \leq \frac{c_0}{(\ell+1)^{\gamma}} \frac{\gamma}{\ell}\eqsp,
\end{align}
where we used the fact that $(1+x)^{\gamma} \leq 1 + \gamma x$ for $\gamma \in [1/2;1)$ and $x \in [0,1]$. Hence, 
\[
\frac{\alpha_{\ell}}{\alpha_{\ell+1}} \leq 1 + \frac{\gamma}{\ell}\eqsp.
\]
Thus we obtain that, since $k \geq n$,
\begin{align}
\sum_{j=1}^{k+1}&\alpha_{j}^{2}\prod_{\ell=j+1}^{k+1}(1-\alpha_{\ell} b) 
= \alpha_{k+1} \sum_{j=1}^{k+1} \alpha_{j}\prod_{\ell=j+1}^{k+1} \biggl(\frac{\alpha_{\ell-1}}{\alpha_{\ell}}\biggr) (1-\alpha_{\ell} b) \\
&\leq \alpha_{k+1} \sum_{j=1}^{k+1} \alpha_{j}\prod_{\ell=j+1}^{k+1}\left(1 + \frac{\gamma}{\ell-1}\right)(1-\alpha_{\ell} b) \\
&\leq \alpha_{k+1} \sum_{j=1}^{k+1} \alpha_{j} \exp\left\{\sum_{\ell=j+1}^{n}\frac{\gamma}{\ell-1}\right\}\exp\left\{-\sum_{\ell=j+1}^{n}\alpha_{\ell}b\right\}
\exp\left\{\sum_{\ell=n+1}^{k+1}\frac{\gamma}{\ell-1}\right\}\exp\left\{-\sum_{\ell=n+1}^{k+1}\alpha_{\ell}b\right\} \\
&\leq \alpha_{k+1} \sum_{j=1}^{k+1} \alpha_{j} \exp\left\{\sum_{\ell=j+1}^{n}\frac{\gamma}{\ell-1}\right\}\exp\left\{-\sum_{\ell=j+1}^{n}\alpha_{\ell}b\right\}
\exp\left\{-\frac{b}{2} \sum_{\ell=n+1}^{k+1}\alpha_{\ell}b\right\} \eqsp.
\end{align}
In the last identity we used the fact that, since $n$ satisfies \eqref{eq:condition-on-n}, it holds for $\ell \geq n/2$ that
\begin{equation}
\label{eq:key-inequality}
\frac{\gamma}{\ell-1} \leq \alpha_{\ell} b / 2\eqsp.
\end{equation}
We will now prove that for $j \leq n-1$, it holds 
\begin{equation}
\label{eq:key-condition}
\sum_{\ell=j+1}^{n}\frac{\gamma}{\ell-1} \leq (b/2) \sum_{\ell=j+1}^{n}\alpha_{\ell}\eqsp,
\end{equation}
For $j \geq n/2$, the bound \eqref{eq:key-condition} directly follows from \eqref{eq:key-inequality}. We now turn to the proof of \eqref{eq:key-condition} for $j \leq \lceil n/2 \rceil$. Note first that
 \[
 \sum_{\ell=j+1}^n \frac{\gamma}{\ell-1} \leq \sum_{\ell=2}^n \frac{\gamma}{\ell-1} \leq \gamma (\log(n) +1) \eqsp.
 \]
On the other hand, we get
\[
\sum_{\ell=j+1}^{n}\frac{1}{\ell^{\gamma}} \geq \int_{j+1}^{n+1}\frac{\rmd x}{x^{\gamma}} = \frac{\left((n+1)^{1-\gamma} - (j+1)^{1-\gamma}\right)}{1-\gamma}\eqsp.
\]
Comparing the above bounds, to ensure that \eqref{eq:key-condition} holds, it is enough to check that 
\begin{equation}
\label{eq:intermediate_bound_step_size}
\gamma (1 + \log{(n)}) \leq \frac{c_{0} b}{2(1-\gamma)}\bigl(n^{1-\gamma} - (n/2)^{1-\gamma}\bigr)\,.
\end{equation}
Note that \eqref{eq:intermediate_bound_step_size} is guaranteed by \eqref{eq:condition-on-n}. Using that $e^{-x} \leq 1-x/2$ for $x \in [0,1]$, we obtain that 
\begin{align}
\sum_{j=1}^{k+1}\alpha_{j}^{2}\prod_{\ell=j+1}^{k+1}(1-\alpha_{\ell} b) 
&\leq \alpha_{k+1} \sum_{j=1}^{k+1} \alpha_{j} \exp\biggl\{-(b/2) \sum_{\ell=j+1}^{k+1}\alpha_{\ell}\biggr\} \\
&\leq \alpha_{k+1} \sum_{j=1}^{k+1} \alpha_{j} \prod_{\ell=j+1}^{k+1}\bigl(1 - (b/4) \alpha_{\ell}\bigr) \\
&\leq (4/b)\, \alpha_{k+1}\eqsp,
\end{align}
where the last inequality follows from \Cref{lem:summ_alpha_k}.
\end{proof}

\section{Proof of \Cref{th:shao2022_berry}}
\label{sec:proof_shao2022_berry} 
We first define explicitly the remainder term outlined in the statement of \Cref{th:shao2022_berry}:
\begin{equation}
\label{eq:remainders_theorem_shao}
\Remainder_{2}(n,a,\bConst{A},\trace{\noisecov},c_0) = \qcond\left(\sqrt{\trace{\noisecov}} \Remainder_1 + \frac{\sqrt{\trace{\noisecov}} (\bConst{A} \vee 1)^2 n^{\gamma-1/2}}{ac_0}\right)\eqsp,
\end{equation} 
and constants $\ConstPR{1}, \ConstPR{2}, \ConstPR{3}, \ConstPR{4}$ from \Cref{th:shao2022_berry}, optimized bound \eqref{eq:kolmogorov_bound_optimized}, and \Cref{rem:projected_iterates}, respectively:
\begin{equation}
\label{eq:const_def_th_2}
\begin{split}
\ConstPR{1} &= \frac{\sqrt{\qcond} \supconsteps \sqrt{\trace{\noisecov}}}{\sqrt{a c_0}} + \frac{\qcond(\trace{\noisecov} + \bConst{A} \sqrt{\trace{\noisecov}} \supconsteps)}{a c_0} \eqsp, \\
\ConstPR{2} &= \frac{\sqrt{\qcond} \supconsteps \sqrt{\trace{\noisecov}} c_0 \bConst{A}}{\sqrt{a c_0 (1-\gamma)}} + \qcond \bConst{A} \sqrt{\trace{\noisecov}} (\supconsteps + \sqrt{\trace{\noisecov}} + \bConst{A} \supconsteps) \eqsp, \\
\ConstPR{3} &= \frac{\qcond (c_0 \bConst{A} \vee 1) \sqrt{\trace{\noisecov}}\bigl(\supconsteps + \sqrt{\trace{\noisecov}} + \bConst{A}\supconsteps\bigr)}{a c_0}\eqsp, \\
\ConstPR{4} &= \frac{\qcond (c_0 \bConst{A} \vee 1) \sqrt{\trace{\noisecov^{(\Pi)}}}\bigl(\supconsteps + \sqrt{\trace{\noisecov^{(\Pi)}}} + \bConst{A}\supconsteps\bigr)}{a c_0}\eqsp.
\end{split}
\end{equation}
To complete the proof we only need to combine \eqref{eq:shao_zhang_bound} with the bounds of \Cref{th:theo_1_iid}. Note that we apply \eqref{eq:shao_zhang_bound} with 
\[
\xi_{\ell} = \frac{\eps_{\ell}}{\sqrt{n}}\eqsp.
\]
Thus, for $\Upsilon_n$ defined in \eqref{eq:shao_zhang_bound} we have 
\[
\Upsilon_n \leq \frac{\supconsteps^3}{n^{1/2}}\eqsp.
\]
Applying the Cauchy-Schwartz inequality, we get 
\begin{equation}
\label{eq:interm_term_bound}
\begin{split}
\PE[\norm{D}\norm{W}] &\leq \PE^{1/2}[\norm{D}^2] \PE^{1/2}[\norm{W}^2] \lesssim \frac{\sqrt{\qcond} \supconsteps \sqrt{\trace{\noisecov}}}{\sqrt{a c_0}}\left(\frac{1}{n^{(1-\gamma)/2}} + \frac{c_0 \bConst{A}}{\sqrt{1-\gamma} n^{\gamma/2}}\right) \\
&\qquad + \sqrt{\qcond} \sqrt{\trace{\noisecov}} \Remainder_1 \exp\biggl\{-\frac{c_0 a n^{1-\gamma}}{2(1-\gamma)}\biggr\} \norm{\theta_{0}-\thetas}\eqsp.
\end{split}
\end{equation}
Now it remains to bound the last term in \eqref{eq:shao_zhang_bound}. Using the Cauchy-Schwartz inequality and \Cref{lem:D_i_bounds}, we obtain that 
\begin{align}
&n^{-1/2} \PE[\sum_{i=n}^{2n-1}\norm{\funnoisew_{i}}\norm{D - D^{(i)}}] \leq n^{-1/2} \PE^{1/2}[\norm{\funnoisew_{1}}^2]\sum_{i=n}^{2n-1} \PE^{1/2}[\norm{D - D^{(i)}}^2] \\
& \lesssim \frac{\qcond(\trace{\noisecov} + \bConst{A} \sqrt{\trace{\noisecov}} \supconsteps)}{a c_0 n^{1-\gamma}}  + \frac{\qcond \bConst{A} \sqrt{\trace{\noisecov}} (\supconsteps + \sqrt{\trace{\noisecov}} + \bConst{A} \supconsteps)}{ n^{\gamma/2}} \\
& + \frac{\qcond \sqrt{\trace{\noisecov}} (\bConst{A} \vee 1)^2 n^{\gamma-1/2}}{ac_0} \exp\biggl\{-\frac{c_0 a n^{1-\gamma}}{2(1-\gamma)}\biggr\} \norm{\theta_0 - \thetas}\eqsp,
\end{align}
and the statement follows from \cite[Corollary~2.3]{shao2022berry}.

\subsection{Proof of auxiliary lemmas for \Cref{th:shao2022_berry}.}
\label{lem:aux-lemmas-normal}
Our proof of \Cref{th:shao2022_berry} is based on the key lemma below, which allows us to bound $\PE^{1/2}[\norm{D - D^{(i)}}^2]$ for $i \in \{n+1,\ldots,2n\}$.
\begin{lemma}
\label{lem:D_i_bounds}
Assume \Cref{assum:iid}, \Cref{assum:noise-level}, and \Cref{assum:step-size}. Then
\begin{equation}
\begin{split}
\sum_{i=n+1}^{2n}\PE^{1/2}[\norm{D - D^{(i)}}^2] &\lesssim \frac{\qcond(\sqrt{\trace{\noisecov}}+\bConst{A} \supconsteps)}{a c_0} n^{\gamma-1/2} \\
&\qquad + \qcond \bConst{A} \left(\supconsteps + \sqrt{\trace{\noisecov}} + \bConst{A} \supconsteps \right) n^{\frac{1-\gamma}{2}} \\
&\qquad +  \frac{\qcond (\bConst{A} \vee 1)^2 n^{\gamma-1/2}}{ac_0} \exp\biggl\{-\frac{c_0 a n^{1-\gamma}}{2(1-\gamma)}\biggr\} \norm{\theta_0 - \thetas}\eqsp.
\end{split}
\end{equation}
\end{lemma}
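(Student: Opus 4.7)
My plan is to apply the synchronous coupling technique. For fixed $i \in \{n+1,\ldots,2n\}$, let $\{\theta_k^{(i)}\}$ denote the LSA iterates driven by $(Z_1,\ldots,Z_{i-1},Z_i',Z_{i+1},\ldots,Z_{2n})$, where $Z_i'$ is an independent copy of $Z_i$. Then $\theta_k^{(i)} = \theta_k$ for $k < i$, while at step $i$ a direct computation using the identity $\varepsilon(z) = A(z)\thetas - b(z)$ yields
\begin{equation}
\theta_i - \theta_i^{(i)} = -\alpha_i\bigl[(\funcAw_i - \funcAw_i')(\theta_{i-1} - \thetas) + \funnoisew_i - \funnoisew_i'\bigr],
\end{equation}
so that $\|\theta_i - \theta_i^{(i)}\| \leq 2\alpha_i\bConst{A}\|\theta_{i-1}-\thetas\| + 2\alpha_i\supconsteps$. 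For $k > i$ the difference propagates linearly, $\theta_k - \theta_k^{(i)} = \ProdB_{i+1:k}(\theta_i - \theta_i^{(i)})$, which is where the stability of the matrix products from \Cref{cor:exp_bound_decay} enters.

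Next I decompose $D - D^{(i)}$ according to the four components $D_1,D_2,D_3,D_4$ in \eqref{eq:error_decomposition}. Since $D_1$ depends only on $Z_1,\ldots,Z_n$ and $i \geq n+1$, we have $D_1 - D_1^{(i)} = 0$. The term $D_2 - D_2^{(i)}$ equals $(\sqrt n \alpha_{2n})^{-1}\ProdB_{i+1:2n}(\theta_i - \theta_i^{(i)})$. For $D_3$, the contribution from $k < i$ vanishes, the $k = i$ term contributes $(\funcAw_i - \funcAw_i')(\theta_{i-1}-\thetas)/\sqrt{n}$ (using that $\funcAw_k$ is unchanged for $k \neq i$), and for $k > i$ we get $(\funcAw_k - \bA)\ProdB_{i+1:k-1}(\theta_i - \theta_i^{(i)})/\sqrt{n}$. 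An analogous splitting applies to $D_4$, with the coefficient $\alpha_k^{-1} - \alpha_{k-1}^{-1}$ in place of $(\funcAw_k - \bA)$.

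Taking $L^2$ norms term by term, the bound $\PE^{1/2}[\|\theta_i - \theta_i^{(i)}\|^2] \lesssim \alpha_i(\bConst{A}\PE^{1/2}[\|\theta_{i-1}-\thetas\|^2] + \supconsteps)$ combined with \Cref{lem:last_moment_bound} yields $\PE^{1/2}[\|\theta_i - \theta_i^{(i)}\|^2] \lesssim \sqrt{\qcond}\alpha_i(\bConst{A}\supconsteps/\sqrt{a} \cdot \sqrt{\alpha_i} + \supconsteps) + \text{transient}$. The products $\ProdB_{i+1:k}$ contribute $\PE^{1/2}[\|\ProdB_{i+1:k}\|^2] \lesssim \sqrt{\qcond}\exp\{-(a/4)\sum_{\ell=i+1}^k\alpha_\ell\}$ via \Cref{cor:exp_bound_decay}. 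Summing over $k$ with the aid of \Cref{lem:summ_alpha_k} (and the step-size bookkeeping of \Cref{lem:summ_alpha_k_squared} for $D_4$, where $\alpha_k^{-1}-\alpha_{k-1}^{-1} \lesssim k^{\gamma-1}/c_0$) converts each convolution-type sum into $O(1/a)$. Finally summing over $i \in \{n+1,\ldots,2n\}$ gives contributions of order $n\cdot \alpha_n/\sqrt{n} \asymp n^{(1-\gamma)/2}$ from the $D_2$ and $D_3$ (leading $k = i$) terms, and of order $n\cdot\alpha_n^2/(a\sqrt{n}) \asymp n^{\gamma-1/2}/(ac_0)$ from the convolution-in-$k$ pieces, while the transient tail collects into the exponential remainder.

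The main technical obstacle is the $D_3$ analysis: the $k = i$ contribution $(\funcAw_i - \funcAw_i')(\theta_{i-1}-\thetas)$ does not carry an extra $\alpha_i$ factor, so its sum over $i$ gives the dominant $\bConst{A}\supconsteps\sqrt{n\alpha_n}/\sqrt{a} \asymp n^{(1-\gamma)/2}$ term. Care is needed to ensure this piece is handled separately from the $k > i$ convolution sum, and to verify through \Cref{cor:exp_bound_decay} that the product norms $\|\ProdB_{i+1:k}\|$ yield the geometric decay required for the sum to telescope into $O(\alpha_k/a)$ rather than $O(1)$. Combining all bounds with an additional factor $\sqrt{\qcond}$ tracked through the stability results produces the claimed inequality.
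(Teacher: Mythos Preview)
Your overall strategy is the same as the paper's: synchronous coupling, the decomposition $D-D^{(i)}=\sum_{j=1}^{4}(D_j-D_j^{(i)})$, matrix-product stability from \Cref{cor:exp_bound_decay}, and summation over $i$ via \Cref{lem:summ_alpha_k}.

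There is one genuine gap. For the $k>i$ portion of $D_3-D_3^{(i)}$,
\[
\frac{1}{\sqrt n}\sum_{k=i+1}^{2n}(\funcAw_k-\bA)\bigl(\theta_{k-1}-\theta_{k-1}^{(i)}\bigr),
\]
the paper uses that this is a martingale-difference sum with respect to $\mcf_k'=\sigma(Z_i',Z_\ell:\ell\le k)$, so that the squared $L^2$ norm is bounded by $(\bConst{A}^2/n)\sum_{k>i}\PE[\|\theta_{k-1}-\theta_{k-1}^{(i)}\|^2]$. Only then does the sum in $k$ produce $\alpha_i^2/(a\alpha_{2n})$ inside a square root, which after summing in $i$ gives the claimed $n^{(1-\gamma)/2}$. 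Your wording (``taking $L^2$ norms term by term'', ``convolution-type sum into $O(1/a)$'') reads as a triangle-inequality bound; that route gives $\sum_{k>i}\exp\{-(a/4)\sum_{\ell=i+1}^{k-1}\alpha_\ell\}\asymp 1/(a\alpha_{2n})$ \emph{outside} the square root, and after $\sum_i\alpha_i\asymp c_0 n^{1-\gamma}$ you land at order $\sqrt n$, not $n^{(1-\gamma)/2}$. So the martingale step is not optional here; without it the bound in the lemma does not follow.

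Two smaller points. First, $D_2$ actually contributes the $n^{\gamma-1/2}/(ac_0)$ term (the exponential in $i$ lets \Cref{lem:summ_alpha_k} collapse $\sum_i\alpha_i\exp\{\cdot\}$ to $O(1/a)$, leaving $1/(a\sqrt n\,\alpha_{2n})$), not the $n^{(1-\gamma)/2}$ term you attribute to it. Second, your back-of-envelope scalings are inconsistent: $n\cdot\alpha_n/\sqrt n=c_0 n^{1/2-\gamma}$, which is not $n^{(1-\gamma)/2}$, and $n\cdot\alpha_n^2/(a\sqrt n)=c_0^2 n^{1/2-2\gamma}/a$, which is not $n^{\gamma-1/2}/(ac_0)$. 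These do not affect the argument once the martingale structure is in place, but they signal that the bookkeeping needs to be redone carefully.
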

\begin{proof}
Since both terms in the right-hand side of the error bound of \Cref{lem:last_moment_bound} scales linearly with $\sqrt{\qcond}$, for simplicity we do not trace it in the subsequent bounds (i.e. assume $\qcond = 1$), and then keep the required scaling with $\qcond$ only in the final bounds. Consider the sequences of noise variables 
\[
(\State_1,\ldots,\State_{i-1},\State_{i},\State_{i+1},\ldots,\State_{2n}) \text{ and } (\State_1,\ldots,\State_{i-1},\State_{i}^{\prime},\State_{i+1},\ldots,\State_{2n})\eqsp,
\]
which differ only in position $i$, $n+1 \leq i \leq 2n$, with $\State_{i}^{\prime}$ being an independent copy of $\State_{i}$. Consider the associated SA processes 
\begin{equation}
\label{eq:coupled_processes}
\begin{split}
\theta_{k} &= \theta_{k-1} - \alpha_{k} \{ \funcA{Z_k} \theta_{k-1} - \funcb{Z_k} \}\eqsp, \quad k \geq 1, \quad \theta_{0} = \theta_{0} \in \rset^{d} \\
\theta^{(i)}_{k} &= \theta^{(i)}_{k-1} - \alpha_{k} \{ \funcA{Y_k} \theta^{(i)}_{k-1} - \funcb{Y_k} \}\eqsp, \quad k \geq 1\eqsp, \quad \theta^{(i)}_{0} = \theta_{0} \in \rset^{d}\eqsp, 
\end{split}
\end{equation}
where $Y_k = Z_k$ for $k \neq i$ and $Y_i = Z_{i}^{\prime}$. From the above representations we easily observe that $\theta_{k} = \theta^{(i)}_{k}$ for $k < i$, moreover, 
\begin{equation}
\label{eq:bound_coupled_pair}
\begin{split}
\theta_{i} - \theta^{(i)}_{i} &= \alpha_{i}\bigl\{(\funcA{\State_i^{\prime}} - \funcA{\State_i})\theta_{i-1} - \funcb{\State_i^{\prime}} + \funcb{\State_i}\bigr\} \\
&= \alpha_{i}(\funcA{\State_i^{\prime}} - \funcA{\State_i})(\theta_{i-1} - \thetas) - \alpha_{i}(\funnoisew_{i} - \funnoisew_{i}^{\prime})\eqsp,
\end{split}
\end{equation}
where $\funnoisew_i= \funcnoise{\State_i}$ and $\funnoisew_{i}^{\prime}= \funcnoise{\State_i'}$. 
Representation \eqref{eq:bound_coupled_pair} implies, together with \Cref{lem:last_moment_bound} and $c_0 \leq a$, that
\begin{equation}
\label{eq:theta_i_diff_bound}
\begin{split}
\PE^{1/2}[\norm{\theta_{i} - \theta^{(i)}_{i}}^2] 
& \lesssim \alpha_{i}\sqrt{\trace{\noisecov}} + \frac{\bConst{A} \supconsteps \alpha_{i}^{3/2}}{\sqrt{a}} + \alpha_{i} \bConst{A} \exp\biggl\{-\frac{a}{2}\sum_{j=1}^{i-1}\alpha_{j}\biggr\} \norm{\theta_{0}-\thetas} \\
&\lesssim \alpha_i \bigl(\sqrt{\trace{\noisecov}} + \bConst{A} \supconsteps\bigr) + \alpha_{i} \bConst{A} \exp\biggl\{-\frac{a}{2}\sum_{j=1}^{i-1}\alpha_{j}\biggr\} \norm{\theta_{0}-\thetas}\eqsp.
\end{split}
\end{equation}
Moreover, for any $j > i$ one observes, expanding \eqref{eq:coupled_processes}, that
\begin{equation}
\label{eq:sync_coupled_chains}
\theta_{j} - \theta^{(i)}_{j} = \biggl\{\prod_{k=i+1}^{j}(\Id - \alpha_{k}\funcA{\State_k})\biggr\}(\theta_{i} - \theta^{(i)}_{i})\eqsp.
\end{equation}
We use the above representations to estimate $\PE^{1/2}[\norm{D - D^{(i)}}^2]$. Using Minkowski's inequality, 
\begin{equation}
\label{eq:D_D_i_bound}
\PE^{1/2}[\norm{D - D^{(i)}}^2] \leq \sum_{j=1}^{4}\PE^{1/2}[\norm{D_{j} - D_{j}^{(i)}}^2]\eqsp,
\end{equation}
and bound the respective differences separately. Recall that here $D_1-D_4$ are defined in \eqref{eq:error_decomposition}, and $D_1^{(i)}-D_4^{(i)}$ are their respective counterparts with $Z_i$ substituted with $Z_i^{\prime}$. First we note that the term $D_1 = D_{1}^{(i)}$ for any $n+1 \leq i \leq 2n$. Next, using \eqref{eq:sync_coupled_chains} and \eqref{eq:theta_i_diff_bound}, we get
\begin{align}
\PE^{1/2}[\norm{D_2 - D_{2}^{(i)}}^2] 
&= 
\frac{1}{\sqrt{n}\alpha_{2n}} \PE^{1/2}[\norm{\theta_{2n}-\theta_{2n}^{(i)}}^2] \\
&\leq \frac{1}{\sqrt{n}\alpha_{2n}} \PE^{1/2}\bigl[\norm{\prod_{k=i+1}^{2n}(\Id - \alpha_{k} \funcAw_k)}^2\bigr] \PE^{1/2}[\norm{\theta_{i}-\theta_{i}^{(i)}}^2] \\
&\overset{(a)}{\lesssim} \frac{\alpha_{i} (\sqrt{\trace{\noisecov}} + \bConst{A} \supconsteps)}{\sqrt{n}\alpha_{2n}}\exp\bigl\{-\frac{a}{2} \sum_{k=i+1}^{2n}\alpha_{k}\bigr\} \\
&\qquad \qquad + \frac{\alpha_{i} \bConst{A}}{\sqrt{n}\alpha_{2n}} \exp\bigl\{-\frac{a}{2}\sum_{k=1}^{2n}\alpha_{k}\bigr\} \norm{\theta_0 - \thetas}\eqsp. 
\end{align}
In the inequality (a) above we additionally used the stability of matrix product introduced from \Cref{cor:exp_bound_decay}. Summing the above inequality for $i = n+1$ to $2n$ and applying \Cref{lem:summ_alpha_k}, we get 
\begin{align}
&\sum_{i=n+1}^{2n}\PE^{1/2}[\norm{D_2 - D_{2}^{(i)}}^2] 
\lesssim \frac{\sqrt{\trace{\noisecov}} + \bConst{A} \supconsteps}{a\sqrt{n}\alpha_{2n}} + \frac{\bConst{A}}{a \sqrt{n}\alpha_{2n}} \exp\bigl\{-\frac{a}{2}\sum_{k=1}^{n}\alpha_{k}\bigr\} \norm{\theta_0 - \thetas} \nonumber \\
& \label{eq:D_2_diff_bound}
\qquad \qquad \lesssim \frac{\sqrt{\trace{\noisecov}} + \bConst{A} \supconsteps}{a c_0} n^{\gamma-1/2} + \frac{\bConst{A} n^{\gamma-1/2}}{a c_0}  \exp\biggl\{-\frac{c_0 a n^{1-\gamma}}{2(1-\gamma)}\biggr\} \norm{\theta_0 - \thetas} \eqsp. 
\end{align}
Now we proceed with the difference $D_{3} - D_{3}^{(i)}$. Using \eqref{eq:error_decomposition}, we get 
\begin{align}
D_3 - D_{3}^{(i)} = \frac{1}{\sqrt{n}}(\funcAw_i - \funcAw_i^{\prime})(\theta_{i-1} - \thetas) + \frac{1}{\sqrt{n}}\sum_{k=i+1}^{2n}(\funcAw_k - \bA)(\theta_{k-1} - \theta_{k-1}^{(i)})\eqsp. 
\end{align}
The expression above is a sum of martingale-difference terms w.r.t. filtration $\mcf_{k}^{\prime} = \sigma(\State_{i}^{\prime},\State_{\ell},\ell \leq k)$. Hence, we get, using \eqref{eq:sync_coupled_chains} and \Cref{lem:last_moment_bound}, that 
\begin{align}
\label{eq:D_3_diff_bound}
\PE[\norm{D_3 - D_{3}^{(i)}}^2] 
&\lesssim \frac{\bConst{A}^2}{n}\PE[\norm{\theta_{i-1} - \thetas}^2] + \frac{\bConst{A}^2}{n}\sum_{k=i+1}^{2n}\PE[\norm{\theta_{k-1} - \theta_{k-1}^{(i)}}^2] \\
&\lesssim \frac{\bConst{A}^2 \supconsteps^{2} \alpha_{i}}{n a} + \frac{\bConst{A}^2\norm{\theta_0 - \thetas}^2}{n}\exp\bigl\{-a\sum_{j=1}^{i-1}\alpha_{j}\bigr\} \\
&\qquad\qquad + \frac{\bConst{A}^2}{n}\PE[\norm{\theta_{i} - \theta^{(i)}_{i}}^2]\sum_{k=i+1}^{2n} \exp\bigl\{-a\sum_{j=i+1}^{k-1}\alpha_{j}\bigr\}\eqsp.
\end{align}
Using now the bound \eqref{eq:theta_i_diff_bound}, we obtain that 
\begin{align}
&\PE[\norm{\theta_{i} - \theta^{(i)}_{i}}^2]\sum_{k=i+1}^{2n} \exp\bigl\{-a\sum_{j=i+1}^{k-1}\alpha_{j}\bigr\} \\ 
& \quad \lesssim \alpha_{i}^2 \left(\trace{\noisecov} + \bConst{A}^2 \supconsteps^2\right) \sum_{k=i+1}^{2n} \exp\bigl\{-a\sum_{j=i+1}^{k-1}\alpha_{j}\bigr\} +  \alpha_{i}^{2} \bConst{A}^2 \norm{\theta_0 - \thetas}^2 \sum_{k=i+1}^{2n} \exp\bigl\{-a\sum_{j=1}^{k-1}\alpha_{i}\bigr\} \\
& \quad \lesssim \frac{\alpha_{i}^2}{\alpha_{2n}} \left(\trace{\noisecov} + \bConst{A}^2 \supconsteps^2\right) \sum_{k=i+1}^{2n} \alpha_{k} \exp\bigl\{-a\sum_{j=i+1}^{k-1}\alpha_{j}\bigr\} + \frac{\alpha_{i}^2 \bConst{A}^2}{\alpha_{2n}} \norm{\theta_0 - \thetas}^2 \sum_{k=i+1}^{2n} \alpha_{k} \exp\bigl\{-a\sum_{j=1}^{k-1}\alpha_{i}\bigr\} \\
& \quad \overset{(a)}{\lesssim} \frac{\alpha_{i}^2(\trace{\noisecov} + \bConst{A}^2\supconsteps^2)}{a \alpha_{2n}} + \frac{\alpha_{i}^2 \bConst{A}^2}{a \alpha_{2n}} \norm{\theta_0 - \thetas}^2 \exp\bigl\{-a \sum_{j=1}^{i-1}\alpha_{j}\bigr\}\eqsp. 
\end{align}
In the above formula in (a) we additionally used that, since $\alpha_i a \leq 1/2$,  
\begin{equation}
\label{eq:integral_bound}
\sum_{k=i+1}^{2n} \alpha_{k} \exp\bigl\{-a\sum_{j=i+1}^{k-1}\alpha_{j}\bigr\} \lesssim \int_{0}^{+\infty}\exp\{-ax\}\,dx = \frac{1}{a}\eqsp.
\end{equation}
Hence, combining everything in \eqref{eq:D_3_diff_bound}, and using additionally that $\alpha_i \leq a$,  we get
\begin{multline}
\PE^{1/2}[\norm{D_3 - D_{3}^{(i)}}^2] \lesssim \frac{ \bConst{A}}{\sqrt{na}}\left( \supconsteps\sqrt{\alpha_{i}} + \frac{\alpha_{i}(\sqrt{\trace{\noisecov}} + \bConst{A} \supconsteps)}{\sqrt{\alpha_{2n}}}\right) + \\
\frac{\bConst{A}}{\sqrt{n}} \left(1 + \frac{ \alpha_{i}\bConst{A}}{\sqrt{a \alpha_{2n}}} \right) \exp\bigl\{-\frac{a}{2}\sum_{j=1}^{i-1}\alpha_{i}\bigr\} \norm{\theta_0 - \thetas}\eqsp.
\end{multline}
Summing the above inequality for $i = n+1$ to $2n$, and using that $\alpha_{k} = c_{0}/k^{\gamma}$, we get
\begin{align}
\sum_{i=n+1}^{2n}\sqrt{\alpha_{i}} \lesssim \sqrt{c_0} n^{1-\gamma/2}\eqsp, \quad \sum_{i=n+1}^{2n}\frac{\alpha_{i}}{\sqrt{\alpha_{2n}}} \lesssim \sqrt{c_0} n^{1-\gamma/2}\eqsp,
\end{align}
and, hence, using again $\alpha_i \leq a$, we get
\begin{align}
&\sum_{i=n+1}^{2n}\PE^{1/2}[\norm{D_3 - D_{3}^{(i)}}^2] \\
& \lesssim \frac{\bConst{A} \sqrt{c_0}}{\sqrt{a}}\left(\supconsteps + \sqrt{\trace{\noisecov}} + \bConst{A} \supconsteps \right) n^{\frac{1-\gamma}{2}} + \frac{(\bConst{A} \vee 1)^2 \norm{\theta_{0}-\thetas}}{\sqrt{n} \alpha_{2n}} \sum_{i=n+1}^{2n} \alpha_{i} \exp\bigl\{-\frac{a}{2}\sum_{j=1}^{i-1}\alpha_{j}\bigr\} \\
& \lesssim \frac{\bConst{A} \sqrt{c_0}}{\sqrt{a}}\left(\supconsteps + \sqrt{\trace{\noisecov}} + \bConst{A} \supconsteps \right) n^{\frac{1-\gamma}{2}} + \frac{n^{\gamma-1/2} (\bConst{A} \vee 1)^2 \norm{\theta_{0}-\thetas}}{a c_0} \exp\biggl\{-\frac{c_0 a n^{1-\gamma}}{2(1-\gamma)}\biggr\}\eqsp,
\end{align}
where for the last identity we used the fact that $\alpha_{k} = c_{0}/k^{\gamma}$, and \eqref{eq:integral_bound}. It remains to upper bound the difference $D_{4} - D_{4}^{(i)}$. Note first that, proceeding as in \eqref{eq:step_size_diff_bound}, we get 
\begin{equation}
\label{eq:step_size_neighbors_bound}
\alpha_{k-1} - \alpha_{k} \leq \frac{\gamma}{k-1} \frac{1}{\alpha_{k-1}} \lesssim \frac{1}{(k-1)^{1-\gamma}}\eqsp.
\end{equation}
Using now the definition of $D_{4}$ in \eqref{eq:error_decomposition}, we have that
\begin{align}
\PE^{1/2}[\norm{D_4 - D_{4}^{(i)}}^2] 
&= \frac{1}{\sqrt{n}}
\PE^{1/2}[\norm{
\sum_{k=i+1}^{2n}\bigl(\theta_{k-1} - \theta^{(i)}_{k-1}\bigr)\left(\frac{1}{\alpha_k} - \frac{1}{\alpha_{k-1}}\right)}^2] \\
&\leq \frac{1}{\sqrt{n}} \PE^{1/2}[\norm{\theta_{i} - \theta^{(i)}_{i}}^2] \sum_{k=i+1}^{2n}\left(\frac{1}{\alpha_k} - \frac{1}{\alpha_{k-1}}\right)\exp\bigl\{-\frac{a}{2}\sum_{j=i+1}^{k-1}\alpha_{j}\bigr\}\eqsp.
\end{align}
Hence, using the bound \eqref{eq:theta_i_diff_bound} and taking sum for $i = n+1$ to $2n$, 
\begin{align}
\sum_{i=n+1}^{2n}\PE^{1/2}[\norm{D_4 - D_{4}^{(i)}}^2] 
&\lesssim \left(\frac{\sqrt{\trace{\noisecov}}+\bConst{A} \supconsteps}{\sqrt{n}}\right) \sum_{i=n+1}^{2n} \alpha_{i} \sum_{k=i+1}^{2n} \left(\frac{1}{\alpha_k} - \frac{1}{\alpha_{k-1}}\right) \exp\bigl\{-a\sum_{j=i+1}^{k-1}\alpha_{j}\bigr\} \\
& \quad + \frac{\bConst{A} \norm{\theta_{0}-\thetas}}{\sqrt{n}} \sum_{i=n+1}^{2n} \alpha_{i} \sum_{k=i+1}^{2n} \left(\frac{1}{\alpha_k} - \frac{1}{\alpha_{k-1}}\right) \exp\bigl\{-\frac{a}{2}\sum_{j=1}^{k-1}\alpha_{j}\bigr\}\eqsp.
\end{align}
Changing now the summation order, we obtain that 
\begin{align}
\sum_{i=n+1}^{2n} \alpha_{i} \sum_{k=i+1}^{2n} \left(\frac{1}{\alpha_k} - \frac{1}{\alpha_{k-1}}\right) \exp\bigl\{-a\sum_{j=i+1}^{k-1}\alpha_{j}\bigr\} 
&\lesssim \frac{1}{a}\sum_{k=n+2}^{2n} \left(\frac{1}{\alpha_k} - \frac{1}{\alpha_{k-1}}\right) \\
&= \frac{1}{a}\left(\frac{1}{\alpha_{n+1}} - \frac{1}{\alpha_{2n}}\right) \lesssim \frac{1}{a \alpha_{2n}}\eqsp.
\end{align}
Hence, combining the above bounds, we get
\begin{align}
\label{eq:D_4_diff_bound}
&\sum_{i=n+1}^{2n}\PE^{1/2}[\norm{D_4 - D_{4}^{(i)}}^2]
\lesssim \frac{\sqrt{\trace{\noisecov}}+\bConst{A}\supconsteps}{\sqrt{n} a \alpha_{2n}} +  
\sum_{i=n+1}^{2n}\PE^{1/2}[\norm{D_4 - D_{4}^{(i)}}^2] \\
&\qquad \lesssim \frac{\sqrt{\trace{\noisecov}}+\bConst{A}\supconsteps}{a c_0} n^{\gamma-1/2} + \frac{\bConst{A} n^{\gamma - 1/2}}{a c_0} \exp\biggl\{-\frac{c_0 a n^{1-\gamma}}{2(1-\gamma)}\biggr\} \norm{\theta_{0}-\thetas}\eqsp.
\end{align}
It remains now to combine \eqref{eq:D_2_diff_bound}, \eqref{eq:D_3_diff_bound}, and \eqref{eq:D_4_diff_bound} in \eqref{eq:D_D_i_bound} and use that $c_0 \leq a$. 
\end{proof}

\subsection{Relations between $\kolmogorov$ and integral probability metrics}
\label{sec:smooth_wasserstein_kolmogorov}
In this section we closely follow the exposition outlined in \cite{gaunt2023bounding}. Consider two $\rset^{d}$-valued random variables $X$ and $Y$. Then the integral probability metric \cite{zolotarev1984probability}, associated with the class of functions $\H = \{h: \rset^{d} \to \rset, \PE[|h(X)|] < \infty, \PE[|h(Y)|] < \infty\}$, is defined as 
\begin{equation}
\label{eq:integral_prob_metrics_def}
\metricd[\H](X,Y) = \sup_{h \in \H}\bigl| \PE[h(X)] - \PE[h(Y)] \bigr|\eqsp.
\end{equation}
Different choices of $\H$ induce different metrics, in particular, we mention the following:
\begin{align} 
\H_{K} &= \{\indi{x \leq u}, \quad u = (u_1,\ldots,u_d) \in \rset^{d}\} \\
\H_{Conv} &= \{\indi{x \in B}, \quad B \in \Conv(\rset^{d})\} \\
\H_{W} &= \{h: \rset^{d} \to \rset, \quad \norm{h}[\operatorname{Lip}] \leq 1\} \\
\H_{[m]} &= \{h: \rset^{d} \to \rset, \quad h^{m-1} \text{ is Lipschitz with } |h|_{j} \leq 1\eqsp, \quad 1 \leq j \leq m\}\eqsp,
\end{align}
where $\Conv(\rset^{d})$ refers to the set of convex sets in $\rset^{d}$, $\norm{h}[\operatorname{Lip}] = \sup_{x \neq y}\frac{\norm{h(x)-h(y)}}{\norm{x-y}}$, and the quantity $|h|_{j}$ is defined as
\[
|h|_{j} = \max_{i_1,\ldots,i_j \in \{1,\ldots,d\}} 
 \norm{\frac{\partial^{j} h(u)}{\partial u_{i_1} \ldots \partial u_{i_j}}}[\infty]\eqsp.
\]
In other words, for $m \in \nset$, the class $\H_{[m]}$ corresponds to the functions with bounded derivatives up to the $(m-1)$-th order. The class $\H_{K}$ induces the Kolmogorov distance between distributions \cite{zolotarev1984probability}, class $\H_{Conv}$ induces the metric $\kolmogorov$ defined in \eqref{eq:berry-esseen}, which is the main object of studies in the current paper. Class $\H_{W}$ induces the celebrated Wasserstein distance, and classes $\H_{[m]}$ induce smoothed Wasserstein distances. We will denote the respective metrics by $\metricd[K],\kolmogorov,\metricd[W]$, and $\metricd[{[m]}]$, respectively. Then, obviously, 
\[
\metricd[K](X,Y) \leq \kolmogorov(X,Y)
\]
for any random vectors $X$ and $Y$. Other relations are more involved. When $Y$ is a multivariate normal vector, it is known (see e.g. \cite{nourdin2022multivariate}) that
\begin{align}
\kolmogorov(X,Y) \leq C \sqrt{\metricd[W](X,Y)}\eqsp,
\end{align}
where the constant $C$ in the above inequality depends on the covariance matrix of vector $Y$. This inequality justifies comparison of our bounds of \Cref{th:shao2022_berry} with the result of \cite{srikant2024rates}. The authors in \cite{pmlr-v99-anastasiou19a} considered integral probability metric $\metricd[{[2]}]$ and obtained rate of convergence 
\[
\metricd[{[2]}](\sqrt{n}(\prtheta_{n} - \thetas),Y) \leq \frac{C_{1}}{\sqrt{n}}\eqsp,
\]
where $Y \sim \mathcal{N}(0,\Sigma_{\infty})$, and $C_{1}$ in the above inequality stands for a constant depending upon problem dimension $d$ and other instance-dependent parameters from \Cref{assum:noise-level}. Applying the result of \cite[Proposition~2.6]{gaunt2023bounding} yields 
\[
\metricd[K](\sqrt{n}(\prtheta_{n} - \thetas),Y) \lesssim \left(\metricd[{[2]}](\sqrt{n}(\prtheta_{n} - \thetas),Y)\right)^{1/3} \lesssim \frac{1}{n^{1/6}}\eqsp.
\]
Thus, the result of \cite{pmlr-v99-anastasiou19a} implies rate of convergence of $\sqrt{n}(\prtheta_{n}-\thetas)$ to normal law $\mathcal{N}(0,\Sigma_{\infty})$ of order $n^{-1/6}$ in a sense of Kolmogorov distance $\metricd[K]$. Our result of \Cref{th:shao2022_berry} implies the respective rate of order $n^{-1/4}$. At the same time, it is not clear if $\kolmogorov$ can be directly related to $\metricd[{[2]}]$.

\section{Bootstrap validity proof}
\label{appendix:bootstrap}
\subsection{Proof of \Cref{th:bootstrap_validity}}
\label{sec:bootstrap_validity}
We first define explicitly the remainder term $\Remainder_{3}$ outlined in the statement of \Cref{th:bootstrap_validity}, that is, 
\begin{equation}
\label{eq:remainders_theorem_bootstrap}
\Remainder_{3}(n,a,\bConst{A},\supconsteps) = 
\frac{\qcond^{3/2} (\bConst{A}^{3} \vee 1) \supconsteps n^{1/4} \sqrt{\log{n}}}{a^{3/2}}\eqsp.
\end{equation}
In the above bounds we do not trace the precise dependence on the constant $c_0$ from the definition of the step size. We now define the following sets, with the convention $\alpha_{\ell} = c_{0}/\sqrt{\ell}$:
\begin{align}
\label{eq:omegas_definition}
\Omega_{1} &= \left\{\forall k \in [n, 2n-1]: \norm{\theta_k - \thetas} \geq \sqrt{\qcond} \rme^{2} \exp\bigl\{- \frac{a}{2} \sum_{\ell=1}^{k}\alpha_\ell\bigr\}\norm{\theta_0 - \thetas} \right. \\
&\qquad \qquad \qquad \qquad \qquad \qquad \qquad \qquad \qquad \qquad \qquad \qquad \qquad + \left.\frac{8 \rme^{2}\sqrt{\qcond} \supconsteps \log n}{\sqrt{a}} \sqrt{\alpha_{k}}\right\}\eqsp, \\
\Omega_{2} &= \left\{ n+1 \le m \le k \le 2n: ~~ \|\ProdB_{m:k}\| \le  \sqrt{\qcond} \rme^2 \prod_{j = m}^{k} \bigl(1 - \frac{a \alpha_{j}}{4}\bigr) \right\}\eqsp,\\
\Omega_{3} &= \left\{ \|\noisecov^{-1/2} \noisecov^\boot  \noisecov ^{-1/2} - \Id \| \le 4  \|\funnoisew \|_{\infty} \sqrt{\frac{\log (n)}{ \sigma n}} + \frac{4 (1 + \|\funnoisew \|_{\infty}^2/\sigma^2 )\log (n)}{n} \right\}\eqsp,\\
\Omega_{4} &= \left\{\forall \ell \in [n, 2n-1]: ~~  \bigg \| \sum_{k = \ell+1}^{2n} (\funcAw_k - \bA) \ProdB_{\ell+1:k-1} \bigg \| \le  \frac{8 \bConst{A} \sqrt{\qcond}\rme^2 \sqrt{\log n}}{\sqrt{a \alpha_\ell}}   + 6\bConst{A} \sqrt{\qcond} \rme \log n  \right\}\eqsp, \\
\Omega_{5} &= \left\{\forall h \in [1;n]\eqsp, \, \forall m \in [n, 2n-h]: ~~ \norm{\sum_{\ell = m+1}^{m+h} \alpha_\ell (\Am_\ell - \bA)}[Q] \leq  2 \bConst{A} \sqrt{\qcond} \sqrt{\sum_{\ell=m+1}^{m+h}\alpha_\ell^{2}} \log(2n^4)\right\}\eqsp, \\
\Omega_{6} &= \left\{ \| \noisecov^\boot   - \noisecov \| \le 4  \supconsteps \sqrt{\frac{\|\noisecov\| \log (n)}{n}} + \frac{4 (\|\noisecov\| + \supconsteps^2 )\log (n)}{n} \right\}\eqsp,
\end{align}

Then, due to \Cref{high prob last iterate}, we have that $\P(\Omega_{1}) \geq 1 - \frac{1}{n}$. Similarly, due to \Cref{cor:matr_product_as_bound}, $\P(\Omega_{2}) \geq 1 - \frac{1}{n}$. The bounds on $\PP(\Omega_3)$ and $\PP(\Omega_4)$ follows from Lemma \ref{matrix bernstein} and Lemma \ref{matrix freedman}, respectively. Similarly, \Cref{prop:product_random_matrix_bootstrap} implies that $\P(\Omega_{5}) \geq 1 - \frac{1}{n}$. Hence, based on the sets above, we can construct 
\[
\Omega_0 = \Omega_{1} \cap \Omega_{2} \cap \Omega_{3} \cap \Omega_{4} \cap \Omega_{5} \cap \Omega_{6}\eqsp,
\]
such that $\PP(\Omega_0) \geq 1 - \frac{6}{n}$. All further on, we restrict ourselves to the event $\Omega_0$. Restricting to this event, we obtain that, with Minkowski's inequality, 
\begin{align}
& \sup_{B \in \Conv(\rset^{d})} |\PPb(\sqrt n (\bar{\theta}_{n}^\boot - \bar{\theta}_n) \in B ) - \PP(\sqrt n (\bar{\theta}_n - \thetas) \in B)| \\
&\qquad \qquad \leq \sup_{B \in \Conv(\rset^{d})} \big| \PPb(\sqrt n (\bar{\theta}_{n}^\boot - \bar{\theta}_n) \in B) - \PPb(\xi^\boot \in B)\big| \\
&\qquad \qquad + \sup_{B \in \Conv(\rset^{d})} \big| \PP(\xi \in B) - \PPb(\xi^\boot \in B) \big| \\
&\qquad\qquad + \sup_{B \in \Conv(\rset^{d})} \big| \P\bigl(\sqrt{n}(\bar{\theta}_{n} - \thetas) \in B\bigr) - \P(\xi \in B) \big|\eqsp,
\end{align}
where we set $\xi^\boot \sim \mathcal N(0, \bA^{-1} \noisecov^\boot\bA^{-\top})$, $ \noisecov^\boot = n^{-1} \sum_{\ell = 1}^n \funnoisew_\ell \funnoisew_\ell^{\top}$, and $\xi \sim \mathcal N(0,\Sigma_{\infty})$, where 
$\Sigma_{\infty} = \bA^{-1} \noisecov \bA^{-\top}$. Now we control the first supremum using \Cref{CLT in the bootstrap world}, second one using \Cref{prop: gaussian comparison}, and third with \Cref{th:shao2022_berry}.

\begin{lemma}
\label{matrix bernstein}
Assume \Cref{assum:iid}, \Cref{assum:noise-level}, \Cref{assum:step-size} with $\gamma = 1/2$, and \Cref{assum:step-size-bootstrap}. Then
$$
  \PP(\Omega_3) \geq 1 - 1/n. 
$$ 
\end{lemma}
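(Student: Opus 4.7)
The plan is to apply the matrix Bernstein inequality (Tropp's version, \textit{e.g.} Theorem~6.1.1 in the monograph) to the normalized and centered sample covariance. Thanks to \Cref{assum:iid} and the identity $\PE[\funnoisew_\ell \funnoisew_\ell^{\top}] = \noisecov$, we can rewrite
$$
\noisecov^{-1/2}\, \noisecov^\boot\, \noisecov^{-1/2} - \Id \;=\; \sum_{\ell = n}^{2n-1} X_\ell\eqsp, \quad X_\ell \;:=\; \frac{1}{n}\bigl(\noisecov^{-1/2} \funnoisew_\ell \funnoisew_\ell^{\top} \noisecov^{-1/2} - \Id\bigr)\eqsp,
$$
as a sum of i.i.d.\ mean-zero Hermitian random matrices. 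The goal is then to produce the two standard inputs for matrix Bernstein: a uniform operator norm bound on each $X_\ell$ and a bound on the matrix variance statistic $\bigl\|\sum_\ell \PE[X_\ell^2]\bigr\|$.

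First I would establish the uniform bound using the almost sure boundedness of the noise from \Cref{assum:noise-level}. Since $\noisecov^{-1/2} \funnoisew_\ell \funnoisew_\ell^{\top} \noisecov^{-1/2}$ is rank-one and positive semidefinite, its operator norm equals the quadratic form $\funnoisew_\ell^{\top} \noisecov^{-1} \funnoisew_\ell$, which is upper-bounded by $\supconsteps^2 / \lambda_{\min}(\noisecov)$. Combined with the triangle inequality this yields $\|X_\ell\| \leq R := (1 + \supconsteps^2/\lambda_{\min}(\noisecov))/n$. For the matrix variance I would invoke the rank-one identity $(u u^{\top})^2 = \|u\|^2\, u u^{\top}$ together with the uniform bound on $\funnoisew_\ell^{\top} \noisecov^{-1} \funnoisew_\ell$ to get
$$
\PE\bigl[(\noisecov^{-1/2}\funnoisew_\ell\funnoisew_\ell^{\top}\noisecov^{-1/2})^2\bigr] \;\preceq\; \frac{\supconsteps^2}{\lambda_{\min}(\noisecov)}\,\PE\bigl[\noisecov^{-1/2}\funnoisew_\ell\funnoisew_\ell^{\top}\noisecov^{-1/2}\bigr] \;=\; \frac{\supconsteps^2}{\lambda_{\min}(\noisecov)}\,\Id\eqsp,
$$
and, summing over $\ell$, the bound $V := \bigl\|\sum_\ell \PE[X_\ell^2]\bigr\| \leq \supconsteps^2 / (n\,\lambda_{\min}(\noisecov))$.

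I would then apply the matrix Bernstein inequality, which gives, for every $t > 0$,
$$
\P\Bigl(\bigl\|\textstyle\sum_{\ell=n}^{2n-1} X_\ell\bigr\| \geq t\Bigr) \;\leq\; 2 d \, \exp\!\Bigl(-\frac{t^2/2}{V + R\, t / 3}\Bigr)\eqsp.
$$
Choosing $t$ to balance the sub-Gaussian and sub-exponential regimes, I would take
$$
t \;=\; c_1\, \supconsteps \sqrt{\frac{\log n}{\lambda_{\min}(\noisecov)\, n}} \;+\; c_2\, \frac{(1 + \supconsteps^2/\lambda_{\min}(\noisecov))\log n}{n}\eqsp,
$$
for absolute constants $c_1, c_2$, using the condition $n \geq d$ from \Cref{assum:step-size} to absorb $\log(2d) \lesssim \log n$. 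This makes the failure probability at most $1/n$, giving exactly the form declared in the definition of $\Omega_3$ after identifying the parameter $\sigma$ there with $\lambda_{\min}(\noisecov)$ (or a valid lower bound on it).

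The whole argument is essentially routine. The only mildly delicate point I anticipate is bookkeeping: one must match the absolute constants $c_1, c_2$ produced by matrix Bernstein to the explicit factor $4$ appearing in the statement of $\Omega_3$, and reconcile whether the $\sigma^2$ in the second term of $\Omega_3$ should be read as $\lambda_{\min}(\noisecov)$ or $\lambda_{\min}(\noisecov)^2$. In either interpretation the derivation above produces a valid (possibly slightly stronger) bound, so this is a notational rather than a mathematical obstacle.
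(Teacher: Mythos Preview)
Your proposal is correct and follows essentially the same approach as the paper: apply the matrix Bernstein inequality (Tropp) to the i.i.d.\ centered summands $\noisecov^{-1/2}\funnoisew_\ell\funnoisew_\ell^\top\noisecov^{-1/2}-\Id$, using the rank-one structure to bound the uniform norm and the matrix variance. Your write-up is in fact more detailed than the paper's, which simply records the two bounds $\|\noisecov^{-1/2}\funnoisew_\ell\funnoisew_\ell^\top\noisecov^{-1/2}-\Id\|\le 1+\|\noisecov^{-1/2}\funnoisew\|_\infty^2$ and $\|\sum_\ell \PE[(\cdot)^2]\|\le n\,\PE\|\noisecov^{-1/2}\funnoisew\|_\infty^2$ and defers the rest to the cited inequality; your remark about the notational ambiguity in $\sigma$ is apt.
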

\begin{proof}
    The proof follows directly from the matrix Bernstein inequality, e.g. \cite{tropp2015introduction}. We note that 
    $$
    \|\noisecov ^{-1/2} \funnoisew_{\ell} \funnoisew_{\ell}^\top \noisecov ^{-1/2}  - \Id \| \le 1 + \|\noisecov ^{-1/2} \funnoisew \|_{\infty}^2 \eqsp.   
    $$
    and 
    $$
\| \sum_{k = n+1}^{2n} \PE[(\noisecov ^{-1/2} \funnoisew_{\ell} \funnoisew_{\ell}^\top \noisecov ^{-1/2}  - \Id)^2] \| \le n \PE \|\noisecov ^{-1/2} \funnoisew \|_{\infty}^2 \eqsp. 
    $$
\end{proof}

\begin{lemma} 
\label{matrix freedman}
Assume \Cref{assum:iid}, \Cref{assum:noise-level}, \Cref{assum:step-size} with $\gamma = 1/2$, and \Cref{assum:step-size-bootstrap}. Then
$$
  \PP \left( \Omega_4 \cap \Omega_2 \right) \geq 1-  \frac{1}{n}. 
$$ 
\end{lemma}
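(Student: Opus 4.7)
The plan is to apply a matrix Freedman inequality (e.g., \cite[Corollary~1.3]{tropp2015introduction}) to the matrix-valued sum
\begin{equation*}
M_\ell := \sum_{k=\ell+1}^{2n}(\funcAw_k - \bA)\,\ProdB_{\ell+1:k-1},
\end{equation*}
for each fixed $\ell\in[n,2n-1]$, viewed as a martingale in its upper summation index with respect to the filtration $\mcf_k = \sigma(Z_1,\ldots,Z_k)$. Indeed, $\ProdB_{\ell+1:k-1}$ is $\mcf_{k-1}$-measurable, while $\PE[\funcAw_k - \bA \mid \mcf_{k-1}] = 0$ by \Cref{assum:iid} and \Cref{assum:noise-level}, so $D_k := (\funcAw_k - \bA)\ProdB_{\ell+1:k-1}$ forms a matrix martingale difference sequence.

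On the event $\Omega_2$ the random operator norms $\|\ProdB_{\ell+1:k-1}\|$ are controlled deterministically. This yields, first, a uniform bound on the increments $\|D_k\| \le \bConst{A}\sqrt{\qcond}\rme^2 =: R$, and second, a bound on the predictable quadratic variation: using $\|\PE[D_k D_k^\top \mid \mcf_{k-1}]\| \le \bConst{A}^2 \|\ProdB_{\ell+1:k-1}\|^2$ together with $(1 - a\alpha_i/4)^2 \le (1 - a\alpha_i/4)$ and an integral comparison showing $\sum_{k>\ell}\prod_{i=\ell+1}^{k-1}(1-a\alpha_i/4) \lesssim 1/(a\alpha_\ell)$ for the square-root schedule $\alpha_k = c_0/\sqrt{k}$ (this sum is dominated by $\int_\ell^{\infty} \exp(-(ac_0/2)(\sqrt{u}-\sqrt{\ell}))\,du \lesssim \sqrt{\ell}/(ac_0)$), I obtain
\begin{equation*}
\sigma_\ell^2 := \max\!\Bigl\{\|\textstyle\sum_{k}\PE[D_k D_k^\top \mid \mcf_{k-1}]\|,\ \|\textstyle\sum_{k}\PE[D_k^\top D_k \mid \mcf_{k-1}]\|\Bigr\} \le \frac{C\,\bConst{A}^2\qcond\rme^4}{a\,\alpha_\ell}
\end{equation*}
on $\Omega_2$, for an absolute constant $C$.

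Matrix Freedman then yields, for every $t > 0$,
\begin{equation*}
\PP\bigl(\|M_\ell\| \ge t,\ \sigma_\ell^2 \le \sigma_{\max,\ell}^2,\ \|D_k\| \le R\ \forall k\bigr) \le 2d\,\exp\!\Bigl(-\tfrac{t^2/2}{\sigma_{\max,\ell}^2 + Rt/3}\Bigr).
\end{equation*}
Taking $t_\ell = \tfrac{8\bConst{A}\sqrt{\qcond}\rme^2\sqrt{\log n}}{\sqrt{a\alpha_\ell}} + 6\bConst{A}\sqrt{\qcond}\rme\log n$ balances the sub-Gaussian and sub-exponential regimes, and, using $n \ge d$ from \Cref{assum:step-size}, drives the right-hand side below $1/n^3$. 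Since $\Omega_2$ is contained in the event on which both the variance and increment bounds hold for every $\ell$, a union bound over the $n$ values of $\ell\in[n,2n-1]$ gives $\PP(\Omega_4^c \cap \Omega_2) \le 1/(2n)$. Combining with $\PP(\Omega_2^c) \le 1/(2n)$ (tightening the constants in \Cref{cor:matr_product_as_bound} if necessary) yields $\PP(\Omega_4 \cap \Omega_2) \ge 1 - 1/n$.

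The main technical point is verifying that the predictable quadratic variation genuinely scales as $1/(a\alpha_\ell)$, without a spurious factor of $\log n$ or $n^{1/2}$; this rests on the tail-sum estimate for geometric products with the square-root schedule prescribed by \Cref{assum:step-size-bootstrap}. Everything else is a standard invocation of matrix Freedman followed by a union bound, with constants chosen so that the $d$-dependence in the prefactor $2d$ is absorbed via the assumption $n \ge d$.
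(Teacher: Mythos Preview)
Your proposal is correct and follows essentially the same approach as the paper: define the martingale increments $(\funcAw_k - \bA)\ProdB_{\ell+1:k-1}$, use the event $\Omega_2$ to bound both the increment norms by $\sqrt{\qcond}\rme^2\bConst{A}$ and the predictable quadratic variation by $O(\qcond\bConst{A}^2/(a\alpha_\ell))$, then invoke the matrix Freedman inequality and take a union bound over $\ell\in[n,2n-1]$. The paper is slightly terser about the tail-sum estimate (it simply records $\sum_{k>\ell}\prod_{j=\ell+1}^{k-1}(1-a\alpha_j/4)^2 \le 4/(a\alpha_\ell)$) and does not spell out the final combination with $\PP(\Omega_2^c)$, but the substance is the same.
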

\begin{proof}
   Denote
   $$
   X_k =  (\funcAw_k - \bA) \ProdB_{\ell+1:k-1}. 
   $$
   and let $\mathcal F_{k, l+1} = \sigma\{\State_j, \ell+1  \le j \le k \}$, $\ell+1 \le k \le 2n$. Then $\PE[X_k | \mathcal F_{k-1, \ell+1}] = 0$. Let $S_\ell = \sum_{k = \ell+1}^n X_k$. Note that on $\Omega_2$, quadratic variation of $S_\ell$ can be controlled as
   \begin{align}
   \operatorname{Var}^2 &:= \max ( \|\sum_{k=\ell+1}^{2n} \PE [X_k X_k^\top | \mathcal F_{k-1, l+1} ] \|, \|\sum_{k=\ell+1}^{2n} \PE[X_k^\top X_k | \mathcal F_{k-1, l+1} ]\|) \\ 
& \le \qcond \rme^4 \bConst{A}^2 \sum_{k = \ell+1}^{2n} \prod_{j = \ell+1}^{k-1} (1 - a \alpha_j/4)^2 \le \frac{4 \qcond \rme^4 \bConst{A}^2}{a \alpha_\ell}
   \end{align}
Furthermore, on $\Omega_2$
$$
\|X_k\| \le \sqrt{\qcond} \rme^2 \bConst{A} \prod_{j = \ell+1}^{k-1} (1 - a \alpha_j/4) \le \sqrt{\qcond} \rme^2 \bConst{A}.    
$$
It remains to apply the Freedman inequality for matrix-values martingales  \cite{MR2802042} and use the union bound over $\ell \in [n, 2n - 1]$. 
\end{proof}

\begin{lemma}
\label{matrix hoeffding}
Assume \Cref{assum:iid}, \Cref{assum:noise-level}, \Cref{assum:step-size} with $\gamma = 1/2$, and \Cref{assum:step-size-bootstrap}. Then
$$
\PP(\Omega_5) \geq 1- 1/n.
$$
\end{lemma}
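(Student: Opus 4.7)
The plan is to apply the matrix Hoeffding inequality of Tropp \cite{tropp2015introduction} for each fixed pair $(m,h)$, and then take a union bound over the at most $n^{2}$ such pairs. The only non-trivial preliminary move is to convert the $Q$-norm into a spectral norm via conjugation by $Q^{1/2}$, so that a standard concentration inequality for spectral norms of sums of independent bounded matrices becomes applicable.

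First, I would fix $h\in[1,n]$ and $m\in[n,2n-h]$, and set $X_\ell = \alpha_\ell(\Am_\ell - \bA)$ for $\ell \in \{m+1,\ldots,m+h\}$. By \Cref{assum:iid} these are independent and mean-zero, and by \eqref{eq:a_matr_bounded} we have $\|X_\ell\|\le \alpha_\ell \bConst{A}$. Introducing $Y_\ell = Q^{1/2}X_\ell Q^{-1/2}$, one has $\normop{\sum_{\ell}X_\ell}[Q] = \|\sum_{\ell}Y_\ell\|$, and the bound $\|Q^{1/2}\|\|Q^{-1/2}\| \le \sqrt{\qcond}$ gives $\|Y_\ell\|\le \sqrt{\qcond}\bConst{A}\alpha_\ell$. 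In particular $Y_\ell Y_\ell^\top \preceq \qcond \bConst{A}^{2}\alpha_\ell^{2}\Id$ and similarly for $Y_\ell^\top Y_\ell$.

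Next, I would invoke Tropp's matrix Hoeffding inequality with the resulting variance proxy $\sigma_{m,h}^{2} = \qcond\bConst{A}^{2}\sum_{\ell=m+1}^{m+h}\alpha_\ell^{2}$, to obtain
\begin{equation*}
\PP\Bigl(\normop{\textstyle\sum_{\ell=m+1}^{m+h} X_\ell}[Q] \geq t\Bigr) \;\le\; 2d\,\exp\bigl(-t^{2}/(2\sigma_{m,h}^{2})\bigr)\eqsp.
\end{equation*}
Plugging in $t = 2\bConst{A}\sqrt{\qcond}\,\sqrt{\sum_{\ell}\alpha_\ell^{2}}\,\log(2n^{4})$ makes the exponent equal to $-2\log^{2}(2n^{4})$, so the per-pair failure probability is at most $2d\,(2n^{4})^{-2\log(2n^{4})}$. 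Using $d\le n$ from \Cref{assum:step-size}, for $n\ge 2$ this is comfortably smaller than $n^{-3}$.

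Finally, a union bound over the at most $n^{2}$ pairs $(m,h)$ yields $\PP(\Omega_5^{c}) \le 1/n$. I do not expect any substantial obstacle here: the delicate bits are (i) the conjugation step, which absorbs a single factor of $\sqrt{\qcond}$ into both the uniform bound and the variance proxy and explains the $\sqrt{\qcond}$ in the stated threshold, and (ii) the mildly loose replacement $\sqrt{\log(2n^{4})} \le \log(2n^{4})$ (valid as soon as $\log(2n^{4})\ge 1/2$), which is used precisely to align the threshold with the block size $h(n)$ defined in \eqref{eq:block_size_constraint}. Beyond this bookkeeping, the argument is entirely routine.
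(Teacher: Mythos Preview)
Your proposal is correct and follows the same high-level strategy as the paper: apply a matrix concentration inequality for each fixed pair $(m,h)$, then take a union bound over the at most $n^{2}$ pairs. The two proofs differ only in minor technical choices. The paper applies the matrix \emph{Bernstein} inequality directly to the spectral norm $\|\sum_{\ell}\alpha_\ell(\funcAw_\ell-\bA)\|$, obtaining
\[
T_n \le \bConst{A}\sqrt{2\textstyle\sum_\ell\alpha_\ell^{2}}\sqrt{\log(2n^{3}d)} + \tfrac{\alpha_{m+1}\bConst{A}}{3}\log(2n^{3}d)\le 2\bConst{A}\sqrt{\textstyle\sum_\ell\alpha_\ell^{2}}\,\log(2n^{4})\eqsp,
\]
and only at the very end converts to the $Q$-norm via $\|B\|_Q\le\sqrt{\qcond}\,\|B\|$; you instead conjugate by $Q^{1/2}$ first and use matrix \emph{Hoeffding}. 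Your route is arguably cleaner---Hoeffding avoids having to absorb the linear Bernstein term into the square-root term (which the paper does via $\alpha_{m+1}\le\sqrt{\sum_\ell\alpha_\ell^{2}}$ and $\sqrt{\log(2n^{4})}\le\log(2n^{4})$)---but both give the stated threshold with room to spare, and the $\sqrt{\qcond}$ factor enters identically in either case.
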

\begin{proof}
We first fix $h \in [1;n]$, $m \in [n, 2n-h]$, and consider the random variable
\[
T_n = \norm{\sum_{\ell = m+1}^{m+h} \alpha_\ell (\Am_\ell - \bA)}\eqsp.
\]
Then we control its variance as  
\begin{align}
\max ( \norm{\sum_{\ell = m+1}^{m+h} \alpha_{\ell}^2\PE [(\Am_\ell - \bA) (\Am_\ell - \bA)^\top]}, \norm{\sum_{\ell = m+1}^{m+h} \alpha_{\ell}^2\PE [(\Am_\ell - \bA)^{\top} (\Am_\ell - \bA)]}) \leq \bConst{A}^2 \sum_{\ell = m+1}^{m+h} \alpha_{\ell}^2\eqsp,
\end{align}
moreover, $\norm{(\Am_\ell - \bA) (\Am_\ell - \bA)^\top} \leq \bConst{A}^2$. Applying now the matrix Bernstein inequality \cite{tropp2015introduction}, we obtain that with probability at least $1 - 1/n^3$, we have
\begin{align}
T_n \leq \bConst{A} \sqrt{2\sum_{\ell = m+1}^{m+h}\alpha_{\ell}^2} \sqrt{\log{(2n^3 d)}} + \frac{\alpha_{m+1}\bConst{A}}{3} \log{(2n^3 d)} \leq 2\bConst{A} \sqrt{\sum_{\ell = m+1}^{m+h}\alpha_{\ell}^2} \log{(2n^4)}\eqsp.
\end{align}
In the last line here we used that $d \leq n$. Rest of the proof follows by taking union bound over $h$ and $m$ together with $\norm{B}[Q]^2 \leq \qcond \norm{B}^2$ valid for any matrix $B \in \rset^{d \times d}$. 
\end{proof}

\begin{lemma}
    \label{bernstein for covariance}
    Assume \Cref{assum:iid}, \Cref{assum:noise-level}, \Cref{assum:step-size} with $\gamma = 1/2$, and \Cref{assum:step-size-bootstrap}. Then
    $$
\PP(\Omega_6) \geq 1 - 1/n.
    $$
\end{lemma}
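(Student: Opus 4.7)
The approach is a direct application of the matrix Bernstein inequality to the i.i.d.\ sum $\sum_{\ell=n}^{2n-1} X_\ell$ where $X_\ell = \funnoisew_\ell \funnoisew_\ell^\top - \noisecov$. Each $X_\ell$ is a centred symmetric matrix since $\PE[\funnoisew_\ell \funnoisew_\ell^\top] = \noisecov$ by \eqref{eq:def_noise_cov}, and the $\State_\ell$ are i.i.d.\ by \Cref{assum:iid}, so the same holds for $\funnoisew_\ell = \funcnoise{\State_\ell}$.

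First I would obtain the two ingredients needed to invoke matrix Bernstein. The almost-sure bound follows from \Cref{assum:noise-level}:
\begin{equation}
\norm{X_\ell} \leq \norm{\funnoisew_\ell}^2 + \norm{\noisecov} \leq \supconsteps^2 + \norm{\noisecov}\eqsp.
\end{equation}
For the matrix variance proxy, I would use that $X_\ell^2 = \norm{\funnoisew_\ell}^2 \funnoisew_\ell \funnoisew_\ell^\top - \funnoisew_\ell \funnoisew_\ell^\top \noisecov - \noisecov \funnoisew_\ell \funnoisew_\ell^\top + \noisecov^2$, so
\begin{equation}
\PE[X_\ell^2] = \PE[\norm{\funnoisew_\ell}^2 \funnoisew_\ell \funnoisew_\ell^\top] - \noisecov^2 \preceq \supconsteps^2 \noisecov\eqsp,
\end{equation}
which implies $\norm{\sum_{\ell=n}^{2n-1} \PE[X_\ell^2]} \leq n \supconsteps^2 \norm{\noisecov}$.

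I would then apply the standard matrix Bernstein inequality (see \cite{tropp2015introduction}) to the sum $S_n = \sum_{\ell=n}^{2n-1} X_\ell$: for any $t \geq 0$,
\begin{equation}
\PP\left( \norm{S_n} \geq t \right) \leq 2 d \exp\left\{ - \frac{t^2/2}{n \supconsteps^2 \norm{\noisecov} + (\supconsteps^2 + \norm{\noisecov}) t / 3} \right\}\eqsp.
\end{equation}
Choosing $t$ of order $\supconsteps \sqrt{n \norm{\noisecov} \log(n)} + (\supconsteps^2 + \norm{\noisecov})\log(n)$ and dividing by $n$ yields the claimed bound, using $d \leq n$ from \Cref{assum:step-size} to absorb the factor $2d$ into $\log n$ and a constant. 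Since $\noisecov^\boot - \noisecov = n^{-1} S_n$, this gives $\PP(\Omega_6^c) \leq 1/n$.

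The argument is essentially identical in spirit to \Cref{matrix bernstein}, the only difference being that we do not pre-whiten by $\noisecov^{-1/2}$, so the bounds scale with $\supconsteps$ and $\norm{\noisecov}$ rather than with $\norm{\noisecov^{-1/2}\funnoisew}_\infty$. There is no substantive obstacle here: every quantity that appears in matrix Bernstein is controlled by the a.s.\ noise bound $\supconsteps < \infty$ from \Cref{assum:noise-level}. The only bookkeeping point to watch is the constant in front of the two terms, which should be traced through so that it matches the factor $4$ appearing in the definition of $\Omega_6$; this is tunable by adjusting the constant in the deviation threshold $t$ and using that under \Cref{assum:step-size} one has $n \geq d$, so $\log(2d) \lesssim \log n$.
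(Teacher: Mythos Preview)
Your proposal is correct and follows essentially the same approach as the paper: apply the matrix Bernstein inequality to $X_\ell = \funnoisew_\ell \funnoisew_\ell^\top - \noisecov$, with the same almost-sure bound $\norm{X_\ell} \leq \supconsteps^2 + \norm{\noisecov}$ and the same variance estimate $\norm{\sum_\ell \PE[X_\ell^2]} \leq n \supconsteps^2 \norm{\noisecov}$, then use $n \geq d$ to absorb the dimension factor. The paper's proof is slightly terser but identical in substance.
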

\begin{proof}
    It is easy to check that
    $\|\funnoisew_\ell \funnoisew_{\ell}^\top - \noisecov\| \le \|\noisecov\| + \supconsteps^2$. Moreover,
    $$
    \operatorname{Var}^2: = \|\sum_{\ell=1}^n (\funnoisew_\ell \funnoisew_\ell^\top - \noisecov)^2 \| \le n \supconsteps^2 \|\noisecov\|.  
    $$
    It remains to apply the matrix Bernstein inequality together with the bound $n \geq d$ from \Cref{assum:step-size}.
\end{proof}

\subsection{Rate of Gaussian approximation in the bootstrap world}

The main result of this section is the following theorem.
\begin{theorem}
\label{CLT in the bootstrap world}
Assume \Cref{assum:iid}, \Cref{assum:noise-level}, \Cref{assum:step-size} with $\gamma = 1/2$, and \Cref{assum:step-size-bootstrap}. Then, conditionally on the event $\Omega_0$, the following error bound holds:
\begin{equation}
\begin{split}
&\sup_{B \in \Conv(\rset^{d})} \bigg| \PPb(\sqrt n (\bar{\theta}_{n}^\boot - \bar{\theta}_n) \in B) - \PPb(\xi^\boot \in B)\bigg| \lesssim \frac{d^{1/2}  \supconsteps^3}{\lambda_{\min}^{3/2}\sqrt{n}} + \frac{\qcond^{3/2}(\bConst{A}^3 \vee 1)\supconsteps^2 \log{n}}{a^{3/2} \lambda_{\min} n^{1/4}} \\
&\qquad\qquad\qquad + \frac{\qcond^{3/2}(\bConst{A}^{3} \vee 1) \supconsteps n^{1/4} \sqrt{\log{n}}}{a^{3/2} \lambda_{\min}}\exp\bigl\{-\frac{a}{4}\sum_{j=1}^{n}\alpha_{j}\bigr\} \norm{\theta_0-\thetas}\eqsp,
\end{split}
\end{equation}
where $\xi^\boot \sim \mathcal N(0, \bA^{-1} \noisecov^\boot\bA^{-\top})$ and $ \noisecov^\boot = n^{-1} \sum_{\ell = 1}^n \funnoisew_\ell \funnoisew_\ell^{\top}$.
\end{theorem}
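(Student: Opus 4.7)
The plan is to mirror the proof of Theorem~\ref{th:shao2022_berry} conditionally on $\mcz^{2n}$, with the bootstrap weights $(w_k)_{n+1\le k\le 2n}$ playing the role of the underlying independent randomness. Subtracting the recursion $\theta_k-\thetas=(\Id-\alpha_k\Am_k)(\theta_{k-1}-\thetas)-\alpha_k\funnoisew_k$ from its bootstrap counterpart $\theta^\boot_k-\thetas=(\Id-\alpha_k w_k\Am_k)(\theta^\boot_{k-1}-\thetas)-\alpha_k w_k\funnoisew_k$, dividing by $\alpha_k$, telescoping and averaging over $n+1\le k\le 2n$ yields, as in Proposition~\ref{prop: expansion}, a decomposition
\begin{equation*}
\sqrt n\,\bA(\bar\theta^\boot_n-\bar\theta_n) = W^\boot + D^\boot,\qquad W^\boot=-\frac{1}{\sqrt n}\sum_{k=n+1}^{2n}(w_k-1)\funnoisew_k,
\end{equation*}
in which $D^\boot$ collects the boundary discrepancies $(\theta^\boot_n-\theta_n)/(\alpha_n\sqrt n)$ and $-(\theta^\boot_{2n}-\theta_{2n})/(\alpha_{2n}\sqrt n)$, a step-size telescoping remainder, the data-centred term $-n^{-1/2}\sum(\Am_k-\bA)(\theta^\boot_{k-1}-\theta_{k-1})$, and the bootstrap-specific cross term $-n^{-1/2}\sum(w_k-1)\Am_k(\theta^\boot_{k-1}-\thetas)$. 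Under $\PPb$ one has $\PEb[W^\boot]=0$ and $\PEb[W^\boot(W^\boot)^\top]=\noisecov^\boot$.

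Next I would apply the randomized concentration inequality \eqref{eq:shao_zhang_bound} under $\PPb$ to the rescaled statistic $(\noisecov^\boot)^{-1/2}\sqrt n\,\bA(\bar\theta^\boot_n-\bar\theta_n)$ with $\xi_\ell=-(\noisecov^\boot)^{-1/2}(w_\ell-1)\funnoisew_\ell/\sqrt n$, so that $\sum_\ell\PEb[\xi_\ell\xi_\ell^\top]=\Id$. On the event $\Omega_3$ of Lemma~\ref{matrix bernstein} we have $\lambda_{\min}(\noisecov^\boot)\gtrsim\lambda_{\min}$, and the cubic-moment contribution $d^{1/2}\Upsilon^\boot$ produces the summand $d^{1/2}\supconsteps^3/(\lambda_{\min}^{3/2}\sqrt n)$. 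It then remains to bound, under $\PPb$, the cross term $\PEb[\|W^\boot\|\|D^\boot\|]\le(\trace\noisecov^\boot)^{1/2}\PEb^{1/2}[\|D^\boot\|^2]$ and the coupling remainder $\sum_\ell\PEb[\|\xi_\ell\|\|D^\boot-D^{\boot,(\ell)}\|]$, where $D^{\boot,(\ell)}$ is obtained by replacing $w_\ell$ with an independent copy.

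To control $\PEb^{1/2}[\|D^\boot\|^2]$ I would follow the four-block strategy of the proof of Theorem~\ref{th:theo_1_iid}. This reduces everything to a bootstrap last-iterate estimate $\PEb^{1/p}[\|\theta^\boot_k-\thetas\|^p]\lesssim\sqrt{\alpha_k}$ up to exponentially decaying transient in $\|\theta_n-\thetas\|$, which I would obtain from the Aguech-style perturbation expansion
\begin{equation*}
\theta^\boot_k-\thetas=\ProdBa_{n+1:k}(\theta_n-\thetas)-\sum_{j=n+1}^{k}\alpha_j w_j\ProdBa_{j+1:k}\funnoisew_j,\quad\ProdBa_{m:k}=\prod_{i=m}^{k}(\Id-\alpha_i w_i\Am_i),
\end{equation*}
combined with the high-probability control of $\|\theta_n-\thetas\|$ on $\Omega_1$ from Corollary~\ref{high prob last iterate} and the stability estimate for $\ProdBa_{m:k}$ described below. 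Among the pieces of $D^\boot$, the cross term $(w_k-1)\Am_k(\theta^\boot_{k-1}-\thetas)$ is a $(w_k)$-martingale difference under $\PPb$ and gives the correct variance scaling via Burkholder; the other pieces are handled exactly as in the real-world proof, using the martingale-difference structure of $(\Am_k-\bA)(\theta^\boot_{k-1}-\theta_{k-1})$ together with the bootstrap last-iterate bound. The coupling bound on $\|D^\boot-D^{\boot,(\ell)}\|$ mirrors Lemma~\ref{lem:D_i_bounds}: a single swap $w_\ell\leftrightarrow w'_\ell$ perturbs $\theta^\boot_\ell$ by $\alpha_\ell(w'_\ell-w_\ell)(\Am_\ell\theta^\boot_{\ell-1}-\funcbw_\ell)$, which is of order $\alpha_\ell$ on $\Omega_1$, and then propagates through $\ProdBa_{\ell+1:k}$.

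The main obstacle, and the reason for the additional conditions in Assumption~\ref{assum:step-size-bootstrap}, is precisely the stability of the random product $\ProdBa_{m:k}$: a single factor $\Id-\alpha_k w_k\Am_k$ is not contractive in $Q$-norm because $w_k$ can be arbitrarily large, so the one-step Proposition~\ref{prop:hurwitz_stability} no longer applies. The resolution is to work on blocks of length $h(n)\asymp\log^2 n$ as in \eqref{eq:block_size_constraint}: on each block the partial sum $\sum_\ell\alpha_\ell(w_\ell\Am_\ell-\bA)$ is small in $Q$-norm, using $\Omega_5$ together with a sub-Gaussian deviation bound for $\sum_\ell\alpha_\ell(w_\ell-1)\Am_\ell$ under $\PPb$, so each block acts as a contraction with loss $1-\Theta(\sum_\ell a\alpha_\ell)$ plus a controlled perturbation in the spirit of \cite{aguech2000perturbation}. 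Iterating across $n/h(n)$ blocks yields the bootstrap analogue of Corollary~\ref{cor:exp_bound_decay}, and substituting all the pieces into \eqref{eq:shao_zhang_bound} produces the three summands of the stated rate; the $\log n$ factor originates from the block size, while the exponential factor multiplying $\|\theta_0-\thetas\|$ comes from the transient part of the bootstrap last-iterate estimate.
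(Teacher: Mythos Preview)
Your overall architecture is right: the decomposition $\sqrt n\,\bA(\bar\theta^\boot_n-\bar\theta_n)=W^\boot+D^\boot$, the application of \eqref{eq:shao_zhang_bound} under $\PPb$, and the blocking argument for the stability of $\ProdBa_{m:k}$ all match the paper. The genuine gap is in how you propose to bound the term
\[
D_5^\boot=\frac{1}{\sqrt n}\sum_{k=n+1}^{2n}(\Am_k-\bA)(\theta^\boot_{k-1}-\theta_{k-1})\eqsp.
\]
You write that this is handled ``using the martingale-difference structure of $(\Am_k-\bA)(\theta^\boot_{k-1}-\theta_{k-1})$'', but under $\PPb$ this sequence has no such structure: conditioning on $\mcz^{2n}$ freezes $\Am_k-\bA$ to a deterministic (generically nonzero) matrix, so each summand is $\sigma(w_{n+1},\dots,w_{k-1})$-measurable, not a martingale increment. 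Without the martingale orthogonality, the naive bound via Minkowski and the bootstrap last-iterate estimate gives $\PEb^{1/2}[\|D_5^\boot\|^2]\lesssim n^{-1/2}\sum_k\sqrt{\alpha_k}\asymp n^{1/4}$, which diverges rather than contributing the required $n^{-1/4}$.

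The paper resolves this by applying the Aguech expansion not to $\theta^\boot_k-\thetas$ but to $\theta^\boot_k-\theta_k$ itself (Lemma~\ref{lem:expansion}, with $L=2$), writing $\theta^\boot_k-\theta_k=\sum_{j=0}^{L}J_k^{\boot,j}+H_k^{\boot,L}$ where each $J_k^{\boot,j}$ is a genuine $(w_\ell)$-martingale built from the \emph{non-bootstrap} products $\ProdB_{\ell+1:k}$. After substituting into $D_5^\boot$ and swapping the order of summation, the residual data-driven matrix sums $\sum_{k=\ell+1}^{2n}(\Am_k-\bA)\ProdB_{\ell+1:k-1}$ appear, and these are controlled not by any bootstrap argument but by the additional high-probability event $\Omega_4$ (Lemma~\ref{matrix freedman}, a matrix Freedman inequality). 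Your proposal never invokes $\Omega_4$, and without it the bound on $D_5^\boot$ cannot close. The same mechanism is needed again in the coupling estimate for $D_5^\boot-D_5^{\boot,(\ell)}$.
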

\begin{proof}
Since both terms in the right-hand side of the error bound of \Cref{lem:last_moment_bound} scales linearly with $\sqrt{\qcond}$, for simplicity we do not trace it in the subsequent bounds (i.e. assume $\qcond = 1$), and then keep the required scaling with $\qcond$ only in the final bounds. Recall first that the quantities $\theta_{k}^\boot$ and $\theta_k$ are defined in \eqref{eq:lsa_bootstrap}. We start from the following decomposition:
\begin{equation}
\theta_{k}^\boot - \theta_k = (\Id - \alpha_k \bA) (\theta_{k-1}^\boot - \theta_{k-1}) - \alpha_k (w_k - 1)  \funnoisew_{k} - \alpha_k (\funcAw_k - \bA) (\theta_{k-1}^\boot - \theta_{k-1}) - \alpha_k (w_k-1) \funcAw_k(\theta_{k-1}^\boot - \thetas)\eqsp.
\end{equation}
Taking average for $k$ from $n+1$ to $2n$, we get after multiplying by $\sqrt{n}$ that 
\begin{multline}
\label{eq:error_decomposition_boot2}
\sqrt{n}\bA(\bar{\theta}_{n}^\boot - \bar{\theta}_n) = -\underbrace{\frac{1}{\sqrt{n}}\sum_{k=n+1}^{2n} (w_k-1)\funnoisew_{k}}_{W^\boot} +  \underbrace{\frac{1}{\sqrt{n}}\frac{\theta_{n}^\boot-\theta_n}{\alpha_{n}}}_{D_1^\boot} - \underbrace{\frac{1}{\sqrt{n}}\frac{\theta_{2n}^\boot-\theta_{2n}}{\alpha_{2n}}}_{D_2^\boot}   \\
-\underbrace{\frac{1}{\sqrt{n}}\sum_{k=n+1}^{2n}(w_k-1) \funcAw_k (\theta_{k-1}^\boot - \thetas)}_{D_3^\boot}
+\underbrace{\frac{1}{\sqrt{n}}\sum_{k=n+1}^{2n}\bigl(\theta_{k-1}^\boot - \theta_{k-1}\bigr)\left(\frac{1}{\alpha_k} - \frac{1}{\alpha_{k-1}}\right)}_{D_4^\boot} \\
 - \underbrace{\frac{1}{\sqrt{n}}\sum_{k=n+1}^{2n} (\funcAw_k - \bA) (\theta_{k-1}^\boot - \theta_{k-1})}_{D_5^\boot}\eqsp.
\end{multline}
The formula \eqref{eq:error_decomposition_boot2} resembles the key representation $T^\boot := \sqrt{n}\bA(\bar{\theta}_{n}^\boot - \bar{\theta}_n)  = W^\boot + D^\boot$, where
\begin{equation}
\label{eq:D_boot_def}
D^\boot = D_1^\boot + \ldots + D_5^\boot\eqsp,
\end{equation}
and $D_1^\boot - D_5^\boot$ are defined in \eqref{eq:error_decomposition_boot2}. Now we aim to apply the result of \cite{shao2022berry}:
\begin{multline}
\label{eq:shao_zhang_bound_bootstrap}
\sup_{B \in \Conv(\rset^d)} | \PP^{\boot}(T^\boot \in B) - \PPb(\xi^\boot \in B)| 
\le 259 d^{1/2} \Upsilon + 2 \PEb[\|W^\boot\| \|D^\boot\|] \\
+ 2 \sum_{\ell=n+1}^{2n} \PEb[\|\xi_\ell\| \|D^\boot - D^{(\boot,\ell)}\|]\eqsp,
\end{multline}
where $\xi_\ell = \frac{1}{\sqrt{n}}(w_{\ell}-1) \funnoisew_{\ell}$. We finish the proof by the application of the formula \eqref{eq:shao_zhang_bound_bootstrap}. In order to bound the quantities $\PEb{\norm{D^\boot}^2}$ and $\PEb{\norm{D^{(\boot,i)}}^2}$, we apply the respective results of \Cref{th:theo_1_iid_boot} and \Cref{th:theo_iid_boot_differences}, respectively. Namely, applying the Cauchy-Schwartz inequality together with \Cref{th:theo_1_iid_boot}, we get that on the event $\Omega_0$ it holds
\begin{equation}
\label{eq:interm_term_bound_bootstrap}
\begin{split}
\PEb[\norm{D^\boot}\norm{W^\boot}] &\leq \bigl\{\PEb[\norm{D^\boot}^2]\bigr\}^{1/2} \{\PEb[\norm{W^\boot}^2]\}^{1/2} \lesssim \frac{\qcond^{2} (\bConst{A}^4 \vee 1)\supconsteps \sqrt{\trace{\noisecov^\boot}} \log{n}}{n^{1/4}a^{5/2}} \\
&\quad + \qcond^{3/2} \supconsteps \biggl(\frac{(\bConst{A}^3 \vee 1)n^{1/4}}{\sqrt{a}} +  \frac{(\bConst{A}^5 \vee 1)\sqrt{\log{n}}}{\sqrt{n} a}\biggr)\exp\biggl\{-\frac{c_0 a \sqrt{n}}{2}\biggr\} \norm{\theta_0 - \thetas}\eqsp.
\end{split}
\end{equation}
Similarly, applying Minkowski's inequality and \Cref{th:theo_iid_boot_differences}, we obtain that 
\begin{align}
&\PEb[\sum_{i=n}^{2n-1}\norm{\xi_{i}}\norm{D^\boot - D^{(\boot, i)}}] \leq \bigl\{\PEb[\norm{\xi_{1}}^2]\bigr\}^{1/2}\sum_{i=n}^{2n-1} \bigl\{\PEb[\norm{D^\boot - D^{(\boot, i)}}^2]\bigr\}^{1/2} \\
&\qquad \lesssim \frac{\qcond^{3/2} (\bConst{A}^3 \vee 1)\supconsteps^2 \log{n}}{a^{3/2}n^{1/4}} + \frac{\qcond \supconsteps^2\bConst{A}^2}{a^2 \sqrt{n}} + \frac{\qcond^{3/2} (\bConst{A}^{3} \vee 1) \supconsteps n^{1/4} \sqrt{\log{n}}}{a^{3/2}} \exp\bigl\{-\frac{a}{4}\sum_{j=1}^{n}\alpha_{j}\bigr\} \norm{\theta_0-\thetas}\eqsp.
\end{align}
Now it remains to combine the bounds above in \eqref{eq:shao_zhang_bound_bootstrap}.
\end{proof}

\begin{proposition}
\label{th:theo_1_iid_boot}
Assume \Cref{assum:iid}, \Cref{assum:noise-level}, \Cref{assum:step-size} with $\gamma = 1/2$, and \Cref{assum:step-size-bootstrap}. Then, conditionally on the event $\Omega_0$, the following error bound holds:
\begin{multline}
\label{eq:MSE_2nd_moment_decreasing_boot}
\bigl\{\PEb\left[\norm{D^\boot(w_1, \ldots, w_{2n}, \State_1,\ldots,\State_{2n})}^{2}\right]\bigr\}^{1/2} 
\lesssim \frac{\qcond^{2} (\bConst{A}^4 \vee 1) \supconsteps \log{n}}{n^{1/4}a^{5/2}} \\
+ \qcond^{3/2} \biggl(\frac{(\bConst{A}^3 \vee 1)n^{1/4}}{\sqrt{a}} + \frac{(\bConst{A}^5 \vee 1)\sqrt{\log{n}}}{\sqrt{n} a}\biggr)\exp\biggl\{-\frac{c_0 a \sqrt{n}}{2}\biggr\} \norm{\theta_0 - \thetas} \eqsp,
\end{multline}
where $\lesssim$ stands for inequality up to an absolute constant.
\end{proposition}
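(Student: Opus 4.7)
The plan is to mirror the structure of the proof of \Cref{th:theo_1_iid}: use Minkowski's inequality to decompose
\[
\bigl\{\PEb[\norm{D^\boot}^2]\bigr\}^{1/2} \leq \sum_{\ell=1}^5 \bigl\{\PEb[\norm{D_\ell^\boot}^2]\bigr\}^{1/2}
\]
and to bound each summand separately. The central object driving every term is $\Delta_k := \theta_k^\boot - \theta_k$, which, conditionally on the data $\mathcal Z^{2n}$, satisfies the random linear recursion
\[
\Delta_k = (\Id - \alpha_k w_k \funcAw_k) \Delta_{k-1} - \alpha_k(w_k - 1)\,u_k, \qquad u_k := \funcAw_k(\theta_{k-1} - \thetas) + \funnoisew_k,
\]
with $\Delta_n = 0$. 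In the bootstrap world the $(w_k-1)$ are i.i.d., mean zero, variance one, and independent of everything else, while $u_k$ and $\funcAw_k$ are deterministic.

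The first (and main) step is to obtain a bound of the form
\[
\bigl\{\PEb[\norm{\Delta_k}^2]\bigr\}^{1/2} \;\lesssim\; \frac{\sqrt{\qcond}(\bConst{A} \vee 1)\,\supconsteps \log n}{\sqrt{a}} \sqrt{\alpha_k} \;+\; (\text{exponentially small transient})
\]
on the event $\Omega_0$. Iterating the recursion gives $\Delta_k = -\sum_{j=n+1}^k \alpha_j (w_j-1)\,\ProdB^\boot_{j+1:k} u_j$ with $\ProdB^\boot_{j+1:k} = \prod_{i=j+1}^{k}(\Id - \alpha_i w_i \funcAw_i)$. Since the $(w_j-1)$ are orthogonal in $L^2(\PPb)$ and $\ProdB^\boot_{j+1:k}$ is independent of $w_j$, a direct second-moment computation yields
\[
\PEb[\norm{\Delta_k}^2] = \sum_{j=n+1}^k \alpha_j^2\, u_j^\top \PEb\bigl[(\ProdB^\boot_{j+1:k})^\top \ProdB^\boot_{j+1:k}\bigr] u_j.
\]
On $\Omega_1$ one has $\norm{u_j} \lesssim \bConst{A}\, g(j-1,\norm{\theta_0-\thetas},n) + \supconsteps$. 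What remains is to control the operator norm of $\PEb[(\ProdB^\boot_{j+1:k})^\top \ProdB^\boot_{j+1:k}]$. Expanding $\Id - \alpha_i w_i \funcAw_i = (\Id - \alpha_i \funcAw_i) - \alpha_i(w_i-1)\funcAw_i$ and taking $\PEb$-expectation produces the backward recursion
\[
M_{j+1:k} = (\Id - \alpha_{j+1}\funcAw_{j+1})^\top M_{j+2:k}(\Id - \alpha_{j+1}\funcAw_{j+1}) + \alpha_{j+1}^2 \funcAw_{j+1}^\top M_{j+2:k} \funcAw_{j+1},
\]
where the second, variance-induced, term is the novelty over the non-bootstrap case; combining the deterministic Lyapunov contraction of $\Id - \alpha_i \funcAw_i$ (available on the event $\Omega_0$ via the concentration $\Omega_5$ for $\sum \alpha_i(\funcAw_i - \bA)$ and the product bound $\Omega_2$) with the step-size hypothesis $c_0 \leq 1/(\bConst{A}^2 \qcond \rme)$ from \Cref{assum:step-size-bootstrap} will dominate this extra term and preserve the $\prod(1-\tfrac{a\alpha_i}{4})$ decay up to polylog factors. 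Plugging this estimate back and invoking \Cref{lem:summ_alpha_k_squared} delivers the $\sqrt{\alpha_k}$-bound above.

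Once Step 1 is in hand, the remaining pieces are routine. The boundary terms give $D_1^\boot = 0$ (since $\Delta_n = 0$) and $\{\PEb[\|D_2^\boot\|^2]\}^{1/2} \lesssim (\sqrt{n}\alpha_{2n})^{-1} \{\PEb[\|\Delta_{2n}\|^2]\}^{1/2} \lesssim n^{-1/4}$ up to polylog factors when $\gamma = 1/2$. The martingale term $D_3^\boot$ is handled by bootstrap-orthogonality: $\PEb[\|D_3^\boot\|^2] = n^{-1}\sum_k \|\funcAw_k(\theta_{k-1}^\boot - \thetas)\|^2$ inside $\PEb$, then splitting $\theta_{k-1}^\boot - \thetas = \Delta_{k-1} + (\theta_{k-1} - \thetas)$ and using Step 1 together with $\Omega_1$. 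The term $D_4^\boot$ is handled by the same summation-by-parts trick as in the proof of \Cref{th:theo_1_iid}, replacing $\|\theta_{k-1}-\thetas\|$ there with $\{\PEb[\|\Delta_{k-1}\|^2]\}^{1/2}$. Finally $D_5^\boot$ is controlled by Cauchy--Schwarz in the sum: taking advantage of the sum-concentration bound $\Omega_4$ for $\sum(\funcAw_k - \bA)\ProdB^\boot$-type objects, together with the decay from Step 1, yields the stated rate.

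The hard part will be Step 1: controlling $\PEb[(\ProdB^\boot_{j+1:k})^\top \ProdB^\boot_{j+1:k}]$ with the right $\qcond$ and $\bConst{A}$-dependence. The variance contribution $\alpha_{j+1}^2 \funcAw_{j+1}^\top M \funcAw_{j+1}$ looks benign ($O(\alpha^2)$) but must be absorbed into the contraction $(\Id - \alpha_{j+1}\bA)^\top Q(\Id - \alpha_{j+1}\bA) \preceq (1-a\alpha_{j+1})Q$ in a suitable weighted norm, which is exactly where the tighter step-size restriction and the sample-size condition \eqref{eq:sample_size_bound_part_2} come in; once this is done, a Grönwall/telescoping argument yields the required exponential decay, and all subsequent terms fall into place by the estimates already developed for \Cref{th:theo_1_iid}.
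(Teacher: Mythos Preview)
Your overall architecture—split into $D_1^\boot,\ldots,D_5^\boot$ and bound each piece—matches the paper, and your treatment of $D_1^\boot$ through $D_4^\boot$ is essentially correct. The genuine gap is in $D_5^\boot$, and a smaller one in Step~1.

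\textbf{The $D_5^\boot$ gap.} You propose to use ``the sum-concentration bound $\Omega_4$ for $\sum(\funcAw_k-\bA)\ProdB^\boot$-type objects''. But $\Omega_4$ controls $\sum_{k}(\funcAw_k-\bA)\ProdB_{\ell+1:k-1}$ with the \emph{non-bootstrap} product $\ProdB$, not $\ProdB^\boot$. If you insert the raw expansion $\Delta_{k-1}=-\sum_{j}\alpha_j(w_j-1)\ProdB^\boot_{j+1:k-1}u_j$ into $D_5^\boot$ and swap sums, the inner object is $S_j:=\sum_{k>j}(\funcAw_k-\bA)\ProdB^\boot_{j+1:k-1}$, and the only available bound in the bootstrap world is Minkowski plus the decay of $\PEb[\|\ProdB^\boot\|^2]$, yielding $(\PEb[\|S_j\|^2])^{1/2}\lesssim \bConst{A}/(a\alpha_{2n})\asymp \sqrt{n}$. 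Plugging this in gives $(\PEb[\|D_5^\boot\|^2])^{1/2}=O(\sqrt{\log n})$, not $O(n^{-1/4})$—a factor $n^{1/4}$ short of the target. The paper's key device here is the perturbation expansion (\Cref{lem:expansion}) with depth $L=2$: one writes $\Delta_{k-1}=J_{k-1}^{\boot,0}+J_{k-1}^{\boot,1}+J_{k-1}^{\boot,2}+H_{k-1}^{\boot,2}$, where each $J^{\boot,j}$ is built from the \emph{deterministic} product $\ProdB$ (not $\ProdB^\boot$). After swapping sums, $\Omega_4$ applies and delivers the sharp bound $\|{\sum_k(\funcAw_k-\bA)\ProdB_{\ell+1:k-1}}\|^2\lesssim\log n/(a\alpha_\ell)$, which is exactly the $n^{1/4}$ saving you need. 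Only the residual $H^{\boot,2}$ still carries $\ProdB^\boot$, but it is $O(\alpha^{3/2})$ and can be handled crudely by Minkowski. Without this expansion, the rate in the statement is not attainable.

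\textbf{The Step~1 issue.} Your backward recursion $M_{j+1:k}=(\Id-\alpha_{j+1}\funcAw_{j+1})^\top M_{j+2:k}(\Id-\alpha_{j+1}\funcAw_{j+1})+\alpha_{j+1}^2\funcAw_{j+1}^\top M_{j+2:k}\funcAw_{j+1}$ is correct, but the claim that $\Id-\alpha_i\funcAw_i$ enjoys a ``deterministic Lyapunov contraction on $\Omega_0$'' is not: individual realizations $\funcAw_i$ are not contractions in any fixed $Q$-norm, only their mean $\bA$ is. The event $\Omega_2$ bounds the \emph{product} $\ProdB_{m:k}$, not the factors, so the recursion does not telescope term-by-term. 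The paper instead controls $\PEb[\|\ProdB^\boot_{m+1:k}\|^2]$ via a blocking argument (\Cref{prop:product_random_matrix_bootstrap}): on blocks of length $h(n)\asymp\log^2 n$, the event $\Omega_5$ guarantees that $\sum\alpha_\ell(\funcAw_\ell-\bA)$ is small enough for the block product to inherit the contraction of $\Id-(\sum\alpha_\ell)\bA$. This is where \Cref{assum:step-size-bootstrap} enters. Your Step~1 conclusion (the $\sqrt{\alpha_k}$ bound on $\Delta_k$) is correct and is the content of \Cref{lem:theta_b_k_theta_k_bound}, but it too is obtained through \Cref{lem:expansion} with $L=0$ plus the blocking bound, not through the recursion you describe.
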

Proof of \Cref{th:theo_1_iid_boot} is provided below in \Cref{sec:proof_d_boot_bound}. The lemma below is a direct counterpart of \Cref{lem:D_i_bounds}. 
\begin{proposition}
\label{th:theo_iid_boot_differences}
Assume \Cref{assum:iid}, \Cref{assum:noise-level}, \Cref{assum:step-size} with $\gamma = 1/2$, and \Cref{assum:step-size-bootstrap}. Then, conditionally on the event $\Omega_0$, the following error bound holds:
\begin{equation}
\begin{split}
\sum_{i=n+1}^{2n}{\PEb[\norm{D^\boot - D^{(\boot, i)}}^2]}^{1/2} 
&\lesssim \frac{(\bConst{A}^3 \vee 1)\supconsteps}{a^{3/2}} n^{1/4}\log{n} + \frac{\supconsteps\bConst{A}^2}{a^2} \\
&\qquad + \frac{(\bConst{A}^{3} \vee 1) n^{3/4} \sqrt{\log{n}}}{a^{3/2}} \exp\bigl\{-\frac{a}{4}\sum_{j=1}^{n}\alpha_{j}\bigr\} \norm{\theta_0-\thetas}\eqsp,
\end{split}
\end{equation}    
\end{proposition}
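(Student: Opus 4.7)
The plan is to mirror the proof of Lemma~\ref{lem:D_i_bounds} in the bootstrap world, the key change being that we perform synchronous coupling over the multiplier $w_i$ rather than the data point $Z_i$. Concretely, I would introduce a parallel trajectory $\{\theta_k^{(\boot,i)}\}_{k \ge n}$ generated by \eqref{eq:lsa_bootstrap} with the same data $\mathcal{Z}^{2n}$ and the same weights except that $w_i$ is replaced by an independent copy $w_i^\prime$. Then $\theta_k^\boot = \theta_k^{(\boot,i)}$ for $k < i$, the single-step perturbation equals
\[
\theta_i^\boot - \theta_i^{(\boot,i)} = -\alpha_i (w_i - w_i^\prime)\bigl(\funcAw_i(\theta_{i-1}^\boot - \thetas) + \funnoisew_i\bigr),
\]
and for $k > i$ the difference propagates via the random matrix product $\prod_{j=i+1}^k (\Id - \alpha_j w_j \funcAw_j)$ applied to the above.

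I would then decompose $D^\boot - D^{(\boot,i)} = \sum_{j=1}^5 (D_j^\boot - D_j^{(\boot,i)})$ using \eqref{eq:error_decomposition_boot2}, noting that $D_1^\boot \equiv 0$ since $\theta_n^\boot = \theta_n$ by construction, so $D_1^\boot - D_1^{(\boot,i)} = 0$. For each of the remaining four pieces the treatment closely parallels the corresponding step in the proof of Lemma~\ref{lem:D_i_bounds}. The $D_2^\boot$ difference reduces to the action of the random matrix product on the one-step perturbation, and summing over $i$ yields a term of order $n^{\gamma-1/2}/(ac_0)$. The $D_3^\boot$ difference splits into a direct contribution $\frac{1}{\sqrt n}(w_i - w_i^\prime)\funcAw_i(\theta_{i-1}^\boot - \thetas)$ plus a downstream sum $\frac{1}{\sqrt n}\sum_{k > i}(w_k - 1)\funcAw_k(\theta_{k-1}^\boot - \theta_{k-1}^{(\boot,i)})$, which forms a martingale-difference sequence under $\PPb$ (since $w_k$ are independent of $w_i - w_i^\prime$ for $k > i$) and is controlled via Burkholder's inequality exactly as $D_3 - D_3^{(i)}$ was in Lemma~\ref{lem:D_i_bounds}. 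The $D_4^\boot$ difference is handled via the step-size increment bound \eqref{eq:step_size_neighbors_bound} and the same telescoping argument, and the $D_5^\boot$ difference is controlled by combining the matrix-product propagation with the event $\Omega_5$ that bounds partial sums of $\alpha_\ell(\funcAw_\ell - \bA)$.

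The ingredients I would invoke on the event $\Omega_0$ are: (i) a bootstrap analogue of Lemma~\ref{lem:last_moment_bound} giving $\bigl\{\PEb[\norm{\theta_k^\boot - \thetas}^2]\bigr\}^{1/2} \lesssim \sqrt{\qcond\alpha_k}\supconsteps/\sqrt a$ up to a transient term, which is established inside the proof of Theorem~\ref{th:theo_1_iid_boot}; (ii) a second-moment contraction for the random matrix product of the form
\[
\bigl\{\PEb\bigl[\normop{\prod_{j=i+1}^k (\Id - \alpha_j w_j \funcAw_j)}[Q]^2\bigr]\bigr\}^{1/2} \lesssim \sqrt{\qcond}\prod_{j=i+1}^k (1 - c\,\alpha_j a),
\]
obtained by iterating the one-step Lyapunov bound $\PEb[\norm{(\Id - \alpha_j w_j \funcAw_j)x}[Q]^2] \le (1 - c\alpha_j a)\norm{x}[Q]^2$, whose validity hinges on $\alpha_j \bConst{A}^2 \qcond \PEb[w_j^2]$ being uniformly small; and (iii) the deterministic bounds on $\ProdB_{m:k}$, on $\sum_{k>\ell}(\funcAw_k - \bA)\ProdB_{\ell+1:k-1}$, and on partial sums of $\alpha_\ell(\funcAw_\ell - \bA)$ provided by $\Omega_2$, $\Omega_4$, $\Omega_5$. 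Summing the resulting bounds over $i \in [n+1, 2n]$ via Lemma~\ref{lem:summ_alpha_k_squared} and specialising to $\alpha_k = c_0/\sqrt k$ yields the claimed rates.

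The main obstacle will be ingredient (ii): unlike in Lemma~\ref{lem:D_i_bounds}, where Corollary~\ref{cor:exp_bound_decay} supplies almost-sure contraction of $\ProdB_{i+1:k}$ on $\Omega_2$, the bootstrap matrix product carries the (possibly unbounded) multipliers $w_j$, so the contraction must be established in $\PEb$-expectation via a Lyapunov argument, and the accumulated second-moment factors $\PEb[w_j^2]$ must be tracked carefully — this is precisely where the stepsize restriction $c_0 \le 1/(\bConst{A}^2 \qcond \rme)$ from \Cref{assum:step-size-bootstrap} is used. A secondary subtlety is that the downstream martingale in the $D_3^\boot$ analysis involves $\theta_{k-1}^\boot$, which is itself a functional of $(w_\ell)_{\ell \le k-1}$; the high-probability last-iterate bounds on $\Omega_0$ combined with the bootstrap analogue from (i) decouple the two layers of randomness and close the estimate.
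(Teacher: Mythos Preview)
Your overall architecture is correct and matches the paper's: the coupling over $w_i$, the propagation formula $\theta_k^\boot - \theta_k^{(\boot,i)} = \ProdB^\boot_{i+1:k}(\theta_i^\boot - \theta_i^{(\boot,i)})$, and the term-by-term treatment of $D_2^\boot$--$D_5^\boot$ are exactly what the paper does. However, ingredient~(ii) as you describe it is false, and this is the crux of the proof.

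The one-step bound $\PEb[\norm{(\Id-\alpha_j w_j \funcAw_j)x}[Q]^2]\le (1-c\alpha_j a)\norm{x}[Q]^2$ cannot hold: under $\PEb$ the matrix $\funcAw_j$ is deterministic (it is fixed by $\mathcal{Z}^{2n}$), and $\PEb[w_j]=1$, so the linear term is $-\alpha_j x^\top(\funcAw_j^\top Q+Q\funcAw_j)x$. There is no reason for $\funcAw_j^\top Q+Q\funcAw_j$ to be positive definite for a single realization $\funcAw_j$; in the TD example $\funcAw_j$ has rank one and $\funcAw_j+\funcAw_j^\top$ is indefinite. The step-size restriction $c_0\le 1/(\bConst{A}^2\qcond\rme)$ does not rescue this. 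What the paper does instead is a \emph{blocking} argument (\Cref{prop:product_random_matrix_bootstrap} and \Cref{lem:product_ramdom_matrix_aux}): over a block of length $h=h(n)\asymp\log^2 n$ one expands $\ProdB^\boot_{m+1:m+h}$, the leading deterministic piece is $\Id-\bigl(\sum_\ell\alpha_\ell\bigr)\bA$ which \emph{is} contractive, the data fluctuation $\sum_\ell\alpha_\ell(\funcAw_\ell-\bA)$ is small on $\Omega_5$, the bootstrap fluctuation $\sum_\ell\alpha_\ell(w_\ell-1)\funcAw_\ell$ is a $\PEb$-martingale with small variance, and the higher-order remainder is controlled combinatorially. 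Only after aggregating over the block does one get $\{\PEb[\norm{\ProdB^\boot_{m+1:m+h}}[Q]^2]\}^{1/2}\le\exp\{-\tfrac{a}{4}\sum_\ell\alpha_\ell\}$, and then blocks are chained.

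A second, smaller issue is your handling of $D_5^\boot-D_5^{(\boot,i)}$. This term equals $\frac{1}{\sqrt n}\{\sum_{k>i}(\funcAw_k-\bA)\ProdB^\boot_{i+1:k-1}\}(\theta_i^\boot-\theta_i^{(\boot,i)})$, and the inner sum is \emph{not} bounded on $\Omega_5$ (which controls weighted sums of $\funcAw_\ell-\bA$, not sums against $\ProdB^\boot$). The paper writes $\ProdB^\boot_{i+1:k-1}=\ProdB_{i+1:k-1}+(\ProdB^\boot_{i+1:k-1}-\ProdB_{i+1:k-1})$, uses $\Omega_4$ directly for the first piece, and for the second piece applies \Cref{lem:product_coupling_lemma} to expand $\ProdB^\boot-\ProdB$ as a sum involving $(w_\ell-1)$, swaps the order of summation, and again invokes $\Omega_4$. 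This two-layer decomposition is what produces the $\sqrt{\log n}$ factors in the final bound.
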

Proof of \Cref{th:theo_iid_boot_differences} is provided below in \Cref{th:theo_iid_boot_differences_proof}.

\begin{lemma}
\label{lem:theta_b_k_theta_k_bound}
For any $k \geq n$ on the set $\Omega_0$ the following inequality holds:
$$
\PEb[\|\theta_{k}^\boot - \theta_k\|^2] \lesssim \frac{\alpha_{k} \supconsteps^2 \bConst{A}^2}{a^3} + \bConst{A}^{2} k \prod_{j = 1}^{k} (1 - a \alpha_j/4)^2\norm{\theta_0-\thetas}^2\eqsp. 
$$
\end{lemma}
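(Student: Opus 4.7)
The plan is to iterate the one-step bootstrap recursion derived inside the proof of \Cref{CLT in the bootstrap world}, namely
\begin{equation*}
\theta_k^\boot - \theta_k = (\Id - \alpha_k \funcAw_k)(\theta_{k-1}^\boot - \theta_{k-1}) - \alpha_k (w_k - 1)\bigl(\funcAw_k(\theta_{k-1}^\boot - \thetas) + \funnoisew_k\bigr),
\end{equation*}
with initial condition $\theta_n^\boot = \theta_n$ from \eqref{eq:lsa_bootstrap}. Unrolling it yields
\begin{equation*}
\theta_k^\boot - \theta_k = -\sum_{j=n+1}^k \alpha_j (w_j - 1)\, \ProdB_{j+1:k}\bigl(\funcAw_j(\theta_{j-1}^\boot - \thetas) + \funnoisew_j\bigr).
\end{equation*}
Under $\PPb$, the matrix $\ProdB_{j+1:k}$ and the vectors $\funcAw_j, \funnoisew_j, \thetas$ are deterministic, while $\theta_{j-1}^\boot$ is $\sigma(w_{n+1},\ldots,w_{j-1})$-measurable and $w_j$ is independent of the past with $\PEb[w_j - 1] = 0$, $\PEb[(w_j - 1)^2] = 1$. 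Hence the summands are orthogonal in $L^2(\PPb)$, producing the variance identity
\begin{equation*}
M_k^2 := \PEb[\|\theta_k^\boot - \theta_k\|^2] = \sum_{j=n+1}^k \alpha_j^2\, \PEb\bigl[\bigl\|\ProdB_{j+1:k}\bigl(\funcAw_j(\theta_{j-1}^\boot - \thetas) + \funnoisew_j\bigr)\bigr\|^2\bigr].
\end{equation*}

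Next I would use the deterministic facts available on $\Omega_0$: on $\Omega_2$ the product obeys $\|\ProdB_{j+1:k}\| \leq \sqrt{\qcond}\,\rme^2 \prod_{i=j+1}^k(1 - a\alpha_i/4)$, and on $\Omega_1$ the last-iterate error $R_{j-1} := \|\theta_{j-1} - \thetas\|$ admits the explicit transient-plus-fluctuation bound of \Cref{high prob last iterate}. Splitting $\theta_{j-1}^\boot - \thetas = (\theta_{j-1}^\boot - \theta_{j-1}) + (\theta_{j-1} - \thetas)$ and using $\|\funcAw_j\| \leq \bConst{A}$, $\|\funnoisew_j\| \leq \supconsteps$, the variance identity above reduces to
\begin{equation*}
M_k^2 \lesssim \qcond\bConst{A}^2 \sum_{j=n+1}^k \alpha_j^2 P_j^k M_{j-1}^2 + \qcond\bConst{A}^2 \sum_{j=n+1}^k \alpha_j^2 P_j^k R_{j-1}^2 + \qcond \supconsteps^2 \sum_{j=n+1}^k \alpha_j^2 P_j^k,
\end{equation*}
where $P_j^k := \prod_{i=j+1}^k(1 - a\alpha_i/4)^2$ and $M_n = 0$.

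The remaining step is a discrete Gronwall argument, closed by induction on $k$ with the target form $M_k^2 \leq C_A \alpha_k \supconsteps^2 \bConst{A}^2/a^3 + C_B \bConst{A}^2 k \prod_{j=1}^k(1-a\alpha_j/4)^2 \|\theta_0 - \thetas\|^2$ for absolute constants $C_A, C_B$. \Cref{lem:summ_alpha_k_squared} (applied with $b = a/4$) gives $\sum_j \alpha_j^2 P_j^k \lesssim \alpha_k/a$, so the $\supconsteps^2$ source contributes $\lesssim \qcond\supconsteps^2 \alpha_k/a$, absorbed into the first target term. The fluctuation part of $R_{j-1}^2$, of size $\qcond\supconsteps^2 \log^2 n \cdot \alpha_{j-1}/a$, yields a contribution $\lesssim \qcond^2 \bConst{A}^2 \supconsteps^2 \log^2 n \cdot \alpha_n \alpha_k/a^2$, which becomes $\lesssim \alpha_k \supconsteps^2 \bConst{A}^2/a^3$ once $\log^2 n \cdot \alpha_n \lesssim 1/a$, a consequence of \eqref{eq:sample_size_bound_part_2}. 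The transient part $\qcond\exp(-a\sum_{\ell=1}^{j-1}\alpha_\ell)\|\theta_0-\thetas\|^2$ of $R_{j-1}^2$ is handled by comparing $P_j^k\exp(-a\sum_{\ell=1}^{j-1}\alpha_\ell)$ to $\prod_{i=1}^k(1-a\alpha_i/4)^2$ via the elementary inequality $(1-a\alpha_i/4)^2 \geq e^{-a\alpha_i}$ (valid since $a\alpha_i$ is small), after which $\sum_j \alpha_j^2 \leq n\alpha_n^2 \leq c_0^2$ produces a contribution $\lesssim \bConst{A}^2 \prod_{i=1}^k(1-a\alpha_i/4)^2 \|\theta_0-\thetas\|^2$, absorbed into the second target term since $k \geq 1$. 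The main obstacle is the feedback sum $\qcond\bConst{A}^2 \sum_j \alpha_j^2 P_j^k M_{j-1}^2$: plugging the induction hypothesis into it produces a term of size $\qcond\bConst{A}^2 \alpha_n/a$ times the target bound, so the inductive step closes with absolute $C_A, C_B$ only because Assumption \Cref{assum:step-size-bootstrap}, through the combination $c_0 \leq 1/(\bConst{A}^2 \qcond \rme)$ and the lower bound on $\sqrt{n}/\log^2 n$, makes this coefficient strictly smaller than~$1$.
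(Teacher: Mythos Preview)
Your argument is correct, but it takes a genuinely different route from the paper's.

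The paper proves this lemma in one line by invoking \Cref{lem:expansion} with $L=0$, which splits $\theta_k^\boot-\theta_k = J_k^{\boot,0}+H_k^{\boot,0}$. The point of that decomposition is that $J_k^{\boot,0}$ involves $\tilde\funnoisew_\ell=\funcAw_\ell(\theta_{\ell-1}-\thetas)+\funnoisew_\ell$, which depends only on the \emph{non-bootstrap} iterate $\theta_{\ell-1}$ and is therefore deterministic under~$\PPb$; its second moment is then a straight martingale-variance sum with no feedback. The residual $H_k^{\boot,0}$ is handled by Minkowski's inequality together with the bound on the \emph{bootstrap} matrix product $\PEb[\norm{\ProdB^\boot_{\ell+1:k}}^2]$ from \Cref{prop:product_random_matrix_bootstrap} (hence the event~$\Omega_5$). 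Neither piece feeds back into $M_k$, so no induction is needed.

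Your approach instead keeps $\theta_{j-1}^\boot$ inside the source term and unrolls with the ordinary product~$\ProdB$, then closes the resulting self-referential inequality by a discrete Gronwall/induction. This buys you independence from the bootstrap-product machinery (you never use $\ProdB^\boot$ or $\Omega_5$), at the cost of the smallness argument you sketch for the feedback coefficient $\qcond\bConst{A}^2\alpha_n/a$. That smallness does follow from \Cref{assum:step-size-bootstrap} as you say, so the induction closes. Both routes are valid; the paper's is shorter here because \Cref{lem:expansion} is developed anyway for the finer $L\geq 1$ expansions used elsewhere in the bootstrap analysis, while yours is more self-contained for this particular statement.
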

\begin{proof}
A direct application of \Cref{lem:expansion} with $L = 0$ yields that
\begin{align}
\PEb[\|\theta_{k}^\boot - \theta_k\|^2] \lesssim \frac{\alpha_{k} \supconsteps^2 }{a} \left (1+ \frac{\bConst{A}^2}{a^2} \right ) +  \bConst{A}^{2} k \prod_{j = 1}^{k} (1 - a \alpha_j/4)^2\norm{\theta_0-\thetas}^2\eqsp. 
\end{align}
Now to complete the proof it remains to notice that $\bConst{A} \geq a$.
\end{proof}

\begin{lemma}
\label{lem:product_coupling_lemma}
For any matrix-valued sequences $(U_n)_{n\in \nset}$, $(V_n)_{n\in \nset}$ and for any $M \in \nset$, it holds that:
\begin{equation}
\prod_{k=1}^M U_k - \prod_{k=1}^M V_k = \sum_{k=1}^M \{\prod_{j=k+1}^M V_j\} (U_k - V_k) \{\prod_{j=1}^{k-1} U_j\}\eqsp.
\end{equation}
\end{lemma}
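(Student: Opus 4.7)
The plan is to prove this identity by a standard telescoping sum argument. The key observation is that the right-hand side is exactly the sum of consecutive differences in a one-parameter family of products that interpolates between $\prod_{j=1}^M V_j$ and $\prod_{j=1}^M U_j$.

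First I would define, for $k = 0, 1, \ldots, M$, the hybrid product
\[
S_k = \Bigl\{\prod_{j=k+1}^M V_j\Bigr\} \Bigl\{\prod_{j=1}^{k} U_j\Bigr\},
\]
with the convention that empty products equal the identity. Then clearly $S_0 = \prod_{j=1}^M V_j$ and $S_M = \prod_{j=1}^M U_j$, so the left-hand side of the claimed identity equals the telescoping sum $S_M - S_0 = \sum_{k=1}^M (S_k - S_{k-1})$.

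Next I would compute the individual increments $S_k - S_{k-1}$. Using the ordering convention consistent with \eqref{eq:prod_rand_matr} (most recent index on the left), we have $\prod_{j=1}^{k} U_j = U_k \prod_{j=1}^{k-1} U_j$ and $\prod_{j=k}^{M} V_j = \prod_{j=k+1}^{M} V_j \cdot V_k$. Substituting these into the definitions of $S_k$ and $S_{k-1}$ and factoring the common blocks $\prod_{j=k+1}^M V_j$ on the left and $\prod_{j=1}^{k-1} U_j$ on the right, the middle factor becomes $U_k - V_k$, yielding
\[
S_k - S_{k-1} = \Bigl\{\prod_{j=k+1}^M V_j\Bigr\}\,(U_k - V_k)\,\Bigl\{\prod_{j=1}^{k-1} U_j\Bigr\}.
\]

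Summing this identity for $k = 1, \ldots, M$ gives exactly the claimed expression. There is no real obstacle here: the only subtlety is bookkeeping the noncommutative product order so that $U_k - V_k$ genuinely appears in the middle after factoring, and handling the empty-product convention at the endpoints $k = 1$ (where $\prod_{j=1}^{0} U_j = \Id$) and $k = M$ (where $\prod_{j=M+1}^{M} V_j = \Id$).
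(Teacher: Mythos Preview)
Your proposal is correct; the paper states this lemma without proof, and the telescoping argument you give is exactly the standard way to establish such an identity.
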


\subsection{Gaussian comparison inequality}
\begin{theorem}
\label{prop: gaussian comparison}
Assume \Cref{assum:iid} and \Cref{assum:noise-level}. Then on the set $\Omega_3$ 
\begin{equation}
\sup_{B \in \Conv(\rset^{d})} |\PP(\xi \in B) - \PPb(\xi^\boot \in B) |  \leq 4 \|\noisecov ^{-1/2} \funnoisew \|_{\infty} \sqrt{\frac{d\log{n}}{n}} + \frac{4 \sqrt d (1 + \|\noisecov ^{-1/2} \funnoisew \|_{\infty}^2 )\log{n}}{n}
\end{equation}

\end{theorem}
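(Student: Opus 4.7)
The plan is to reduce the problem to comparing two centered Gaussians with different covariances, then apply Pinsker's inequality together with the closed form of the Kullback--Leibler divergence between Gaussians, and finally substitute the matrix Bernstein bound built into the event $\Omega_3$.

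First I would exploit the invariance of the problem under the bijection $x \mapsto \bA x$. Since $-\bA$ is Hurwitz by \Cref{assum:noise-level}, $\bA$ is invertible, so writing $\xi = \bA^{-1}\eta$ and $\xi^\boot = \bA^{-1}\eta^\boot$ with $\eta \sim \mathcal{N}(0,\noisecov)$ and $\eta^\boot \sim \mathcal{N}(0,\noisecov^\boot)$, the map $B \mapsto \bA B$ is a bijection of $\Conv(\rset^d)$ and therefore
\[
\sup_{B \in \Conv(\rset^d)} |\PP(\xi \in B) - \PPb(\xi^\boot \in B)| = \sup_{B' \in \Conv(\rset^d)} |\PP(\eta \in B') - \PPb(\eta^\boot \in B')| \le \mathrm{TV}\bigl(\mathcal{N}(0,\noisecov), \mathcal{N}(0,\noisecov^\boot)\bigr).
\]

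Next I would apply Pinsker's inequality to reduce the TV distance to a KL computation. Setting $M := \noisecov^{-1/2} \noisecov^\boot \noisecov^{-1/2}$ with eigenvalues $\lambda_1, \ldots, \lambda_d$, the standard closed form for the KL between centered Gaussians reads
\[
\mathrm{KL}\bigl(\mathcal{N}(0,\noisecov^\boot) \,\|\, \mathcal{N}(0,\noisecov)\bigr) = \tfrac{1}{2}\sum_{i=1}^d \bigl(\lambda_i - 1 - \log \lambda_i\bigr).
\]
The sample-size lower bound \eqref{eq:lambda_min_bound_boot} in \Cref{assum:step-size-bootstrap} ensures that on $\Omega_3$ the spectral deviation $\|M - \Id\|$ is at most $1/2$, so every $\lambda_i$ lies in $[1/2, 3/2]$. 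A second-order Taylor expansion on this interval gives $\lambda - 1 - \log \lambda \le (\lambda - 1)^2$, hence $\mathrm{KL} \le \tfrac{1}{2}\|M - \Id\|_F^2 \le \tfrac{d}{2}\|M - \Id\|^2$ after passing from the Frobenius to the operator norm at the cost of a factor $\sqrt{d}$. Pinsker then yields $\mathrm{TV} \le (\sqrt{d}/2)\|M - \Id\|$.

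The final step is to plug in the explicit bound on $\|M - \Id\|$ provided by the definition of $\Omega_3$. The two additive terms in that bound scale as $\sqrt{\log n / n}$ and $\log n / n$ respectively; multiplied by the $\sqrt{d}$ coming from the Pinsker/Frobenius step, they produce exactly the two terms in the stated conclusion, after the identification tying $\|\funnoisew\|_\infty$ normalized by $\noisecov^{1/2}$ to $\|\noisecov^{-1/2}\funnoisew\|_\infty$. The hard part is precisely the verification that $\|M - \Id\| \le 1/2$ so that the quadratic Taylor bound on $\lambda - 1 - \log \lambda$ is valid; this is exactly the role of the technical condition \eqref{eq:lambda_min_bound_boot}, without which the bound $\lambda - 1 - \log \lambda \le (\lambda - 1)^2$ would fail and the Pinsker argument would not close.
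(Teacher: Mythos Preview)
Your approach is correct and matches the paper's: the paper simply quotes the Gaussian comparison inequality $\tvnorm{\mathcal{N}(0,\Sigma_1) - \mathcal{N}(0,\Sigma_2)} \le \tfrac{1}{2}\|\Sigma_1^{-1/2}\Sigma_2\Sigma_1^{-1/2} - \Id\|_{\mathrm{Fr}}$ (which you derive explicitly via Pinsker and the KL closed form), then bounds the Frobenius norm by $\sqrt{d}$ times the operator norm and plugs in the definition of $\Omega_3$. Your observation that the Taylor step $\lambda - 1 - \log\lambda \le (\lambda-1)^2$ requires the spectrum of $M$ to stay in $[1/2,3/2]$ is correct and is glossed over in the paper's proof; just note that securing it via \Cref{assum:step-size-bootstrap} invokes an assumption not listed in the theorem's stated hypotheses.
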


\begin{proof}
We will use the following inequality
\begin{equation}
\label{eq: Pinsker}
\| \mathcal N(0, \Sigma_1) - \mathcal N(0,\Sigma_2)\|_{\mathsf{TV}} \le \frac{1}{2} \| \Sigma_1^{-1/2} \Sigma_2 \Sigma^{-1/2} - \Id\|_{\mathsf{Fr}}
\end{equation}
Applying \eqref{eq: Pinsker} we obtain
$$
\sup_{B \in \Conv(\rset^{d})} |\PP(\xi \in B) - \PPb(\xi^\boot \in B)\| \le \frac{\sqrt d}{2}\|\noisecov^{-1/2}  \noisecov^\boot  \noisecov^{-1/2} - \Id \|.  
$$
It remains to apply definition of $\Omega_3$.
\end{proof}

\subsection{Auxiliary technical results.}
\label{sec:error_decomspotion_perturbed}
For the analysis of the difference term $\theta_{k}^\boot - \theta_{k}$ we use the perturbation expansion technique introduced in \cite{aguech2000perturbation}, see also \cite{durmus2022finite}. Within this approach, we represent the fluctuation component of the error $\vtheta_{n}$ defined in \eqref{eq:LSA_recursion_expanded} as 
\begin{equation}
\label{eq:decomp_fluctuation}
\vtheta_{n} = \Jnalpha{n}{0}+ \Hnalpha{n}{0} \eqsp,
\end{equation}
where the latter terms are defined by the following pair of recursions
\begin{align}
\label{eq:jn0_main}
&\Jnalpha{n}{0} =\left(\Id - \alpha_{n} \bA\right) \Jnalpha{n-1}{0} - \alpha_{n} \funcnoise{\State_{n}}\eqsp, && \Jnalpha{0}{0}=0\eqsp, \\[.1cm]
\label{eq:hn0_main}
&\Hnalpha{n}{0} =\left( \Id - \alpha_{n} \funcA{\State_{n}} \right) \Hnalpha{n-1}{0} - \alpha_{n} \zmfuncA{\State_{n}} \Jnalpha{n-1}{0}\eqsp, && \Hnalpha{0}{0}=0\eqsp.
\end{align}
Moreover, it is known that for $L \geq 1$ the term $\Hnalpha{n}{0}$ can be further decomposed as follows:
\begin{equation}
\label{eq:error_decomposition_LSA}
\Hnalpha{n}{0} = \sum_{\ell=1}^{L}\Jnalpha{n}{\ell} + \Hnalpha{n}{L}\eqsp.
\end{equation}
Here the terms $\Jnalpha{n}{\ell}$ and $\Hnalpha{n}{\ell}$ are given by the following recurrences:
\begin{equation}
\label{eq:jn_allexpansion_main}
\begin{aligned}
&\Jnalpha{n}{\ell} =\left(\Id - \alpha_{n} \bA\right) \Jnalpha{n-1}{\ell} - \alpha_{n} \zmfuncA{\State_{n}} \Jnalpha{n-1}{\ell-1}\eqsp,
&& \Jnalpha{0}{\ell}=0  \eqsp, \\
& \Hnalpha{n}{\ell} =\left( \Id - \alpha_{n} \funcA{\State_{n}} \right) \Hnalpha{n-1}{\ell} - \alpha_{n} \zmfuncA{\State_{n}} \Jnalpha{n-1}{\ell} \eqsp, && \Hnalpha{0}{\ell}=0 \eqsp.
\end{aligned}
\end{equation}
The expansion depth $L$ here controls the desired approximation accuracy. Informally, one can show that $\PE^{1/p}[\norm{\Jnalpha{n}{\ell}}^{p}] \lesssim \alpha_{n}^{(\ell+1)/2}$, and similarly $\PE^{1/p}[\norm{\Hnalpha{n}{\ell}}^{p}] \lesssim \alpha_{n}^{(\ell+1)/2}$. Using the outlined expansion, we prove the following lemma:
\begin{lemma}
\label{lem:expansion}
Assume \Cref{assum:iid}, \Cref{assum:noise-level}, \Cref{assum:step-size} with $\gamma = 1/2$, and \Cref{assum:step-size-bootstrap}. Then for any $k \geq n$ and $L \in \nset$ the following decomposition holds:
\begin{equation}
\label{eq: decomposotion of error bewtween b and r worlds}
\theta_{k}^\boot - \theta_{k} = J_k^{\boot, 0} + \sum_{j=1}^L J_k^{\boot, j}  + H_k^{\boot, L}, 
\end{equation}
where 
\begin{align}
   J_k^{\boot, 0} &= - \sum_{\ell=n+1}^k \alpha_\ell (w_\ell - 1)\ProdB_{\ell+1:k} \tilde \funnoisew_{\ell}, \\ 
   J_k^{\boot, j} &= - \sum_{\ell=n+1}^k \alpha_\ell (w_\ell - 1)\ProdB_{\ell+1:k} A_\ell J_{\ell-1}^{\boot, j-1}, \quad j \in [1, L] \\
   \label{eq:H_k_boot_def}
   H_k^{\boot, L} & = -\sum_{\ell=n+1}^k \alpha_\ell (w_\ell - 1)\ProdB^{\boot}_{\ell+1:k} A_\ell J_{\ell-1}^{\boot, L}\eqsp,
\end{align}
and the quantities $\tilde \funnoisew_{\ell}$ are defined as 
$$
\tilde \funnoisew_{\ell} =  \funcAw_{\ell} (\theta_{\ell-1} - \thetas) + \funnoisew_{\ell}\eqsp.
$$ 
Moreover, on the event $\Omega_0$,
\begin{align}
\label{eq:J_k_boot_bound}
\PEb[\|J_k^{\boot, j}\|^2 ] &\lesssim \frac{\alpha_{k}^{j+1} \supconsteps^2 \bConst{A}^{2j}}{a^{j+1}} + \bConst{A}^{2j+2} \prod_{j = 1}^{k} (1 - a \alpha_j/4)^2\norm{\theta_0-\thetas}^2\eqsp, \quad j \in [0, L] \\
\label{eq:H_k_boot_bound}
\PEb[\|H_k^{\boot, L}\|^2 ] &\lesssim \frac{\alpha_{k}^{L+1}\bConst{A}^{2(L+1)}\supconsteps^2}{a^{L+3}} + \bConst{A}^{2(L+1)} k \prod_{j = 1}^{k} (1 - a \alpha_j/4)^2 \norm{\theta_0-\thetas}^2\eqsp.
\end{align}
\end{lemma}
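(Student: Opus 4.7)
The plan is to prove the lemma in three stages: (i) verify the algebraic decomposition \eqref{eq: decomposotion of error bewtween b and r worlds}; (ii) bound $\PEb[\|J_k^{\boot,j}\|^2]$ by induction on $j$; and (iii) bound $\PEb[\|H_k^{\boot,L}\|^2]$ via Cauchy--Schwarz combined with a bound on the second moment of the bootstrap matrix product. For (i), I first subtract the two LSA updates \eqref{eq:lsa} and \eqref{eq:lsa_bootstrap} and use the identity $\funcAw_k \theta_{k-1} - \funcbw_k = \funcAw_k(\theta_{k-1} - \thetas) + \funnoisew_k = \tilde \funnoisew_k$ to obtain
\begin{equation*}
\theta_k^\boot - \theta_k = (\Id - \alpha_k w_k \funcAw_k)(\theta_{k-1}^\boot - \theta_{k-1}) - \alpha_k(w_k - 1)\tilde \funnoisew_k, \qquad \theta_n^\boot - \theta_n = 0.
\end{equation*}
Reading off the explicit sums, I introduce the one-step recurrences defining each $J_k^{\boot,j}$ and $H_k^{\boot,L}$ (for instance $J_k^{\boot,0} = (\Id - \alpha_k \funcAw_k)J_{k-1}^{\boot,0} - \alpha_k(w_k-1)\tilde \funnoisew_k$). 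Splitting $\Id - \alpha_k w_k \funcAw_k = (\Id - \alpha_k \funcAw_k) - \alpha_k(w_k - 1)\funcAw_k$ and noting that the cross terms at step $k$ reassemble into $\alpha_k(w_k-1)\funcAw_k \sum_{j=0}^L J_{k-1}^{\boot,j}$, a short calculation shows that $S_k := J_k^{\boot,0} + \sum_{j=1}^L J_k^{\boot,j} + H_k^{\boot,L}$ satisfies the same recurrence with $S_n = 0$, so $S_k = \theta_k^\boot - \theta_k$ for all $k \geq n$ by uniqueness.

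For (ii), the crucial observation is that, conditionally on $\mathcal Z^{2n}$ (so that the event $\Omega_0$ is fixed), the matrices $\ProdB_{\ell+1:k}, \funcAw_\ell$ are deterministic, the $(w_\ell - 1)$ are i.i.d.\ zero-mean unit-variance under $\PEb$, and $J_{\ell-1}^{\boot,j-1}$ is $\sigma(w_{n+1}, \ldots, w_{\ell-1})$-measurable and hence independent of $w_\ell$. These facts yield orthogonality of the summands in the explicit representation of $J_k^{\boot,j}$, so for $j \geq 1$
\begin{equation*}
\PEb[\|J_k^{\boot,j}\|^2] = \sum_{\ell=n+1}^k \alpha_\ell^2 \PEb[\|\ProdB_{\ell+1:k} \funcAw_\ell J_{\ell-1}^{\boot,j-1}\|^2] \leq \bConst{A}^2 \sum_{\ell=n+1}^k \alpha_\ell^2 \|\ProdB_{\ell+1:k}\|^2 \PEb[\|J_{\ell-1}^{\boot,j-1}\|^2],
\end{equation*}
and the analogue for $j = 0$ has $\|\tilde \funnoisew_\ell\|^2$ in place of $\bConst{A}^2\PEb[\|J_{\ell-1}^{\boot,j-1}\|^2]$. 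Plugging in the $\Omega_2$ bound on $\|\ProdB_{\ell+1:k}\|$, the $\Omega_1$ bound on $\|\theta_{\ell-1} - \thetas\|$ inside $\tilde \funnoisew_\ell$, and applying \Cref{lem:summ_alpha_k_squared} to sum the series settles the base case $j=0$; induction on $j$ with the same tools then closes the estimate, each level contributing a factor of order $\alpha_k \bConst{A}^2/a$ to the fluctuation term.

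For (iii), I work from the explicit sum representation \eqref{eq:H_k_boot_def}. Now $\ProdB^{\boot}_{\ell+1:k}$ depends on $w_{\ell+1},\ldots,w_k$ and is not $\mathcal Z^{2n}$-measurable, which breaks the orthogonality argument of (ii); instead I use Cauchy--Schwarz combined with mutual independence of $w_\ell$, $\ProdB^{\boot}_{\ell+1:k}$ and $J_{\ell-1}^{\boot,L}$ under $\PEb$ to obtain
\begin{equation*}
\PEb[\|H_k^{\boot,L}\|^2] \leq k\,\bConst{A}^2 \sum_{\ell=n+1}^k \alpha_\ell^2 \,\PEb[\|\ProdB^{\boot}_{\ell+1:k}\|^2]\,\PEb[\|J_{\ell-1}^{\boot,L}\|^2],
\end{equation*}
where the factor $k$, which appears in the transient part of the stated bound, originates from this step. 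The remaining second moment $\PEb[\|\ProdB^{\boot}_{\ell+1:k}\|^2]$ is then controlled by iterating the identity $\PEb[(\Id - \alpha_j w_j \funcAw_j)^\top Q (\Id - \alpha_j w_j \funcAw_j)] = (\Id - \alpha_j \funcAw_j)^\top Q (\Id - \alpha_j \funcAw_j) + \alpha_j^2 \funcAw_j^\top Q \funcAw_j$, using the product-level $Q$-contraction guaranteed by $\Omega_2$ for the deterministic part and absorbing the extra $\alpha_j^2$ terms by \Cref{assum:step-size-bootstrap}. The principal obstacle lies here: $\Id - \alpha_k \funcAw_k$ is not a step-wise $Q$-contraction, so a naive step-wise recurrence for $\PEb[\|H_k^{\boot,L}\|^2]$ blows up exponentially; the product-level $\Omega_2$ stability must be carefully threaded through the second-moment computation for $\ProdB^{\boot}_{\ell+1:k}$ to recover only the polynomial factor $k$ in the transient contribution.
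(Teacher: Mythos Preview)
Your treatment of parts (i) and (ii) matches the paper's approach and is correct. The gap is in part (iii), and it is twofold.

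First, the Cauchy--Schwarz step $\PEb[\|H_k^{\boot,L}\|^2]\leq k\,\bConst{A}^2\sum_\ell \alpha_\ell^2\,\PEb[\|\ProdB^{\boot}_{\ell+1:k}\|^2]\,\PEb[\|J_{\ell-1}^{\boot,L}\|^2]$ is too crude: the prefactor $k$ multiplies the \emph{entire} sum, not just the transient contribution. Plugging in the fluctuation part $\PEb[\|J_{\ell-1}^{\boot,L}\|^2]\lesssim\alpha_\ell^{L+1}\supconsteps^2\bConst{A}^{2L}/a^{L+1}$ and the exponential bound on $\PEb[\|\ProdB^{\boot}_{\ell+1:k}\|^2]$, then summing via \Cref{lem:summ_alpha_k_squared}, yields $k\,\alpha_k^{L+2}\bConst{A}^{2(L+1)}\supconsteps^2/a^{L+2}$, which under $\gamma=1/2$ exceeds the stated bound $\alpha_k^{L+1}\bConst{A}^{2(L+1)}\supconsteps^2/a^{L+3}$ by a factor of order $a c_0\sqrt{k}$. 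The paper instead applies Minkowski's inequality, $(\PEb[\|H_k^{\boot,L}\|^2])^{1/2}\leq\bConst{A}\sum_\ell\alpha_\ell(\PEb[\|\ProdB^{\boot}_{\ell+1:k}\|^2])^{1/2}(\PEb[\|J_{\ell-1}^{\boot,L}\|^2])^{1/2}$, after which the fluctuation sum $\sum_\ell\alpha_\ell^{(L+3)/2}\exp\{-\tfrac{a}{4}\sum_{j>\ell}\alpha_j\}$ converges without picking up a $\sqrt{k}$; the factor $k$ in the transient term arises instead from $(\sum_\ell\alpha_\ell)^2\lesssim k$, not from Cauchy--Schwarz.

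Second, your plan to control $\PEb[\|\ProdB^{\boot}_{\ell+1:k}\|^2]$ by iterating the one-step identity for $(\Id-\alpha_j w_j\funcAw_j)^\top Q(\Id-\alpha_j w_j\funcAw_j)$ does not close as stated: after one iteration the inner matrix becomes $(\Id-\alpha_k\funcAw_k)^\top Q(\Id-\alpha_k\funcAw_k)+\alpha_k^2\funcAw_k^\top Q\funcAw_k$, which is not dominated by $(1-c\alpha_k)Q$ on $\Omega_2$ (that event bounds only the \emph{product} $\ProdB_{m:k}$, not single factors $\Id-\alpha_k\funcAw_k$). The paper handles this via a separate blocking argument (\Cref{prop:product_random_matrix_bootstrap}): one partitions $\{\ell+1,\ldots,k\}$ into blocks of length $h(n)$, expands each block product as $\Id-\sum\alpha_j\bA-\sum\alpha_j(\funcAw_j-\bA)-\mathbf S+\mathbf R$, uses the event $\Omega_5$ (matrix Bernstein for $\sum\alpha_j(\funcAw_j-\bA)$) together with the genuine $Q$-contraction of $\Id-\sum\alpha_j\bA$, and then multiplies the block bounds using independence of the bootstrap weights across blocks. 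This is the missing ingredient you allude to as ``carefully threading through,'' and it relies on $\Omega_5$, not $\Omega_2$.
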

\begin{proof}
We start from the decomposition 
\begin{equation}
\label{eq:one_step_expand}
\theta_{k}^\boot - \theta_{k} = (\Id - \alpha_{k} w_k \funcAw_k)(\theta_{k-1}^\boot - \theta_{k-1})  - \alpha_{k} (w_k-1) \tilde \funnoisew_{k}.
\end{equation}
Expanding the recurrence above till $k = n$, and using the fact that $\theta_{n}^\boot = \theta_{n}$, we get running the recurrence \eqref{eq:one_step_expand}, that 
\[
\theta_{k}^\boot - \theta_{k} = -\sum_{\ell=n+1}^k \alpha_\ell (w_\ell - 1)\ProdB^{\boot}_{\ell+1:k} \tilde \funnoisew_{\ell}\eqsp.
\]
Hence, proceeding as in \eqref{eq:jn0_main}, we obtain the representation
\begin{align}
\label{eq:jn0_bootstrap}
&\Jnalpha{k}{\boot,0} =\left(\Id - \alpha_{k} \funcAw_k \right) \Jnalpha{k-1}{\boot,0} - \alpha_{k}(w_k - 1)\tilde{\funnoisew_{k}}\eqsp, && \Jnalpha{0}{\boot,0}=0\eqsp, \\[.1cm]
\label{eq:hn0_main}
&\Hnalpha{k}{\boot,0} =\left( \Id - \alpha_{k} 
 w_{k} \funcAw_k \right) \Hnalpha{k-1}{\boot,0} - \alpha_{k} (w_k - 1) \funcAw_k \Jnalpha{k-1}{\boot,0}\eqsp, && \Hnalpha{0}{\boot,0}=0\eqsp.
\end{align}
It is easy to check that $\Jnalpha{k}{\boot,0} + \Hnalpha{k}{\boot,0} = \theta_{k}^\boot - \theta_{k}$. Similarly, with further expansion of $\Hnalpha{k}{\boot,0}$ along the lines of \eqref{eq:jn_allexpansion_main}, we arrive at the decomposition \eqref{eq: decomposotion of error bewtween b and r worlds}. Since $w_k$ for $k = n+1,\ldots,2n$ are i.i.d., we get using the definition of the events $\Omega_{1}$ and $\Omega_{2}$, that on the event $\Omega_0$:
\begin{equation}
\label{eq:eps_tilde_bound}
\begin{split}
\norm{\tilde \funnoisew_{\ell}}^2 
&\lesssim \supconsteps^2 + \bConst{A}^2 \exp\bigl\{-a\sum_{j=1}^{\ell-1}\alpha_{j}\bigr\} \norm{\theta_0 - \thetas}^2  + \frac{\alpha_{\ell} \supconsteps^2 \log^2{n}}{a} \\
&\lesssim \supconsteps^2 + \bConst{A}^2 \prod_{j=1}^{\ell-1}(1-a\alpha_{j}/2)^2 \norm{\theta_0 - \thetas}^2 + \frac{\alpha_{\ell} \supconsteps^2 \log^2{n}}{a} \\
&\lesssim \supconsteps^2 + \bConst{A}^2 \prod_{j=1}^{\ell-1}(1-a\alpha_{j}/2)^2 \norm{\theta_0 - \thetas}^2\eqsp,
\end{split}
\end{equation}
where for the last bound we have additionally used that $\alpha_{\ell} \log^{2}{n} / a \leq 1$ for $\ell \geq n$. The latter bound is guaranteed by \Cref{assum:step-size-bootstrap}. Hence, using the bound \eqref{eq:eps_tilde_bound} together with the definition of $J_k^{\boot, 0}$, we obtain that 
\begin{align}
\PEb[\|J_k^{\boot, 0}\|^2 ] & = \sum_{\ell=n+1}^k \alpha_\ell^2 \| \ProdB_{\ell+1:k} \tilde \funnoisew_{\ell}\|^2 = 
\sum_{\ell=n+1}^{k} \alpha_{\ell}^2 \norm{\ProdB_{\ell+1:k}\bigl(\funcAw_{\ell} (\theta_{\ell-1} - \thetas) + \funnoisew_{\ell}\bigr)}^2 \\
&\lesssim \supconsteps^2 \sum_{\ell=n+1}^k \alpha_\ell^2 \prod_{j = \ell + 1}^k (1 - a \alpha_j/4)^2 + \bConst{A}^2 \sum_{\ell=n+1}^k \alpha_\ell^2 \prod_{j = 1}^{k} (1 - a \alpha_j/4)^2\norm{\theta_0 - \thetas}^2 \\
&\qquad\qquad\qquad + \frac{\supconsteps^2 \bConst{A}^2 \log^2 n}{a} \sum_{\ell=n+1}^k \alpha_{\ell}^{3} \prod_{j = \ell + 1}^k (1 - a \alpha_j/4)^2 \\
& \lesssim \frac{\supconsteps^2 \alpha_{k}}{a} + \bConst{A}^2 \log{\left(\frac{k}{n}\right)} \prod_{j = 1}^{k} (1 - a \alpha_j/4)^2 \norm{\theta_0 - \thetas}^2 + \frac{\alpha_{k}^2 \supconsteps^2 \bConst{A}^2 \log^2{n}}{a^2} \\
& \lesssim \frac{\supconsteps^2 \alpha_{k}}{a} + \bConst{A}^2 \prod_{j = 1}^{k} (1 - a \alpha_j/4)^2 \norm{\theta_0 - \thetas}^2\eqsp,
\end{align}
where we additionally used the fact that $k \in [n;2n]$ and $n$ satisfies \Cref{assum:step-size-bootstrap}. Assume now that the bound on $J_k^{\boot, j-1}$ has a form
\[
\PEb[\|J_k^{\boot,j-1}\|^2 ] \lesssim \frac{\supconsteps^2 \alpha_{k}^{j}}{a^{j}} + \bConst{A}^{2j} \prod_{\ell = 1}^{k} (1 - a \alpha_{\ell}/4)^2 \norm{\theta_0 - \thetas}^2\eqsp.
\]
Then, using the martingale property of $J_k^{\boot, j}$, we write that 
\begin{align}
\PEb[\|J_k^{\boot, j}\|^2 ] 
&= \sum_{\ell=n+1}^k \alpha_\ell^2 \PEb[\| \ProdB_{\ell+1:k} A_\ell J_{\ell-1}^{\boot, j-1}\|^2] \\
&\lesssim \sum_{\ell=n+1}^k \frac{\alpha_{\ell}^{j+2} \supconsteps^2 \bConst{A}^{2j}}{a^{j}} \prod_{j = \ell + 1}^k (1 - a \alpha_j/4)^2 + \bConst{A}^{2j+2}\sum_{\ell=n+1}^{k}\alpha_{\ell}^2 \prod_{j = 1}^{k} (1 - a \alpha_j/4)^2\norm{\theta_0-\thetas}^2 \\
&\lesssim \frac{\alpha_{k}^{j+1} \supconsteps^2 \bConst{A}^{2j}}{a^{j+1}} + \bConst{A}^{2j+2} \prod_{j = 1}^{k} (1 - a \alpha_j/4)^2\norm{\theta_0-\thetas}^2\eqsp,
\end{align}
and thus the bound \eqref{eq:J_k_boot_bound} is proved. Moreover, using \eqref{eq:H_k_boot_def} and Minkowski's inequality, we obtain that 
\begin{align}
(\PEb[\|H_k^{\boot, L} \|^2] )^{1/2} 
&\leq \bConst{A} \sum_{\ell=n+1}^k \alpha _\ell (\PEb[\|\ProdB^{\boot}_{\ell+1:k} \|^2] )^{1/2}  (\PEb[\|J_{\ell-1}^{\boot, L} \|^2] )^{1/2} \\
&\lesssim \bConst{A} \sum_{\ell=n+1}^{k}\frac{\alpha_{\ell}^{(L+3)/2} \supconsteps \bConst{A}^{L}}{a^{(L+1)/2}} \prod_{j = \ell + 1}^k (1 - a \alpha_j/4) \\
& \qquad \qquad \qquad + \bConst{A}^{L+1} \sum_{\ell=n+1}^{k}\alpha_{\ell} \prod_{j = 1}^{k} (1 - a \alpha_j/4)^2\norm{\theta_0-\thetas} \\
&\lesssim \frac{\alpha_{k}^{(L+1)/2}\bConst{A}^{L+1}\supconsteps}{a^{(L+3)/2}} + \bConst{A}^{L+1} \sqrt{k} \prod_{j = 1}^{k} (1 - a \alpha_j/4)^2 \norm{\theta_0-\thetas}\eqsp,
\end{align}
and \eqref{eq:H_k_boot_bound} follows.
\end{proof}

\subsection{Proof of \Cref{th:theo_1_iid_boot}}
\label{sec:proof_d_boot_bound}
Recall that the quantity $D^\boot$ is defined in \eqref{eq:D_boot_def}. Since $\theta_n^\boot = \theta_n$, we conclude that $D_1^\boot = 0$. To estimate other terms we will use the main error decomposition outlined in \Cref{lem:expansion}, that is, the expansion
$$
\theta_{k}^\boot - \theta_{k} = \sum_{\ell = 0}^L J_k^{\boot, \ell}   + H_k^{\boot, L}, 
$$
applied with different $L \geq 0$. To bound $D_2^\boot$ we take $L = 0$ and obtain
\begin{align}
\PEb[\|D_2^\boot\|^2] 
& \lesssim \PEb[\|J_{2n}^{\boot, j}\|^2] + \PEb[\|H_{2n}^{\boot, L}\|^2] \lesssim \frac{\supconsteps^2}{n \alpha_{2n} a}\left(1 + \frac{\bConst{A}^2}{a^2}\right) + \frac{\bConst{A}^2}{\alpha_{2n}^2} \prod_{j = 1}^{2n} (1 - a \alpha_j/4)^2 \norm{\theta_0-\thetas}^2 \\
&\lesssim \frac{\supconsteps^2}{a \sqrt{n}}\left(1 + \frac{\bConst{A}^2}{a^2}\right) + n \bConst{A}^2 \prod_{j = 1}^{2n} (1 - a \alpha_j/4)^2 \norm{\theta_0-\thetas}^2\eqsp. 
\end{align}
To estimate $D_3^\boot$ we note that
$$
D_{3}^\boot  =  \frac{1}{\sqrt{n}}\sum_{k=n+1}^{2n}(w_k-1) \funcAw_k (\theta_{k-1}^\boot - \thetas) = D_{3,1}^\boot + D_{3,2}^\boot\eqsp,
$$
where we have set, respectively, 
\begin{align}
D_{3,1}^\boot & = \frac{1}{\sqrt{n}}\sum_{k=n+1}^{2n}(w_k-1) \funcAw_k (\theta_{k-1}^\boot - \theta_{k-1}), \\
D_{3,2}^\boot&=\frac{1}{\sqrt{n}}\sum_{k=n+1}^{2n}(w_k-1) \funcAw_k (\theta_{k-1} - \thetas)\eqsp.
\end{align}
It follows from \Cref{lem:theta_b_k_theta_k_bound} that on the event $\Omega_0$ it holds
\begin{align}
\PEb[\norm{D_{3,1}^\boot}^2] &\leq \frac{\bConst{A}^2}{n} \sum_{k=n+1}^{2n} \PEb[\norm{\theta_{k-1}^\boot - \theta_{k-1}}^2] \\
&\lesssim \frac{\bConst{A}^2}{n}\sum_{k=n+1}^{2n}\frac{\alpha_{k} \supconsteps^2 }{a} \left(1+ \frac{\bConst{A}^2}{a^2}\right) + \bConst{A}^{4}\sum_{k=n+1}^{2n}\prod_{j = 1}^{k} (1 - a \alpha_j/4)^2\norm{\theta_0-\thetas}^2 \\
&\lesssim \frac{\bConst{A}^2 \supconsteps^2}{a\sqrt{n}}\left(1+ \frac{\bConst{A}^2}{a^2}\right) + \frac{\bConst{A}^{4} \sqrt{n}}{a}\exp\biggl\{-c_0 a \sqrt{n} \biggr\} \norm{\theta_0 - \thetas}^2
\end{align}
Moreover, on the set $\Omega_0$ it holds (since $\Omega_0 \subseteq \Omega_1$), that
\begin{align}
\PEb[\|D_{3,2}^\boot \|^2] 
&= \frac{1}{n}\sum_{k=n+1}^{2n}\norm{\funcAw_k (\theta_{k-1} - \thetas)}^2 \\
&\lesssim \frac{\bConst{A}^2}{n}\sum_{k=n+1}^{2n}\left(\exp\bigl\{- a \sum_{\ell=1}^{k}\alpha_\ell\bigr\}\norm{\theta_0 - \thetas}^2 + \frac{\alpha_{k} \supconsteps^2 \log^2 n}{a}\right) \\
&\lesssim \frac{\bConst{A}^2}{n a \alpha_{2n}} \exp\bigl\{- a \sum_{\ell=1}^{n}\alpha_\ell\bigr\}\norm{\theta_0 - \thetas}^2 + \frac{\bConst{A}^2 \supconsteps^2 \log^2 n}{n a}\sum_{k=n+1}^{2n}\alpha_{k} \\
&\lesssim \frac{\bConst{A}^2}{ a \sqrt{n}} \exp\biggl\{-c_0 a \sqrt{n} \biggr\} \norm{\theta_0 - \thetas}^2 + \frac{\bConst{A}^2 \supconsteps^2 \log^2 n}{a\sqrt{n}}\eqsp.
\end{align}
Combining the above bounds, we get 
\begin{align}
\PEb[\norm{D_{3}^\boot}^2] 
&\lesssim \PEb[\|D_{3,1}^\boot \|^2] + \PEb[\|D_{3,2}^\boot \|^2] \\ 
&\lesssim \frac{\bConst{A}^2 \supconsteps^2 \log^{2}{n}}{a\sqrt{n}}\left(1+ \frac{\bConst{A}^2}{a^2}\right) +  \frac{\bConst{A}^{4} \sqrt{n}}{a}\exp\biggl\{-c_0 a \sqrt{n} \biggr\} \norm{\theta_0 - \thetas}^2\eqsp.
\end{align}
Now we proceed with the term $D_4^\boot$. Applying Minkowski's inequality, we get 
\begin{align}
\{\PEb[\norm{D_{4}^\boot}^2]\}^{1/2} 
&\leq \frac{1}{\sqrt{n}} \sum_{k=n+1}^{2n}\left(\frac{1}{\alpha_k} - \frac{1}{\alpha_{k-1}}\right)\{\PEb[\norm{\theta_{k-1}^\boot - \theta_{k-1}}^2]\}^{1/2} \\
&\lesssim \frac{1}{\sqrt{n}} \sum_{k=n+1}^{2n} \frac{1}{\sqrt{k}}\left(\frac{\sqrt{\alpha_{k}} \supconsteps }{\sqrt{a}} \left(1+ \frac{\bConst{A}}{a} \right) +  \bConst{A} \sqrt{k} \prod_{j = 1}^{k} (1 - a \alpha_j/4) \norm{\theta_0-\thetas}\right) \\
&\lesssim \frac{\supconsteps}{n^{1/4}\sqrt{a}}\left(1+ \frac{\bConst{A}}{a} \right) + \frac{\bConst{A}}{\sqrt{n}}\sum_{k=n+1}^{2n}\prod_{j = 1}^{k} (1 - a \alpha_j/4) \norm{\theta_0-\thetas} \\
&\lesssim \frac{\supconsteps}{n^{1/4}\sqrt{a}}\left(1+ \frac{\bConst{A}}{a}\right) + \bConst{A}\exp\biggl\{-\frac{c_0 a \sqrt{n}}{2}\biggr\}\norm{\theta_0 - \thetas}\eqsp.
\end{align}
It remains to upper bound the term $D_5^\boot$. Using the decomposition, suggested by \Cref{lem:expansion} with $L = 2$, we get that  
\begin{align}
D_5^\boot 
&= \frac{1}{\sqrt{n}}\sum_{k=n+1}^{2n} (\funcAw_k - \bA) (\theta_{k-1}^\boot - \theta_{k-1}) = \underbrace{\frac{1}{\sqrt{n}}\sum_{k=n+1}^{2n} (\funcAw_k - \bA) J_{k-1}^{\boot, 0}}_{D_{5,1}^\boot} \\
&\qquad +  \underbrace{\frac{1}{\sqrt{n}}\sum_{k=n+1}^{2n} (\funcAw_k - \bA) J_{k-1}^{\boot, 1}}_{D_{5,2}^\boot} + \underbrace{\frac{1}{\sqrt{n}}\sum_{k=n+1}^{2n} (\funcAw_k - \bA) J_{k-1}^{\boot, 2}}_{D_{5,3}^\boot} + \underbrace{\frac{1}{\sqrt{n}}\sum_{k=n+1}^{2n} (\funcAw_k - \bA) H_{k-1}^{\boot, 2}}_{D_{5,4}^\boot}\eqsp.
\end{align}
Here we have to consider expansion until $H^{\boot, 2}$, since dealing with the latter term (outlined as $D_{5,4}^\boot$ in the above expansion) is possible only with Minkowski's inequality. Now we consider the summands $D_{5,1}^\boot - D_{5,4}^\boot$ separately. Consider first the term $D_{5,1}^\boot$. Changing the summation order, we obtain
\begin{align}
D_{5,1}^\boot  
&= -\frac{1}{\sqrt{n}}\sum_{k=n+1}^{2n} (\funcAw_k - \bA)\sum_{\ell=n+1}^{k-1} \alpha_\ell (w_\ell - 1)\ProdB_{\ell+1:k-1} \tilde \funnoisew_{\ell} \\
&= -\frac{1}{\sqrt{n}} \sum_{\ell = n+1}^{2n-1} \alpha_\ell (w_\ell - 1)  \bigg( \sum_{k = \ell+1}^{2n} (\funcAw_k - \bA) \ProdB_{\ell+1:k-1} \bigg )\tilde \funnoisew_{\ell}\eqsp.
\end{align}
Then on the event $\Omega_0$ we get, since $\Omega_0 \subseteq \Omega_4$, and using that $n$ satisfies \Cref{assum:step-size-bootstrap}, 
\begin{equation}
\label{eq:aux_norm_martingale_increment_bound}
\norm{\sum_{k = \ell+1}^{2n} (\funcAw_k - \bA) \ProdB_{\ell+1:k-1}}^2 \lesssim \frac{\log{n}}{a \alpha_{\ell}}\eqsp.
\end{equation}
Combining the above bound together with the one provided by \eqref{eq:eps_tilde_bound}, we obtain that 
\begin{align}
\PEb[\|D_{5,1}^\boot \|^2] 
&\lesssim \frac{\supconsteps^2}{n} \sum_{\ell = n+1}^{2n-1}\frac{\alpha_{\ell}\log{n}}{a} + \frac{\bConst{A}^2}{n}\sum_{\ell=n+1}^{2n-1}\frac{\alpha_{\ell}\log{n}}{a}\prod_{j=1}^{\ell-1}\bigl(1 - a \alpha_{j}/2\bigr)^2 \norm{\theta_0 - \thetas}^2 \\
&\lesssim \frac{\supconsteps^2 \log{n}}{\sqrt{n} a} + \frac{\bConst{A}^2 \log{n}}{a^2 n} \exp\bigl\{ -a\sum_{j=1}^{n}\alpha_{j} \bigr\} \norm{\theta_0 - \thetas}^2\eqsp.
\end{align}
Similarly, for the term $D_{5,2}^\boot$ we get, changing the order of summation, that 
$$
D_{5,2}^\boot  = \frac{1}{\sqrt{n}} \sum_{\ell = n+1}^{2n-1} \alpha_\ell (w_\ell - 1)  \bigg( \sum_{k = \ell+1}^{2n} (\funcAw_k - \bA) \ProdB_{\ell+1:k-1} \bigg )A_\ell J_{\ell-1}^{\boot, 0}\eqsp.
$$
Hence, using the bound \eqref{eq:aux_norm_martingale_increment_bound} together with \eqref{eq:J_k_boot_bound}, we get
\begin{align}
\PEb[\|D_{5,2}^\boot \|^2] 
&\lesssim \frac{1}{n}\sum_{\ell=n+1}^{2n-1} \frac{\alpha_{\ell} \log{n}}{a} \bConst{A}^2 \biggl(\frac{\alpha_{\ell} \supconsteps^2}{a} + \bConst{A}^2 \prod_{j = 1}^{\ell-1} (1 - a \alpha_j/4)^2 \norm{\theta_0-\thetas}^2 \biggr) \\
&\lesssim \frac{\bConst{A}^2 \supconsteps^2 \log{n}}{n a^2}\sum_{\ell=n+1}^{2n-1}\alpha_{\ell}^2 + \frac{\bConst{A}^4 \log{n}}{n a} \sum_{\ell=n+1}^{2n-1}\alpha_{\ell} \prod_{j = 1}^{\ell-1} (1 - a \alpha_j/4)^2 \norm{\theta_0-\thetas}^2 \\
&\lesssim \frac{\bConst{A}^2 \supconsteps^2 \log{n}}{n a^2} + \frac{\bConst{A}^4 \log{n}}{n a} \exp\bigl\{ -(a/2)\sum_{j=1}^{n}\alpha_{j} \bigr\} \norm{\theta_0 - \thetas}^2\eqsp.
\end{align}
We proceed with $D_{5,3}^\boot$. We change the summation order and proceed exactly as with $D_{5,2}^\boot$. Indeed, 
$$
D_{5,3}^\boot  = \frac{1}{\sqrt{n}} \sum_{\ell = n+1}^{2n-1} \alpha_\ell (w_\ell - 1)  \bigg( \sum_{k = \ell+1}^{2n} (\funcAw_k - \bA) \ProdB_{\ell+1:k-1} \bigg )A_\ell J_{\ell-1}^{\boot, 1}\eqsp,
$$
and
\begin{align}
\PEb[\|D_{5,3}^\boot \|^2] 
&\lesssim \frac{1}{n}\sum_{\ell=n+1}^{2n-1} \frac{\alpha_{\ell} \log{n}}{a} \bConst{A}^2 \biggl(\frac{\alpha_{\ell}^2 \supconsteps^2 \bConst{A}^2}{a^2} + \bConst{A}^4 \prod_{j = 1}^{\ell-1} (1 - a \alpha_j/4)^2 \norm{\theta_0-\thetas}^2 \biggr) \\ 
&\lesssim \frac{\bConst{A}^4 \supconsteps^2 \log{n}}{n^{3/2}a^3} + \frac{\bConst{A}^6\log{n}}{na}  \exp\bigl\{ -(a/2)\sum_{j=1}^{n}\alpha_{j} \bigr\} \norm{\theta_0 - \thetas}^2\eqsp.
\end{align}
It remains to upper bound $D_{5,4}^\boot$. Proceeding as above, we change the summation order, and obtain
$$
D_{5,4}^\boot  = \frac{1}{\sqrt{n}} \sum_{\ell = n+1}^{2n-1} \alpha_\ell (w_\ell - 1)  \bigg( \sum_{k = \ell+1}^{2n} (\funcAw_k - \bA) \ProdB^\boot_{\ell+1:k-1} \bigg )A_\ell J_{\ell-1}^{\boot, 2}\eqsp. 
$$
Applying Minkowski's inequality, we get
\begin{align}
(\PEb[\|D_{5,4}^\boot \|^2])^{1/2} 
&\lesssim \frac{\bConst{A}^2}{\sqrt n}  \sum_{\ell = n+1}^{2n-1} \alpha_\ell \sum_{k = \ell+1}^{2n} (\PEb[\|\ProdB^\boot_{\ell+1:k-1} \|^2])^{1/2} (\PEb[\norm{J_{\ell-1}^{\boot, 2}}^2])^{1/2} \\
&\lesssim \frac{\bConst{A}^2}{\sqrt n} \sum_{\ell = n+1}^{2n-1} \alpha_\ell^{5/2} \sum_{k = \ell+1}^{2n} \exp\bigl\{-\frac{a}{4}\sum_{j=\ell+1}^{k-1}\alpha_{j} \bigr\}\frac{\supconsteps \bConst{A}^{2}}{a^{3/2}} \\
&\qquad\qquad\qquad\qquad + \frac{\bConst{A}^5}{\sqrt n} \sum_{\ell = n+1}^{2n-1} \alpha_{\ell} \sum_{k = \ell+1}^{2n-1} \exp\bigl\{-\frac{a}{4}\sum_{j=1}^{k-1}\alpha_{j} \bigr\}\norm{\theta_0 - \thetas} \\
&\lesssim \frac{\bConst{A}^4 \supconsteps}{\sqrt{n}a^{5/2}}\sum_{\ell = n+1}^{2n-1}\alpha_{\ell}^{3/2} + \frac{\bConst{A}^5}{\sqrt{n} a} \exp\bigl\{-\frac{a}{4}\sum_{j=1}^{n}\alpha_{j} \bigr\}\norm{\theta_0 - \thetas} \\
&\lesssim \frac{\bConst{A}^4 \supconsteps}{n^{1/4} a^{5/2}} + \frac{\bConst{A}^5}{\sqrt{n} a} \exp\bigl\{-\frac{a}{4}\sum_{j=1}^{n}\alpha_{j} \bigr\}\norm{\theta_0 - \thetas}\eqsp.
\end{align}
Now the result follows from the representation \eqref{eq:D_boot_def} and combinations of the above bounds for $D_{1}^\boot$ -- $D_{5}^\boot$.

\subsection{Proof of \Cref{th:theo_iid_boot_differences}}
\label{th:theo_iid_boot_differences_proof}
Consider the sequences of weights 
\begin{equation}
\label{eq:weights_one_pose_difference}
(w_1,\ldots,w_{i-1},w_{i},w_{i+1},\ldots,w_{2n}) \text{ and } (w_1,\ldots,w_{i-1},w_{i}^{\prime},w_{i+1},\ldots,w_{2n})\eqsp,
\end{equation}
which differs only in position $i$, $n+1 \leq i \leq 2n$, with $w_{i}^{\prime}$ being an independent copy of $w_{i}$. Consider the associated SA processes 
\begin{equation}
\label{eq:coupled_processes boot}
\begin{split}
\theta_{k}^\boot &= \theta_{k-1}^\boot - \alpha_{k} w_k \{ \funcA{Z_k} \theta_{k-1} - \funcb{Z_k} \}\eqsp, \quad k \geq n+1, \quad \theta_{n}^\boot = \theta_{n} \in \rset^{d} \\
\theta^{(\boot, i)}_{k} &= \theta^{(\boot, i)}_{k-1} - \alpha_{k} w_k^{(i)} \{ \funcA{Z_k} \theta^{(\boot, i)}_{k-1} - \funcb{Z_k} \}\eqsp, \quad k \geq n+1\eqsp, \quad \theta^{(\boot, i)}_{n} = \theta_{n} \in \rset^{d}\eqsp, 
\end{split}
\end{equation}
where $w_k^{(i)} = w_k$ for $k \neq i$ and $w_i^{(i)} = w_{i}^{\prime}$. Respective random variables $D^\boot$ and $D^{(\boot, i)}$ are based on the first and second sequences from \eqref{eq:weights_one_pose_difference}, respectively, and are constructed according to the equation \eqref{eq:error_decomposition_boot2}. From the above representations we easily observe that $\theta_{k}^\boot = \theta^{(\boot, i)}_{k}$ for $k < i$, moreover, 
\begin{align}
\label{eq:bound_coupled_pair boot}
\theta_{i}^\boot - \theta^{(\boot, i)}_{i} 
&= -\alpha_{i} (w_i - w_i^{\prime})\bigl\{ \funcA{\State_i})\theta_{i-1}^\boot - \funcb{\State_i}\bigr\} \\
&= -\alpha_{i} (w_i - w_i^{\prime})\bigl\{ \funcA{\State_i})(\theta_{i-1}^\boot - \theta_{i-1}) - \funcb{\State_i}\bigr\} \\
&= -\alpha_{i} (w_i - w_i^{\prime})\bigl\{ \funcA{\State_i})(\theta_{i-1}^\boot - \theta_{i-1}) + \tilde\funnoisew_{i}\bigr\}\eqsp.
\end{align}
where $\funnoisew_i= \funcnoise{\State_i}$ and $\funnoisew_{i}^{\prime}= \funcnoise{\State_i'}$. From the above representation we get, applying \Cref{lem:theta_b_k_theta_k_bound} and \eqref{eq:eps_tilde_bound}, that
\begin{align}
\label{eq:theta_i_diff_bound boot}
\{\PEb[\norm{\theta_{i}^\boot - \theta^{(\boot, i)}_{i}}^2]\}^{1/2} &\lesssim \alpha_{i} \bConst{A} \{\PEb[\norm{\theta_i^\boot - \theta_i}^2]\}^{1/2} + \alpha_i \bConst{A} \| \tilde \funnoisew_i\| \\
& \lesssim \alpha_i \bConst{A} \supconsteps + \frac{\alpha_{i}^{3/2} \supconsteps \bConst{A}^2}{a^{3/2}} + (\bConst{A}^{2} \vee 1) \sqrt{i} \prod_{j = 1}^{i} (1 - a \alpha_j/4)\norm{\theta_0-\thetas} \\
& \lesssim \frac{\alpha_{i}\supconsteps\bConst{A}^2}{a} + (\bConst{A}^{2} \vee 1) \sqrt{i} \prod_{j = 1}^{i} (1 - a \alpha_j/4)\norm{\theta_0-\thetas}\eqsp,
\end{align}
where for the last line we have additionally assumed that $\alpha_{i} \lesssim a$ for $i \geq n$. Moreover, for any $j > i$ one observes, expanding \eqref{eq:coupled_processes boot}, that
\begin{equation}
\label{eq:sync_coupled_chains boot}
\theta_{j}^\boot - \theta^{(\boot, i)}_{j} = \biggl\{\prod_{k=i+1}^{j}(\Id - \alpha_{k} w_k\funcA{\State_k})\biggr\}(\theta_{i}^\boot - \theta^{(\boot, i)}_{i}) = \ProdB^\boot_{i+1:j}\,(\theta_{i}^\boot - \theta^{(\boot, i)}_{i})\eqsp.
\end{equation}
Thus, similarly to \eqref{eq:D_D_i_bound}, we obtain that 
\begin{equation}
\label{eq:D_D_i_bound_bootstrap}
\{\PEb[\norm{D^{\boot} - D^{(\boot, i)}}^2]\}^{1/2} \leq \sum_{j=1}^{5}\{\PEb[\norm{D_{j}^{\boot} - D_{j}^{(\boot, i)}}^2]\}^{1/2}\eqsp,
\end{equation}
and bound the respective differences separately. By the construction of the process above, we note that $D_{1}^{\boot} = D_{1}^{(\boot, i)}$. Proceeding further, and using the equation \eqref{eq:error_decomposition_boot2}, we obtain that
\begin{align}
\{\PEb[\norm{D_2^{\boot} - D_2^{(\boot, i)}}^2]\}^{1/2} 
&= \frac{1}{\sqrt{n}\alpha_{2n}} \bigl\{\PEb[\norm{\theta_{2n}^\boot - \theta^{(\boot, i)}_{2n}}^2]\bigr\}^{1/2} \\
&\leq \frac{1}{\sqrt{n}\alpha_{2n}} \bigl\{\PEb[\norm{\ProdB^\boot_{i+1:2n}}^2]\bigr\}^{1/2} \bigl\{\PEb[\norm{\theta_{i}^\boot - \theta^{(\boot, i)}_{i}}^2]\bigr\}^{1/2} \\
&\lesssim \frac{\alpha_{i} \supconsteps \bConst{A}^2}{\sqrt{n}\alpha_{2n} a} \exp\bigl\{-\frac{a}{4}\sum_{j=i+1}^{2n}\alpha_{j}\bigr\} + \frac{(\bConst{A}^2 \vee 1) \sqrt{i}}{\sqrt{n}\alpha_{2n}} \exp\bigl\{-\frac{a}{4}\sum_{j=1}^{2n}\alpha_{j}\bigr\}\norm{\theta_0 - \thetas}\eqsp.
\end{align}
Thus, taking sum for $i$ from $n+1$ to $2n$, and applying \Cref{lem:summ_alpha_k}, we get that 
\begin{align}
\sum_{i=n+1}^{2n}\{\PEb[\norm{D_2^{\boot} - D_2^{(\boot, i)}}^2]\}^{1/2} &\lesssim \sum_{i=n+1}^{2n} \frac{\alpha_{i} \supconsteps \bConst{A}^2}{\sqrt{n}\alpha_{2n} a} \exp\bigl\{-\frac{a}{4}\sum_{j=i+1}^{2n}\alpha_{j}\bigr\} \\
&\qquad\qquad +\sum_{i=n+1}^{2n} \frac{(\bConst{A}^2 \vee 1) \sqrt{i}}{\sqrt{n}\alpha_{2n}} \exp\bigl\{-\frac{a}{4}\sum_{j=1}^{2n}\alpha_{j}\bigr\}\norm{\theta_0 - \thetas} \\
&\lesssim \frac{\supconsteps \bConst{A}^2}{\sqrt{n}\alpha_{2n} a^2} + \frac{(\bConst{A}^2 \vee 1) n}{\alpha_{2n}} \exp\bigl\{-\frac{a}{4}\sum_{j=1}^{2n}\alpha_{j}\bigr\}\norm{\theta_0 - \thetas} \\
& \lesssim \frac{\supconsteps \bConst{A}^2}{a^2} + (\bConst{A}^2 \vee 1) n^{3/2} \exp\bigl\{-\frac{a}{4}\sum_{j=1}^{2n}\alpha_{j}\bigr\}\norm{\theta_0 - \thetas} \\
& \label{eq:D_2_boot_final_diff}
\lesssim \frac{\supconsteps \bConst{A}^2}{a^2} + (\bConst{A}^2 \vee 1) n^{3/4} \exp\bigl\{-\frac{a}{4}\sum_{j=1}^{n}\alpha_{j}\bigr\}\norm{\theta_0 - \thetas}\eqsp.
\end{align}
Here in the last line above we used a particular form $\alpha_{k} = c_{0}/\sqrt{k}$, and relied on the bound
\[
n^{3/4}\exp\bigl\{-\frac{a}{4}\sum_{j=n+1}^{2n}\alpha_{j}\bigr\} \leq 1\eqsp,
\]
which is guaranteed by the lower bound on the trajectory length $n$ of the form
\[
\frac{\sqrt{n}}{\log{n}} \geq \frac{3}{2(\sqrt{2}-1) a c_{0}}\eqsp.
\]
The latter condition is guaranteed by \Cref{assum:step-size-bootstrap}. Now we proceed with $D_3^{\boot} - D_3^{(\boot, i)}$. Using its definition in \eqref{eq:error_decomposition_boot2}, we get
\begin{align}
D_3^{\boot} - D_3^{(\boot, i)} = \frac{1}{\sqrt{n}}(w_i - w_i^{\prime}) \funcAw_i (\theta_{i-1}^\boot - \thetas) + \frac{1}{\sqrt{n}}\sum_{k=i+1}^{2n}(w_k-1) \funcAw_k (\theta_{k-1}^\boot - \theta_{k-1}^{(\boot,i)})\eqsp. 
\end{align}
Since the latter term is a martingale-difference, we obtain that 
\begin{align}
\PEb[\norm{D_3^{\boot} - D_3^{(\boot, i)}}^2] 
&\lesssim \frac{\bConst{A}^2}{n}\PEb[\norm{\theta_{i-1}^\boot - \thetas}^2] + \frac{\bConst{A}^2}{n}\sum_{k=i+1}^{2n}\PEb[\norm{\theta_{k-1}^\boot - \theta_{k-1}^{(\boot,i)}}^2] \\
&\lesssim \frac{\bConst{A}^2}{n}\PEb[\norm{\theta_{i-1}^\boot - \theta_{i-1}}^2] + \frac{\bConst{A}^2}{n}\norm{\theta_{i-1} - \thetas}^2 + \frac{\bConst{A}^2}{n}\sum_{k=i+1}^{2n}\PEb[\norm{\theta_{k-1}^\boot - \theta_{k-1}^{(\boot,i)}}^2] \\
&\lesssim \frac{\bConst{A}^2}{n}\PEb[\norm{\theta_{i-1}^\boot - \theta_{i-1}}^2] + \frac{\bConst{A}^2}{n}\norm{\theta_{i-1} - \thetas}^2 + \frac{\bConst{A}^2}{n} \sum_{k=i+1}^{2n} \PEb[\norm{\ProdB^\boot_{i+1:k-1}}^2\norm{\theta_{i}^\boot - \theta^{(\boot, i)}_{i}}^2]\eqsp.
\end{align}
Hence we obtain, using \eqref{eq:theta_i_diff_bound boot} together with the definition of $\Omega_{1}$, that 
\begin{equation}
\label{eq:D_3_boot_i_bound}
\begin{split}
&\PEb[\norm{D_3^{\boot} - D_3^{(\boot, i)}}^2] 
\lesssim \frac{\alpha_{i-1}\supconsteps^2\bConst{A}^4}{n a^3} + \frac{\alpha_{i-1} \supconsteps^2 \bConst{A}^2 \log^2{n}}{a n} + \frac{\bConst{A}^2}{n} \exp\bigl\{-a \sum_{\ell=1}^{i}\alpha_\ell\bigr\} \norm{\theta_0 - \thetas}^2 \\
& \qquad + \underbrace{\frac{\bConst{A}^2}{n} \left(\frac{\alpha_{i}^2\supconsteps^2\bConst{A}^4}{a^2} + (\bConst{A}^{4} \vee 1) i \prod_{j = 1}^{i} (1 - a \alpha_j/4)^2\norm{\theta_0-\thetas}^2\right)\sum_{k=i+1}^{2n}\exp\bigl\{-\frac{a}{2}\sum_{j=i+1}^{k-1}\alpha_{j}\bigr\}}_{T_1}\eqsp.
\end{split}
\end{equation}
Considering the latter term in the sum, we obtain 
\begin{align}
T_1 &\lesssim \frac{\alpha_i^2 \supconsteps^2 \bConst{A}^6}{n \alpha_{2n} a^2} \sum_{k=i+1}^{2n}\alpha_{k} \exp\bigl\{-\frac{a}{2}\sum_{j=i+1}^{k-1}\alpha_{j}\bigr\} + \frac{(\bConst{A}^6\vee 1)i}{n}\sum_{k=i+1}^{2n}\exp\bigl\{-\frac{a}{2}\sum_{j=1}^{k-1}\alpha_{j}\bigr\} \norm{\theta_0-\thetas}^2 \\
&\lesssim \frac{\alpha_{i} \supconsteps^2 \bConst{A}^6}{n a^3} + \frac{(\bConst{A}^6\vee 1) i}{n \alpha_{2n}}\exp\bigl\{-\frac{a}{2}\sum_{j=1}^{i}\alpha_{j}\bigr\} \norm{\theta_0-\thetas}^2\eqsp.
\end{align}
Thus, summing the equations \eqref{eq:D_3_boot_i_bound} for $i$ from $n+1$ to $2n$, we obtain that 
\begin{align}
\sum_{i=n+1}^{2n}\{\PEb[\norm{D_3^{\boot} - D_3^{(\boot, i)}}^2]\}^{1/2} 
&\lesssim \sum_{i=n+1}^{2n}\left(\frac{\sqrt{\alpha_{i-1}}\supconsteps\bConst{A}^2}{\sqrt{n}a^{3/2}} + \frac{\sqrt{\alpha_{i-1}}\supconsteps\bConst{A}\log{n}}{\sqrt{a n}} + \frac{\sqrt{\alpha_i} \supconsteps \bConst{A}^3}{\sqrt{n}a^{3/2}}\right) \\
&\qquad + \frac{\bConst{A}^3 \vee 1}{\sqrt{n}}\sum_{i=n+1}^{2n}\exp\bigl\{-\frac{a}{2} \sum_{\ell=1}^{i}\alpha_\ell\bigr\} \norm{\theta_0 - \thetas} \\
& \label{eq:D_3_boot_final_diff}
\lesssim \frac{(\bConst{A}^3 \vee 1)\supconsteps}{a^{3/2}} n^{1/4}\log{n} + \frac{\bConst{A}^3 \vee 1}{a} \exp\bigl\{-\frac{a}{2} \sum_{\ell=1}^{n}\alpha_\ell\bigr\}\norm{\theta_0 - \thetas}\eqsp.
\end{align}
Using now the definition of $D_{4}^{\boot}$ in \eqref{eq:error_decomposition_boot2} and Minkowski's inequality, we write 
\begin{align}
\{\PEb[\norm{D_4^{\boot} - D_4^{(\boot, i)}}^2]\}^{1/2} 
&\leq \frac{1}{\sqrt{n}} \{\PEb[\norm{\theta_{i}^\boot - \theta^{(\boot, i)}_{i}}^2]\}^{1/2} \sum_{k=i+1}^{2n}\left(\frac{1}{\alpha_k} - \frac{1}{\alpha_{k-1}}\right)\exp\bigl\{-\frac{a}{4}\sum_{j=i+1}^{k-1}\alpha_{j}\bigr\} \\
&\lesssim \frac{\alpha_{i}\supconsteps\bConst{A}^2}{\sqrt{n} a} \sum_{k=i+1}^{2n}\alpha_{k}\exp\bigl\{-\frac{a}{4}\sum_{j=i+1}^{k-1}\alpha_{j}\bigr\} \\
&\qquad + \frac{(\bConst{A}^{2} \vee 1)\sqrt{i}}{\sqrt{n}} \sum_{k=i+1}^{2n} \alpha_{k} \exp\bigl\{-\frac{a}{4}\sum_{j=1}^{k-1}\alpha_{j}\bigr\}\norm{\theta_0-\thetas} \\
&\lesssim \frac{\alpha_{i}\supconsteps\bConst{A}^2}{\sqrt{n} a^2} + \frac{(\bConst{A}^{2} \vee 1)\sqrt{i}}{\sqrt{n} a} \exp\bigl\{-\frac{a}{4}\sum_{j=1}^{i}\alpha_{j}\bigr\}\norm{\theta_0-\thetas}\eqsp.
\end{align}
Thus, taking sum for $i$ from $n+1$ to $2n$, we get 
\begin{align}
\sum_{i=n+1}^{2n}\{\PEb[\norm{D_4^{\boot} - D_4^{(\boot, i)}}^2]\}^{1/2} 
&\lesssim \sum_{i=n+1}^{2n} \frac{\alpha_{i}\supconsteps\bConst{A}^2}{\sqrt{n} a^2} + \sum_{i=n+1}^{2n}\frac{(\bConst{A}^{2} \vee 1)\sqrt{i}}{\sqrt{n} a} \exp\bigl\{-\frac{a}{4}\sum_{j=1}^{i}\alpha_{j}\bigr\}\norm{\theta_0-\thetas} \\
& \label{eq:D_4_boot_final_diff}
\lesssim \frac{\supconsteps\bConst{A}^2}{a^2} + \frac{(\bConst{A}^{2} \vee 1)\sqrt{n}}{a^2}\exp\bigl\{-\frac{a}{4}\sum_{j=1}^{n}\alpha_{i}\bigr\} \norm{\theta_0-\thetas}\eqsp.
\end{align}
Similarly, with the definition of $D_{5}^{\boot}$ in \eqref{eq:error_decomposition_boot2}, we write
\begin{align}
D_5^{\boot} - D_5^{(\boot, i)} 
&= \frac{1}{\sqrt{n}}\sum_{k=i+1}^{2n} (\funcAw_k - \bA) (\theta_{k-1}^\boot - \theta_{k-1}^{(\boot,i)}) = \frac{1}{\sqrt{n}} \left\{\sum_{k=i+1}^{2n} (\funcAw_k - \bA) \ProdB^\boot_{i+1:k-1}\right\} (\theta_{i}^\boot - \theta^{(\boot, i)}_{i}) \\
&= \underbrace{\frac{1}{\sqrt{n}} \left\{\sum_{k=i+1}^{2n} (\funcAw_k - \bA)\ProdB_{i+1:k-1}\right\} (\theta_{i}^\boot - \theta^{(\boot, i)}_{i})}_{T_2} \\
&\qquad\qquad + \underbrace{\frac{1}{\sqrt{n}} \left\{\sum_{k=i+1}^{2n} (\funcAw_k - \bA)(\ProdB^\boot_{i+1:k-1} - \ProdB_{i+1:k-1})\right\} (\theta_{i}^\boot - \theta^{(\boot, i)}_{i})}_{T_3}\eqsp.
\end{align}
Now we bound the terms $T_2$ and $T_3$ separately. Indeed, for the term $T_2$ we get, applying the definition of the set $\Omega_4$, that
\begin{align}
\PEb[\norm{T_2}^2] 
&\lesssim \frac{1}{n}\left(\frac{\bConst{A}^2\log{n}}{a \alpha_{i}} + \bConst{A}^2\log^{2}{n}\right) \PEb[\norm{\theta_{i}^\boot - \theta^{(\boot, i)}_{i}}^2] \\
&\lesssim \frac{\bConst{A}^2\log{n}}{n a \alpha_{i}} \PEb[\norm{\theta_{i}^\boot - \theta^{(\boot, i)}_{i}}^2]\eqsp.
\end{align}
In the above bounds we have used that $\alpha_{\ell} \leq \frac{1}{a \log{n}}$. For the term $T_3$ we get, applying \Cref{lem:product_coupling_lemma}, that for any vector $v \in \rset^{d}$, 
\begin{align}
\sum_{k=i+1}^{2n} (\funcAw_k - \bA)(\ProdB^\boot_{i+1:k-1} - \ProdB_{i+1:k-1}) v 
&= \sum_{k=i+1}^{2n} \sum_{\ell=i+1}^{k-1} (\funcAw_k - \bA) \ProdB_{\ell+1:k-1} \alpha_{\ell}(w_{\ell}-1)\funcAw_{\ell} \ProdB^\boot_{i+1:\ell-1}\eqsp v \\
&= \sum_{\ell=i+1}^{2n-1} \alpha_{\ell}(w_{\ell}-1) \biggl\{\sum_{k=\ell+1}^{2n}  (\funcAw_k - \bA) \ProdB_{\ell+1:k-1} \biggr\} \funcAw_{\ell} \ProdB^\boot_{i+1:\ell-1}\eqsp v\eqsp.
\end{align}
From the above representation we obtain, using the definition of the set $\Omega_4$, that 
\begin{align}
\PEb[\norm{T_3}^2] 
&\lesssim \frac{\bConst{A}^2}{n}\sum_{\ell=i+1}^{2n-1}\alpha_{\ell}^2\left(\frac{\bConst{A}^2\log{n}}{a \alpha_{\ell}} + \bConst{A}^2\log^{2}{n}\right)\exp\biggl\{-\frac{a}{4}\sum_{j=i+1}^{\ell-1}\alpha_{j}\biggr\} \PEb[\norm{\theta_{i}^\boot - \theta^{(\boot, i)}_{i}}^2] \\
&\lesssim \frac{\bConst{A}^4}{n} \sum_{\ell=i+1}^{2n-1}\frac{\alpha_{\ell}\log{n}}{a} \exp\biggl\{-\frac{a}{4}\sum_{j=i+1}^{\ell-1}\alpha_{j}\biggr\} \PEb[\norm{\theta_{i}^\boot - \theta^{(\boot, i)}_{i}}^2] \\
&\lesssim \frac{\bConst{A}^4 \log{n}}{n a^2}\eqsp \PEb[\norm{\theta_{i}^\boot - \theta^{(\boot, i)}_{i}}^2]\eqsp. 
\end{align}
Combining the above bounds, we obtain that 
\begin{align}
\PEb[\norm{D_5^{\boot} - D_5^{(\boot, i)}}^2] &\lesssim 
\frac{\bConst{A}^2\log{n}}{na}\left(\frac{1}{\alpha_{i}} + \frac{\bConst{A}^2}{a}\right)\PEb[\norm{\theta_{i}^\boot - \theta^{(\boot, i)}_{i}}^2] \\
&\lesssim 
\frac{\bConst{A}^2\log{n}}{na \alpha_{i}} \PEb[\norm{\theta_{i}^\boot - \theta^{(\boot, i)}_{i}}^2]\eqsp,
\end{align}
where we have additionally used that $\alpha_{i} \leq a/\bConst{A}^2$. Thus, using the upper bound \eqref{eq:theta_i_diff_bound boot}, we obtain that
\begin{align}
&\sum_{i=n+1}^{2n}\{\PEb[\norm{D_5^{\boot} - D_5^{(\boot, i)}}^2]\}^{1/2} \\ 
&\qquad \lesssim \sum_{i=n+1}^{2n}\frac{\bConst{A}\sqrt{\log{n}}}{\sqrt{\alpha_{i} a n}} \frac{\alpha_{i}\supconsteps\bConst{A}^2}{a} + \sum_{i=n+1}^{2n} \frac{\bConst{A}\sqrt{\log{n}}}{\sqrt{\alpha_{i} a n}}(\bConst{A}^{2} \vee 1) \sqrt{i} \prod_{j = 1}^{i} (1 - a \alpha_j/4)\norm{\theta_0-\thetas} \\
&\qquad \lesssim \frac{\bConst{A}^3 \supconsteps}{a^{3/2}}n^{1/4}\sqrt{\log{n}} \\
&\qquad \qquad \qquad \qquad + \frac{(\bConst{A}^{3} \vee 1)\sqrt{\log{n}}}{a^{1/2}\alpha_{2n}^{3/2}} \exp\bigl\{-\frac{a}{4}\sum_{j=1}^{n}\alpha_{j}\bigr\}\biggl\{\sum_{i=n+1}^{2n}\alpha_{i} \prod_{j = n+1}^{i}(1 - a \alpha_j/4)\biggr\}\norm{\theta_0-\thetas} \\
& \label{eq:D_5_boot_final_diff}
\qquad \lesssim \frac{\bConst{A}^3 \supconsteps}{a^{3/2}}n^{1/4}\sqrt{\log{n}} + \frac{(\bConst{A}^{3} \vee 1) n^{3/4} \sqrt{\log{n}}}{a^{3/2}} \exp\bigl\{-\frac{a}{4}\sum_{j=1}^{n}\alpha_{j}\bigr\} \norm{\theta_0-\thetas}\eqsp. 
\end{align}
Now it remains to combine the bounds outlined above in \eqref{eq:D_2_boot_final_diff}, \eqref{eq:D_3_boot_final_diff}, \eqref{eq:D_4_boot_final_diff}, and \eqref{eq:D_5_boot_final_diff}, and the statement follows.

\section{Proof of stability of random matrix product}
\label{appendix:tehnical}
\subsection{Proof of \Cref{prop:hurwitz_stability}}
\label{proof:hurwitz_stability}
The fact that there exists a unique matrix $Q$, such that the following Lyapunov equation holds:
\begin{equation}
\label{eq:Lyapunov_equation}
\bA^\top Q + Q \bA = P\eqsp,
\end{equation}
follows directly from \cite[Lemma~$9.1$, p. 140]{poznyak:control}. In order to show the second part of the statement, we note that for any non-zero vector $x \in \rset^{d}$, we have
\begin{align}
\frac{x^{\top}(\Id - \alpha \bA)^{\top}Q(\Id - \alpha \bA)x}{x^{\top} Q x} 
&= 1 - \alpha \frac{x^{\top}(\bA^{\top}Q + Q\bA)x}{x^{\top} Q x} + \alpha^2 \frac{x^{\top} \bA^{\top} Q \bA x}{x^{\top} Q x} \\
&= 1 - \alpha \frac{x^{\top} P x}{x^{\top} Q x} + \alpha^2\, \frac{x^{\top} \bA^{\top} Q \bA x}{x^{\top} Q x} \\
&\leq 1 - \alpha \frac{\lambda_{\min}(P)}{\normop{Q}} + \alpha^2\, \frac{\norm{\bA}[Q]^2}{\lambda_{\min}(Q)} \\
&\leq 1 - \alpha a\eqsp,
\end{align}
where we set 
\[
a = \frac{1}{2} \frac{\lambda_{\min}(P)}{\lambda_{\max}(Q)}\eqsp,
\]
and used the fact that $\alpha \leq \alpha_{\infty}$, where $\alpha_{\infty}$ is defined in \eqref{eq:alpha_infty_def}.

\subsection{Proofs for auxiliary results on products of random matrix}
In order to bound the moment $\PE[\norm{\theta_k - \thetas}^p]$, we first prove a stability results on the products of random matrices $\ProdB_{m:k}$ arising in the LSA recursion. Towards this aim we first introduce some notations and definitions. For a matrix $\MatB \in \rset^{d \times d}$ we denote by $(\sigma_\ell(\MatB))_{\ell=1}^d$ its singular values. For $\qexponent \geq 1$, the Shatten $\qexponent$-norm of $B$ is denoted by $\norm{\MatB}[\qexponent] = \{\sum_{\ell=1}^d \sigma_\ell^\qexponent (\MatB)\}^{1/\qexponent}$. For $\qexponent, \ppexponent \geq 1$ and a random matrix $\X$ we write $\norm{\X}[\qexponent,\ppexponent] = \{ \PE[\norm{\X}[\qexponent]^\ppexponent] \}^{1/\ppexponent}$. Our proof technique is based on the stability results arising in \cite{huang2020matrix}, see also \cite{durmus2022finite}.

\begin{lemma}[Proposition~15 in \cite{durmus2022finite}]
\label{th:general_expectation}
Let $\sequence{\Y}[\ell][\nset]$ be an independent sequence and $P$ be a positive definite matrix. Assume that for each $\ell \in \nset$ there exist $m_\ell \in (0,1)$  and $\sigma_{\ell} > 0$ such that \(\norm{\PE[\Y_\ell]}[P]^2  \leq 1 - m_\ell\) and \(\norm{\Y_\ell - \PE[\Y_\ell]}[P] \leq \sigma_{\ell}\) almost surely.  Define $\Zbf_k = \prod_{\ell = 0}^k \Y_\ell= \Y_k \Zbf_{k-1}$, for $k \geq 1$ and starting from $\Zbf_0$. Then, for any $2 \le q \leq p$ and $k \geq 1$,
\begin{equation} 
\label{eq:gen_expectation}
\norm{\Zbf_k}[p,q]^2 \leq \kappa_P \prod_{\ell=1}^{k} (1- m_\ell + (p-1)\sigma_{\ell}^2) \norm{P^{1/2}\Zbf_0 P^{-1/2}}[p, q]^2 \eqsp,
\end{equation}
where we recall that $\kappa_P = \lambda_{\sf min}^{-1}( P )\lambda_{\sf max}( P )$.
\end{lemma}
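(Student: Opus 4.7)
The plan is to reduce the claim, via a similarity transformation by $P^{1/2}$, to a Schatten-norm estimate with operator-norm assumptions on the factors, and then to iterate a one-step recursion derived from the Ball--Carlen--Lieb uniform smoothness inequality for Schatten $p$-norms.

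First I would introduce the conjugated variables $\tilde{\Y}_\ell = P^{1/2} \Y_\ell P^{-1/2}$ and $\tilde{\Zbf}_k = P^{1/2} \Zbf_k P^{-1/2}$, so that $\tilde{\Zbf}_k = \tilde{\Y}_k \tilde{\Zbf}_{k-1}$. By the very definition of the $P$-norm, $\|\PE[\tilde{\Y}_\ell]\| \le \sqrt{1 - m_\ell}$ and $\|\tilde{\Y}_\ell - \PE[\tilde{\Y}_\ell]\| \le \sigma_\ell$ almost surely in the operator norm. Moreover, for any matrix $M$ and any unitarily invariant norm, $\|M\|_p \le \|P^{-1/2}\|\,\|P^{1/2}\|\,\|P^{1/2} M P^{-1/2}\|_p$, so that $\|\Zbf_k\|_p^2 \le \kappa_P\, \|\tilde{\Zbf}_k\|_p^2$; this accounts for the factor $\kappa_P$ on the right-hand side of \eqref{eq:gen_expectation}, and reduces the problem to bounding $\|\tilde{\Zbf}_k\|_{p,q}$.

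Second, the key one-step estimate uses the Ball--Carlen--Lieb smoothness inequality: for $p \ge 2$, any $\sigma$-algebra $\mathcal{G}$, and any $B$ with $\PE[B \mid \mathcal{G}] = 0$,
\begin{equation*}
\PE\bigl[ \|A + B\|_p^p \mid \mathcal{G} \bigr]^{2/p} \le \|A\|_p^2 + (p-1)\, \PE\bigl[ \|B\|_p^p \mid \mathcal{G} \bigr]^{2/p}.
\end{equation*}
Conditioning on $\mathcal{F}_{k-1} = \sigma(\Y_1,\ldots,\Y_{k-1})$ and applying this with $A = \PE[\tilde{\Y}_k]\tilde{\Zbf}_{k-1}$ and the conditionally mean-zero $B = (\tilde{\Y}_k - \PE[\tilde{\Y}_k])\tilde{\Zbf}_{k-1}$, combined with the submultiplicativity bound $\|MN\|_p \le \|M\|\,\|N\|_p$, yields the one-step recursion
\begin{equation*}
\PE\bigl[\|\tilde{\Zbf}_k\|_p^p \mid \mathcal{F}_{k-1} \bigr]^{2/p} \le \bigl(1 - m_k + (p-1)\sigma_k^2\bigr)\, \|\tilde{\Zbf}_{k-1}\|_p^2.
\end{equation*}

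Third, to transfer this bound from the natural exponent $q = p$ to a general $2 \le q \le p$, I would raise both sides to the power $q/2 \le p/2$ and then apply Jensen's inequality on the left, using the concavity of $x \mapsto x^{q/p}$ for $q/p \le 1$, to obtain
\begin{equation*}
\PE\bigl[\|\tilde{\Zbf}_k\|_p^q \mid \mathcal{F}_{k-1} \bigr] \le \bigl(1 - m_k + (p-1)\sigma_k^2\bigr)^{q/2}\, \|\tilde{\Zbf}_{k-1}\|_p^q.
\end{equation*}
Iterating this inequality over $k$, taking total expectations, raising to $2/q$, and combining with the similarity bound $\|\Zbf_k\|_p^2 \le \kappa_P \|\tilde{\Zbf}_k\|_p^2$ then yields \eqref{eq:gen_expectation}. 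The main obstacle is the Ball--Carlen--Lieb smoothness inequality itself, which is a matrix analogue of the uniform convexity of classical $\ell^p$-spaces and requires nontrivial functional-analytic input (Clarkson-type inequalities for the Schatten class); all remaining steps are routine bookkeeping around conditioning, submultiplicativity, and Jensen's inequality.
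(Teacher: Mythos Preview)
Your proposal is correct and follows essentially the same approach as the source the paper cites: the paper does not give its own proof of this lemma but refers to \cite{durmus2022finite}, whose argument (based on \cite{huang2020matrix}) is precisely the conjugation by $P^{1/2}$ followed by the uniform smoothness (Ball--Carlen--Lieb) recursion you outline. The reduction from $q=p$ to general $2\le q\le p$ via Jensen applied to $x\mapsto x^{q/p}$ is also the standard device used there.
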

Now we aim to bound $\ProdB_{m:k}$ defined in \eqref{eq:prod_rand_matr} using \Cref{th:general_expectation}. We identify the latter with
$\prod_{\ell=m}^{k}\Y_\ell$, where $\Y_\ell = \Id - \alpha_\ell \Am_\ell, \ell \geq 1$, and $\Y_0 = \Id$. Applying the bound \eqref{eq:contractin_q_norm}, we get $\norm{\PE[\Y_\ell]}[Q]^2 = \norm{\Id - \alpha_{\ell} \bA }[Q]^2 \leq 1 - a \alpha_{\ell}$. Further, assumption \Cref{assum:noise-level} implies that almost surely,
\[
\norm{\Y_\ell - \PE[\Y_\ell]}[Q] =  \alpha_\ell \norm{ \Am_\ell- \bA}[Q] \leq   \alpha_\ell \sqrt{\qcond} \bConst{A}  = b_{Q} \alpha_\ell \eqsp.
\]
Therefore, \eqref{eq:gen_expectation} holds with $m_\ell = a \alpha_\ell$ and  $\sigma_{\ell} = b_Q \alpha_\ell$. As $\norm{\Id}[p] = d^{1/p}$, we obtain the following corollary.

\begin{corollary}
\label{cor:norm_Gamma_m_n}
Assume \Cref{assum:iid} and \Cref{assum:noise-level}. Then, for any $\alpha_\ell \in [0, \alpha_{\infty}]$, $2 \le q \le p$, and $1 \leq m \leq k$, it holds
\begin{equation}
\label{eq:concentration iid}
\PE^{1/q}\left[ \normop{\ProdB_{m:k}}^{q} \right]  
\leq  \norm{\ProdB_{m:k}}[p,q] 
\leq \sqrt{\qcond} d^{1/p} \prod_{\ell=m}^{k}(1 - a \alpha_\ell + (p-1) b_Q^2 \alpha_\ell^2) \eqsp,
\end{equation}
where $\alpha_\infty$ was defined in \eqref{eq:alpha_infty_def}, and $b_{Q} =  \sqrt{\qcond} \bConst{A}$.
\end{corollary}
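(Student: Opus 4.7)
The proof splits into two inequalities. The first, $\PE^{1/q}[\normop{\ProdB_{m:k}}^q] \leq \norm{\ProdB_{m:k}}[p,q]$, is immediate from the pointwise dominance $\normop{B} \leq \norm{B}[p]$ --- the operator norm equals the largest singular value, while the Schatten $p$-norm is the $\ell^p$-norm of the full singular value vector --- so that after raising to the $q$-th power, taking expectations, and extracting the $q$-th root the claim follows.

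For the main bound I would invoke Theorem \ref{th:general_expectation} on the factorization $\ProdB_{m:k} = \prod_{\ell=1}^{k-m+1}\Y_\ell$ with $\Y_\ell = \Id - \alpha_{m+\ell-1}\Am_{m+\ell-1}$, starting matrix $\Zbf_0 = \Id$, and Lyapunov matrix $P = Q$ supplied by Proposition \ref{prop:hurwitz_stability}. Independence of the factors is inherited from Assumption \ref{assum:iid}. Two per-factor ingredients must be verified. On the one hand, the contraction \eqref{eq:contractin_q_norm} from Proposition \ref{prop:hurwitz_stability}, applied at each $\alpha_i \in [0,\alpha_\infty]$, gives
$\norm{\PE[\Y_\ell]}[Q]^{2} = \norm{\Id - \alpha_{m+\ell-1}\bA}[Q]^{2} \leq 1 - a\alpha_{m+\ell-1}$,
so the role of $m_\ell$ in Theorem \ref{th:general_expectation} is played by $a\alpha_{m+\ell-1}$. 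On the other hand, the boundedness of $\zmfuncA{z}$ from Assumption \ref{assum:noise-level} combined with the standard comparison $\norm{B}[Q] \leq \sqrt{\qcond}\,\normop{B}$ between the $Q$-norm and the operator norm (which follows from $\lambda_{\min}(Q)\normop{\,\cdot\,}^{2} \leq \norm{\,\cdot\,}[Q]^{2} \leq \lambda_{\max}(Q)\normop{\,\cdot\,}^{2}$) yields
$\norm{\Y_\ell - \PE[\Y_\ell]}[Q] = \alpha_{m+\ell-1}\norm{\Am_{m+\ell-1} - \bA}[Q] \leq \sqrt{\qcond}\,\bConst{A}\,\alpha_{m+\ell-1} = b_Q\,\alpha_{m+\ell-1}$
almost surely, so the role of $\sigma_\ell$ is played by $b_Q \alpha_{m+\ell-1}$.

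Feeding these choices into the conclusion \eqref{eq:gen_expectation} of Theorem \ref{th:general_expectation}, using $\kappa_Q = \qcond$ and the deterministic identity $\norm{Q^{1/2}\Zbf_0 Q^{-1/2}}[p,q] = \norm{\Id}[p] = d^{1/p}$, and reindexing by $i = m+\ell-1$ produces the claimed bound after taking square roots. I do not anticipate any genuine obstacle: all the substance sits in Theorem \ref{th:general_expectation} (stability of products of independent near-contractive matrices in a weighted Schatten norm) and in the Lyapunov construction of Proposition \ref{prop:hurwitz_stability}; the remaining work here is simply the bookkeeping required to translate between the $Q$-norm used in the drift analysis and the operator/Schatten norms appearing in the statement.
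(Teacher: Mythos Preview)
Your proposal is correct and mirrors the paper's argument exactly: the paper also identifies $\ProdB_{m:k}$ with a product $\prod \Y_\ell$ where $\Y_\ell = \Id - \alpha_\ell \Am_\ell$, verifies $m_\ell = a\alpha_\ell$ via \eqref{eq:contractin_q_norm} and $\sigma_\ell = b_Q\alpha_\ell$ via \Cref{assum:noise-level}, then applies \Cref{th:general_expectation} with $\Zbf_0 = \Id$ and $\norm{\Id}[p] = d^{1/p}$. Your explicit mention of the first inequality (operator norm bounded by Schatten $p$-norm) and the $Q$-norm/operator-norm comparison is a touch more detailed than the paper's terse derivation, but the substance is identical.
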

\begin{corollary}
\label{cor:exp_bound_decay}
Assume \Cref{assum:iid}, \Cref{assum:noise-level}, and \Cref{assum:step-size}. Then for any $2 \leq q \leq \log{n}$, and any $k \geq n$, $1 \leq m \leq k$, it holds that 
\begin{equation}
\label{eq:concentration_iid_cor}
\PE^{1/q}\left[ \normop{\ProdB_{m:k}}^{q} \right]  
\leq  \sqrt{\qcond} \rme \exp\left\{-(a/2) \sum_{\ell=m}^{k}\alpha_\ell\right\} \eqsp,
\end{equation}
where $\alpha_\infty$ is defined in \eqref{eq:alpha_infty_def}. Moreover, 
\begin{equation}
\label{eq:concentration_iid_product}
\PE^{1/q}\left[ \normop{\ProdB_{m:k}}^{q} \right]  
\leq \sqrt{\qcond} \rme \prod_{\ell=m}^{k}\bigl(1 - \frac{a \alpha_{\ell}}{4}\bigr)
\end{equation}
\end{corollary}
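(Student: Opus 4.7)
The plan is to derive \Cref{cor:exp_bound_decay} as a direct consequence of \Cref{cor:norm_Gamma_m_n} with a careful choice of the Schatten exponent $p$.

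First, I would set $p = \log n$. Because $q \leq \log n$ by hypothesis and $q \geq 2$, this satisfies $p \geq q \geq 2$. The dimensional prefactor in \eqref{eq:concentration iid} is then tamed: from \Cref{assum:step-size} we have $n \geq d$, so $d^{1/p} = d^{1/\log n} \leq n^{1/\log n} = \rme$. Substituting into \eqref{eq:concentration iid} yields
\[
\PE^{1/q}[\normop{\ProdB_{m:k}}^q] \leq \sqrt{\qcond}\, \rme \prod_{\ell=m}^k \bigl(1 - a \alpha_\ell + (\log n - 1)\, \qcond \bConst{A}^2\, \alpha_\ell^2 \bigr).
\]

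Next, I would apply $1 + y \leq \rme^y$ factor by factor, yielding
\[
\prod_{\ell=m}^k \bigl(1 - a\alpha_\ell + (\log n - 1)\, \qcond \bConst{A}^2\, \alpha_\ell^2\bigr) \leq \exp\Bigl\{-a\sum_{\ell=m}^k \alpha_\ell + (\log n - 1)\, \qcond \bConst{A}^2 \sum_{\ell=m}^k \alpha_\ell^2\Bigr\}.
\]
The central step is then the dominance inequality
\[
(\log n - 1)\, \qcond \bConst{A}^2 \sum_{\ell=m}^k \alpha_\ell^2 \leq \frac{a}{2}\sum_{\ell=m}^k \alpha_\ell,
\]
valid for every $1 \leq m \leq k$ with $k \geq n$. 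This is where \eqref{eq:sample_size_bound} of \Cref{assum:step-size} enters. For $\gamma = 1/2$, $\sum_\ell \alpha_\ell^2$ grows only logarithmically in $k/m$ while $\sum_\ell \alpha_\ell$ grows as $\sqrt{k+1} - \sqrt m$; for $\gamma \in (1/2, 1)$, $\sum_\ell \alpha_\ell^2$ is uniformly bounded in $k$ while $\sum_\ell \alpha_\ell \gtrsim c_0\, n^{1-\gamma}/(1-\gamma)$. In either regime, the bounds imposed on $n$ by \eqref{eq:sample_size_bound} are engineered exactly so that this dominance holds, after a short case analysis separating $\sqrt m \leq \sqrt{k+1}/2$ (where the first-order sum is comparable to $\sqrt k$, resp.~$n^{1-\gamma}$) from $\sqrt m > \sqrt{k+1}/2$ (where $\log(k/m) \leq \log 4$ and the ratio is controlled by $\alpha_m \lesssim 1/\sqrt n$). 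Combining yields \eqref{eq:concentration_iid_cor}.

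For the product form \eqref{eq:concentration_iid_product}, I would factorize $\exp\{-(a/2)\sum_\ell \alpha_\ell\} = \prod_\ell \exp\{-(a/2)\alpha_\ell\}$ and apply $\rme^{-x} \leq 1 - x/2$ on $[0,1]$. Since $\alpha_\ell \leq c_0 \leq \alpha_\infty$ and $a\alpha_\infty \leq 1/2$ by \Cref{prop:hurwitz_stability}, we have $(a/2)\alpha_\ell \leq 1/4 \leq 1$, so $\rme^{-(a/2)\alpha_\ell} \leq 1 - (a/4)\alpha_\ell$, concluding the argument. The main obstacle is verifying the dominance inequality uniformly over $1 \leq m \leq k$: the arithmetic reduces to the explicit conditions in \eqref{eq:sample_size_bound}, but care is required because neither $\sum \alpha_\ell^2 / \sum \alpha_\ell \leq \alpha_m$ alone, nor a uniform upper bound on $\sum \alpha_\ell^2$, suffices when $m$ is unconstrained in $[1,k]$, which is exactly why the two-sided constraints of \eqref{eq:sample_size_bound} are needed.
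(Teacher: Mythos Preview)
Your proposal is correct and follows essentially the same approach as the paper: set $p = \log n$ in \Cref{cor:norm_Gamma_m_n}, use $n \geq d$ to get $d^{1/p} \leq \rme$, pass to the exponential via $1+y \leq \rme^y$, and verify the dominance inequality $(\log n)\,\qcond\bConst{A}^2\sum\alpha_\ell^2 \leq (a/2)\sum\alpha_\ell$ using the conditions in \eqref{eq:sample_size_bound}; the product form then follows from $\rme^{-x} \leq 1 - x/2$. The only cosmetic difference is in the case split: the paper divides at $m$ versus $n/2$ (and for $m < n/2$ further splits the sum at index $n$, so the tail $\ell > n$ is handled termwise), whereas you divide at $\sqrt{m}$ versus $\sqrt{k+1}/2$; both organizations reduce to the same arithmetic constraints on $n$.
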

\begin{proof}
We first apply the result of \Cref{cor:norm_Gamma_m_n}. Indeed, for $k \geq n$, and any $2 \leq q \leq p$, it holds, setting $b_{Q} =  \sqrt{\qcond} \bConst{A}$, that 
\begin{align}
\PE^{1/q}\left[ \normop{\ProdB_{m:k}}^{q} \right] 
&\leq \sqrt{\qcond} d^{1/p} \prod_{\ell=m}^{k} (1 - a \alpha_\ell + (p-1) b_Q^2 \alpha_\ell^2) \\
&\leq \sqrt{\qcond} d^{1/p} \exp\left\{-a \sum_{\ell=m}^{k}\alpha_\ell + (p-1) b_Q^2 \sum_{\ell=m}^{k} \alpha_\ell^2  \right\}\eqsp.
\end{align}
Note that, setting $p = \log{n}$, and provided that $n$ satisfies \eqref{eq:sample_size_bound}, we easily obtain that, for $\ell \geq n/2$, 
\begin{equation}
\label{eq:step_size_linear_quadraic_relation}
(\log{n}) b_Q^2 \alpha_\ell^2 \leq a \alpha_\ell / 2\eqsp.
\end{equation}
Hence, for $m \geq n/2$, we have  
\begin{equation}
\label{eq:bound_large_m}
\PE^{1/q}\left[ \normop{\ProdB_{m:k}}^{q} \right] \leq \sqrt{\qcond} \rme \exp\bigl\{-\frac{a}{2}\sum_{\ell=m}^{k}\alpha_{\ell}\bigr\}\eqsp,
\end{equation}
and the statement follows. Suppose now that $m < n/2$. In such a case we have, applying \eqref{eq:step_size_linear_quadraic_relation}, that 
\begin{align}
\PE^{1/q}\left[ \normop{\ProdB_{m:k}}^{q} \right] 
&\leq \sqrt{\qcond} \rme \exp\left\{-a \sum_{\ell=m}^{k}\alpha_\ell + (\log{n}) b_Q^2 \sum_{\ell=m}^{k} \alpha_\ell^2  \right\} \\
&\leq \sqrt{\qcond} \rme \exp\left\{-a \sum_{\ell=m}^{n}\alpha_\ell + (\log{n}) b_Q^2 \sum_{\ell=m}^{n} \alpha_\ell^2  \right\} \exp\left\{-(a/2)\sum_{\ell=n+1}^{k} \alpha_{\ell}\right\} \label{eq:gamma_m_k_bound_sqrt_n}\eqsp,
\end{align}
and we need to bound the first term in the product. We first consider $\alpha_{\ell} = c_{0} \ell^{-1/2}$, and use the inequalities 
\begin{equation}
\label{eq:upper_bound_harmonic}
\sum_{\ell=m}^{n}\frac{1}{\ell} \leq \left( 1 + \int_{m}^{n}\frac{dx}{x} \right) \wedge \left(\int_{m-1}^{n}\frac{dx}{x}\right) = \left(1 + \log{\frac{n}{m}}\right) \wedge \left(\log{\frac{n}{m-1}}\right)\eqsp,
\end{equation}
and 
\begin{equation}
\label{eq:lower_bound_sqrt}
\sum_{\ell=m}^{n}\frac{1}{\sqrt{\ell}} \geq \int_{m}^{n}\frac{dx}{\sqrt{x}} = 2(\sqrt{n} - \sqrt{m})\eqsp.
\end{equation}
Thus, it is enough to satisfy the constraint
\[
(\log{n}) b_{Q}^{2} c_{0}^2 (1 + \log{n} - \log{m}) \leq a c_{0} (\sqrt{n} - \sqrt{m})\eqsp.
\]
Since $m < n/2$, it is enough to ensure that   
\[
(1 + \log{n})(\log{n}) b_{Q}^{2} c_{0}^2 \leq a c_{0} (\sqrt{n} - \sqrt{n/2})\eqsp,
\]
or, equivalently,
\[
\frac{\sqrt{n}}{(1+\log{n})\log{n}} \geq \frac{c_{0} b_{Q}^{2}}{a(1-1/\sqrt{2})}\eqsp,
\]
which is granted by \Cref{assum:step-size}. Combining the above bounds in \eqref{eq:gamma_m_k_bound_sqrt_n}, we obtain that the lemma's statement \eqref{eq:concentration_iid_cor}  holds for the step size $\alpha_{\ell} = c_0 / \ell^{1/2}$. Similarly, for $\alpha_{\ell} = c_0 / \ell^{\gamma}$ with $\gamma \in (1/2;1)$, we get for $m \geq n/2$ that 
\[
\PE^{1/q}\left[ \normop{\ProdB_{m:k}}^{q} \right] \leq \sqrt{\qcond} \rme \exp\bigl\{-\frac{a}{2}\sum_{\ell=m}^{k}\alpha_{\ell}\bigr\}\eqsp,
\]
since the relation \eqref{eq:step_size_linear_quadraic_relation} holds. Similarly, for $m < n/2$, the desired upper bound would follow from the inequality
\[
\sum_{\ell=m}^{n}\frac{1}{\ell^{2\gamma}} \leq \int_{m-1}^{n}\frac{dx}{x^{2\gamma}} = \frac{(m-1)^{1-2\gamma} - n^{1-2\gamma}}{2\gamma - 1} \leq \frac{1}{2\gamma - 1}\eqsp,
\]
together with an inequality 
\[
\frac{(\log{n}) b_{Q}^{2} c_{0}^2}{2\gamma - 1} \leq  (a/2) c_{0} (n^{1-\gamma} - (n/2)^{1-\gamma})\eqsp.
\]
The latter inequality can be re-written as 
\[
\frac{n^{1-\gamma}}{\log{n}} \geq \frac{2c_{0}b_{Q}^{2}}{a(2\gamma-1)(1-(1/2)^{1-\gamma}}\eqsp,
\]
which is also granted by \Cref{assum:step-size}. Combining the above inequalities implies that \eqref{eq:concentration_iid_cor} holds for $\alpha_{\ell} = c_0 / \ell^{\gamma}$. The bound \eqref{eq:concentration_iid_product} can be immediately obtained from \eqref{eq:concentration_iid_cor} using the fact that $\rme^{-x} \leq 1 - x/2$ for $x \in [0;1]$.
\end{proof}

\begin{corollary}
\label{cor:matr_product_as_bound}
Under conditions of \Cref{cor:exp_bound_decay} it holds with $\PP$ -- probability at least $1 - 1/n^2$ that
$$
\normop{\ProdB_{m:k}} 
\leq \sqrt{\qcond} \rme^2 \exp\left\{-(a/2) \sum_{\ell=m}^{k}\alpha_\ell\right\} \eqsp,
$$
and
$$
 \normop{\ProdB_{m:k}}
\leq \sqrt{\qcond} \rme^2 \prod_{\ell=m}^{k}\bigl(1 - \frac{a \alpha_{\ell}}{4}\bigr)
$$
\end{corollary}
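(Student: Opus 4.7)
The plan is straightforward: convert the moment bound of \Cref{cor:exp_bound_decay} into a high-probability bound via Markov's inequality with an optimally chosen moment order. Since the bound in \Cref{cor:exp_bound_decay} is purely multiplicative (the right-hand side does not depend on $q$), the standard exponential-decay argument applies cleanly without the kind of polynomial-in-$p$ corrections that Lemma \ref{lem:markov_inequality} is designed to handle.

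Concretely, let $C_{m,k} = \sqrt{\qcond}\,\rme \exp\{-(a/2)\sum_{\ell=m}^{k}\alpha_\ell\}$ be the right-hand side of \eqref{eq:concentration_iid_cor}. For any admissible $q$, Markov's inequality gives
\begin{equation*}
\PP\bigl(\normop{\ProdB_{m:k}} \geq \rme\, C_{m,k}\bigr) \leq \frac{\PE[\normop{\ProdB_{m:k}}^{q}]}{(\rme\, C_{m,k})^{q}} \leq \rme^{-q}\eqsp.
\end{equation*}
Setting $q = 2\log n$ then yields probability at most $\rme^{-2\log n} = 1/n^{2}$, matching the target bound since $\rme\cdot\sqrt{\qcond}\,\rme = \sqrt{\qcond}\,\rme^{2}$. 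The same argument applied to the product-form moment bound \eqref{eq:concentration_iid_product} yields the second inequality.

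The only mildly subtle point is verifying that $q = 2\log n$ lies in the range of validity of \Cref{cor:exp_bound_decay}, which is stated for $2 \leq q \leq \log n$. However, inspection of the proof of \Cref{cor:exp_bound_decay} shows that the binding constraints are the inequalities of the form $(\log n)\,b_{Q}^{2}\alpha_{\ell}^{2} \leq a\alpha_{\ell}/2$ and the associated summation estimates, all of which are controlled (up to harmless constant factors) by the quantitative hypotheses of \Cref{assum:step-size}. Replacing $\log n$ by $2\log n = \log(n^{2})$ in those estimates changes nothing essential—indeed, \Cref{lem:last_moment_bound} explicitly invokes the very same moment machinery with $p \leq \log(n^{2})$, confirming that this extension is already used elsewhere in the paper.

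There is no genuine obstacle here: the conceptual work was done in \Cref{cor:exp_bound_decay}, and this corollary is purely a quantile conversion. The chief choice is picking $q$ large enough to give $1/n^{2}$ decay while remaining in the regime where the bound on $\PE^{1/q}[\normop{\ProdB_{m:k}}^{q}]$ still exhibits the same multiplicative constant $\sqrt{\qcond}\,\rme$; the doubled logarithm $q = 2\log n$ achieves both.
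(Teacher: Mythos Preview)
Your proposal is correct and essentially identical to the paper's own proof, which is the one-line remark ``It is sufficient to choose $q = 2\log n$ and use Markov's inequality together with the union bound.'' Your observation that $q = 2\log n$ slightly exceeds the range stated in \Cref{cor:exp_bound_decay}, and your justification that the underlying inequalities in \Cref{assum:step-size} absorb this factor of two (as already exploited in \Cref{lem:last_moment_bound} with $p \leq \log(n^{2})$), is a useful clarification that the paper leaves implicit.
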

\begin{proof}
It is sufficient to choose $q = 2\log n$ and use Markov's inequality together with the union bound. 
\end{proof}

\begin{proposition}
\label{prop:product_random_matrix_bootstrap}
Assume \Cref{assum:iid}, \Cref{assum:noise-level}, \Cref{assum:step-size} with $\gamma = 1/2$, and \Cref{assum:step-size-bootstrap}. Then on the set $\Omega_5$ defined in \eqref{eq:omegas_definition}, it holds for any $n \leq m \leq k \leq 2n$, that 
\begin{equation}
\label{eq:stability_bound_matrix_products_bootstrap}
\bigl\{\PEb[\norm{\ProdB^\boot_{m+1:k}}^2] \bigr\}^{1/2} \leq \qcond^{3/2} \rme^{9/8} \exp\left\{-\frac{a}{4}\sum_{\ell=m+1}^{k}\alpha_{\ell} \right\}\eqsp.
\end{equation}
\end{proposition}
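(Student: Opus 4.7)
The plan is to exploit block decomposition together with the conditional independence of the bootstrap weights $\{w_\ell\}$ across blocks. Partition $[m+1,k]$ into blocks $B_j = [m_j+1, m_j+h]$ of length $h = h(n)$ chosen as in \eqref{eq:block_size_constraint}, and set $\Bbf_j := \prod_{\ell \in B_j}(\Id - \alpha_\ell w_\ell \funcAw_\ell)$. The central observation is that $\Bbf_j$ depends only on weights $\{w_\ell\}_{\ell \in B_j}$, so the $\Bbf_j$ are mutually independent under $\PPb$. Combined with submultiplicativity of the $Q$-norm, this yields
\begin{equation}
\PEb\bigl[\|\ProdB^\boot_{m+1:k}\|_Q^2\bigr] \leq \PEb\Bigl[\prod_{j=1}^{N}\|\Bbf_j\|_Q^2\Bigr] = \prod_{j=1}^{N}\PEb\bigl[\|\Bbf_j\|_Q^2\bigr]\eqsp,
\end{equation}
reducing the problem to a single-block estimate. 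Converting from the $Q$-norm to operator norm at the end contributes a factor $\sqrt{\qcond}$ via $\norm{A} \leq \sqrt{\qcond}\norm{A}[Q]$.

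For a single block, decompose $\Bbf_j = \bar{\Bbf}_j + (\Bbf_j - \bar{\Bbf}_j)$, where $\bar{\Bbf}_j := \PEb[\Bbf_j] = \prod_{\ell \in B_j}(\Id - \alpha_\ell \funcAw_\ell)$ (this identity follows because each $w_\ell$ appears linearly in any term of the expansion of $\Bbf_j$ and $\PEb[w_\ell] = 1$). Then by $L^2$-orthogonality of the mean and zero-mean fluctuation,
\begin{equation}
\PEb\bigl[\|\Bbf_j\|_Q^2\bigr] \leq \|\bar{\Bbf}_j\|_Q^2 + \PEb\bigl[\|\Bbf_j - \bar{\Bbf}_j\|_Q^2\bigr]\eqsp,
\end{equation}
where the operator norm is controlled via a Schatten-$q$ bound with $q = \log n$, analogous to \Cref{th:general_expectation}, so that the dimension factor $d^{1/q} \leq \rme$ is absorbed into a universal constant.

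For the deterministic part $\|\bar{\Bbf}_j\|_Q$, apply \Cref{lem:product_coupling_lemma} to compare $\bar{\Bbf}_j$ with the population product $\prod_{\ell \in B_j}(\Id - \alpha_\ell \bA)$: the latter contracts in $Q$-norm by $\prod_{\ell \in B_j}(1-a\alpha_\ell)^{1/2}$ thanks to \Cref{prop:hurwitz_stability}, while the discrepancy is controlled by $\|\sum_{\ell \in B_j}\alpha_\ell(\funcAw_\ell - \bA)\|_Q$, which on the event $\Omega_5$ is at most $2\bConst{A}\sqrt{\qcond}\sqrt{\sum_{\ell \in B_j}\alpha_\ell^2}\log(2n^4)$. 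The block size $h(n)$ in \eqref{eq:block_size_constraint} is calibrated precisely so that this discrepancy is dominated by $\tfrac{\sqrt{2}-1}{4}\,a\sum_{\ell \in B_j}\alpha_\ell$, preserving the Lyapunov contraction. For the fluctuation part, expand via \Cref{lem:product_coupling_lemma} as
\begin{equation}
\Bbf_j - \bar{\Bbf}_j = -\sum_{\ell \in B_j}\alpha_\ell(w_\ell-1)\,\Bigl\{\!\prod_{r>\ell,\,r \in B_j}\!(\Id - \alpha_r\funcAw_r)\Bigr\}\funcAw_\ell\Bigl\{\!\prod_{r<\ell,\,r \in B_j}\!(\Id - \alpha_r w_r\funcAw_r)\Bigr\}\eqsp.
\end{equation}
Martingale orthogonality in $w_\ell$ gives $\PEb[\|\Bbf_j - \bar{\Bbf}_j\|_Q^2] \lesssim \qcond^{O(1)}\bConst{A}^2\sum_{\ell \in B_j}\alpha_\ell^2 \lesssim \qcond^{O(1)}\bConst{A}^2 h \alpha_{m_j+1}^2$, again small by the choice of $h(n)$.

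Combining yields a per-block bound of the form $\PEb[\|\Bbf_j\|_Q^2] \leq \exp(-\tfrac{a}{2}\sum_{\ell \in B_j}\alpha_\ell)(1+\epsilon_j)$ with $\sum_j \epsilon_j = O(1)$; multiplying over $j=1,\ldots,N$, telescoping the exponents, taking square roots (yielding the exponent $-a/4$), and converting to operator norm produces the stated bound with constants $\qcond^{3/2}\rme^{9/8}$. The main obstacle is the delicate balance between the block length $h(n) \sim \log^2 n$ and the step sizes $\alpha_\ell \sim 1/\sqrt{\ell}$: $h$ must be large enough for the matrix Bernstein bound in $\Omega_5$ to yield meaningful block-sum control, yet small enough that the non-commutativity of $\{\funcAw_\ell - \bA\}_{\ell \in B_j}$ does not disrupt the deterministic Lyapunov contraction, and simultaneously small enough that the per-block error $\epsilon_j$ summed over $N$ blocks remains a universal constant rather than growing with $n$.
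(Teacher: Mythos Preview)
Your blocking strategy is exactly the paper's: write $k-m = Nh + r$ with $h = h(n)$ from \eqref{eq:block_size_constraint}, use submultiplicativity of $\norm{\cdot}[Q]$ together with conditional independence of $\{w_\ell\}$ across blocks to factor $\PEb[\norm{\ProdB^\boot_{m+1:k}}[Q]^2]$, bound the incomplete final block crudely (contributing the factors $\qcond\,\rme^{9/8}$), and reduce to a per-block contraction of the form $\{\PEb[\norm{\Bbf_j}[Q]^2]\}^{1/2}\le \exp\{-(a/4)\sum_{\ell\in B_j}\alpha_\ell\}$. This single-block statement is the paper's \Cref{lem:product_ramdom_matrix_aux}.

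Where you diverge is inside the block. You split $\Bbf_j=\bar\Bbf_j+(\Bbf_j-\bar\Bbf_j)$ and then compare $\bar\Bbf_j=\ProdB_{B_j}$ with the population product via \Cref{lem:product_coupling_lemma}. The paper instead expands $\Bbf_j$ directly as a polynomial in $\{\alpha_\ell w_\ell\funcAw_\ell\}$:
\[
\Bbf_j \;=\; \Id - \sum_{\ell\in B_j}\alpha_\ell\bA \;-\; \sum_{\ell\in B_j}\alpha_\ell(\funcAw_\ell-\bA)\;-\;\mathbf{S}\;+\;\mathbf{R}\eqsp,
\]
with $\mathbf{S}=\sum_\ell\alpha_\ell(w_\ell-1)\funcAw_\ell$ and $\mathbf{R}$ collecting all products of order $\ge 2$. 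Then $\norm{\Id-\sum\alpha_\ell\bA}[Q]\le 1-(a/2)\sum\alpha_\ell$ by \Cref{prop:hurwitz_stability} (this is why one needs $\sum_{\ell\in B_j}\alpha_\ell\le\alpha_\infty$, ensured by \Cref{assum:step-size-bootstrap}), the second term is bounded on $\Omega_5$, $\{\PEb[\norm{\mathbf{S}}[Q]^2]\}^{1/2}\lesssim \bConst{A}\sqrt{\qcond}\sqrt{\sum\alpha_\ell^2}$, and $\norm{\mathbf{R}}[Q]\lesssim h^2\alpha_{m+1}^2\bConst{A}^2\qcond$. The calibration \eqref{eq:block_size_constraint} makes the last three together at most $(a/4)\sum\alpha_\ell$, and Minkowski concludes.

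The gap in your route is the sentence ``the discrepancy is controlled by $\norm{\sum_{\ell\in B_j}\alpha_\ell(\funcAw_\ell-\bA)}[Q]$''. The coupling lemma does \emph{not} produce that sum; it produces $\sum_\ell\bigl[\prod_{r>\ell}(\Id-\alpha_r\bA)\bigr]\alpha_\ell(\funcAw_\ell-\bA)\bigl[\prod_{r<\ell}(\Id-\alpha_r\funcAw_r)\bigr]$. Taking norms termwise kills the cancellation and gives $\sim h\alpha_{m+1}\bConst{A}\sqrt{\qcond}$, which is \emph{not} dominated by $(a/4)\sum\alpha_\ell$ unless $\bConst{A}\sqrt{\qcond}\le a/4$. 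To recover $\Omega_5$'s bound you would have to write each surrounding product as $\Id+O(h\alpha_{m+1}\bConst{A})$ and peel off the corrections --- at which point you have rederived the paper's polynomial expansion, with the corrections landing in the $\mathbf{R}$-term anyway.

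Two minor points: the inequality $\PEb[\norm{\Bbf_j}[Q]^2]\le\norm{\bar\Bbf_j}[Q]^2+\PEb[\norm{\Bbf_j-\bar\Bbf_j}[Q]^2]$ is false for operator norms (it holds for Frobenius); use Minkowski on $\{\PEb[\norm{\cdot}[Q]^2]\}^{1/2}$ instead, as the paper does. And the appeal to Schatten-$q$ norms and \Cref{th:general_expectation} is unnecessary here --- that machinery handles the \emph{unperturbed} products $\ProdB_{m:k}$ in \Cref{cor:exp_bound_decay}; the bootstrap block argument is carried out entirely in the $Q$-operator norm and is already dimension-free.
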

\begin{proof}
Our proof relies on the auxiliary result of \Cref{lem:product_ramdom_matrix_aux} below together with the blocking technique. Indeed, let us represent 
\[
k - m = N h + r\eqsp,
\]
where $r < h$ and $h = h(n)$ is a block size defined in \eqref{eq:block_size_constraint}. Then we obtain, using the independence of bootstrap weights $w_{m+1},\ldots,w_{k}$, that 
\begin{align}
\{\PEb[\norm{\ProdB^\boot_{m+1:k}}^2]\}^{1/2} 
&\leq \sqrt{\qcond} \{\PEb[\norm{\ProdB^\boot_{m+1:k}}[Q]^2]\}^{1/2} \\
&= \sqrt{\qcond} \prod_{j=1}^{N}\bigl\{\PEb[\norm{\ProdB^\boot_{m+1+(j-1)h:m+jh}}[Q]^2]\bigr\}^{1/2} \bigl\{\PEb[\norm{\ProdB^\boot_{m+1+Nh:k}}[Q]^2]\bigr\}^{1/2} \\
&\leq \sqrt{\qcond} \exp\left\{-\frac{a}{4}\sum_{\ell=m+1}^{k}\alpha_{\ell} \right\} \bigl\{\PEb[\norm{\ProdB^\boot_{m+1+Nh:k}}[Q]^2]\bigr\}^{1/2} \exp\left\{\frac{a}{4}\sum_{\ell=m+1+Nh:k}^{k}\alpha_{\ell} \right\}\eqsp.
\end{align}
In the last inequality we applied \Cref{lem:product_ramdom_matrix_aux} to each of the blocks of length $h$ in the first bound. It remains to upper bound the residual terms. Since the remainder block has length less then $h$, we have due to \eqref{eq:sum_steps_bound_stability} (which holds according to \Cref{assum:step-size-bootstrap}), that
\[
\exp\left\{\frac{a}{4}\sum_{\ell=m+1+Nh:k}^{k}\alpha_{\ell} \right\} \leq \exp\left\{\frac{\alpha_{\infty} a}{4}\right\} \leq \rme^{1/8}\eqsp,
\]
where the last inequality is due to \Cref{prop:hurwitz_stability}. Next,
\begin{align}
\bigl\{\PEb[\norm{\ProdB^\boot_{m+1+Nh:k}}[Q]^2]\bigr\}^{1/2} 
&\leq \qcond \prod_{\ell=m+1+Nh:k}^{k}\{\PEb[\norm{(\Id - \alpha_{\ell} w_{\ell} \Am_{\ell})}^2]\}^{1/2} \\
&\leq \qcond \prod_{\ell=m+1+Nh:k}^{k} \{\PEb[(1+\alpha_{\ell} |w_{\ell}| \bConst{A})^2]\}^{1/2} \\
&\leq \qcond \prod_{\ell=m+1+Nh:k}^{k} \{\PEb[1 + 2\alpha_{\ell}|w_{\ell}| \bConst{A} + \alpha_{\ell}^2 w_{\ell}^2\bConst{A}^2]\}^{1/2}\eqsp.
\end{align}
Since 
\[
\PE[|w_{\ell}|] \leq \sqrt{\PE[w_{\ell^2}]} \leq \sqrt{(\PE[w_{\ell}])^2 + \var{w_{\ell}}} = \sqrt{2}\eqsp,
\]
we get from previous bound 
\begin{align}
\bigl\{\PEb[\norm{\ProdB^\boot_{m+1+Nh:k}}[Q]^2]\bigr\}^{1/2} 
&\leq \qcond \prod_{\ell=m+1+Nh:k}^{k} (1 + 2\sqrt{2}\alpha_{\ell} \bConst{A} + 2\alpha_{\ell}^2 \bConst{A}^2)^{1/2} \\
&\leq \qcond \exp\left\{\sqrt{2} \bConst{A} \sum_{\ell=m+1+Nh:k}^{k}\alpha_{\ell}\right\} \leq \qcond \rme^{\sqrt{2}\bConst{A} c_{0} h / \sqrt{n}} \leq \qcond \rme\eqsp,
\end{align}
where in the last line we additionally used \eqref{eq:sample_size_bound_part_2}.
\end{proof}

\begin{lemma}
\label{lem:product_ramdom_matrix_aux}
Assume \Cref{assum:iid}, \Cref{assum:noise-level}, \Cref{assum:step-size} with $\gamma = 1/2$, and \Cref{assum:step-size-bootstrap}. On the set $\Omega_5$ defined in \eqref{eq:omegas_definition}, it holds for $h = h(n)$ defined in \eqref{eq:block_size_constraint} and any $m \in [n;2n-h]$, that 
$$
\bigl\{\PEb[\norm{\ProdB^\boot_{m+1:m+h}}[Q]^2] \bigr\}^{1/2} \leq \exp\left\{-\frac{a}{4}\sum_{\ell=m+1}^{m+h}\alpha_{\ell} \right\} \eqsp.
$$
\end{lemma}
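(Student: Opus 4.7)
The overall plan is to exploit the smallness of a single block ($\sum_{\ell}\alpha_{\ell} \sim \log^{2}n/\sqrt{n}$) to Taylor-expand the product $M := \ProdB^{\boot}_{m+1:m+h}$ around the identity, and then combine the mutual independence of $\{w_{\ell}\}$ with the $\Omega_{5}$ control on $\sum_{\ell}\alpha_{\ell}(\funcAw_{\ell}-\bA)$. The block length $h(n)$ in \eqref{eq:block_size_constraint} is calibrated precisely so that the noise captured by $\Omega_{5}$ is of the order needed to preserve the deterministic Lyapunov contraction with rate $\tfrac{a}{2}\sum_{\ell}\alpha_{\ell}$.

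First, I would compute $\PEb[M^{\top}QM]$ in closed form. Since $w_{m+1},\ldots,w_{m+h}$ are mutually independent with $\PEb[w_{\ell}]=1$, $\PEb[w_{\ell}^{2}]=2$, iterated conditioning on $w_{m+h},w_{m+h-1},\ldots,w_{m+1}$ produces the backward recursion
\[
Q^{[\ell]} = Q^{[\ell+1]} - \alpha_{\ell}\bigl(\funcAw_{\ell}^{\top}Q^{[\ell+1]} + Q^{[\ell+1]}\funcAw_{\ell}\bigr) + 2\alpha_{\ell}^{2}\funcAw_{\ell}^{\top}Q^{[\ell+1]}\funcAw_{\ell}, \quad Q^{[m+h+1]}=Q,
\]
with $\PEb[M^{\top}QM] = Q^{[m+1]}$. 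Writing $Q^{[\ell]} = Q + \Xi^{[\ell]}$, the first-order expansion gives $\Xi^{[m+1]} \approx -\sum_{\ell}\alpha_{\ell}(\funcAw_{\ell}^{\top}Q + Q\funcAw_{\ell})$. Splitting $\funcAw_{\ell} = \bA + (\funcAw_{\ell}-\bA)$, the deterministic part is $(\sum_{\ell}\alpha_{\ell})(\bA^{\top}Q+Q\bA) = (\sum_{\ell}\alpha_{\ell})P \succeq 2a(\sum_{\ell}\alpha_{\ell})Q$ by \Cref{prop:hurwitz_stability}, while on $\Omega_{5}$ the data-dependent part $X := \sum_{\ell}\alpha_{\ell}(\funcAw_{\ell}-\bA)$ satisfies $\norm{X}[Q] \leq 2\bConst{A}\sqrt{\qcond}\sqrt{\sum_{\ell}\alpha_{\ell}^{2}}\log(2n^{4})$, which by \eqref{eq:block_size_constraint} is bounded by $\tfrac{a}{4}\sum_{\ell}\alpha_{\ell}$. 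Combined with the elementary fact that $\norm{X}[Q]\leq c$ implies $-2cQ \preceq X^{\top}Q + QX \preceq 2cQ$ (obtained by symmetrizing $Q^{1/2}XQ^{-1/2}$), this yields $\Xi^{[m+1]} \preceq -\tfrac{3a}{2}(\sum_{\ell}\alpha_{\ell})Q$ at first order.

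Second, I would argue that higher-order terms cannot destroy this contraction. The quadratic corrections $2\alpha_{\ell}^{2}\funcAw_{\ell}^{\top}Q^{[\ell+1]}\funcAw_{\ell}$ and the cross contributions $-\alpha_{\ell}(\funcAw_{\ell}^{\top}\Xi^{[\ell+1]} + \Xi^{[\ell+1]}\funcAw_{\ell})$ arising from iterating the recursion with $Q^{[\ell+1]} \approx Q + \Xi^{[\ell+1]}$ contribute at most $O\bigl(\qcond\bConst{A}^{2}\bigl(\sum_{\ell}\alpha_{\ell}^{2} + (\sum_{\ell}\alpha_{\ell})^{2}\bigr)\bigr)\cdot Q$ in PSD order. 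Under the hypothesis \eqref{eq:sample_size_bound_part_2} together with $c_{0} \leq 1/(\bConst{A}^{2}\qcond \rme)$ from \Cref{assum:step-size-bootstrap}, each of these is dominated by $\tfrac{a}{4}\sum_{\ell}\alpha_{\ell}$. Combining, $Q^{[m+1]} \preceq \bigl(1-\tfrac{a}{2}\sum_{\ell}\alpha_{\ell}\bigr)Q \preceq \exp\bigl(-\tfrac{a}{2}\sum_{\ell}\alpha_{\ell}\bigr)Q$.

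The main obstacle is the conversion from the PSD bound $\PEb[M^{\top}QM] \preceq cQ$ --- which only yields $\PEb[\norm{My}[Q]^{2}] \leq c\norm{y}[Q]^{2}$ for each fixed $y$ --- to the operator-norm statement $\PEb[\norm{M}[Q]^{2}] \leq c$ required by \Cref{prop:product_random_matrix_bootstrap}, where the supremum over directions sits inside the expectation. Jensen's inequality points the wrong way. The gap must be closed either by controlling the weighted Shatten-$p$ norm $\norm{Q^{1/2}MQ^{-1/2}}[p]$ for $p = 2\log d$ via an iterative conditioning argument parallel to the PSD analysis above (absorbing the resulting $d^{1/p} = \rme^{1/2}$ factor using $n \geq d$), or by quantifying the concentration of $\norm{M}[Q]^{2}$ around its PSD mean through a matrix-martingale inequality adapted to the $Q$-inner product. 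This is the most delicate quantitative step, and it is where the precise form of $h(n)$ and the moment hypotheses on $w_{\ell}$ would enter.
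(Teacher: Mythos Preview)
Your PSD recursion for $\PEb[M^{\top}QM]$ is a legitimate way to capture the mean contraction, and you correctly flag that it yields only $\sup_{y}\PEb[\norm{My}[Q]^{2}]/\norm{y}[Q]^{2}$, not $\PEb[\sup_{y}\norm{My}[Q]^{2}/\norm{y}[Q]^{2}]$. The fixes you sketch (Shatten-$p$ tracking or a matrix-martingale concentration step) could be made to work but are heavier than necessary for a single short block.

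The paper avoids this gap by working with the operator norm directly through a multinomial expansion of the product. Writing
\[
M \;=\; \Bigl(\Id - \sum_{\ell}\alpha_{\ell}\bA\Bigr) \;-\; \sum_{\ell}\alpha_{\ell}(\funcAw_{\ell}-\bA) \;-\; \underbrace{\sum_{\ell}\alpha_{\ell}(w_{\ell}-1)\funcAw_{\ell}}_{\mathbf{S}} \;+\; \mathbf{R},
\]
with $\mathbf{R}$ collecting all products of degree $\geq 2$ in the $\alpha_{\ell}w_{\ell}\funcAw_{\ell}$, Minkowski's inequality in $L^{2}(\PPb)$ gives
\[
\bigl\{\PEb[\norm{M}[Q]^{2}]\bigr\}^{1/2}
\;\leq\;
\Bigl\|\Id - {\textstyle\sum_{\ell}}\alpha_{\ell}\bA\Bigr\|_{Q}
+ \Bigl\|{\textstyle\sum_{\ell}}\alpha_{\ell}(\funcAw_{\ell}-\bA)\Bigr\|_{Q}
+ \bigl\{\PEb[\norm{\mathbf{S}}[Q]^{2}]\bigr\}^{1/2}
+ \bigl\{\PEb[\norm{\mathbf{R}}[Q]^{2}]\bigr\}^{1/2}.
\]
The first term is at most $1 - \tfrac{a}{2}\sum_{\ell}\alpha_{\ell}$ by \Cref{prop:hurwitz_stability} (the constraint $\sum_{\ell}\alpha_{\ell}\leq\alpha_{\infty}$ is part of the block-size calibration); the second is exactly what $\Omega_{5}$ controls; the third equals $\bigl(\sum_{\ell}\alpha_{\ell}^{2}\norm{\funcAw_{\ell}}[Q]^{2}\bigr)^{1/2}\leq \bConst{A}\sqrt{\qcond}\bigl(\sum_{\ell}\alpha_{\ell}^{2}\bigr)^{1/2}$ by independence of the $w_{\ell}$; and the fourth is bounded crudely term-by-term as $\sum_{r\geq 2}\binom{h}{r}(c\,\alpha_{m+1})^{r}$. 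The definition of $h(n)$ in \eqref{eq:block_size_constraint} and \eqref{eq:sample_size_bound_part_2} is arranged precisely so that the three correction terms together do not exceed $\tfrac{a}{4}\sum_{\ell}\alpha_{\ell}$, whence $\{\PEb[\norm{M}[Q]^{2}]\}^{1/2}\leq 1 - \tfrac{a}{4}\sum_{\ell}\alpha_{\ell}\leq \exp\bigl(-\tfrac{a}{4}\sum_{\ell}\alpha_{\ell}\bigr)$.

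What your PSD route buys is cancellation: $\PEb[\mathbf{S}]=0$ contributes nothing to $\PEb[M^{\top}QM]$, whereas Minkowski must pay its $L^{2}$ size. What the paper's route buys is that the target quantity is the operator norm from the outset, so no conversion step (and no extra $d^{1/p}$ or concentration argument) is needed. For a single block of length $h(n)\sim\log^{2}n$ the loss from Minkowski is harmless, so the simpler path wins.
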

\begin{proof}
Recall that we use the notation $\PEb[\cdot] = \PE[\cdot | \mathcal{Z}^{2n}]$, where $\mathcal{Z}^{2n} = (\State_1,\ldots,\State_{2n})$ are the random variables used in the construction of the iterates $\{\theta_k\}_{1 \leq k \leq n}$ in \eqref{eq:lsa}. 

Let $h \in \nset$ be a block length, which value will be determined later, and consider a product 
\begin{equation}
\label{eq:decomp_Gamma_proof_main}
\ProdB^\boot_{m+1:m+h} = \prod_{\ell=m+1}^{m+h} (\Id - \alpha_{\ell} w_{\ell} \Am_{\ell})\eqsp.
\end{equation}
Expanding the product of matrices \eqref{eq:decomp_Gamma_proof_main}, we obtain 
\begin{equation} 
\label{eq:split_main}
\ProdB^\boot_{m:m+h} = \Id - \sum_{\ell = m+1}^{m+h} \alpha_\ell \Am_\ell   - \Mat{S}  + \Mat{R} = \Id - \sum_{\ell = m+1}^{m+h} \alpha_\ell \bA - \sum_{\ell = m+1}^{m+h} \alpha_\ell (\Am_\ell - \bA) - \Mat{S} + \Mat{R}\eqsp,
\end{equation}
where $\Mat{S} =  \sum_{\ell = m+1}^{m+h} \alpha_\ell (w_\ell - 1) \Am_\ell$ is a linear statistics in $\{w_{\ell}\}_{\ell=m+1}^{m+h}$, and the remainder $\Mat{R}$ collects the higher-order terms in the products
\begin{equation} 
\label{eq:RlRlbar_def}
\Mat{R} = \sum_{r=2}^{h}(-1)^{r}  \sum_{(i_1,\dots,i_r)\in\msi_r^\ell}\prod_{u=1}^{r} \alpha_{i_u} w_{i_u} \Am_{i_u}\eqsp.
\end{equation}
with $\msi_r^{\ell} = \{(i_1,\ldots,i_r) \in \{m+1,\ldots,m+h\}^r\, : \, i_1 < \cdots < i_r \}$. We first consider the contracting part in matrix $Q$-norm. Indeed, applying \eqref{eq:contractin_q_norm}, we obtain that 
\[
\norm{\Id - \sum_{\ell = m+1}^{m+h} \alpha_\ell \bA}[Q]^2 \leq 1 - a \sum_{\ell = m+1}^{m+h} \alpha_\ell\eqsp,
\]
provided that $h$ is set in such a manner that $\sum_{\ell = m+1}^{m+h} \alpha_\ell \leq \alpha_{\infty}$, where $\alpha_{\infty}$ is defined in \eqref{eq:alpha_infty_def}. Hence, we get from the above inequality that for any $u \in \rset^{d}$, it holds that
\[
\norm{\Id - \sum_{\ell = m+1}^{m+h} \alpha_\ell \bA}[Q] \leq 1 - (a/2) \sum_{\ell = m+1}^{m+h} \alpha_\ell\eqsp.
\]
Now we need to estimate the remainders in the representation \eqref{eq:split_main}. 
On the set $\Omega_5$, it holds that
\begin{equation}
\label{eq:lem_hoeffding_rd}
\norm{\sum_{\ell = m+1}^{m+h} \alpha_\ell (\Am_\ell - \bA)}[Q] \leq 2\bConst{A} \sqrt{\qcond} \sqrt{\sum_{\ell=m+1}^{m+h}\alpha_\ell^{2}} \log(2 n^4)\eqsp.
\end{equation}
Moreover, it is straightforward to check that
\[
\PEb[\norm{ \Mat{S}}[Q]^2] \leq \bConst{A}^2 \qcond \sum_{\ell = m+1}^{m+h} \alpha_\ell^2\eqsp.
\]
In order to bound the remainder term $\Mat{R}$, we note that 
\begin{align}
\PEb[\norm{\Mat{R}}[Q]] 
&\leq \sum_{r=2}^{h}\binom{h}{r} \alpha_{m+1}^{r} (2\bConst{A})^{r} \qcond^{r/2} \leq \alpha_{m+1}^{2} (2\bConst{A})^{2} \qcond \sum_{r=0}^{h-2} \binom{h}{r+2} \alpha_{m+1}^{r} (2\bConst{A})^{r} \qcond^{r/2} \\
&\leq \frac{\alpha_{m+1}^{2} h^2 (2\bConst{A})^{2} \qcond}{2} \exp\bigl\{2\alpha_{m+1} \bConst{A} \qcond^{1/2}\bigr\} \\
&\leq \frac{\alpha_{m+1}^{2} h^2 (2\bConst{A})^{2} \qcond \rme}{2}\eqsp.
\end{align}
To complete the proof it remains to set the parameter $h$ in such a way that we can guarantee
\begin{equation}
\label{eq:init_step_size_constr}
\bConst{A} \sqrt{\qcond}\sqrt{\sum_{\ell = m+1}^{m+h} \alpha_\ell^2} \biggl(1 + 2\log(2n^4)\biggr) + \frac{\alpha_{m+1}^{2} h^2 \bConst{A}^{2} \qcond \rme}{2} \leq \frac{a}{4}\sum_{\ell=m+1}^{m+h}\alpha_{\ell}\eqsp,
\end{equation}
keeping at the same time the constraint 
\begin{equation}
\label{eq:step_size_contraction_constraint}
\sum_{\ell = m+1}^{m+h} \alpha_\ell \leq \alpha_{\infty}\eqsp.
\end{equation}
Recall that $\alpha_{\ell} = c_{0}/\sqrt{\ell}$. Thus, using the bounds \eqref{eq:upper_bound_harmonic} and \eqref{eq:lower_bound_sqrt}, we obtain that 
\begin{equation}
\label{eq:sum_steps_bound_stability}
\frac{a}{4}\sum_{\ell=m+1}^{m+h}\alpha_{\ell} \geq \frac{a c_{0}}{2}(\sqrt{m+h} - \sqrt{m+1}) \geq \frac{a c_{0}}{2}(\sqrt{m+h} - \sqrt{m})\eqsp,
\end{equation}
and 
\begin{equation}
\label{eq:sum_steps_squared_bound_stability}
\sum_{\ell = m+1}^{m+h} \alpha_\ell^2 = \sum_{\ell = m+1}^{m+h}\frac{c_{0}^2}{\ell} \leq c_{0}^2(\log{(m+h)}-\log{m})\eqsp.
\end{equation}
Hence, taking into account \eqref{eq:sum_steps_bound_stability} and \eqref{eq:sum_steps_squared_bound_stability}, and $\frac{1}{m+1} \leq \frac{1}{m}$, the inequality \eqref{eq:init_step_size_constr} would follow from the bound  
\begin{align}
\label{eq:one_more_bound_appendix}
\bConst{A}\sqrt{\qcond}\sqrt{\log(m+h) - \log(m)} \biggl(1 + 2\log(2n^4)\biggr) + \frac{c_{0}h^2 \bConst{A}^{2} \qcond \rme}{2 m} \leq \frac{a}{2}(\sqrt{m+h}-\sqrt{m})\eqsp.
\end{align}
Since $\log{(1+x)} \leq x$ for $x \geq 0$ and $c_{0} \bConst{A}^2 \qcond \rme \leq 1$, the latter inequality is satisfied if 
\begin{equation}
\label{eq:block_size_h_bound}
\bConst{A}\sqrt{\qcond} \frac{\sqrt{h}}{\sqrt{m}}\biggl(1 + 2\log(2n^4)\biggr) + \frac{h^2}{2 m} \leq \frac{a}{2}(\sqrt{m+h}-\sqrt{m})\eqsp.
\end{equation}
Now we use one more lower bound
\[
\sqrt{m+h}-\sqrt{m} = \sqrt{m}(\sqrt{1+h/m} - 1) \geq \frac{\sqrt{m}(\sqrt{2}-1) h}{m} = \frac{(\sqrt{2}-1)h}{\sqrt{m}}\eqsp,
\]
which follows from an elementary inequality $\sqrt{1+x} \geq 1+(\sqrt{2}-1)x$, valid for $0 \leq x \leq 1$. Hence, \eqref{eq:one_more_bound_appendix} would from the inequality 
\begin{align}
\label{eq:one_more_bound_appendix_new}
\bConst{A}\sqrt{\qcond} \frac{\sqrt{h}}{\sqrt{m}}\biggl(1 + 2\log(2n^4)\biggr) + \frac{h^2}{2 m} \leq \frac{a(\sqrt{2}-1)h}{2\sqrt{m}}\eqsp.
\end{align}
Setting $h$ is such a manner that 
\begin{equation}
\label{eq:block_size_appendix_constraint_1}
\frac{h}{\sqrt{m}} \leq \frac{a(\sqrt{2}-1)}{2}\eqsp,
\end{equation}
inequality \eqref{eq:one_more_bound_appendix_new} would follow from 
\begin{equation}
\label{eq:one_more_bound_appendix_final}
\bConst{A}\sqrt{\qcond} \frac{\sqrt{h}}{\sqrt{m}}\biggl(1 + 2\log(2n^4)\biggr) \leq \frac{a(\sqrt{2}-1)h}{4\sqrt{m}}\eqsp.
\end{equation}
The latter inequality is satisfied, if the block size $h$ satisfies
\begin{equation}
\label{eq:block_size_constraint_1}
h \geq \biggl(\frac{4\bConst{A}\qcond^{1/2}}{(\sqrt{2}-1)a}\biggr)^{2}(1+2\log{(2n^4)})^2\eqsp.
\end{equation}
Thus, setting $h(n)$ as in \eqref{eq:block_size_constraint}, all previous inequalities will be fulfilled, provided that
\begin{equation}
\label{eq:lock_const_appendix_fin}
\begin{cases}
\frac{h(n)}{\sqrt{n}} &\leq \frac{a(\sqrt{2}-1)}{2} \\
\frac{c_{0} h(n)}{\sqrt{n}} &\leq \alpha_{\infty}\eqsp.
\end{cases}
\end{equation}
Here last inequality follows from \eqref{eq:step_size_contraction_constraint} and the following simple bounds, where we use that $m \geq n$ and $\sqrt{1+x} \leq 1+x/2$:
\begin{align}
\sum_{\ell = m+1}^{m+h} \alpha_\ell \leq \sum_{\ell = n+1}^{n+h} \alpha_\ell = c_{0} \sum_{\ell = n+1}^{n+h} \frac{1}{\sqrt{\ell}} \leq c_{0} \int_{n}^{n+h}\frac{dx}{\sqrt{x}} = 2 c_{0} (\sqrt{n+h}-\sqrt{n}) \leq \frac{c_{0} h}{\sqrt{n}}\eqsp.
\end{align}
Now \eqref{eq:split_main} implies that 
\[
\bigl\{\PEb[\norm{\ProdB^\boot_{m+1:m+h}}[Q]^2] \bigr\}^{1/2} \leq 1 - (a/4) \sum_{\ell = m+1}^{m+h} \alpha_\ell\eqsp,
\]
and the statement follows from an elementary inequality $1 + x \leq \rme^{x}$.
\end{proof}


\section{Applications to the TD learning}
\label{appendix:td_learning}
Recall that the temporal difference learning algorithm in the LSA's setting can be written as
\begin{equation}
\label{eq:LSA_procedure_TD_appendix}
\theta_{k} = \theta_{k-1} - \alpha_{k} (\funcAw_{k} \theta_{k-1} - \funcbw_{k})\eqsp,
\end{equation}
where $\funcAw_{k}$ and $\funcbw_{k}$ are given by
\begin{equation}
\label{eq:matr_A_def_appendix}
\begin{split}
\funcAw_{k} &= \varphi(s_k)\{\varphi(s_k) - \gamma \varphi(s'_k)\}^{\top}\eqsp, \\
\funcbw_{k} &= \varphi(s_k) r(s_k,a_k)\eqsp.
\end{split}
\end{equation}
Recall that our aim is to estimate the agent's \emph{value function}
\[
\textstyle
V^{\pi}(s) = \PE[\sum_{k=0}^{\infty}\gamma^{k}r(s_k,a_k)|s_0 = s]\eqsp,
\]
where $a_{k} \sim \pi(\cdot | s_k)$, and $s_{k+1} \sim \PMDP(\cdot | s_{k}, a_{k})$, for any $k \in \nset$. We define the transition kernel under policy $\pi$
\begin{equation}
\label{eq:transition_matrix_P_pi_appendix}
\textstyle \PMDP_{\pi}(B | s) = \int_{\A} \PMDP(B | s, a)\pi(\rmd a|s)\eqsp,
\end{equation}
which corresponds to the $1$-step transition probability from state $s$ to a set $B \in \borel{\S}$. We denote by $\mu$ the invariant distribution over the state space $\S$ induced by the transition kernel $\PMDP_{\pi}(\cdot | s)$ in \eqref{eq:transition_matrix_P_pi_appendix}. In this case the TD learning updates \eqref{eq:LSA_procedure_TD_appendix} correspond to the approximate solution of the deterministic system $\bA \thetas = \barb$, where we have set, respectively,
\begin{align}
\label{eq:system_matrix}
\bA &= \PE_{s \sim \mu, s' \sim \PMDP_{\pi}(\cdot|s)} [\varphi(s)\{\varphi(s)-\gamma \varphi(s')\}^{\top}] \\
\barb &= \PE_{s \sim \mu, a \sim \pi(\cdot|s)}[\varphi(s) r(s,a)]\eqsp.
\end{align}

\subsection{Proof of \Cref{prop:assumption_check_TD}}
We first need to check that the matrix $\bA + \bA^{\top}$, where $\bA$ is defined in \eqref{eq:system_matrix}, is positive-definite. In order to show this fact we closely follow the exposition of \cite[Lemma~18]{samsonov2023finite} and \cite[Lemma~5]{patil2023finite}. Define a random matrix $\funcAw$ as an independent copy of $\funcAw_{k}$ from \eqref{eq:matr_A_def_appendix}, that is, 
\[
\funcAw = \varphi(s)\{\varphi(s) - \gamma \varphi(s')\}^{\top}\eqsp,
\]
where $s \sim \mu$, and $s' \sim \PMDP_{\pi}(\cdot|s)$. With the definition of $\funcAw$, we get that
\begin{align}
\funcAw + \funcAw^{\top}
&= \varphi(s)\{\varphi(s) - \gamma \varphi(s')\}^{\top} + \{\varphi(s) - \gamma \varphi(s')\}\varphi(s)^{\top} \\
&= 2 \varphi(s)\varphi(s)^{\top} - \gamma \{\varphi(s)\varphi(s')^{\top} + \varphi(s')\varphi(s)^{\top}\} \label{eq:a_at_bound} \\
&\succeq (2-\gamma) \varphi(s)\varphi(s)^{\top} - \gamma \varphi(s')\varphi(s')^{\top} \eqsp,
\end{align}
where we used an elementary inequality $u v^{\top} + v u^{\top} \preceq (uu^{\top}+vv^{\top})$ valid for any $u,v \in \rset^{d}$. Hence, with the definition of $\covfeat$ in \eqref{eq:covfeat_matr_def}, we get 
\begin{equation}
\label{eq:a_plus_a_top_bound}
\bA + \bA^{\top} = \PE[\funcAw + \funcAw^{\top}] \succeq 2(1-\gamma)\covfeat\eqsp.
\end{equation}
Hence, $\bA + \bA^{\top}$ is positive-definite, and we can set $P = \bA + \bA^{\top}$ in the right-hand side of the Lyapunov equation \eqref{eq:Lyapunov_equation}. Obviously, $Q = \Id$ is a solution to the corresponding Lyapunov equation 
\[
\bA^\top Q + Q \bA = \bA + \bA^{\top}\eqsp.
\]
Moreover, applying \cite[Lemma~18]{samsonov2023finite}, we obtain
\begin{equation}
\label{eq:a_top_a_bound}
\bA^{\top}\bA \preceq \PE[\funcAw^{\top}\funcAw] \preceq (1+\gamma)^2 \covfeat\eqsp.
\end{equation}
Hence, we get for $\alpha \leq (1-\gamma)/(1+\gamma)^2$, and applying \eqref{eq:a_plus_a_top_bound} and \eqref{eq:a_top_a_bound}, that 
\begin{align}
(\Id - \alpha \bA)^{\top} (\Id - \alpha \bA) 
&= \Id - \alpha(\bA^{\top} + \bA) + \alpha^2 \bA^{\top} \bA \\
&\preceq \Id - 2\alpha(1-\gamma)\covfeat + \alpha^2(1+\gamma)^2\covfeat \\
&\preceq \Id - \alpha(1-\gamma)\covfeat \\
&\preceq (1 - \alpha (1-\gamma)\lambda_{\min}(\covfeat))\Id\eqsp.
\end{align}
Hence, the bound \eqref{eq:contractin_q_norm} holds with $a = (1-\gamma)\lambda_{\min}(\covfeat)$ and $\alpha_{\infty} = (1-\gamma)/(1+\gamma)^2$.

\section{Experimental details for the TD learning}
\label{appendix:numeric_details}
Here we provide some details on numerical experiments. Code to run experiments is provided in \url{https://github.com/svsamsonov/BootstrapLSA}. For the considered Garnet problem we choose the policy $\pi$ in the following way. For any $a \in \seta$, we set 
\[
\pi(a|s) = \frac{U_{a}^{(s)}}{\sum_{i=1}^{|\seta|}U_{i}^{(s)}}\eqsp,
\]
where the $U_{i}^{(s)}$ are independent random variables following uniform distribution $\mathcal{U}[0,1]$. Here we assume that each action $a \in \seta$ can be selected at any state $s \in \{1,\ldots,N_{s}\}$. We generate an instance of Garnet problem with mentioned parameters, and find analytically the true parameter $\thetas$. In order to estimate the supremum 
\begin{equation}
\label{eq:approx_supremum_experiment_appendix}
\textstyle 
\Delta_{n} := \sup_{x \in \rset} \bigl|\P(\sqrt{n}\norm{\bar{\theta}_{n} - \thetas} \leq x) - \P(\norm{\Sigma_{\infty}^{1/2}\eta} \leq x)\bigr|\eqsp,
\end{equation}
$\eta \sim \mathcal{N}(0,\Id_{N_s})$, and show that this supremum scales as $n^{-1/4}$ when $\gamma = 1/2$ and admits slower decay for other powers of $\gamma$. We first approximate true probability $\P(\norm{\Sigma_{\infty}^{1/2}\eta} \leq x)$ by the corresponding empirical probabilities based on sample of size $M \gg n$. We fix $M = 5 \cdot 10^{7}$. We choose trajectory lengths 
\[
n \in \{1600,3200,6400,12800,25600,51200,102400,204800,409600,819200, 1638400\}\eqsp,
\]
fix the length of burn-in period $n_0 = 102400$, and generate $N = 6553600$ independent trajectories starting in the fixed point $\theta_0 \in \rset^{N_s}$. We set the learning rate schedule as $\alpha_{k} = c_{0}/k^{\gamma}$ and try different values $\gamma \in \{0.5,0.65,0.7\}$, and $c_0 = 4.0$. Unfortunately, even the chosen order of trajectory length $n$ seems to be insufficient in order to significantly distinguish, for example, between $\gamma = 0.5$ and $\gamma = 0.65$. However, learning rate schedule with faster decay performs worse in terms of $\Delta_{n}$. Note that the current experiment is already rather computationally intense
for artificial problem and takes about 12 hours of compute on a Core i9 - 10920x processor with 12
cores with 3.7 GHz.

\newpage
\section*{NeurIPS Paper Checklist}
\begin{enumerate}

\item {\bf Claims}
    \item[] Question: Do the main claims made in the abstract and introduction accurately reflect the paper's contributions and scope?
    \item[] Answer: \answerYes{} 
    \item[] Justification: Main results are, respectively, the ones of \Cref{th:shao2022_berry} and \Cref{th:bootstrap_validity}, their statements are complete and supported by the proofs in the Appendix section.
    \item[] Guidelines:
    \begin{itemize}
        \item The answer NA means that the abstract and introduction do not include the claims made in the paper.
        \item The abstract and/or introduction should clearly state the claims made, including the contributions made in the paper and important assumptions and limitations. A No or NA answer to this question will not be perceived well by the reviewers. 
        \item The claims made should match theoretical and experimental results, and reflect how much the results can be expected to generalize to other settings. 
        \item It is fine to include aspirational goals as motivation as long as it is clear that these goals are not attained by the paper. 
    \end{itemize}

\item {\bf Limitations}
    \item[] Question: Does the paper discuss the limitations of the work performed by the authors?
    \item[] Answer: \answerYes{} 
    \item[] Justification: We discuss the limitations of our setting related to the LSA problem, and not more general non-linear stochastic optimisation problems. We highlight the potential generalizations of \Cref{th:shao2022_berry} to the non-linear setting and discuss why generalizing \Cref{th:bootstrap_validity} might be more challenging. We also discuss the limitation related to i.i.d. observations.
    \item[] Guidelines:
    \begin{itemize}
        \item The answer NA means that the paper has no limitation while the answer No means that the paper has limitations, but those are not discussed in the paper. 
        \item The authors are encouraged to create a separate "Limitations" section in their paper.
        \item The paper should point out any strong assumptions and how robust the results are to violations of these assumptions (e.g., independence assumptions, noiseless settings, model well-specification, asymptotic approximations only holding locally). The authors should reflect on how these assumptions might be violated in practice and what the implications would be.
        \item The authors should reflect on the scope of the claims made, e.g., if the approach was only tested on a few datasets or with a few runs. In general, empirical results often depend on implicit assumptions, which should be articulated.
        \item The authors should reflect on the factors that influence the performance of the approach. For example, a facial recognition algorithm may perform poorly when image resolution is low or images are taken in low lighting. Or a speech-to-text system might not be used reliably to provide closed captions for online lectures because it fails to handle technical jargon.
        \item The authors should discuss the computational efficiency of the proposed algorithms and how they scale with dataset size.
        \item If applicable, the authors should discuss possible limitations of their approach to address problems of privacy and fairness.
        \item While the authors might fear that complete honesty about limitations might be used by reviewers as grounds for rejection, a worse outcome might be that reviewers discover limitations that aren't acknowledged in the paper. The authors should use their best judgment and recognize that individual actions in favor of transparency play an important role in developing norms that preserve the integrity of the community. Reviewers will be specifically instructed to not penalize honesty concerning limitations.
    \end{itemize}

\item {\bf Theory Assumptions and Proofs}
    \item[] Question: For each theoretical result, does the paper provide the full set of assumptions and a complete (and correct) proof?
    \item[] Answer: \answerYes{} 
    \item[] Justification: All our theoretical results are provided with references to assumptions, that are stated in \Cref{sec:independent_case} and \Cref{sec:bootstrap}. All results are given with proofs, that are correctly referenced for each theorem and corollary.
    \item[] Guidelines:
    \begin{itemize}
        \item The answer NA means that the paper does not include theoretical results. 
        \item All the theorems, formulas, and proofs in the paper should be numbered and cross-referenced.
        \item All assumptions should be clearly stated or referenced in the statement of any theorems.
        \item The proofs can either appear in the main paper or the supplemental material, but if they appear in the supplemental material, the authors are encouraged to provide a short proof sketch to provide intuition. 
        \item Inversely, any informal proof provided in the core of the paper should be complemented by formal proofs provided in appendix or supplemental material.
        \item Theorems and Lemmas that the proof relies upon should be properly referenced. 
    \end{itemize}

    \item {\bf Experimental Result Reproducibility}
    \item[] Question: Does the paper fully disclose all the information needed to reproduce the main experimental results of the paper to the extent that it affects the main claims and/or conclusions of the paper (regardless of whether the code and data are provided or not)?
    \item[] Answer: \answerYes{} 
    \item[] Justification: Numerical results are stated with a complete description of the environments that are used, as well as the precise sets of hyperparameters that we used. The code (in Python) is provided as supplementary with the paper, making it easy for one to reproduce our numerical experiments. At the same time, tracing the second-order terms in the normal approximation is computationally involved and can take sufficiently large amount of time.
    \item[] Guidelines:
    \begin{itemize}
        \item The answer NA means that the paper does not include experiments.
        \item If the paper includes experiments, a No answer to this question will not be perceived well by the reviewers: Making the paper reproducible is important, regardless of whether the code and data are provided or not.
        \item If the contribution is a dataset and/or model, the authors should describe the steps taken to make their results reproducible or verifiable. 
        \item Depending on the contribution, reproducibility can be accomplished in various ways. For example, if the contribution is a novel architecture, describing the architecture fully might suffice, or if the contribution is a specific model and empirical evaluation, it may be necessary to either make it possible for others to replicate the model with the same dataset, or provide access to the model. In general. releasing code and data is often one good way to accomplish this, but reproducibility can also be provided via detailed instructions for how to replicate the results, access to a hosted model (e.g., in the case of a large language model), releasing of a model checkpoint, or other means that are appropriate to the research performed.
        \item While NeurIPS does not require releasing code, the conference does require all submissions to provide some reasonable avenue for reproducibility, which may depend on the nature of the contribution. For example
        \begin{enumerate}
            \item If the contribution is primarily a new algorithm, the paper should make it clear how to reproduce that algorithm.
            \item If the contribution is primarily a new model architecture, the paper should describe the architecture clearly and fully.
            \item If the contribution is a new model (e.g., a large language model), then there should either be a way to access this model for reproducing the results or a way to reproduce the model (e.g., with an open-source dataset or instructions for how to construct the dataset).
            \item We recognize that reproducibility may be tricky in some cases, in which case authors are welcome to describe the particular way they provide for reproducibility. In the case of closed-source models, it may be that access to the model is limited in some way (e.g., to registered users), but it should be possible for other researchers to have some path to reproducing or verifying the results.
        \end{enumerate}
    \end{itemize}

\item {\bf Open access to data and code}
    \item[] Question: Does the paper provide open access to the data and code, with sufficient instructions to faithfully reproduce the main experimental results, as described in supplemental material?
    \item[] Answer: \answerYes{} 
    \item[] Justification: All code is open source, link to a github repository is included.
    \item[] Guidelines:
    \begin{itemize}
        \item The answer NA means that paper does not include experiments requiring code.
        \item Please see the NeurIPS code and data submission guidelines (\url{https://nips.cc/public/guides/CodeSubmissionPolicy}) for more details.
        \item While we encourage the release of code and data, we understand that this might not be possible, so “No” is an acceptable answer. Papers cannot be rejected simply for not including code, unless this is central to the contribution (e.g., for a new open-source benchmark).
        \item The instructions should contain the exact command and environment needed to run to reproduce the results. See the NeurIPS code and data submission guidelines (\url{https://nips.cc/public/guides/CodeSubmissionPolicy}) for more details.
        \item The authors should provide instructions on data access and preparation, including how to access the raw data, preprocessed data, intermediate data, and generated data, etc.
        \item The authors should provide scripts to reproduce all experimental results for the new proposed method and baselines. If only a subset of experiments are reproducible, they should state which ones are omitted from the script and why.
        \item At submission time, to preserve anonymity, the authors should release anonymized versions (if applicable).
        \item Providing as much information as possible in supplemental material (appended to the paper) is recommended, but including URLs to data and code is permitted.
    \end{itemize}

\item {\bf Experimental Setting/Details}
    \item[] Question: Does the paper specify all the training and test details (e.g., data splits, hyperparameters, how they were chosen, type of optimizer, etc.) necessary to understand the results?
    \item[] Answer: \answerYes{} 
    \item[] Justification: The algorithm used in the numerical experiments are exactly the algorithms described in the paper. The Garnet environements are given with the parameters used for generation, and with reference to the original problem. 
    \item[] Guidelines:
    \begin{itemize}
        \item The answer NA means that the paper does not include experiments.
        \item The experimental setting should be presented in the core of the paper to a level of detail that is necessary to appreciate the results and make sense of them.
        \item The full details can be provided either with the code, in appendix, or as supplemental material.
    \end{itemize}

\item {\bf Experiment Statistical Significance}
    \item[] Question: Does the paper report error bars suitably and correctly defined or other appropriate information about the statistical significance of the experiments?
    \item[] Answer: \answerNo{} 
    \item[] Justification: Unfortunately, error bars for computing the second-order terms in normal approximation are quite computationally intense, moreover, tracing the terms of order $n^{1/4}$ requires quick increase of trajectory length $n$.
    \item[] Guidelines:
    \begin{itemize}
        \item The answer NA means that the paper does not include experiments.
        \item The authors should answer "Yes" if the results are accompanied by error bars, confidence intervals, or statistical significance tests, at least for the experiments that support the main claims of the paper.
        \item The factors of variability that the error bars are capturing should be clearly stated (for example, train/test split, initialization, random drawing of some parameter, or overall run with given experimental conditions).
        \item The method for calculating the error bars should be explained (closed form formula, call to a library function, bootstrap, etc.)
        \item The assumptions made should be given (e.g., Normally distributed errors).
        \item It should be clear whether the error bar is the standard deviation or the standard error of the mean.
        \item It is OK to report 1-sigma error bars, but one should state it. The authors should preferably report a 2-sigma error bar than state that they have a 96\% CI, if the hypothesis of Normality of errors is not verified.
        \item For asymmetric distributions, the authors should be careful not to show in tables or figures symmetric error bars that would yield results that are out of range (e.g. negative error rates).
        \item If error bars are reported in tables or plots, The authors should explain in the text how they were calculated and reference the corresponding figures or tables in the text.
    \end{itemize}

\item {\bf Experiments Compute Resources}
    \item[] Question: For each experiment, does the paper provide sufficient information on the computer resources (type of compute workers, memory, time of execution) needed to reproduce the experiments?
    \item[] Answer: \answerYes{} 
    \item[] Justification: All necessary information to reproduce experiments is provided in \Cref{appendix:numeric_details}.
    \item[] Guidelines:
    \begin{itemize}
        \item The answer NA means that the paper does not include experiments.
        \item The paper should indicate the type of compute workers CPU or GPU, internal cluster, or cloud provider, including relevant memory and storage.
        \item The paper should provide the amount of compute required for each of the individual experimental runs as well as estimate the total compute. 
        \item The paper should disclose whether the full research project required more compute than the experiments reported in the paper (e.g., preliminary or failed experiments that didn't make it into the paper). 
    \end{itemize}
    
\item {\bf Code Of Ethics}
    \item[] Question: Does the research conducted in the paper conform, in every respect, with the NeurIPS Code of Ethics \url{https://neurips.cc/public/EthicsGuidelines}?
    \item[] Answer: \answerYes{} 
    \item[] Justification: This paper is of purely theoretical nature, and the proposed methods do not deal with sensitive attributes that could induce unfairness or privacy issues.
    \item[] Guidelines:
    \begin{itemize}
        \item The answer NA means that the authors have not reviewed the NeurIPS Code of Ethics.
        \item If the authors answer No, they should explain the special circumstances that require a deviation from the Code of Ethics.
        \item The authors should make sure to preserve anonymity (e.g., if there is a special consideration due to laws or regulations in their jurisdiction).
    \end{itemize}

\item {\bf Broader Impacts}
    \item[] Question: Does the paper discuss both potential positive societal impacts and negative societal impacts of the work performed?
    \item[] Answer: \answerNA{} 
    \item[] Justification: This paper is of purely theoretical nature. We do not foresee any societal harm from the proof of non-asymptotic bootstrap validity and normal approximation bounds in Kolmogorov distance.
    \item[] Guidelines:
    \begin{itemize}
        \item The answer NA means that there is no societal impact of the work performed.
        \item If the authors answer NA or No, they should explain why their work has no societal impact or why the paper does not address societal impact.
        \item Examples of negative societal impacts include potential malicious or unintended uses (e.g., disinformation, generating fake profiles, surveillance), fairness considerations (e.g., deployment of technologies that could make decisions that unfairly impact specific groups), privacy considerations, and security considerations.
        \item The conference expects that many papers will be foundational research and not tied to particular applications, let alone deployments. However, if there is a direct path to any negative applications, the authors should point it out. For example, it is legitimate to point out that an improvement in the quality of generative models could be used to generate deepfakes for disinformation. On the other hand, it is not needed to point out that a generic algorithm for optimizing neural networks could enable people to train models that generate Deepfakes faster.
        \item The authors should consider possible harms that could arise when the technology is being used as intended and functioning correctly, harms that could arise when the technology is being used as intended but gives incorrect results, and harms following from (intentional or unintentional) misuse of the technology.
        \item If there are negative societal impacts, the authors could also discuss possible mitigation strategies (e.g., gated release of models, providing defenses in addition to attacks, mechanisms for monitoring misuse, mechanisms to monitor how a system learns from feedback over time, improving the efficiency and accessibility of ML).
    \end{itemize}
    
\item {\bf Safeguards}
    \item[] Question: Does the paper describe safeguards that have been put in place for responsible release of data or models that have a high risk for misuse (e.g., pretrained language models, image generators, or scraped datasets)?
    \item[] Answer: \answerNA{} 
    \item[] Justification: Not applicable.
    \item[] Guidelines:
    \begin{itemize}
        \item The answer NA means that the paper poses no such risks.
        \item Released models that have a high risk for misuse or dual-use should be released with necessary safeguards to allow for controlled use of the model, for example by requiring that users adhere to usage guidelines or restrictions to access the model or implementing safety filters. 
        \item Datasets that have been scraped from the Internet could pose safety risks. The authors should describe how they avoided releasing unsafe images.
        \item We recognize that providing effective safeguards is challenging, and many papers do not require this, but we encourage authors to take this into account and make a best faith effort.
    \end{itemize}

\item {\bf Licenses for existing assets}
    \item[] Question: Are the creators or original owners of assets (e.g., code, data, models), used in the paper, properly credited and are the license and terms of use explicitly mentioned and properly respected?
    \item[] Answer: \answerNA{} 
    \item[] Justification: Not applicable: no existing assets are used.
    \item[] Guidelines:
    \begin{itemize}
        \item The answer NA means that the paper does not use existing assets.
        \item The authors should cite the original paper that produced the code package or dataset.
        \item The authors should state which version of the asset is used and, if possible, include a URL.
        \item The name of the license (e.g., CC-BY 4.0) should be included for each asset.
        \item For scraped data from a particular source (e.g., website), the copyright and terms of service of that source should be provided.
        \item If assets are released, the license, copyright information, and terms of use in the package should be provided. For popular datasets, \url{paperswithcode.com/datasets} has curated licenses for some datasets. Their licensing guide can help determine the license of a dataset.
        \item For existing datasets that are re-packaged, both the original license and the license of the derived asset (if it has changed) should be provided.
        \item If this information is not available online, the authors are encouraged to reach out to the asset's creators.
    \end{itemize}

\item {\bf New Assets}
    \item[] Question: Are new assets introduced in the paper well documented and is the documentation provided alongside the assets?
    \item[] Answer: \answerNA{} 
    \item[] Justification: Not applicable: paper does not release new assets.
    \item[] Guidelines:
    \begin{itemize}
        \item The answer NA means that the paper does not release new assets.
        \item Researchers should communicate the details of the dataset/code/model as part of their submissions via structured templates. This includes details about training, license, limitations, etc. 
        \item The paper should discuss whether and how consent was obtained from people whose asset is used.
        \item At submission time, remember to anonymize your assets (if applicable). You can either create an anonymized URL or include an anonymized zip file.
    \end{itemize}

\item {\bf Crowdsourcing and Research with Human Subjects}
    \item[] Question: For crowdsourcing experiments and research with human subjects, does the paper include the full text of instructions given to participants and screenshots, if applicable, as well as details about compensation (if any)? 
    \item[] Answer: \answerNA{} 
    \item[] Justification: Not applicable: paper does not involve crowdsourcing nor research on human subjects. 
    \item[] Guidelines:
    \begin{itemize}
        \item The answer NA means that the paper does not involve crowdsourcing nor research with human subjects.
        \item Including this information in the supplemental material is fine, but if the main contribution of the paper involves human subjects, then as much detail as possible should be included in the main paper. 
        \item According to the NeurIPS Code of Ethics, workers involved in data collection, curation, or other labor should be paid at least the minimum wage in the country of the data collector. 
    \end{itemize}

\item {\bf Institutional Review Board (IRB) Approvals or Equivalent for Research with Human Subjects}
    \item[] Question: Does the paper describe potential risks incurred by study participants, whether such risks were disclosed to the subjects, and whether Institutional Review Board (IRB) approvals (or an equivalent approval/review based on the requirements of your country or institution) were obtained?
    \item[] Answer: \answerNA{} 
    \item[] Justification: Not applicable: paper does not involve crowdsourcing nor research on human subjects. 
    \item[] Guidelines:
    \begin{itemize}
        \item The answer NA means that the paper does not involve crowdsourcing nor research with human subjects.
        \item Depending on the country in which research is conducted, IRB approval (or equivalent) may be required for any human subjects research. If you obtained IRB approval, you should clearly state this in the paper. 
        \item We recognize that the procedures for this may vary significantly between institutions and locations, and we expect authors to adhere to the NeurIPS Code of Ethics and the guidelines for their institution. 
        \item For initial submissions, do not include any information that would break anonymity (if applicable), such as the institution conducting the review.
    \end{itemize}

\end{enumerate}

\end{document}